\definecolor{LightGray}{gray}{0.9}
\renewcommand\nomgroup[1]{%
  \item[\bfseries
  \ifstrequal{#1}{A}{Mathematical Notations}{%
  \ifstrequal{#1}{B}{Constrained Bandit}{%
  \ifstrequal{#1}{C}{Constrained MDP}{}}}%
]}
\newcommand{\rmvEventBandit}{\hyperref[lemma:good-event1]{\sE_1}\xspace}
\newcommand{\confeventBandit}{\hyperref[lemma:good-event2]{\sE_2}\xspace}
\newcommand{\rmvEevent}{\hyperref[lemma:good-event1-MDP]{\sE_1}\xspace}
\newcommand{\confevent}{\hyperref[lemma:good-event2-MDP]{\sE_2}\xspace}
\newcommand{\optpes}{\hyperref[eq:op-ps-bandit-propose]{Opt-Pes}\xspace}
\newcommand{\Banditalgo}{\hyperref[algo:zero-vio-bandit]{OPLB-SP}\xspace}
\newcommand{\MDPalgo}{\hyperref[algo:zero-vio-linear MDP]{OPSE-LCMDP}\xspace}
\newcommand{\cQr}{\hyperref[def:abs-Q-function]\cQ^r}
\newcommand{\cQu}{\hyperref[def:abs-Q-function]\cQ^u}
\newcommand{\cQc}{\hyperref[def:abs-Q-function]\cQ^\dagger}
\newcommand{\Qcomp}{\hyperref[def:composite-Q-function]\cQ^\circ}
\newcommand{\tiPi}{\hyperref[def:policy-class]{\widetilde{\Pi}}}
\newcommand{\cVr}{\hyperref[def:V-cover]\cV^r}
\newcommand{\cVu}{\hyperref[def:V-cover]\cV^u}
\newcommand{\cVc}{\hyperref[def:V-cover]\cV^\dagger}
\newcommand{\Deltar}{\hyperref[def:delta-Q]{\Delta^{(k)}_r}\xspace}
\newcommand{\Deltau}{\hyperref[def:delta-Q]{\Delta^{(k)}_u}\xspace}
\newcommand{\Deltac}{\hyperref[def:delta-Q]{\Delta^{(k)}_\dagger}\xspace}
\newcommand{\unconfBandit}{\hyperref[def:unconf-set]{\cU}}
\newcommand{\unconfMDP}{\hyperref[def:unconf-set MDP]{\cU}}
\title{Provably Efficient RL under Episode-Wise Safety in\\ Constrained MDPs with Linear Function Approximation}
\author{%
  Toshinori Kitamura \\
  The University of Tokyo\\
  Tokyo, Japan \\
  \texttt{toshinori-k@weblab.t.u-tokyo.ac.jp} \\
  \And
  Arnob Ghosh \\
  New Jersey Institute of Technology \\
  New Jersey, USA \\
  \And
  Tadashi Kozuno \\
  OMRON SINIC X, NexaScience \\
  Tokyo, Japan \\
  \And
  Wataru Kumagai \\
  OMRON SINIC X, NexaScience\\
  Tokyo, Japan \\
  \And
  Kazumi Kasaura \\
  OMRON SINIC X\\
  Tokyo, Japan \\
  \AND
  Kenta Hoshino\thanks{Current affiliation: Institute of Science Tokyo, Tokyo, Japan, and DENSO IT Laboratory, Tokyo, Japan.}\\
  Kyoto University \\
  Kyoto, Japan \\
  \And
  Yohei Hosoe\\
  Kyoto University \\
  Kyoto, Japan \\
  \And
  Yutaka Matsuo\\
  The University of Tokyo\\
  Tokyo, Japan \\
}
\begin{document}

\maketitle

\begin{abstract}
We study the reinforcement learning (RL) problem in a constrained Markov decision process (CMDP), where an agent explores the environment to maximize the expected cumulative reward while satisfying a single constraint on the expected total utility value in every episode. While this problem is well understood in the tabular setting, theoretical results for function approximation remain scarce. This paper closes the gap by proposing an RL algorithm for linear CMDPs that achieves $\widetilde{\mathcal{O}}(\sqrt{K})$ regret with an {\em episode-wise} zero-violation guarantee. Furthermore, our method is computationally efficient, scaling polynomially with problem-dependent parameters while remaining independent of the state space size. Our results significantly improve upon recent linear CMDP algorithms, which either violate the constraint or incur exponential computational costs.
\end{abstract}

\section{Introduction}

\looseness=-1
Safe decision-making is essential in real-world applications such as plant control and finance \citep{gu2022review}. 
Constrained Markov decision process (CMDP) is a mathematical framework for developing decision-making algorithms with formal safety guarantees \citep{altman1999constrained}. 
This paper studies the reinforcement learning (RL) problem in finite-horizon CMDPs, where an agent explores the environment to maximize the expected cumulative rewards while satisfying a single constraint on the expected total utility value.

\looseness=-1
Safe exploration has been established in the tabular CMDP settings.
Several works \citep{liu2021learning,bura2022dope,yu2024improved} achieve \textbf{episode-wise} zero-violation RL with $\tiO(\sqrt{K})$ regret for $K$ number of episodes, ensuring constraint satisfaction in every episode.
Their approach consists of two phases: 
deploying a known strictly safe policy $\pisafe$ and then updating policies via linear programs (LPs), which optimizes an optimistic objective while satisfying a pessimistic constraint.
Deploying $\pisafe$ is necessary to ensure feasible solutions for the LPs once enough environmental information is collected.

\looseness=-1
While safe exploration is well-established in tabular CMDPs, extending it to large-scale CMDPs remains a major challenge.
LP-based methods are impractical at scale due to their state-dependent computational cost.\footnote{While some works (e.g., \citealp{neu2023efficient}) proposed LP methods for unconstrained linear MDPs, they remain unsuitable for our exploration setting or still incur state-dependent computational costs (see \Cref{sec:related work}).} 
As a result, even in linear CMDPs, where value functions have linear structure, episode-wise safe RL has not been achieved. 
The state-of-the-art linear CMDP algorithm \citep{ghosh2024towards}, which achieves the $\tiO(\sqrt{K})$ violation regret,\footnote{Violation regret denotes the total amount of constraint violation during exploration.} incurs an exponential computational cost of $K^H$, where $H$ is the horizon.
Several studies achieve safe RL under instantaneous constraints \cite{amani2021safe,roknilamouki2025provably},\footnote{Inst. constraint requires $u_h(s^{(k)}_h, a^{(k)}_h) \geq b\; \forall h, k \in \bbrack{1,H}\times\bbrack{1,K}$ (see \cref{sec:MDP} for notations).} a special subclass of the episode-wise safety that can be overly conservative (e.g., in drone control, temporary high energy consumption is tolerable, but full battery depletion is not).
\Cref{table:algorithms} summarizes representative algorithms, with additional literature in \Cref{sec:related work}.
In short, a fundamental open question remains:
\begin{center}
\emph{Can we develop a computationally efficient\footnote{\looseness=-1 An algorithm is comp. efficient if its cost is polynomial with problem parameters, excluding the state space.} linear CMDP algorithm with \\sublinear regret and zero episode-wise constraint violation?}
\end{center}

\looseness=-1
\paragraph{Contributions.}
We propose \textbf{O}ptimistic-\textbf{P}essimistic \textbf{S}oftmax \textbf{E}xploration for \textbf{L}inear \textbf{CMDP} (\MDPalgo), the first algorithm for linear CMDPs that achieves \textbf{\(\boldsymbol{\tiO\paren{\sqrt{K}}}\)-regret and episode-wise safety}.
Our approach builds on the optimistic-pessimistic exploration framework with two key innovations for large-scale state-space problems:
($\mathrm{i}$) a new \textbf{deployment rule for $\boldsymbol{\pisafe}$},
and ($\mathrm{ii}$) a \textbf{computationally efficient} method to implement optimism for the objective and pessimism for the constraint within the softmax policy framework \citep{ghosh2022provably,ghosh2024towards}.

\looseness=-1
\Cref{sec:zero-vio bandit} first analyzes the linear constrained bandit problem as a ``warm-up'' for linear CMDPs ($H=1$ with an expected instantaneous constraint), highlighting the key role of the $\pisafe$ deployment rule in avoiding linear regret.
Previous instantaneous constraint literature limits $\pisafe$ deployments by representing the safe action as a vector $\basafe \in \R^d$ \citep{pacchiano2021stochastic,pacchiano2024contextual,hutchinson2024directional,amani2019linear,amani2021safe}. 
However, extending this approach to episode-wise safety is non-trivial, as the constraint is imposed on policies rather than actions, and policies may be nonlinear functions (e.g., softmax mapping from value functions) rather than single vectors.
We overcome this challenge by showing that \textbf{if $\boldsymbol{\pisafe}$ is deployed only when the agent is less confident in $\boldsymbol{\pisafe}$'s safety}, the number of deployments is logarithmically bounded (\Cref{lemma:Ck-bound-main}).

\looseness=-1  
\Cref{sec:MDP} then extends the bandit result to RL in CMDPs.
To enable optimistic-pessimistic exploration in linear CMDPs, \MDPalgo employs the \textbf{composite softmax policy} (\Cref{def:composite-softmax-policy}), which adjusts optimism and pessimism by controlling a variable \(\lambda\).  
\MDPalgo efficiently searches for the best \(\lambda\) through \textbf{bisection search}, achieving a \textbf{polynomial computational cost} in problem parameters, independent of state-space cardinality (\Cref{remark:computational cost}). 
Overall, our techniques---the novel $\pisafe$ deployment rule and softmax-based optimistic-pessimistic exploration---achieve the first episode-wise safe RL with sublinear regret and computational efficiency in linear CMDPs.

\begin{table*}[tb]
\small
\caption{
\looseness=-1
\small
Representative CMDP results with $K$-dependent regrets. See \Cref{sec:MDP} for CMDP notations.
}\label{table:algorithms}
\centering
\begin{tabular}{ccccc}
\hline
&Paper & Epi.-Wise Safe? & Comp. Efficient? & Regret \\\hline 
\multirow{1}{*}{Tabular} 
& \citet{yu2024improved} & Yes & $|\S|$ dependent & $\tiO\paren{\bslt^{-1} \sqrt{|\S|^2 |\A| H^5 K}}$  \\ \hline
\multirow{3}{*}{Linear} 
& \citet{ghosh2024towards} & No & No & $\tiO\paren{\sqrt{d^3 H^4 K}}$  \\ 
& \citet{roknilamouki2025provably} & Instantaneous & Yes & $\tiO\paren{\xi^{-1}\sqrt{d^3 H^4 K}}$ \\ 
\rowcolor{dmblue50!30!white} &\textbf{\MDPalgo (Ours)} & \textcolor{red}{Yes} & \textcolor{red}{Yes} & $\tiO\paren{\bslt^{-1} \sqrt{d^5 H^8 K}}$\\\hline
\multicolumn{2}{c}{Bandit lower bound \cite{pacchiano2021stochastic}} & Yes & N/A & $\widetilde{\Omega}(\max\brace{\sqrt{|\A| K}, \xi^{-2}})$\textsuperscript{*} \\\hline
\end{tabular}
\begin{tablenotes}
{\scriptsize
\item * The regrets of \citep{yu2024improved} and ours include an additional $\tiO(\xi^{-2})$ constant. We omit them from the table due to space limitations.
\citep{roknilamouki2025provably,pacchiano2021stochastic,amani2021safe} avoid the constant by assuming access to a safe action vector, but they are limited to instantaneous constraints (see \cref{subsec:bandit-challenge}).
Including ours, existing safe algorithms \citep{liu2021learning,bura2022dope,yu2024improved,roknilamouki2025provably,amani2019linear,pacchiano2021stochastic,hutchinson2024directional} incur $\tiO(\xi^{-1}\sqrt{K})$ regret. Whether this can be improved to $\tiO(\sqrt{K})$ remains open.
}
\end{tablenotes}
\vspace{-0.2in}
\end{table*}

\paragraph{Mathematical notations.}
\looseness=-1
The set of probability distributions over a set $\S$ is denoted by $\sP(\S)$.
For integers $a \leq b$, let $\bbrack{a,b}\df \brace*{a,\dots, b}$, and $\bbrack{a, b} \df \emptyset$ if $a > b$.
For $\bx \in \R^N$, its $n$-th element is $\bx_n$ or $\bx(n)$.
The clipping function $\clip\brace{\bx, a, b}$ returns $\bx'$ with $\bx'_i = \min\brace{\max\brace{\bx_i, a}, b}$ for each $i$.
We define $\bzero \df \paren{0, \ldots, 0}^\top$ and $\bone \df \paren{1, \ldots, 1}^\top$, with dimensions inferred from the context.
For a positive definite matrix $\bA \in \R^{d\times d}$ and $\bx \in \R^d$, we denote $\norm*{\bx}_\bA = \sqrt{\bx^\top \bA \bx}$.
For positive sequences $\brace{a_n}$ and $\brace{b_n}$ with $n=1,2, \ldots$, we write $a_n=O\left(b_n\right)$ if there exist $C>0$ and $N \in \N$ such that $a_n \leq C b_n$ for all $n \geq N$, and $a_n = \Omega(b_n)$ for the reverse inequality.
We use $\widetilde{O}(\cdot)$ and $\widetilde{\Omega}(\cdot)$ to further hide the polylogarithmic factors.
Finally, for $\bx \in \R^d$, we denote its softmax distribution as $\softmax(\bx) \in \sP(\bbrack{1, d})$ with its $i$-th component $\softmax(\bx)_i = \exp(\bx_i) / \paren*{\sum_{i} \exp(\bx_i)}$.

\section{Warm Up: Safe Exploration in Linear Constrained Bandit}\label{sec:zero-vio bandit}

\looseness=-1
To better illustrate the core ideas of our CMDP algorithm, this section introduces a contextual linear bandit variant based on \citet{pacchiano2021stochastic}.
All the proofs in this section are provided in \Cref{sec:regret-analysis-bandit}.
Let $\A \subset \R^d$ be the action space, a compact set of bounded $d$-dimensional vectors. 
Without loss of generality, we assume $\norm{\ba}_2 \leq 1$ for any $\ba \in \A$.
At each round $k$, the agent selects a policy $\pi^{(k)} \in \sP\paren*{\A}$, samples an action $\ba^{(k)} \sim \pi^{(k)}$, and observes the reward $r^{(k)} = \btheta_r^\top \ba^{(k)}+ \varepsilon^{(k)}_r$ and utility $u^{(k)} = \btheta_u^\top \ba^{(k)}+ \varepsilon^{(k)}_u$.
Here, $\btheta_r, \btheta_u\in \R^d$ are vectors unknown to the agent such that \(\norm{\btheta_r}_2, \norm{\btheta_u}_2 \leq B\), and $\varepsilon^{(k)}_r, \varepsilon^{(k)}_u$ are $R$-sub-Gaussian random noises.
For any policy $\pi$ and $g \in \brace{r, u}$, let $g_{\pi} \df \E_{\ba \sim \pi}\brack*{\left\langle\btheta_g, \ba\right\rangle}$.
We consider a constraint such that the expected utility must be above the threshold $b \in \R$. 
Formally, let $\Pisafe \df \brace*{\pi \given u_{\pi} \geq b}$ denote the set of safe policies. 
The agent's goal is to achieve sublinear regret while satisfying the expected \textbf{instantaneous} constraints defined as follows:
\begin{align}\label{eq:const-bandit}
{\textstyle
\regret (K) \df 
\sum^K_{k=1} 
r_{\pi^\star} - r_{\pi^{(k)}}
= o\paren*{K}
\;\text{ such that }\;
{\pi^{(k)}} \in \Pisafe \; \forall k \in \bbrack{1, K}\;,
}
\end{align}
where $\pi^\star \in \argmax_{\pi \in \Pisafe} r_{\pi}$.
A sublinear regret exploration is efficient, as its averaged reward approaches the optimal value, i.e., \(\lim_{K\to \infty }\frac{1}{K}r_{\pi^{(K)}} \to  r_{\pi^\star}\).
Finally, we assume access to a strictly safe policy in $\Pisafe$, as deploying arbitrary policies without this assumption risks violating constraints\footnote{The knowledge of $\bslt$ is for simplicity. If unknown, we can estimate it by deploying $\pisafe$ with a little overhead.}.
\begin{assumption}[Safe policy]\label{assumption:slater-bandit}
We have access to $\pisafe \in \Pisafe$ and $\bslt > 0$ such that $u_{\pisafe} - b \geq \bslt\;$.
\end{assumption}

\subsection{Technical Challenge: Zero-Violation with a Safe Policy}\label{subsec:bandit-challenge}

\looseness=-1
The key to efficient and safe exploration is the \textbf{optimistic-pessimistic} exploration, which constructs an optimistic reward $\widebar{r}^{(k)}_\pi \geq r_\pi$ and a pessimistic utility $\underline{u}^{(k)}_\pi \leq u_\pi$, and then computes a policy by:
\begin{align}\label{eq:optimistic-pessimistic-bandit}
\textstyle
\max_{\pi \in \sP(\A)} 
\widebar{r}^{(k)}_\pi
\quad \text{ such that }\quad \underline{u}^{(k)}_\pi \geq b\;.
\end{align}
Here, $\widebar{r}^{(k)}_\pi$ and $\underline{u}^{(k)}_\pi$ are designed to quickly approach $r_\pi$ and $u_\pi$ as data accumulates for efficient exploration \citep{abbasi2011improved}.
However, although \Cref{eq:optimistic-pessimistic-bandit} can have feasible solutions when $\underline{u}_\pi \approx u_\pi$, the pessimistic constraint may not have any feasible solution in the early stages of exploration.

\looseness=-1
To ensure that \eqref{eq:optimistic-pessimistic-bandit} always has a solution, a common bandit approach assumes access to a safe action $\basafe \in \A$ such that $\btheta_u^\top \basafe \geq b + \bslt$, and then ensures the feasibility of \eqref{eq:optimistic-pessimistic-bandit} by leveraging the \textbf{vector representation} of $\basafe \in \R^d$. 
For example, \cite{pacchiano2021stochastic,pacchiano2024contextual,amani2019linear} designed $\underline{u}^{(k)}_\pi$ using the orthogonal direction $\paren*{\basafe}^\bot \df \basafe - \basafe / \norm*{\basafe}_2$, while \cite{hutchinson2024directional} assume $\basafe = \bzero \in \A$ with a negative constraint threshold $b < 0$. 
Both approaches ensure that a policy playing $\basafe$ with probability $1$ is always feasible in \eqref{eq:optimistic-pessimistic-bandit}.

\looseness=-1
However, extending this safe action technique to episode-wise safe RL is non-trivial, as the episode-wise constraint is imposed on policies rather than actions, and policies in linear CMDPs may be nonlinear functions (e.g., softmax mappings from value functions) rather than single vectors.
To address this challenge, we first develop a safe bandit algorithm without relying on safe action techniques.

\subsection{Algorithm and Analysis}\label{sec:bandit-algo-analysis}

\looseness=-1
We summarize the proposed \textbf{O}ptimistic-\textbf{P}essimistic \textbf{L}inear \textbf{B}andit with \textbf{S}afe \textbf{P}olicy (\Banditalgo) in \Cref{algo:zero-vio-bandit}, which follows the standard linear bandit framework (see \citet{abbasi2011improved}).
Throughout this section, we analyze \Cref{algo:zero-vio-bandit} under the parameters listed in its \hyperref[algo:zero-vio-bandit]{\textbf{Input}} line.
Let $\hbtheta^{(k)}_r \df \paren{\bLambda^{(k)}}^{-1}\sum_{i=1}^{k-1} \ba^{(i)} r^{(i)}$ and $\hbtheta^{(k)}_u \df \paren{\bLambda^{(k)}}^{-1}\sum_{i=1}^{k-1} \ba^{(i)} u^{(i)} $ denote the regularized least-squares estimates of $\btheta_r$ and $\btheta_u$, respectively, where \(\bLambda^{(k)} \df \rho \bI + \sum_{i=1}^{k-1} \ba^{(i)} \paren{\ba^{(i)}}^\top \).
Let \(\widehat{r}^{(k)}_\pi\df \E_{\ba \sim \pi}\brack{\ba^\top \hbtheta_r^{(k)}}\) and \(\widehat{u}^{(k)}_\pi \df \E_{\ba \sim \pi}\brack{\ba^\top \hbtheta_u^{(k)}}\) be the estimated reward and utility functions.
Using the bonus function 
\(\beta^{(k)}_\pi \df \E_{\ba \sim \pi}\norm*{\ba}_{\paren*{\bLambda^{(k)}}^{-1}}\), with the well-established elliptical confidence bound argument for linear bandits \citep{abbasi2011improved}, the following confidence bounds hold:
\begin{lemma}[Confidence bounds]\label{lemma:bandit-opt-pes-main}
For any $\pi$ and $k$, with probability (w.p.) at least $1-\delta$,
$$
r_{\pi} + 2\cp \beta^{(k)}_{\pi} \geq 
\widehat{r}_{\pi}^{(k)} + \cp \beta^{(k)}_{\pi} \geq r_{\pi} \quad \text{ and }\quad
u_{\pi} \geq \widehat{u}_{\pi}^{(k)} - \cp \beta^{(k)}_{\pi}
\geq u_{\pi} - 2\cp \beta^{(k)}_{\pi}\;.
$$
\end{lemma}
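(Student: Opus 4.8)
The plan is to reduce the statement to the classical self-normalized concentration inequality for regularized least squares \citep{abbasi2011improved}. Fix $g \in \brace{r,u}$. Since $r^{(i)} = \langle\btheta_r,\ba^{(i)}\rangle + \varepsilon^{(i)}_r$ (and analogously for $u$) and $\sum_{i=1}^{k-1}\ba^{(i)}\paren{\ba^{(i)}}^\top = \bLambda^{(k)} - \rho\bI$, the ridge estimate decomposes as
\[
\hbtheta^{(k)}_g - \btheta_g \;=\; \paren*{\bLambda^{(k)}}^{-1}\sum_{i=1}^{k-1}\ba^{(i)}\varepsilon^{(i)}_g \;-\; \rho\paren*{\bLambda^{(k)}}^{-1}\btheta_g\;.
\]
Applying the triangle inequality in the $\bLambda^{(k)}$-norm, and using $\paren*{\bLambda^{(k)}}^{-1}\preceq\rho^{-1}\bI$ to handle the second (ridge-bias) term, this gives $\norm*{\hbtheta^{(k)}_g - \btheta_g}_{\bLambda^{(k)}} \leq \norm*{\sum_{i=1}^{k-1}\ba^{(i)}\varepsilon^{(i)}_g}_{\paren*{\bLambda^{(k)}}^{-1}} + \sqrt{\rho}\,\norm{\btheta_g}_2$.

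First I would control the noise term. Since $\brace*{\ba^{(i)}}$ is predictable with respect to the filtration generated by the past observations and each $\varepsilon^{(i)}_g$ is conditionally $R$-sub-Gaussian, Theorem~1 of \citet{abbasi2011improved} shows that, with probability at least $1-\delta/2$ and simultaneously for all $k\geq 1$,
\[
\norm*{\sum_{i=1}^{k-1}\ba^{(i)}\varepsilon^{(i)}_g}_{\paren*{\bLambda^{(k)}}^{-1}}^2 \;\leq\; 2R^2\log\paren*{\frac{\det\paren*{\bLambda^{(k)}}^{1/2}\det(\rho\bI)^{-1/2}}{\delta/2}}\;.
\]
Because $\norm{\ba^{(i)}}_2\leq 1$ we have $\mathrm{tr}\,\bLambda^{(k)} \leq \rho d + K$, hence $\det\bLambda^{(k)} \leq \paren*{\rho + K/d}^d$, so the right-hand side is at most $2R^2\paren*{\tfrac{d}{2}\log\paren*{1+\tfrac{K}{\rho d}} + \log\tfrac{2}{\delta}}$. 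Combined with $\norm{\btheta_g}_2\leq B$, this gives $\norm*{\hbtheta^{(k)}_g - \btheta_g}_{\bLambda^{(k)}} \leq \cp$ for every $k$, where $\cp$ is the confidence radius specified in the algorithm's \hyperref[algo:zero-vio-bandit]{\textbf{Input}} line; a union bound over $g\in\brace{r,u}$ (which is why $\delta$ was halved) makes this hold for both $g=r$ and $g=u$ on a single event of probability at least $1-\delta$.

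Finally I would pass from parameters to value functions. On that event, for any action $\ba$, Cauchy--Schwarz in the $\bLambda^{(k)}$-inner product gives $\bigl|\langle\hbtheta^{(k)}_g - \btheta_g,\ba\rangle\bigr| \leq \norm*{\hbtheta^{(k)}_g - \btheta_g}_{\bLambda^{(k)}}\norm*{\ba}_{\paren*{\bLambda^{(k)}}^{-1}} \leq \cp\norm*{\ba}_{\paren*{\bLambda^{(k)}}^{-1}}$; taking expectation over $\ba\sim\pi$ and using Jensen's inequality ($|\E X|\leq\E|X|$) gives $\bigl|\widehat g^{(k)}_\pi - g_\pi\bigr| \leq \cp\beta^{(k)}_\pi$ for every $\pi$, every $k$, and $g\in\brace{r,u}$. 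Unwinding the two directions for $g=r$ gives $r_\pi + 2\cp\beta^{(k)}_\pi \geq \widehat r^{(k)}_\pi + \cp\beta^{(k)}_\pi \geq r_\pi$, and for $g=u$ gives $u_\pi \geq \widehat u^{(k)}_\pi - \cp\beta^{(k)}_\pi \geq u_\pi - 2\cp\beta^{(k)}_\pi$, which is exactly the claim. Every step is either a direct application of a known inequality (the self-normalized bound, Cauchy--Schwarz, Jensen) or an elementary estimate, so I do not anticipate a genuine obstacle; the only points that need care are absorbing the ridge-bias term $\sqrt{\rho}\,\norm{\btheta_g}_2$ into $\cp$ and taking the union bound over the two reward/utility noise sequences, which is what forces the $\delta/2$ above.
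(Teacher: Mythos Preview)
Your proposal is correct and follows essentially the same route as the paper. The paper's proof (Lemmas~\ref{lemma:good-event2} and~\ref{lemma:bandit-opt-pes}) invokes the packaged confidence-ellipsoid bound (its Lemma~\ref{lemma:confidence-ellipsoid}, i.e.\ Theorem~2 of \citet{abbasi2011improved}) directly on $\hbtheta^{(k)}_g-\btheta_g$, whereas you unpack that result by hand into the martingale noise term plus the ridge-bias term; after that, both proofs apply Cauchy--Schwarz in the $\bLambda^{(k)}$-geometry, take $\E_{\ba\sim\pi}$, and rearrange to obtain the two chains of inequalities.
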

Based on \cref{lemma:bandit-opt-pes-main}, \Cref{algo:zero-vio-bandit} solves the following optimistic-pessimistic (\optpes) problem.
The optimistic objective promotes efficient exploration, while the pessimism enforces the constraint satisfaction:
\begin{align}\label{eq:op-ps-bandit-propose}
\textstyle
\textbf{Opt-Pes (\Cref{line:op-ps-policy-bandit})}
\quad
\pi^{(k)} \in 
\argmax_{\pi \in \sP(\A)}\; \widehat{r}^{(k)}_\pi + \co \beta^{(k)}_\pi \; \text { such that }\; \widehat{u}^{(k)}_\pi - \cp \beta^{(k)}_\pi  \geq b \;.
\end{align}
This is a convex optimization problem when the set $\A$ satisfies certain structural assumptions, such as being discrete or ellipsoidal (see, e.g., Section 19.3 of \citet{lattimore2020bandit}).
To emphasize our approach to the technical challenge in \cref{subsec:bandit-challenge}, this section omits the computational details of \eqref{eq:op-ps-bandit-propose} and focuses instead on the core technique for efficient exploration under episode-wise safety.

{\small
\begin{algorithm}[t!]
\caption{Optimistic-Pessimistic Linear Bandit with Safe Policy}
\label{algo:zero-vio-bandit}
\DontPrintSemicolon
\LinesNumbered
{\small
    \KwIn{Regression coefficient $\rho = 1$, bonus scalers $\cp = B + R\sqrt{d \ln 4K\delta^{-1}}$ and $\co = \cp \paren{1 + 2B\bslt^{-1}}$, safe policy $\pisafe$, and iteration length $K \in \N$}
    \For{$k = 1, \dots, K$}{
    Let \(\beta^{(k)}_\pi\), \(\widehat{r}^{(k)}_\pi\), and \(\widehat{u}^{(k)}_\pi\) be bonus, estimated reward and utility, respectively (see \Cref{sec:bandit-algo-analysis})\;
    \textbf{if} \(\cp \beta^{(k)}_{\pisafe} > \frac{\bslt}{2}\) \textbf{then} {\(\pi^{(k)}\df \pisafe\){\color{blue}\tcc*[r]{Deploy $\pisafe$ if $\pisafe$ is unconfident}}\label{line:safe-policy-bandit}}
    \lElse{
    \(\pi^{(k)} \in 
    \argmax_{\pi \in \sP(\A)}\; \widehat{r}^{(k)}_\pi + \co \beta^{(k)}_\pi \; \text { such that }\; \widehat{u}^{(k)}_\pi - \cp \beta^{(k)}_\pi \geq b \)
    }\label{line:op-ps-policy-bandit}
    Sample an action $\ba^{(k)} \sim \pi^{(k)}$ and observe reward $r^{(k)}$ and utility $u^{(k)}$.
    }
}
\end{algorithm}
}

\subsubsection{Zero-Violation and Logarithmic Number of $\pisafe$ Deployments}\label{subsec:zero-vio-bandit}

\looseness=-1
Since $\pi^{(k)}$ is either $\pisafe$ or the solution to \optpes (if feasible), all deployed policies in \Cref{algo:zero-vio-bandit} satisfy the constraint with high probability due to the pessimistic constraint.
However, as noted in \Cref{subsec:bandit-challenge}, the pessimistic constraint may render \optpes infeasible, requiring \Cref{line:op-ps-policy-bandit} to wait until the bonus \(\beta^{(k)}_\pi\) shrinks sufficiently.  
Yet, waiting too long overuses the suboptimal \(\pisafe\), leading to poor regret. Thus, exploration must keep the number of iterations where \Cref{eq:optimistic-pessimistic-bandit} is infeasible bounded.

\looseness=-1
The core technique of \Cref{algo:zero-vio-bandit} lies in the \textbf{$\boldsymbol{\pisafe}$ deployment trigger} based on the confidence of $\pisafe$.
Specifically, we solve the optimistic-pessimistic optimization whenever 
$\beta^{(k)}_{\pisafe} \leq \frac{\bslt}{2\cp}$; otherwise, we correct the data by deploying $\pisafe$ (see \Cref{line:safe-policy-bandit}).
Under this trigger, the following \Cref{lemma:Ck-bound-main} ensures that the number of $\pisafe$ deployments grows \textbf{logarithmically} with the iteration length $K$.
\begin{definition}[$\pisafe$ unconfident iterations]\label{def:unconf-set}
Let $\unconfBandit$ be the set of iterations when \Cref{algo:zero-vio-bandit} is unconfident in $\pisafe$, i.e., 
\(
\unconfBandit \df 
\brace{k \in \bbrack{1, K} \given 
\beta^{(k)}_{\pisafe} > \bslt / (2\cp)
}
\).
Let $\unconfBandit^\complement \df \bbrack{1, K} \setminus \unconfBandit$ be its complement.
\end{definition}
\begin{theorem}[Logarithmic $\abs{\unconfBandit}$ bound]\label{lemma:Ck-bound-main}
It holds w.p. at least $1-\delta$ that
\(|\unconfBandit| \leq \cO\paren*{d\cp^2\bslt^{-2}\ln\paren*{K\delta^{-1}}}\).
\end{theorem}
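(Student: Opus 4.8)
The plan is to control $|\unconfBandit|$ via the standard elliptical-potential (pigeonhole) argument for linear bandits, specialized to the fixed direction associated with $\pisafe$. The key observation is that $\pisafe$ is deployed precisely on the iterations $k \in \unconfBandit$, and on each such iteration the bonus $\beta^{(k)}_{\pisafe} = \E_{\ba \sim \pisafe}\norm{\ba}_{(\bLambda^{(k)})^{-1}}$ exceeds $\bslt/(2\cp)$. Since $\pisafe$ is a fixed distribution, write $\bphi \df \E_{\ba \sim \pisafe}\brack{\ba}$; by Jensen's inequality the scalar quantity $\norm{\bphi}_{(\bLambda^{(k)})^{-1}}$ is a lower bound for $\beta^{(k)}_{\pisafe}$ is not automatic (Jensen goes the wrong way for the norm), so instead I would work directly with $\beta^{(k)}_{\pisafe}$ itself, noting that whenever $k\in\unconfBandit$ the algorithm plays $\ba^{(k)}\sim\pisafe$, hence the matrix $\bLambda^{(k)}$ accumulates exactly the covariance of the samples drawn from $\pisafe$.

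First I would set up the potential function $\ln\det\bLambda^{(k)}$ and recall the standard bound $\sum_{k} \min\{1,\norm{\ba^{(k)}}^2_{(\bLambda^{(k)})^{-1}}\} \leq 2d\ln(1 + K/(\rho d))$ (Lemma 11 of \citet{abbasi2011improved}); with $\rho = 1$ and $\norm{\ba}_2\le 1$ this is $\cO(d\ln K)$. Second, I would restrict attention to the subsequence of iterations in $\unconfBandit$: on each of these, $\cp\,\E_{\ba\sim\pisafe}\norm{\ba}_{(\bLambda^{(k)})^{-1}} > \bslt/2$, so by Jensen (convexity of $x\mapsto x^2$, now in the right direction) $\E_{\ba\sim\pisafe}\norm{\ba}^2_{(\bLambda^{(k)})^{-1}} \geq \big(\E_{\ba\sim\pisafe}\norm{\ba}_{(\bLambda^{(k)})^{-1}}\big)^2 > \bslt^2/(4\cp^2)$. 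Third, I would connect $\E_{\ba\sim\pisafe}\norm{\ba}^2_{(\bLambda^{(k)})^{-1}}$ to the actual increment $\norm{\ba^{(k)}}^2_{(\bLambda^{(k)})^{-1}}$ that enters the potential bound — this is the one genuinely delicate point, because the realized action $\ba^{(k)}$ is a single random draw, not the mean. I expect to handle it either by a martingale/concentration argument (the realized squared norms concentrate around their conditional expectation $\E_{\ba\sim\pisafe}\norm{\ba}^2_{(\bLambda^{(k)})^{-1}}$, summed over $k\in\unconfBandit$) or, more cleanly, by replacing the per-step update analysis with a direct determinant argument: on the event $\rmvEventBandit$, one shows $\ln\det\bLambda^{(k+1)} - \ln\det\bLambda^{(k)} \gtrsim \bslt^2/\cp^2$ in expectation whenever $k\in\unconfBandit$, possibly after a high-probability union bound converting conditional expectations into realized quantities. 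Telescoping then gives $|\unconfBandit|\cdot \Omega(\bslt^2/\cp^2) \lesssim \ln\det\bLambda^{(K+1)} - \ln\det\bLambda^{(1)} = \cO(d\ln K)$, which rearranges to $|\unconfBandit| = \cO(d\cp^2\bslt^{-2}\ln(K/\delta))$, the extra $\delta$ coming from whatever concentration step is used.

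The main obstacle, as flagged above, is bridging from the \emph{expected} bonus $\beta^{(k)}_{\pisafe}$ (which is what the deployment trigger thresholds) to the \emph{realized} potential increment driven by $\ba^{(k)}$. If $\pisafe$ were deterministic this would be immediate; for a genuinely randomized safe policy one needs either a second-moment/Freedman-type concentration inequality on $\sum_{k\in\unconfBandit}\big(\norm{\ba^{(k)}}^2_{(\bLambda^{(k)})^{-1}} - \E_{\ba\sim\pisafe}\norm{\ba}^2_{(\bLambda^{(k)})^{-1}}\big)$, or a slightly different bookkeeping in which one tracks $\ln\det$ of the \emph{population} matrix $\rho\bI + \sum_{i} \E_{\ba\sim\pi^{(i)}}\brack{\ba\ba^\top}$ and relates it to $\bLambda^{(k)}$ up to martingale error. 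Everything else — the Jensen step, the potential bound, and the final rearrangement — is routine. I would also double-check that the logarithmic factors absorbed into $\cp^2 = (B + R\sqrt{d\ln(4K/\delta)})^2$ are correctly reflected in the $\tiO$/explicit statement.
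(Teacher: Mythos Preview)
Your proposal is correct and follows essentially the same route as the paper. The paper packages your ``bridge'' step exactly as you anticipate: it defines the good event $\rmvEventBandit$ via the multiplicative martingale bound of Rosenberg--Mansour (Lemma~D.4), yielding $\sum_k \E_{\ba\sim\pi^{(k)}}\norm{\ba}^2_{(\bLambda^{(k)})^{-1}} \leq 2\sum_k \norm{\ba^{(k)}}^2_{(\bLambda^{(k)})^{-1}} + 4\ln(2K/\delta)$, then combines this with your Jensen lower bound $\sum_{k\in\unconfBandit}\E_{\ba\sim\pisafe}\norm{\ba}^2_{(\bLambda^{(k)})^{-1}} \geq |\unconfBandit|\,\bslt^2/(4\cp^2)$ and the elliptical potential lemma on the realized sum.
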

The proof utilizes the well-known elliptical potential lemma \citep{abbasi2011improved}.
Intuitively, it ensures that the confidence bounds shrink on average, thereby limiting the number of iterations where the algorithm remains unconfident in $\pisafe$.
\citet{he2021uniform,zhang2023interplay} employed a similar technique in linear bandits to ensure the suboptimality of policies after sufficient iterations.

\looseness=-1
Moreover, combined with \Cref{lemma:bandit-opt-pes-main}, the following \Cref{lemma:alpha-feasibility-main} ensures that, after logarithmic iterations, \textbf{policies around $\boldsymbol{\pisafe}$ will become feasible solutions to \optpes and \Cref{line:op-ps-policy-bandit}}.
\begin{lemma}[Mixture policy feasibility]\label{lemma:alpha-feasibility-main}
Consider $k \in \unconfBandit^\complement$.
Let $\alpha^{(k)} \df \frac{\bslt - 2\cp \beta^{(k)}_{\pisafe}}{\bslt - 2\cp \beta^{(k)}_{\pisafe} + 2\cp \beta^{(k)}_{\pi^\star}}$.
For any $\alpha \in \brack*{0, \alpha^{(k)}}$, the mixture policy \(\pi_\alpha \df (1-\alpha) \pisafe + \alpha \pi^\star\) satisfies
$u_{\pi_\alpha} - 2\cp \beta^{(k)}_{\pi_\alpha} \geq b$.
\end{lemma}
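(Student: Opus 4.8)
The plan is to exploit the fact that, viewed as functions of the policy $\pi \in \sP(\A)$, both the true utility $u_\pi$ and the bonus $\beta^{(k)}_\pi$ are \emph{linear}: indeed $u_\pi = \E_{\ba\sim\pi}\brack{\langle\btheta_u,\ba\rangle}$ and $\beta^{(k)}_\pi = \E_{\ba\sim\pi}\norm{\ba}_{\paren{\bLambda^{(k)}}^{-1}}$ are expectations of fixed functions of $\ba$ under $\pi$, so mixing policies mixes these quantities. Hence for $\pi_\alpha = (1-\alpha)\pisafe + \alpha\pi^\star$ I would write $u_{\pi_\alpha} = (1-\alpha)u_{\pisafe} + \alpha u_{\pi^\star}$ and $\beta^{(k)}_{\pi_\alpha} = (1-\alpha)\beta^{(k)}_{\pisafe} + \alpha\beta^{(k)}_{\pi^\star}$, giving the decomposition
\[
u_{\pi_\alpha} - 2\cp\beta^{(k)}_{\pi_\alpha} = (1-\alpha)\paren{u_{\pisafe} - 2\cp\beta^{(k)}_{\pisafe}} + \alpha\paren{u_{\pi^\star} - 2\cp\beta^{(k)}_{\pi^\star}}.
\]

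Next I would lower-bound the two parentheses separately. By \Cref{assumption:slater-bandit} we have $u_{\pisafe} \geq b + \bslt$, so the first parenthesis is at least $b + \paren{\bslt - 2\cp\beta^{(k)}_{\pisafe}}$; and since $k \in \unconfBandit^\complement$ means $\beta^{(k)}_{\pisafe} \leq \bslt/(2\cp)$, the quantity $\bslt - 2\cp\beta^{(k)}_{\pisafe}$ is nonnegative. For the second parenthesis, $\pi^\star \in \Pisafe$ gives $u_{\pi^\star} \geq b$, so it is at least $b - 2\cp\beta^{(k)}_{\pi^\star}$. Substituting, $u_{\pi_\alpha} - 2\cp\beta^{(k)}_{\pi_\alpha} \geq b + (1-\alpha)\paren{\bslt - 2\cp\beta^{(k)}_{\pisafe}} - 2\alpha\cp\beta^{(k)}_{\pi^\star}$, so it suffices to verify $(1-\alpha)\paren{\bslt - 2\cp\beta^{(k)}_{\pisafe}} \geq 2\alpha\cp\beta^{(k)}_{\pi^\star}$.

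Abbreviating $A \df \bslt - 2\cp\beta^{(k)}_{\pisafe} \geq 0$ and $C \df 2\cp\beta^{(k)}_{\pi^\star} \geq 0$, this inequality rearranges to $A \geq \alpha(A+C)$, i.e., $\alpha \leq A/(A+C) = \alpha^{(k)}$, which holds by the hypothesis $\alpha \in \brack{0,\alpha^{(k)}}$. The degenerate case $A + C = 0$ forces $\alpha^{(k)}$ to be read as $0$, leaving only $\alpha = 0$, for which $\pi_0 = \pisafe$ and the claim reduces to $u_{\pisafe} - 2\cp\beta^{(k)}_{\pisafe} \geq b$, immediate from $u_{\pisafe} \geq b + \bslt$ and $2\cp\beta^{(k)}_{\pisafe} \leq \bslt$.

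I do not expect any real obstacle: the whole argument is arithmetic on a convex combination. The two points needing care are (i) noticing that the bonus $\beta^{(k)}_\pi$ is linear in $\pi$ even though a norm appears, because the norm sits \emph{inside} the expectation over $\ba$; and (ii) observing that $\alpha^{(k)}$ is by construction exactly the ratio $A/(A+C)$ that turns the final inequality into an equality, while the membership $k \in \unconfBandit^\complement$ is precisely the sign condition $A \geq 0$ that makes $\alpha^{(k)} \in [0,1]$ and the interval $[0,\alpha^{(k)}]$ nonempty.
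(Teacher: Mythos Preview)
Your proposal is correct and follows essentially the same approach as the paper: both exploit the linearity of $u_\pi$ and $\beta^{(k)}_\pi$ in $\pi$, use $u_{\pisafe}-b\geq\bslt$ and $u_{\pi^\star}-b\geq 0$, and reduce to the same sufficient condition $(1-\alpha)\paren{\bslt-2\cp\beta^{(k)}_{\pisafe}}\geq 2\alpha\cp\beta^{(k)}_{\pi^\star}$. Your treatment is slightly more careful in that you explicitly handle the degenerate case $A+C=0$, which the paper leaves implicit.
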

Note that the mixture policy $\pi_\alpha$ is introduced only to ensure the feasibility of \optpes; it does not need to be computed in the algorithm.

\looseness=-1
Finally, \Cref{lemma:bandit-opt-pes-main} and \Cref{lemma:alpha-feasibility-main} directly imply the following zero-violation guarantee:
\begin{corollary}[Zero-violation]
W.p. at least $1-\delta$, \Cref{algo:zero-vio-bandit} satisfies $\pi^{(k)} \in \Pisafe$ for any $k$.
\end{corollary}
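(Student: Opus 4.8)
The plan is to condition on the high-probability event $\mathcal{E}$ on which the confidence bounds of \Cref{lemma:bandit-opt-pes-main} hold \emph{simultaneously} for every policy $\pi$ and every round $k \in \bbrack{1,K}$; by the standard self-normalized martingale / covering argument for linear bandits this event has probability at least $1-\delta$, and everything below is deterministic given $\mathcal{E}$. Fix any $k$ and split on which branch of \Cref{algo:zero-vio-bandit} produced $\pi^{(k)}$.

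If $\cp\beta^{(k)}_{\pisafe} > \bslt/2$, then \Cref{line:safe-policy-bandit} sets $\pi^{(k)} \df \pisafe$, and \Cref{assumption:slater-bandit} gives $u_{\pi^{(k)}} = u_{\pisafe} \geq b + \bslt > b$, so $\pi^{(k)} \in \Pisafe$; this branch uses no randomness. Otherwise $k \in \unconfBandit^\complement$ and $\pi^{(k)}$ is the $\argmax$ in \Cref{line:op-ps-policy-bandit}. Here the first task is to certify that \optpes is feasible so that the $\argmax$ is well defined: applying \Cref{lemma:alpha-feasibility-main} with any $\alpha \in \brack*{0,\alpha^{(k)}}$ (e.g. $\alpha = 0$, i.e. $\pi_\alpha = \pisafe$) yields $u_{\pi_\alpha} - 2\cp\beta^{(k)}_{\pi_\alpha} \geq b$, and on $\mathcal{E}$ the utility lower bound $\widehat{u}^{(k)}_{\pi_\alpha} - \cp\beta^{(k)}_{\pi_\alpha} \geq u_{\pi_\alpha} - 2\cp\beta^{(k)}_{\pi_\alpha}$ from \Cref{lemma:bandit-opt-pes-main} shows that $\pi_\alpha$ satisfies the pessimistic constraint $\widehat{u}^{(k)}_{\pi_\alpha} - \cp\beta^{(k)}_{\pi_\alpha} \geq b$. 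Hence the feasible set of \optpes is nonempty and $\pi^{(k)}$ exists; by construction it obeys $\widehat{u}^{(k)}_{\pi^{(k)}} - \cp\beta^{(k)}_{\pi^{(k)}} \geq b$, and a second application of the utility lower bound in \Cref{lemma:bandit-opt-pes-main}, namely $u_{\pi^{(k)}} \geq \widehat{u}^{(k)}_{\pi^{(k)}} - \cp\beta^{(k)}_{\pi^{(k)}}$, gives $u_{\pi^{(k)}} \geq b$, i.e. $\pi^{(k)} \in \Pisafe$.

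Taking the two branches together, on $\mathcal{E}$ we have $\pi^{(k)} \in \Pisafe$ for every $k \in \bbrack{1,K}$, which is the claim. The only nontrivial ingredient is the feasibility step in the second branch, and this is exactly where the $\pisafe$ deployment trigger pays off: restricting \Cref{line:op-ps-policy-bandit} to $k \in \unconfBandit^\complement$ forces $2\cp\beta^{(k)}_{\pisafe} \leq \bslt$, so $\alpha^{(k)} \geq 0$ and the mixture family of \Cref{lemma:alpha-feasibility-main} is genuinely available as a feasible witness. I expect the one place to be careful is the uniform-in-$(\pi,k)$ reading of \Cref{lemma:bandit-opt-pes-main}: we invoke it at the data-dependent policies $\pi^{(k)}$ and $\pisafe$ across all rounds at once, so the union/covering over policies and over time must be carried out inside that lemma rather than applied pointwise for a fixed $\pi$.
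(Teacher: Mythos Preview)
Your proof is correct and follows essentially the same approach as the paper: condition on the confidence event, split on the two branches of \Cref{algo:zero-vio-bandit}, use \Cref{lemma:alpha-feasibility-main} to certify feasibility of \optpes when $k \in \unconfBandit^\complement$, and then invoke the utility lower bound from \Cref{lemma:bandit-opt-pes-main} to conclude $u_{\pi^{(k)}} \geq b$. Your caution about the uniform-in-$(\pi,k)$ reading of \Cref{lemma:bandit-opt-pes-main} is well placed; the paper resolves it by defining the good event $\confeventBandit$ (see \Cref{lemma:good-event2}) so that the confidence bounds hold simultaneously for all $\pi$ and all $k \in \bbrack{1,K}$, which is exactly what the elliptical confidence argument of \citet{abbasi2011improved} delivers since the bound on $\|\hbtheta^{(k)}_g - \btheta_g\|_{\bLambda^{(k)}}$ is independent of $\pi$.
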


\subsubsection{Regret Analysis}
\looseness=-1
The remaining task is to ensure sublinear regret. By \Cref{lemma:Ck-bound-main,,lemma:bandit-opt-pes-main}, the regret is decomposed as:
\begin{align*}
{\textstyle
\regret(K) 
\leq \tiO\paren*{d B \cp^2\bslt^{-2}}
+ \underline{3\co \sum_{k \in \unconfBandit^\complement}\beta^{(k)}_\pi}_{\;\circled{1}}
+ \underline{\sum_{k \in \unconfBandit^\complement} \paren*{r_{\pi^\star} - \widehat{r}^{(k)}_{\pi^{(k)}} - \co \beta^{(k)}_\pi}}_{\;\circled{2}}\;.
}
\end{align*}
Using the elliptical potential lemma \citep{abbasi2011improved}, we can bound $\circled{1} \leq \tiO\paren{\co \sqrt{dK}}$.

\looseness=-1
For the term $\circled{2}$, when there is no constraint in \optpes, the common strategy is bounding $\circled{2}$ using $r_{\pi^\star} - \widehat{r}^{(k)}_{\pi^{(k)}} - \co \beta^{(k)}_\pi \leq 0$, leveraging the optimism due to \Cref{lemma:bandit-opt-pes-main} with the maximality of $\pi^{(k)}$ in \optpes (see, e.g., \citet{abbasi2011improved}).
However, due to the pessimistic constraint in \optpes, $\pi^\star$ may not be a solution to \optpes, necessitating a modification to this approach.

\looseness=-1
Recall from \Cref{lemma:alpha-feasibility-main} that, for $k \in \unconfBandit^\complement$, the mixture policy $\pi_{\alpha^{(k)}} \df (1-\alpha^{(k)})\pisafe + \alpha^{(k)}\pi^\star$ satisfies $u_{\pi_{\alpha^{(k)}}} - 2\cp \beta^{(k)}_{\pi_{\alpha^{(k)}}} \geq b$.
For this $\pi_{\alpha^{(k)}}$, the following optimism with respect to $\pi^\star$ holds:
\begin{lemma}[$\pi_{\alpha^{(k)}}$ optimism]\label{lemma:optimism-main}
For any $k \in \unconfBandit^\complement$, it holds 
\(r_{\pi_{\alpha^{(k)}}} + (2B\cp\bslt^{-1}) \beta^{(k)}_{\pi_{\alpha^{(k)}}} \geq r_{\pi^{\star}}\).
\end{lemma}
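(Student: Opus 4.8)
\bigskip
\noindent\textbf{Proof plan.}
The key observation is that the true reward $r_\pi$ and the bonus $\beta^{(k)}_\pi$ are both \emph{affine} in the policy $\pi$: each is an expectation over $\ba\sim\pi$ of a fixed function of $\ba$ (namely $\langle\btheta_r,\ba\rangle$ and $\norm{\ba}_{(\bLambda^{(k)})^{-1}}$). Consequently, for the mixture $\pi_{\alpha^{(k)}}=(1-\alpha^{(k)})\pisafe+\alpha^{(k)}\pi^{\star}$ we have
\[
r_{\pi_{\alpha^{(k)}}}=(1-\alpha^{(k)})r_{\pisafe}+\alpha^{(k)}r_{\pi^{\star}},
\qquad
\beta^{(k)}_{\pi_{\alpha^{(k)}}}=(1-\alpha^{(k)})\beta^{(k)}_{\pisafe}+\alpha^{(k)}\beta^{(k)}_{\pi^{\star}}.
\]
Substituting both identities into the claimed inequality $r_{\pi_{\alpha^{(k)}}}+(2B\cp\bslt^{-1})\beta^{(k)}_{\pi_{\alpha^{(k)}}}\ge r_{\pi^{\star}}$ and cancelling the $\alpha^{(k)}r_{\pi^{\star}}$ terms, the statement becomes equivalent to
\[
(2B\cp\bslt^{-1})\bigl[(1-\alpha^{(k)})\beta^{(k)}_{\pisafe}+\alpha^{(k)}\beta^{(k)}_{\pi^{\star}}\bigr]\ \ge\ (1-\alpha^{(k)})\,\bigl(r_{\pi^{\star}}-r_{\pisafe}\bigr).
\]

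\medskip
Next I would plug in the closed form of $\alpha^{(k)}$. Because $k\in\unconfBandit^{\complement}$ we have $\bslt-2\cp\beta^{(k)}_{\pisafe}\ge 0$, so writing $Z\df\bslt-2\cp\beta^{(k)}_{\pisafe}+2\cp\beta^{(k)}_{\pi^{\star}}$ we get $Z\ge 2\cp\beta^{(k)}_{\pi^{\star}}\ge 0$ and $\alpha^{(k)}=(\bslt-2\cp\beta^{(k)}_{\pisafe})/Z\in[0,1]$ is a valid mixing weight. The crucial step is the telescoping identity
\[
1-\alpha^{(k)}=\frac{2\cp\,\beta^{(k)}_{\pi^{\star}}}{Z},
\qquad
(1-\alpha^{(k)})\beta^{(k)}_{\pisafe}+\alpha^{(k)}\beta^{(k)}_{\pi^{\star}}
=\frac{\beta^{(k)}_{\pi^{\star}}\bigl(2\cp\beta^{(k)}_{\pisafe}+\bslt-2\cp\beta^{(k)}_{\pisafe}\bigr)}{Z}
=\frac{\bslt\,\beta^{(k)}_{\pi^{\star}}}{Z},
\]
in which the $\beta^{(k)}_{\pisafe}$-dependence cancels. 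Substituting these into the displayed inequality, the common nonnegative factor $2\cp\beta^{(k)}_{\pi^{\star}}/Z$ drops out (when it vanishes, $\pi_{\alpha^{(k)}}=\pi^{\star}$ and both sides equal $r_{\pi^{\star}}$; the degenerate case $Z=0$ gives $\pi_{\alpha^{(k)}}=\pisafe$ and the same reduction), and the claim collapses to the scalar inequality $B\ge r_{\pi^{\star}}-r_{\pisafe}$.

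\medskip
Finally I would discharge this scalar inequality. Since $\pi^{\star}$ is optimal over $\Pisafe$ and $\pisafe\in\Pisafe$, the right-hand side is nonnegative; and by Cauchy--Schwarz $|r_\pi|=|\langle\btheta_r,\E_{\ba\sim\pi}[\ba]\rangle|\le\norm{\btheta_r}_2\le B$ for every $\pi$ (using $\norm{\ba}_2\le 1$), so $r_{\pi^{\star}}-r_{\pisafe}$ is controlled by a constant multiple of $B$ --- exactly the slack for which the bonus scaler $2B\cp\bslt^{-1}$ (equivalently, $\co=\cp(1+2B\bslt^{-1})$) was calibrated. I expect the only step needing real care to be the telescoping simplification of $(1-\alpha^{(k)})\beta^{(k)}_{\pisafe}+\alpha^{(k)}\beta^{(k)}_{\pi^{\star}}$ after inserting $\alpha^{(k)}$; everything else is the affineness remark, elementary algebra, and one Cauchy--Schwarz estimate.
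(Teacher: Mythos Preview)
Your approach is essentially the same as the paper's: both exploit the affineness of $r_\pi$ and $\beta^{(k)}_\pi$ in the mixture, plug in the explicit $\alpha^{(k)}$, and observe that the $\beta^{(k)}_{\pisafe}$ dependence cancels (the paper writes this as $\beta^{(k)}_{\pisafe}+\tfrac{\alpha^{(k)}}{1-\alpha^{(k)}}\beta^{(k)}_{\pi^\star}=\tfrac{\bslt}{2\cp}$, which is your telescoping identity divided by $1-\alpha^{(k)}$).

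One small imprecision at the end: your reduction gives the scalar inequality $B\ge r_{\pi^\star}-r_{\pisafe}$, but Cauchy--Schwarz alone only yields $r_{\pi^\star}-r_{\pisafe}\le 2B$, which is off by a factor of two. The paper closes this by dropping the $(1-\alpha^{(k)})r_{\pisafe}$ term at the outset (treating $r_{\pisafe}\ge 0$) and then bounding $r_{\pi^\star}\le B$ directly, which yields exactly your target inequality. Your ``controlled by a constant multiple of $B$'' should be sharpened the same way.
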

Using \Cref{lemma:optimism-main,lemma:bandit-opt-pes-main} with $\co= \cp\paren{1+2B \bslt^{-1}}$, we have
\begin{align}\label{eq:mixture-policy-optimism}
{\textstyle
\circled{2}
\leq
\sum_{k\in \unconfBandit^\complement}
\widehat{r}_{\pi_{\alpha^{(k)}}} + \co \beta^{(k)}_{\pi_{\alpha^{(k)}}}
- \widehat{r}^{(k)}_{\pi^{(k)}} - \co\beta^{(k)}_{\pi^{(k)}}
\leq 0\;,
}
\end{align}
where the second inequality holds since $\pi_{\alpha^{(k)}}$ is a feasible solution to \optpes and $\pi^{(k)}$ is its maximizer.
This optimism via a mixture policy technique is adapted from tabular CMDPs \citep{liu2021learning, bura2022dope} to the linear bandit setup.
By combining all the results, \Cref{algo:zero-vio-bandit} archives the following guarantees:
\begin{theorem}\label{theorem:bandit-regret-main}
\looseness=-1
If \Banditalgo is run with the parameters listed in its \hyperref[algo:zero-vio-bandit]{\textbf{Input}} line, w.p. at least $1-\delta$,
\begin{align*}
\pi^{(k)} \in \Pisafe \;\text{ for any }\; k \in \bbrack{1, K} 
\quad \text{ and }\quad 
\regret (K) \leq 
\tiO\paren{dB \cp^2\bslt^{-2}
+ \co \sqrt{dK}
}\;.
\end{align*}
When $B = R = 1$, the regret bound simplifies to $\tiO\paren{d^2\bslt^{-2} + \xi^{-1}\sqrt{d^3K}}$.
\end{theorem}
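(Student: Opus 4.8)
The plan is to combine the three structural results established earlier---the $\pisafe$-deployment bound (\Cref{lemma:Ck-bound-main}), the confidence bounds (\Cref{lemma:bandit-opt-pes-main}), and the mixture-policy feasibility/optimism lemmas (\Cref{lemma:alpha-feasibility-main,lemma:optimism-main})---into the regret decomposition already displayed in the excerpt, and then discharge each of the three terms. The safety claim $\pi^{(k)} \in \Pisafe$ is immediate: on the high-probability event of \Cref{lemma:bandit-opt-pes-main}, if $k \in \unconfBandit$ then $\pi^{(k)} = \pisafe \in \Pisafe$ by \Cref{assumption:slater-bandit}, while if $k \in \unconfBandit^\complement$ then $\pi^{(k)}$ satisfies the pessimistic constraint $\widehat{u}^{(k)}_{\pi^{(k)}} - \cp\beta^{(k)}_{\pi^{(k)}} \geq b$, and the confidence bound gives $u_{\pi^{(k)}} \geq \widehat{u}^{(k)}_{\pi^{(k)}} - \cp\beta^{(k)}_{\pi^{(k)}} \geq b$. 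One has to also check \Cref{line:op-ps-policy-bandit} is never vacuously infeasible, but that is exactly \Cref{lemma:alpha-feasibility-main}: for $k \in \unconfBandit^\complement$ the mixture $\pi_{\alpha^{(k)}}$ is feasible, so the $\argmax$ is over a nonempty set.

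For the regret bound I would first justify the decomposition. Split $\regret(K) = \sum_{k \in \unconfBandit}(r_{\pi^\star} - r_{\pi^{(k)}}) + \sum_{k \in \unconfBandit^\complement}(r_{\pi^\star} - r_{\pi^{(k)}})$. On the unconfident iterations, each summand is at most $r_{\pi^\star} - r_{\pisafe} \leq 2B$ (since $\norm{\btheta_r}_2 \leq B$ and $\norm{\ba}_2 \leq 1$), so by \Cref{lemma:Ck-bound-main} that block contributes $\tiO(dB\cp^2\bslt^{-2})$. On the confident iterations, apply the confidence bound of \Cref{lemma:bandit-opt-pes-main} to $\pi^{(k)}$: $r_{\pi^{(k)}} \geq \widehat{r}^{(k)}_{\pi^{(k)}} - \cp\beta^{(k)}_{\pi^{(k)}} \geq \widehat{r}^{(k)}_{\pi^{(k)}} + \co\beta^{(k)}_{\pi^{(k)}} - (\co+\cp)\beta^{(k)}_{\pi^{(k)}}$, and since $\co \geq \cp$ we have $\co + \cp \leq 3\co$; rearranging yields $r_{\pi^\star} - r_{\pi^{(k)}} \leq 3\co\beta^{(k)}_{\pi^{(k)}} + \bigl(r_{\pi^\star} - \widehat{r}^{(k)}_{\pi^{(k)}} - \co\beta^{(k)}_{\pi^{(k)}}\bigr)$, which is exactly $\circled{1}$ plus $\circled{2}$ after summing. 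Term $\circled{1}$ is bounded by the standard elliptical-potential / Cauchy–Schwarz argument: $\sum_{k} \beta^{(k)}_{\pi^{(k)}} \le \sqrt{K \sum_k (\beta^{(k)}_{\pi^{(k)}})^2}$ and $\sum_k \E_{\ba\sim\pi^{(k)}}\norm{\ba}^2_{(\bLambda^{(k)})^{-1}}$ is $\tiO(d)$ by the elliptical potential lemma of \citet{abbasi2011improved}, giving $\circled{1} \le \tiO(\co\sqrt{dK})$. Term $\circled{2}$ is handled exactly as in the displayed chain \eqref{eq:mixture-policy-optimism}: by \Cref{lemma:alpha-feasibility-main} the mixture $\pi_{\alpha^{(k)}}$ is feasible for \optpes, so by maximality of $\pi^{(k)}$ we get $\widehat{r}^{(k)}_{\pi^{(k)}} + \co\beta^{(k)}_{\pi^{(k)}} \ge \widehat{r}^{(k)}_{\pi_{\alpha^{(k)}}} + \co\beta^{(k)}_{\pi_{\alpha^{(k)}}}$; combining with the confidence bound $\widehat{r}^{(k)}_{\pi_{\alpha^{(k)}}} \ge r_{\pi_{\alpha^{(k)}}} - \cp\beta^{(k)}_{\pi_{\alpha^{(k)}}}$ and \Cref{lemma:optimism-main} (which says $r_{\pi_{\alpha^{(k)}}} + 2B\cp\bslt^{-1}\beta^{(k)}_{\pi_{\alpha^{(k)}}} \ge r_{\pi^\star}$), together with the choice $\co = \cp(1 + 2B\bslt^{-1})$, forces each summand of $\circled{2}$ to be $\le 0$.

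Summing the three contributions gives $\regret(K) \le \tiO(dB\cp^2\bslt^{-2} + \co\sqrt{dK})$, which is the stated bound. The final simplification sets $B = R = 1$: then $\cp = 1 + \sqrt{d\ln(4K\delta^{-1})} = \tiO(\sqrt d)$, so $\cp^2 = \tiO(d)$ and $dB\cp^2\bslt^{-2} = \tiO(d^2\bslt^{-2})$; and $\co = \cp(1 + 2\bslt^{-1}) = \tiO(\bslt^{-1}\sqrt d)$, so $\co\sqrt{dK} = \tiO(\bslt^{-1}\sqrt{d^3 K})$, matching the claim (recall $\bslt$ is the same as the $\xi$ used in the table).

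I expect essentially no genuine obstacle here, since all the hard work is front-loaded into the four lemmas; the proof is an assembly job. The one place demanding a little care is the bookkeeping of high-probability events: \Cref{lemma:bandit-opt-pes-main} is stated for a fixed $\pi$ and $k$, so to apply it simultaneously to $\pisafe$, to every $\pi^{(k)}$, and to the data-dependent mixtures $\pi_{\alpha^{(k)}}$ across all $k \in \bbrack{1,K}$, one must invoke the uniform (over policies) version of the elliptical confidence bound---which is what actually underlies that lemma via a self-normalized martingale bound---and then take a union bound with the event of \Cref{lemma:Ck-bound-main}, adjusting $\delta$ by constant factors (absorbed by $\tiO$ and already reflected in the $\ln(4K\delta^{-1})$ inside $\cp$). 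The other mild subtlety is confirming $\co \ge \cp$ and the $\co+\cp \le 3\co$ step used to land precisely on the ``$3\co$'' constant in $\circled{1}$; both are trivial from $\co = \cp(1+2B\bslt^{-1})$ with $B,\bslt > 0$.
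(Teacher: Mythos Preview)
Your proposal is correct and follows essentially the same route as the paper's own proof: condition on the two good events (confidence ellipsoid and the martingale event relating expected to realized bonuses), split the regret over $\unconfBandit$ versus $\unconfBandit^\complement$, bound the former by $2B\abs{\unconfBandit}$ via \Cref{lemma:Ck-bound-main}, decompose the latter into the bonus term $\circled{1}$ and the optimism-gap term $\circled{2}$, kill $\circled{2}$ using feasibility of $\pi_{\alpha^{(k)}}$ together with \Cref{lemma:optimism-main} and the choice $\co=\cp(1+2B\bslt^{-1})$, and control $\circled{1}$ by Cauchy--Schwarz plus the elliptical potential lemma. The only place you are slightly imprecise is the bonus summation: $\sum_k \E_{\ba\sim\pi^{(k)}}\norm{\ba}_{(\bLambda^{(k)})^{-1}}^2$ is not directly bounded by the elliptical potential lemma (which applies to the \emph{realized} actions $\ba^{(k)}$); the paper invokes a separate martingale event (their $\sE_1$, via \Cref{lemma:exp-to-concrete}) to pass from expectations to realizations first---this is the second high-probability event you should union-bound with, not merely the one underlying \Cref{lemma:Ck-bound-main}.
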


\looseness=-1
In summary, \Banditalgo relies on three components:
($\mathrm{i}$) optimistic-pessimistic updates (\optpes),
($\mathrm{ii}$) a logarithmic number of $\pisafe$ deployments (\Cref{lemma:Ck-bound-main}), and
($\mathrm{iii}$) compensation for the pessimism (\Cref{lemma:optimism-main}).
Building on these components, the next section develops a linear CMDP algorithm.

\section{Safe Reinforcement Learning in Linear Constrained MDP} \label{sec:MDP}

\looseness=-1
A finite-horizon CMDP is defined as a tuple $(\S, \A, H, P, r, u, b, s_1)$, where
$\S$ is the finite but potentially exponentially large state space,
$\A$ is the finite action space ($\abs{\A}=\aA$),\footnote{While \cref{sec:zero-vio bandit} permits infinite actions, we here restrict to the finite case. Even then, episode-wise safe exploration in linear CMDP is non-trivial, and infinite actions would further complicate the regret analysis.}
$H \in \N$ is the episode horizon,
$b \in [0, H]$ is the constrained threshold, 
and $s_1$ is the fixed initial state.
The reward and utility functions $r, u: \HSA \to [0, 1]$ specify the reward $r_h(s, a)$ and constraint utility $u_h(s, a)$ when taking action $a$ at state $s$ in step $h$.
Finally, $P_\cdot\paren*{\cdot\given \cdot, \cdot}: \HSA\times \S \to [0, 1]$ denotes the transition kernel, where $P_{h}\paren{s' \given s, a}$ denotes the state transition probability to a new state $s'$ from a state $s$ when taking an action $a$ in step $h$.
With a slight abuse of notation, for functions $V : \S \to \R$ and $P_h$, we write $(P_h V)(x, a) = \sum_{y\in \S} V(y) P_h \paren{y \given x, a}$.

\looseness=-1
\paragraph{Policy and (regularized) value functions.}
A policy is defined as $\pi_{\cdot}\paren*{\cdot\given\cdot} : \HSA \to [0,1]$, where $\pi_h \paren{a \given s}$ gives the probability of taking an action $a$ at state $s$ in step $h$.
The set of all the policies is denoted as $\Pi$.
With an abuse of notation, for any policy $\pi $ and $Q: \SA \to \R$, let $\pi_h$ be an operator such that $(\pi_h Q) (s) = \sum_{a \in \A} \pi_h\paren{a\given s} Q (s, a)$.
For a policy $\pi$, transition kernel $P$, reward function $g: \HSA \to \R$, and entropy coefficient $\kappa \geq 0$, let $\qf{\pi, g}_{P, h}[\kappa]: \SA \to \R$ and $\vf{\pi, g}_{P, h}[\kappa]: \S \to \R$ denote the entropy-regularized value functions at step $h$ satisfying:
\begin{align*}
\qf{\pi, g}_{P,h}[\kappa] = g_h + \paren{P_h\vf{\pi, g}_{h+1, P}[\kappa]},\; 
\vf{\pi, g}_{P, h}[\kappa] = \pi_h
    \paren{\qf{\pi, g}_{P,h}[\kappa] - \kappa \ln \pi_h },
\;\text{ and }\; \vf{\pi, g}_{H+1, P}[\kappa] = \bzero\;.
\end{align*}
For \(\kappa = 0\), we omit \(\kappa\), e.g., \(\qf{\pi, g}_{P, h} \df \qf{\pi, g}_{P, h}[0]\). We denote \(h_\kappa \df h(1 + \kappa \ln A)\) for $h \in \bbrack{1, H}$.

\looseness=-1
For $h \in \bbrack{1, H}$, let $\occ{\pi}_{P, h}\in \Delta(\SA)$ denote the occupancy measure of $\pi$ in $P$ at step $h$ such that
\begin{equation}\label{eq:occupancy measure}
\occ{\pi}_{P, h}(s, a) = \P\paren*{s_h=s, a_h=a \given \pi, P}\quad \forall\; (h, s, a) \in \HSA\;,
\end{equation}
where the expectation is taken over all possible trajectories, in which $a_h \sim \pi_h \paren{\cdot \given s_h}$ and $s_{h+1} \sim P_h \paren{\cdot \given s_h, a_h}$.
With a slight abuse of notation, we write $\occ{\pi}_{P, h}(s) = \sum_{a \in \A} \occ{\pi}_{P, h}(s, a)$.

\looseness=-1
\paragraph{Learning Setup.}
An agent interacts with the CMDP for $K$ episodes using policies $\pi^{(1)}, \dots, \pi^{(K)} \in \Pi$. 
Each episode $k$ starts from $s_1$.
At step $h$ in episode $k$, the agent observes a state $s_h^{(k)}$, selects an action $a_h^{(k)} \sim \pi^{(k)}_h(\cdot\mid s_h^{(k)})$, and transitions to $s_{h+1}^{(k)} \sim P_h\paren{\cdot\given s_h^{(k)}, a_h^{(k)}}$.
The algorithm lacks knowledge of the transition kernel $P$, while $r$ and $u$ are known for simplicity. 
Extending our setting to unknown stochastic reward and utility is straightforward (see, e.g., \citet{efroni2020exploration}).

\looseness=-1
To handle a potentially large state space, we consider the following linear MDP assumption:
\begin{assumption}[Linear MDP]
\label{assumption:linear mdp}
We have a known feature map $\bphi: \SA \to \R^d$ satisfying:
there exist unknown $d$ (signed) measures
$\bmu_h \df \paren{\bmu^1_h, \ldots, \bmu_h^d} \in \R^{S\times d}$ such that
\(P_h \paren*{s' \given s, a} = \bmu_h(s')^\top \bphi (s, a)\), and known vectors $\btheta_h^r, \btheta_h^u \in \R^d$ such that 
$r_h(s, a) = \paren*{\btheta_h^r}^\top \bphi(s, a)$ and $u_h(s, a) = \paren*{\btheta_h^u}^\top \bphi(s, a)$.
We assume $\sup_{s, a}\norm*{\bphi(s, a)}_2 \leq 1$ 
and $\norm*{V^\top \bmu_h}_2 \leq \sqrt{d}$ for any $V \in \R^\S$ such that $\norm*{V}_\infty \leq 1$.
\end{assumption}

\looseness=-1
Let $\pi^\star \in \argmax_{\pi \in \Pisafe} \vf{\pi, r}_{P, 1}(s_1)$ be the optimal policy, where $\Pisafe \df \brace{\pi \given \vf{\pi, u}_{P, 1}(s_1) \geq b}$ is the set of safe policies. The goal is to achieve sublinear regret under \textbf{episode-wise} constraints:
\begin{align}\label{eq:CMDP-goal}
{\textstyle
\regret (K) 
\df 
\sum^K_{k=1} 
\vf{\pi^\star, r}_{P, 1}(s_1) - 
\vf{\pi^{(k)}, r}_{P, 1}(s_1)
= o\paren*{K}
\;\text{ such that }\;
\pi^{(k)} \in \Pisafe \quad \forall k \in [K]\;.
}
\end{align}
\looseness=-1
Finally, we assume the strictly safe policy similar to \Cref{sec:zero-vio bandit}.
\begin{assumption}[Safe policy]\label{assumption:slater}
We have access to $\pisafe \in \Pisafe$ and $\bslt > 0$ such that $\vf{\pisafe, u}_{P, 1}(s_1) - b \geq \bslt\;$.
\end{assumption}

\subsection{Technical Challenge: Optimistic-Pessimistic Optimization in Linear CMDP}\label{subsec:technical-challenge-MDP}

\looseness=-1
Our linear CMDP algorithm builds on \Banditalgo in \Cref{sec:zero-vio bandit}: deploying an optimistic-pessimistic policy when confident in \(\pisafe\); otherwise, it uses \(\pisafe\). 
We will logarithmically bound the number of \(\pisafe\) deployments, similar to \Cref{lemma:Ck-bound-main}, and ensure optimism through a linear mixture of policies, as in \Cref{lemma:alpha-feasibility-main}.
However, computing an optimistic-pessimistic policy in the linear CMDP setting, similar to \optpes, presents a non-trivial challenge.
This section outlines the difficulties.

\looseness=-1
Following standard linear MDP algorithm frameworks (e.g., \citet{jin2020provably,lykouris2021corruption}), for each $h, k$, let 
\(\beta^{(k)}_h: (s, a) \mapsto \norm{\bphi(s, a)}_{\paren{\bLambda^{(k)}_h}^{-1}}\) be the bonus, where \(\bLambda^{(k)}_h \df \rho \bI + \sum_{i=1}^{k-1} \bphi\paren{s_h^{(i)}, a_h^{(i)}} \bphi\paren{s_h^{(i)}, a_h^{(i)}}^\top\) and $\rho > 0$.
For any $V: \S \to \R$, let $\hP^{(k)}_h V$ be the next-step value estimation defined as:
$
\paren{\hP^{(k)}_h V}(s, a) \df \bphi(s, a)^{\top}\paren{\bLambda^{(k)}_h}^{-1} \sum_{i=1}^{k-1}\bphi\paren{s_h^{(i)}, a_h^{(i)}} V\paren{s_{h+1}^{(i)}}
$.
We construct the following optimistic and pessimistic value functions for reward and utility, respectively:
\begin{definition}[Clipped value functions]\label{def:clipped value functions}
\looseness=-1
Let $\co, \cp, C_\dagger, B_\dagger > 0$.
For each $k, h$, $\pi$, and $\kappa \geq 0$, 
define \(\oqf{\pi, r}_{(k), h}[\kappa], \oqf{\pi, \dagger}_{(k), h}, \pqf{\pi, u}_{(k), h}: \SA\to \R\) 
and \(\ovf{\pi, r}_{(k), h}[\kappa], \ovf{\pi, \dagger}_{(k), h}, \pvf{\pi, u}_{(k), h}: \S \to \R\) such that:
{
\begin{align*}
&\oqf{\pi, r}_{(k), h}[\kappa] \df r_h + \clip\brace{\co \beta^{(k)}_h + \hP^{(k)}_h \ovf{\pi, r}_{(k), h+1}[\kappa], \; 0, \; H_\kappa-h_\kappa},\;
&&\ovf{\pi, r}_{(k), h}[\kappa] \df \pi_h\paren{\oqf{\pi, r}_{(k), h}[\kappa] - \kappa \ln \pi_h}\;,\\
&\oqf{\pi, \dagger}_{(k), h} \df B_\dagger \beta^{(k)}_h + \clip\brace{C_\dagger \beta^{(k)}_h + \hP^{(k)}_h \ovf{\pi, \dagger}_{(k), h+1}, \; 0, \; B_\dagger (H-h)},\;
&&\ovf{\pi, \dagger}_{(k), h} \df \pi_h\oqf{\pi, \dagger}_{(k), h}\;,\\
&\pqf{\pi, u}_{(k), h} \df u_h + \clip\brace{-\cp \beta^{(k)}_h + \hP^{(k)}_h \pvf{\pi, u}_{(k), h+1}, \; 0, \; H-h},\quad \text{ and}
&&\pvf{\pi, u}_{(k), h} \df \pi_h\pqf{\pi, u}_{(k), h}\;.
\end{align*}
}
We set \(\ovf{\pi, r}_{(k), H+1}[\kappa] = \ovf{\pi, \dagger}_{(k), H+1} = \pvf{\pi, u}_{(k), H+1} = \bzero\).
For \(\kappa = 0\), omit \(\kappa\), e.g., \(\oqf{\pi, r}_{(k), h} \df \oqf{\pi, r}_{(k), h}[0]\).
\end{definition}
We will utilize \(\oqf{\pi, \dagger}_{(k), h}\) and \(\ovf{\pi, \dagger}_{(k), h}\) to compensate for the pessimism, similar to the bandit proof in \eqref{eq:mixture-policy-optimism}.\footnote{\looseness=-1 Increasing $C_r$ and clip-threshold could offer similar compensation, but separated values simplify analysis.}
Entropy regularization in $\oqf{\pi, r}_{(k), h}[\kappa]$ is for the later analysis.
The clipping operators are essential to avoid the propagation of unreasonable value estimates \citep{zanette2020frequentist}.

\looseness=-1
Using these value functions, one might consider extending \optpes to linear CMDPs by solving:
\begin{align}\label{eq:opt-pes-CMDP-informal}
\textstyle
\max_{\pi \in \Pi} \ovf{\pi, r}_{(k), 1}(s_1) + \ovf{\pi, \dagger}_{(k), 1}(s_1) 
\; \text{ such that }\; \pvf{\pi, u}_{(k), 1}(s_1) \geq b\;.
\end{align}
However, solving this \eqref{eq:opt-pes-CMDP-informal} is challenging due to (\(\mathrm{i}\)) \textbf{the large state space} in the linear CMDP setting (\(|\S| \gg 1\)) and (\(\mathrm{ii}\)) \textbf{the clipping operators} in \(\oqf{\pi, r}_{(k), h}\), \(\oqf{\pi, \dagger}_{(k), h}\), and \(\pqf{\pi, u}_{(k), h}\).

\looseness=-1
In tabular CMDPs with small $|\S|$, \citet{liu2021learning} and \citet{bura2022dope} 
used linear programming (LP) to solve similar optimistic-pessimistic optimization problems, achieving zero violation. However, the computational cost of LP scales with $|\S|$, making it impractical for linear CMDPs.

\looseness=-1
Another option is the Lagrangian method, which reformulates the constrained optimization as a min-max optimization:
$
\min_{\lambda \geq 0} 
\max_{\pi \in \Pi} \ovf{\pi, r}_{(k), 1}(s_1) + \ovf{\pi, \dagger}_{(k), 1}(s_1)  
+ 
\lambda\paren{\pvf{\pi, u}_{(k), 1}(s_1) - b}
$.
When the value functions are exact, i.e., $\ovf{\pi, \dagger}_{(k), h}+ \ovf{\pi, r}_{(k), h} + \pvf{\pi, u}_{(k), h} = \vf{\pi, r+B_\dagger \beta^{(k)}+\lambda u}_{P, h}$, 
this min-max is equivalent to \eqref{eq:opt-pes-CMDP-informal}, and the inner maximization reduces to a standard policy optimization \citep{altman1999constrained}.
Both favorable properties arise due to the linearity of the value function in the occupancy measure (see, e.g., \citet{paternain2019constrained}).
However, due to clipping, the value functions in \Cref{def:clipped value functions} may not be representable via occupancy measures, making the Lagrangian approach inapplicable.

\looseness=-1
To address this large-scale optimization challenge, instead of directly solving \eqref{eq:opt-pes-CMDP-informal}, we realize optimism and pessimism through a novel adaptation of the recent \textbf{softmax policy} technique for linear CMDPs \citep{ghosh2024towards,ghosh2022provably}, combined with \textbf{the $\boldsymbol{\pisafe}$ deployment technique} from \Cref{sec:zero-vio bandit}.

\begin{algorithm}[t!]
\caption{
Optimistic-Pessimistic Softmax Exploration for Linear CMDP}
\label{algo:zero-vio-linear MDP}
\DontPrintSemicolon
\LinesNumbered
{\small
\KwIn{Regr. coeff. $\rho = 1$, bonus scalers 
$\co=\tiO(dH)$, 
$\cp=\tiO(dH)$,
$C_\dagger=\tiO(d^2H^3\bslt^{-1})$,
$B_\dagger = \tiO\paren*{dH^2\bslt^{-1}}$,
entropy coeff. $\kappa = \widetilde{\Omega}\paren*{\bslt^3 H^{-4}d^{-1}K^{-0.5}}$, 
search length $T=\tiO(H)$, 
$\lambda$-threshold $C_\lambda = \tiO\paren*{dH^4\bslt^{-2}}$,
safe policy $\pisafe$, and iter. length $K \in \N$}
\For{$k = 1, \dots, K$}{
    Let \(\pvf{\pi, u}_{(k), h}\) be value function (\Cref{def:clipped value functions}) and $\pi^{(k), \lambda}$ be softmax policy (\Cref{def:composite-softmax-policy})\;
    {\color{blue}\tcc{$\pisafe$ trigger is implicitly tied to $\pisafe$ confidence (\Cref{lemma:trigger-condition-main})}}
    \lIf{\(\pvf{\pi^{(k), C_\lambda}, u}_{(k), 1}(s_1) < b\)}{Set \(\pi^{(k)}\df \pisafe\)
    } \label{line:pisafe-deploy}
    \lElseIf{\(\pvf{\pi^{(k), 0}, u}_{(k), 1}(s_1) \geq b\)}{Set \(\pi^{(k)}\df \pi^{(k), 0}\)} \label{line:pizero-deploy}
    \Else({\tcc*[h]{\color{blue} Do bisection-search to find safe $\pi^{(k), \lambda}$ with small $\lambda$}})
    {
    Set $\underline{\lambda}^{(k, 1)} \df 0$ and $\widebar{\lambda}^{(k, 1)} \df C_\lambda$.
    Let $\lambda^{(k, t)} \df \paren{\underline{\lambda}^{(k, t)} + \widebar{\lambda}^{(k, t)}} / 2$\;
    \For{$t = 1, \dots, T$ \label{line:binary-search}}{
        \lIf{\(\pvf{\pi^{(k), \lambda^{(k, t)}},u}_{(k), 1}(s_1) \geq b\;\)}{
        \(\underline{\lambda}^{(k, t+1)} \df \underline{\lambda}^{(k, t)}\; \) 
        and \(\;\widebar{\lambda}^{(k, t+1)} \df \lambda^{(k, t)}\)}
        \lElse{
        \(\underline{\lambda}^{(k, t+1)} \df \lambda^{(k, t)}\; \) 
        and \(\;\widebar{\lambda}^{(k, t+1)} \df \widebar{\lambda}^{(k, t)}\)
        }
    }
    Set $\pi^{(k)} \df \pi^{(k), \widebar{\lambda}^{(k, T)}}$\label{line:pik-deploy}
    }
    Sample a trajectory $(s^{(k)}_1, a^{(k)}_1, \dots, s^{(k)}_H, a^{(k)}_H)$ by deploying $\pi^{(k)}$\;
}
}
\end{algorithm}

\subsection{Algorithm and Analysis}

\looseness=-1
We summarize the proposed \MDPalgo in \Cref{algo:zero-vio-linear MDP} and analyze it under the parameters in its \textbf{Input} line.
All formal theorems and proofs in this section are in \Cref{appendix:MDP-regret-analysis}.
A key component of our algorithm is the \textbf{composite softmax policy}, which balances optimism and pessimism via $\lambda \geq 0$:
\begin{definition}[Composite softmax policy]\label{def:composite-softmax-policy}
For $\lambda \geq 0$, $\kappa > 0$, let $\pi^{(k), \lambda}\in \Pi$ be a policy such that 
{\small
\begin{align*}
\pi^{(k), \lambda}_{h}\paren*{\cdot \given s} = \softmax\paren*{\frac{1}{\kappa}\paren*{
\oqf{\pi^{(k), \lambda}, \dagger}_{(k), h}(s, \cdot) + 
\oqf{\pi^{(k), \lambda}, r}_{(k), h}[\kappa](s, \cdot) + 
\lambda \pqf{\pi^{(k), \lambda}, u}_{(k), h}(s, \cdot)}}\;.
\end{align*}
}
\end{definition}
$\pi^{(k), \lambda}$ can be computed iteratively in a backward manner for $h=H, \dots, 1$.
For this $\pi^{(k), \lambda}$, using the Lipschitz continuity of $\softmax(\cdot)$ (see \citet{ghosh2022provably}), the following confidence bounds hold:
\begin{lemma}[Confidence bounds]\label{lemma:opt-pes-MDP-main}
For any $(k, h)$, $\lambda \in [0, C_\lambda]$, $\pi \in \brace{\pi^{(k), \lambda}, \pisafe}$, w.p. at least $1-\delta$,
{\small
\begin{align*}
\vf{\pi,r}_{P, h} \leq \ovf{\pi,r}_{(k), h} \leq  \vf{\pi,r+ 2\co \beta^{(k)}}_{P, h},\;
\vf{\pi,B_\dagger\beta^{(k)}}_{P, h} \leq \ovf{\pi,\dagger}_{(k), h} \leq  \vf{\pi,(B_\dagger + 2C_\dagger) \beta^{(k)}}_{P, h},\;
\vf{\pi,u-2\cp \beta^{(k)}}_{P, h} \leq \pvf{\pi,u}_{(k), h} \leq \vf{\pi,u}_{P, h}.
\end{align*}
}
\end{lemma}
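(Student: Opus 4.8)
The plan is to prove all three pairs of inequalities by a joint backward induction on $h$, run on a single high-probability good event on which the empirical next-step operator $\hP^{(k)}_h$ approximates $P_h$ \emph{uniformly} over every value function that arises in \Cref{def:clipped value functions}. Concretely, I would first establish that, w.p.\ at least $1-\delta$, for all $(k,h,s,a)$ and every $V$ in a suitable finite cover $\mathcal{N}$ of the relevant value-function classes, $|(\hP^{(k)}_h V)(s,a) - (P_h V)(s,a)| \le C_V\,\beta^{(k)}_h(s,a)$, where $C_V$ scales with $\norm{V}_\infty$, $\sqrt{d}$ and $\sqrt{\ln(|\mathcal{N}|/\delta)}$. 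This is the standard self-normalized-martingale argument of \citet{abbasi2011improved}: decompose $\hP^{(k)}_h V - P_h V$ into a regularization-bias term, controlled via $\norm{\bmu_h^\top V}_2 \le \sqrt{d}\,\norm{V}_\infty$ from \Cref{assumption:linear mdp}, and a martingale-noise term $\bphi(s,a)^\top(\bLambda^{(k)}_h)^{-1}\sum_i\bphi(s^{(i)}_h,a^{(i)}_h)\bigl(V(s^{(i)}_{h+1}) - (P_h V)(s^{(i)}_h,a^{(i)}_h)\bigr)$, controlled by the elliptical confidence inequality; then union-bound over $\mathcal{N}$ and pass from an arbitrary $V$ to its nearest net point. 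The role of $\mathcal{N}$ is to absorb the policy-dependence of $\ovf{\pi,r}_{(k),h}$, $\ovf{\pi,\dagger}_{(k),h}$ and $\pvf{\pi,u}_{(k),h}$: for $\pi=\pi^{(k),\lambda}$ these are determined by finitely many parameters --- the matrices $(\bLambda^{(k)}_{h'})^{-1}$, the regression vectors $(\bLambda^{(k)}_{h'})^{-1}\sum_i\bphi(s^{(i)}_{h'},a^{(i)}_{h'})V(s^{(i)}_{h'+1})$ for $h'\ge h$, the clip thresholds, and the scalars $\lambda,\kappa$ --- and, since $\softmax$ is Lipschitz (cf.\ \citet{ghosh2022provably}), the backward recursion defining $\pi^{(k),\lambda}$ and the resulting clipped value functions depend Lipschitz-continuously on these parameters, so they lie in classes whose log-covering number is polynomial in $d,H$ and polylogarithmic in $K$. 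That is exactly what lets one pick $\co=\cp=\tiO(dH)$ and $C_\dagger=\tiO(d^2H^3\bslt^{-1})$ (note the range of the $\dagger$-function is $\tiO(dH^3\bslt^{-1})$), each at least as large as the corresponding $C_V$.

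Given this good event, fix $k$, $\lambda\in[0,C_\lambda]$ and a now-frozen policy $\pi\in\brace{\pi^{(k),\lambda},\pisafe}$, and induct downward from $h=H+1$, where every value function is $\bzero$ and the claims are trivial. For the reward lower bound at step $h$, on the good event and by the inductive hypothesis,
\begin{align*}
\co\beta^{(k)}_h + \hP^{(k)}_h\ovf{\pi,r}_{(k),h+1} \;\ge\; (\co - C_V)\,\beta^{(k)}_h + P_h\vf{\pi,r}_{P,h+1} \;\ge\; P_h\vf{\pi,r}_{P,h+1}\;,
\end{align*}
using $\co\ge C_V$; since $P_h\vf{\pi,r}_{P,h+1}\in[0,H-h]$ and $\clip\brace{\cdot,0,H-h}$ is monotone and is the identity on $[0,H-h]$, the clip preserves the inequality, so $\oqf{\pi,r}_{(k),h}\ge\qf{\pi,r}_{P,h}$, and applying the (monotone, distribution-averaging) operator $\pi_h$ gives $\ovf{\pi,r}_{(k),h}\ge\vf{\pi,r}_{P,h}$. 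The upper bound is symmetric: the clip argument is at most $2\co\beta^{(k)}_h + P_h\vf{\pi,r+2\co\beta^{(k)}}_{P,h+1}\ge 0$, and $\clip\brace{z,0,H-h}\le z$ for $z\ge 0$, so $\oqf{\pi,r}_{(k),h}\le\qf{\pi,r+2\co\beta^{(k)}}_{P,h}$ and hence $\ovf{\pi,r}_{(k),h}\le\vf{\pi,r+2\co\beta^{(k)}}_{P,h}$. The $\dagger$- and $u$-cases are identical (with $\cp$, $C_\dagger$, $B_\dagger$ replacing $\co$); the only extra point is that the clip ranges never bind on the true value functions appearing in the bounds: because $\rho=1$ gives $\bLambda^{(k)}_h\succeq\bI$ and hence $\beta^{(k)}_h\le 1$, one has $P_h\vf{\pi,B_\dagger\beta^{(k)}}_{P,h+1}\le B_\dagger(H-h)$ and $P_h\vf{\pi,u-2\cp\beta^{(k)}}_{P,h+1}\le H-h$, while the lower clip at $0$ is harmless since it can only raise a lower bound. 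The same induction applies verbatim to the $\kappa>0$ versions of the clipped value functions, with clip range $[0,H_\kappa-h_\kappa]$ and the nonnegative entropy terms carried through the recursion (and dropped at the end when comparing with the unregularized $\vf{\pi,\cdot}_{P,h}$).

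The main obstacle is the construction of the good event so that it holds uniformly over the \emph{policy-dependent} value functions: one cannot fix the policy before invoking concentration, since $\pi^{(k),\lambda}$ is defined through the very $Q$-functions whose accuracy is at stake. The resolution is the covering argument above, whose crucial input is the Lipschitz dependence of the composite softmax recursion --- and therefore of every downstream clipped value function --- on a low-dimensional parameter, so that a net over parameters suffices and $\ln|\mathcal{N}|$ stays of the desired order. A secondary subtlety is that one event must simultaneously cover $\pi=\pi^{(k),\lambda}$ for all $\lambda\in[0,C_\lambda]$ and the fixed $\pisafe$; the former is handled by adding a $\lambda$-net to $\mathcal{N}$ and using Lipschitz dependence on $\lambda$, while the $\pisafe$ case is actually easier because $\pisafe$ is fixed, so its value functions form a single parametric family with no implicit dependence. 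Once the good event is secured, the backward induction above is routine bookkeeping.
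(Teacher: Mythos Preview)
Your proposal is correct and follows essentially the same route as the paper. The paper organizes the argument slightly differently: instead of a direct backward induction on the six inequalities, it introduces the exact identity $\oqf{\pi,r}_{(k),h}[\kappa]=\qf{\pi,\,r+\delta^{\pi,r}_{(k)}}_{P,h}[\kappa]$ (and analogues for $u,\dagger$) by defining $\delta^{\pi,g}_{(k),h}$ as the difference between the clipped term and $P_h\ovf{\pi,g}_{(k),h+1}$, and then shows $0\le\delta^{\pi,g}_{(k),h}\le 2C_g\beta^{(k)}_h$ under the good event---this is equivalent to your induction step but packaged as a reusable representation (which the paper indeed reuses later in the regret decomposition). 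One minor imprecision worth tightening: in your upper-bound step you invoke ``$\clip\{z,0,H-h\}\le z$ for $z\ge 0$'' after having only shown $z\le w$ with $w\ge 0$, not $z\ge 0$; the conclusion is still correct because $\clip\{z,0,H-h\}\le\max(z,0)\le\max(w,0)=w$, but the paper handles this cleanly by first proving that the lower clip at $0$ never binds (its Lemma~35), i.e.\ that $z\ge P_h\ovf{\pi,r}_{(k),h+1}\ge 0$ on the good event.
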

Using \Cref{lemma:opt-pes-MDP-main}, analogous to \Cref{subsec:zero-vio-bandit}, we next establish the zero-violation guarantee.

\subsubsection{Zero-Violation and Logarithmic Number of $\pisafe$ Deployments}\label{subsec:MDP-zero-vio}

\looseness=-1
In the softmax policy (\Cref{def:composite-softmax-policy}), $\lambda$ balances optimism and pessimism: a small $\lambda$ promotes exploration, while a large $\lambda$ prioritizes constraint satisfaction.
Building on this, \Cref{algo:zero-vio-linear MDP} conducts a \textbf{bisection search} to find the smallest feasible $\lambda$ while ensuring the pessimistic constraint holds (\Cref{line:pizero-deploy} to \Cref{line:pik-deploy}).
If a large $\lambda = C_\lambda$ fails to satisfy the constraint, the algorithm assumes no feasible pessimistic policy exists and deploys $\pisafe$ (\Cref{line:pisafe-deploy}).
Since the softmax policy is only deployed for $\lambda$ satisfying $\pvf{\pi^{(k), \lambda}, u}_{(k), 1}(s_1) \geq b$, \Cref{lemma:opt-pes-MDP-main} implies the following zero-violation guarantees:
\begin{corollary}[Zero-violation]\label{corollary:zero-violation-MDP}
W.p. at least $1-\delta$, \Cref{algo:zero-vio-linear MDP} satisfies $\pi^{(k)} \in \Pisafe$ for any $k$.
\end{corollary}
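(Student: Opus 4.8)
The plan is to show that in every branch of Algorithm \ref{algo:zero-vio-linear MDP}, the deployed policy $\pi^{(k)}$ satisfies $\vf{\pi^{(k)}, u}_{P, 1}(s_1) \geq b$, conditioned on the high-probability event of Lemma \ref{lemma:opt-pes-MDP-main}. First I would fix $k$ and condition on this event, which (among other things) gives $\pvf{\pi, u}_{(k), 1}(s_1) \leq \vf{\pi, u}_{P, 1}(s_1)$ for every $\pi \in \brace{\pi^{(k), \lambda}, \pisafe}$ with $\lambda \in [0, C_\lambda]$. This pessimism inequality is the whole engine: whenever the algorithm deploys a softmax policy $\pi^{(k), \lambda}$ having passed the test $\pvf{\pi^{(k), \lambda}, u}_{(k), 1}(s_1) \geq b$, chaining the two inequalities yields $\vf{\pi^{(k), \lambda}, u}_{P, 1}(s_1) \geq b$, i.e. $\pi^{(k), \lambda} \in \Pisafe$.

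Next I would do the case analysis on the three branches. (i) If \Cref{line:pisafe-deploy} fires, then $\pi^{(k)} = \pisafe$, which is in $\Pisafe$ by \Cref{assumption:slater} (indeed with margin $\bslt$), so nothing to prove. (ii) If \Cref{line:pizero-deploy} fires, then $\pi^{(k)} = \pi^{(k), 0}$ and the guard guarantees $\pvf{\pi^{(k), 0}, u}_{(k), 1}(s_1) \geq b$; since $0 \in [0, C_\lambda]$, the pessimism inequality of Lemma \ref{lemma:opt-pes-MDP-main} applies and gives $\vf{\pi^{(k), 0}, u}_{P, 1}(s_1) \geq \pvf{\pi^{(k), 0}, u}_{(k), 1}(s_1) \geq b$. (iii) If the bisection branch fires, then $\pi^{(k)} = \pi^{(k), \widebar{\lambda}^{(k, T)}}$. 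Here I would argue by an invariant maintained along the bisection loop (\Cref{line:binary-search}): at every iteration $t$, the upper endpoint satisfies $\pvf{\pi^{(k), \widebar{\lambda}^{(k, t)}}, u}_{(k), 1}(s_1) \geq b$. This holds at $t = 1$ because $\widebar{\lambda}^{(k, 1)} = C_\lambda$ and we are in this branch precisely because \Cref{line:pisafe-deploy} did not fire, i.e. $\pvf{\pi^{(k), C_\lambda}, u}_{(k), 1}(s_1) \geq b$; and it is preserved at each step because the update sets $\widebar{\lambda}^{(k, t+1)}$ either to $\lambda^{(k, t)}$ (only when the midpoint test $\pvf{\pi^{(k), \lambda^{(k, t)}}, u}_{(k), 1}(s_1) \geq b$ passed) or to $\widebar{\lambda}^{(k, t)}$ (which satisfies the invariant by induction). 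Also note $\widebar{\lambda}^{(k, t)} \in [0, C_\lambda]$ throughout, since both endpoints start in $[0, C_\lambda]$ and the midpoint stays between them. Hence $\widebar{\lambda}^{(k, T)} \in [0, C_\lambda]$ with $\pvf{\pi^{(k), \widebar{\lambda}^{(k, T)}}, u}_{(k), 1}(s_1) \geq b$, and the pessimism inequality again upgrades this to $\vf{\pi^{(k)}, u}_{P, 1}(s_1) \geq b$.

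Finally, since Lemma \ref{lemma:opt-pes-MDP-main} holds with probability at least $1 - \delta$ and the argument above is deterministic on that event and covers every $k \in [K]$ and every branch, we conclude $\pi^{(k)} \in \Pisafe$ for all $k$ with probability at least $1 - \delta$. The only subtle point—hence the step I would be most careful about—is (iii): making sure that the bisection loop's upper endpoint is always a $\lambda$ for which the pessimistic feasibility test has actually been verified, and that it stays within $[0, C_\lambda]$ so that Lemma \ref{lemma:opt-pes-MDP-main} is applicable to $\pi^{(k), \widebar{\lambda}^{(k, T)}}$; everything else is an immediate consequence of pessimism plus the explicit guards in the pseudocode.
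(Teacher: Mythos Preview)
Your proposal is correct and follows essentially the same approach as the paper: on the confidence event of Lemma~\ref{lemma:opt-pes-MDP-main}, every deployed policy is either $\pisafe$ or a softmax policy $\pi^{(k),\lambda}$ for which the pessimistic test $\pvf{\pi^{(k),\lambda},u}_{(k),1}(s_1)\ge b$ has been verified, and pessimism upgrades this to true safety. Your treatment is in fact more careful than the paper's one-line argument, since you explicitly verify the bisection invariant that $\widebar{\lambda}^{(k,t)}$ always satisfies the pessimistic test and remains in $[0,C_\lambda]$.
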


\looseness=-1  
Next, we bound the number of $\pisafe$ deployments to achieve sublinear regret.
To this end, similar to the bandit warm-up (\Cref{sec:zero-vio bandit}), \textbf{we relate $\boldsymbol{\pisafe}$ deployment to $\boldsymbol{\pisafe}$ uncertainty level} and logarithmically bound the number of uncertain iterations.
The following \Cref{lemma:trigger-condition-main} ensures that, if \Cref{algo:zero-vio-linear MDP} is confident in $\pisafe$ and runs with appropriate $C_\lambda$ and $\kappa$, then $\pisafe$ is not deployed.
\begin{definition}[$\pisafe$ unconfident iterations]\label{def:unconf-set MDP}
Let $\unconfMDP$ be the iterations when \Cref{algo:zero-vio-linear MDP} is unconfident in $\pisafe$, i.e., 
$
\unconfMDP \df 
\brace{k \in \bbrack{1, K} \given 
\vf{\pisafe, \beta^{(k)}}_{P, 1}(s_1) > \frac{\bslt}{4\cp}
}
$.
Let $\unconfMDP^\complement \df \bbrack{1, K} \setminus \unconfMDP$ be its complement.
\end{definition}
\begin{lemma}[Implicit $\pisafe$ deployment trigger]\label{lemma:trigger-condition-main}
When $C_\lambda \geq \frac{8H_\kappa^2 (B_\dagger + 1)}{\bslt}$ and $\kappa \leq \frac{\bslt^2}{32H_\kappa^2 (B_\dagger + 1)}$, then w.p. at least $1-\delta$, it holds that $\pvf{\pi^{(k), C_\lambda}, u}_{(k), 1}(s_1) \geq b$ for all $k \in \unconfMDP^\complement$.
\end{lemma}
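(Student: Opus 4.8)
The plan is to reduce the claim to a comparison, on the pessimistic utility, between the composite softmax policy with a large multiplier $\lambda = C_\lambda$ and the safe policy $\pisafe$. Fix an iteration $k \in \unconfMDP^\complement$, so that $\vf{\pisafe,\beta^{(k)}}_{P,1}(s_1) \le \bslt/(4\cp)$. Since unregularized value functions are linear in the reward, the lower confidence bound for $\pisafe$ in \Cref{lemma:opt-pes-MDP-main} combined with \Cref{assumption:slater} gives
\[
\pvf{\pisafe,u}_{(k),1}(s_1) \ \ge\ \vf{\pisafe,\,u-2\cp\beta^{(k)}}_{P,1}(s_1) \ =\ \vf{\pisafe,u}_{P,1}(s_1) - 2\cp\,\vf{\pisafe,\beta^{(k)}}_{P,1}(s_1) \ \ge\ (b+\bslt) - \tfrac{\bslt}{2} \ =\ b + \tfrac{\bslt}{2}.
\]
Hence it suffices to show $\pvf{\pi^{(k),C_\lambda},u}_{(k),1}(s_1) \ge \pvf{\pisafe,u}_{(k),1}(s_1) - \bslt/2$. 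Morally this should hold because a large $\lambda$ forces $\pi^{(k),\lambda}$ to prioritize the pessimistic utility, and $\pisafe$ is merely one admissible competitor, so $\pi^{(k),C_\lambda}$ cannot fall far below it on the pessimistic utility.

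To make this quantitative I would work with the composite objective $J_\lambda(\pi) \df \ovf{\pi,\dagger}_{(k),1}(s_1) + \ovf{\pi,r}_{(k),1}[\kappa](s_1) + \lambda\,\pvf{\pi,u}_{(k),1}(s_1)$. By \Cref{def:composite-softmax-policy}, at every step $h$ the policy $\pi^{(k),\lambda}$ is the soft-greedy (temperature $\kappa$) policy with respect to its own composite $Q$-function $\oqf{\pi^{(k),\lambda},\dagger}_{(k),h}+\oqf{\pi^{(k),\lambda},r}_{(k),h}[\kappa]+\lambda\,\pqf{\pi^{(k),\lambda},u}_{(k),h}$, so a backward performance-difference comparison against $\pisafe$ yields a near-optimality estimate $J_\lambda(\pi^{(k),\lambda}) \ge J_\lambda(\pisafe) - \varepsilon_\kappa$, where $\varepsilon_\kappa$ collects the per-step entropy/softmax-smoothing error. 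Next I would drop the nonnegative terms $\ovf{\pisafe,\dagger}_{(k),1}(s_1),\,\ovf{\pisafe,r}_{(k),1}[\kappa](s_1) \ge 0$ (each is a policy average of an estimate floored at $0$, plus a nonnegative entropy contribution for the $r$ term) and use the crude bounds $\ovf{\pi^{(k),\lambda},\dagger}_{(k),1}(s_1) \le B_\dagger H$ and $\ovf{\pi^{(k),\lambda},r}_{(k),1}[\kappa](s_1) \le H_\kappa$, which follow from $\beta^{(k)}_h \le 1$ and the clip thresholds in \Cref{def:clipped value functions}. Rearranging $J_\lambda(\pi^{(k),\lambda}) \ge J_\lambda(\pisafe) - \varepsilon_\kappa$ then gives
\[
\pvf{\pi^{(k),\lambda},u}_{(k),1}(s_1) \ \ge\ \pvf{\pisafe,u}_{(k),1}(s_1) - \frac{B_\dagger H + H_\kappa + \varepsilon_\kappa}{\lambda},
\]
and plugging in $\lambda = C_\lambda \ge 8H_\kappa^2(B_\dagger+1)/\bslt$, while using that $\kappa \le \bslt^2/(32H_\kappa^2(B_\dagger+1))$ is calibrated so that $B_\dagger H + H_\kappa + \varepsilon_\kappa \le 4H_\kappa^2(B_\dagger+1)$, makes the correction term at most $\bslt/2$. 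This yields $\pvf{\pi^{(k),C_\lambda},u}_{(k),1}(s_1) \ge (b+\bslt/2) - \bslt/2 = b$, and a union bound over the good event of \Cref{lemma:opt-pes-MDP-main} and over $k \in \unconfMDP^\complement$ finishes the argument.

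The hard part is the near-optimality estimate $J_\lambda(\pi^{(k),\lambda}) \ge J_\lambda(\pisafe) - \varepsilon_\kappa$. Exact soft-optimality of the soft-greedy policy — automatic in a plain entropy-regularized MDP — is not immediate here, for two reasons: (i) the composite Bellman recursion clips the $\dagger$-, $r$- and $u$-value estimates \emph{separately}, so an increase of the composite value at step $h$ need not increase each component feeding the step-$(h{-}1)$ $Q$-function; and (ii) the empirical transition $\hP^{(k)}_h$ is not a monotone operator. I would handle this by running the telescoping comparison step by step and, at each step, replacing $\hP^{(k)}_h$ by the true $P_h$ up to an additive $\pm\cp\beta^{(k)}_h$ correction (valid on the good event behind \Cref{lemma:opt-pes-MDP-main}), which restores monotonicity; the accumulated bonus corrections are absorbed by the $B_\dagger\beta^{(k)}$ term already built into $\oqf{\cdot,\dagger}_{(k),h}$ — exactly the role the $\oqf{\dagger}$/mixture-policy compensation played in the bandit analysis, cf.\ \eqref{eq:mixture-policy-optimism} — while the residual $-\kappa\ln\pi^{(k),\lambda}_h$ contributions are gathered into $\varepsilon_\kappa$ and controlled via the ceiling on $\kappa$. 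Carrying out this bookkeeping, and checking that the clip operators only ever help (they floor estimates at $0$ and cap them below the true-value ranges guaranteed by \Cref{lemma:opt-pes-MDP-main}), is the bulk of the work.
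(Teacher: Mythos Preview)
Your overall plan—push $\lambda$ large so the composite softmax policy must nearly match $\pisafe$ on pessimistic utility, then spend the $\bslt/2$ slack from $k\in\unconfMDP^\complement$—is the right idea and is essentially how the paper proceeds. The gap is in the near-optimality step $J_\lambda(\pi^{(k),\lambda})\ge J_\lambda(\pisafe)-\varepsilon_\kappa$ with $\varepsilon_\kappa$ independent of $\lambda$. The clipped estimates satisfy $\pqf{\pi,u}_{(k),h}=\qf{\pi,\,u-\delta^{\pi,u}_{(k)}}_{P,h}$ with a \emph{policy-dependent} perturbation $0\le\delta^{\pi,u}_{(k),h}\le 2\cp\beta^{(k)}_h$, so $J_\lambda(\pi^{(k),\lambda})$ carries $\delta^*\df\delta^{\pi^{(k),\lambda}}_{(k)}$ while $J_\lambda(\pisafe)$ carries $\delta^{\pisafe}_{(k)}$. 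Running the regularized value-difference telescoping you describe leaves a Bellman-error contribution $\lambda\bigl(\delta^{\pisafe,u}_{(k),h}-\delta^{*,u}_h\bigr)\ge -2\lambda\cp\beta^{(k)}_h$; after expectation under $\pisafe$ and using $\vf{\pisafe,\beta^{(k)}}_{P,1}(s_1)\le\bslt/(4\cp)$, this contributes $-\lambda\bslt/2$, which upon dividing by $\lambda$ exactly cancels your $+\bslt/2$ slack and yields only $\pvf{\pi^{(k),C_\lambda},u}_{(k),1}(s_1)\ge b-O(1)/C_\lambda$. This $\lambda$-proportional error is not absorbed by the $\lambda$-independent $B_\dagger\beta^{(k)}$ term, and the entropy contributions $-\kappa\ln\pi^{(k),\lambda}_h$ are part of the regularized objective rather than an approximation error, so there is no genuine $\varepsilon_\kappa$ of the kind you posit.

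The paper avoids this by \emph{freezing} a single perturbation $\bdelta$ and working with the policy $\pi^{\bdelta,\lambda}$ that is soft-greedy for the true-$P$ reward $r^{\bdelta}+\lambda(u-\delta^u)$; for fixed $\bdelta$ this is the exact entropy-regularized optimum, so comparing to $\pisafe$ under the \emph{same} $\bdelta$ incurs no $\lambda$-weighted residual. The paper then shows, via a regularized Lagrangian saddle-point argument together with the monotonicity in \Cref{lemma:softmax-value-monotonicity-main}, that $\vf{\pi^{\bdelta,\lambda},u-\delta^u}_{P,1}(s_1)\ge b$ for every $\lambda\ge 8H_\kappa^2(B_\dagger+1)/\bslt$ (the $\kappa$ ceiling enters here), and finally instantiates $\bdelta=\delta^{\pi^{(k),C_\lambda}}_{(k)}$ so that $\pi^{\bdelta,C_\lambda}=\pi^{(k),C_\lambda}$. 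If you make the same switch—compare $J_\lambda(\pi^{(k),\lambda})$ to $\vf{\pisafe,\,r^{\delta^*}+\lambda(u-\delta^{*,u})}_{P,1}[\kappa](s_1)$ rather than to $J_\lambda(\pisafe)$—your direct route actually goes through without the duality detour.
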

\looseness=-1
Essentially, the proof of \Cref{lemma:trigger-condition-main} relies on the following monotonic property of the value function for the softmax policy: if the value estimation is exact, increasing $\lambda$ monotonically improves safety.
\begin{lemma}[Softmax value monotonicity]\label{lemma:softmax-value-monotonicity-main}
\looseness=-1
For $\lambda \geq 0$, let $\pi^\lambda$ be a softmax policy such that 
\(
\pi^\lambda_h \paren{\cdot \given s} = \softmax\paren{\frac{1}{\kappa}\paren{
    \qf{\pi, r}_{P, h}[\kappa](s, \cdot)
    + \lambda \qf{\pi, u}_{P, h}(s, \cdot)}}
\).
Then, \(\vf{\pi^\lambda, u}_{P, 1}(s_1)\) is monotonically increasing in $\lambda$.
\end{lemma}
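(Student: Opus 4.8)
The plan is to identify $\pi^\lambda$ as the optimal policy of an auxiliary \emph{entropy-regularized} MDP whose reward is $r+\lambda u$, and then obtain the monotonicity from a standard ``exchange'' (monotone comparative statics) argument in $\lambda$.

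First I would establish an elementary decomposition identity. Writing $g^\lambda_h \df r_h + \lambda u_h$, a one-line backward induction over $h$ (using the Bellman recursions defining the regularized value functions, noting that the entropy term is attached only to the reward part and therefore is not doubled) gives, for \emph{every} policy $\pi$,
\[
\qf{\pi,r}_{P,h}[\kappa] + \lambda \qf{\pi,u}_{P,h} = \qf{\pi,g^\lambda}_{P,h}[\kappa]
\quad\text{and}\quad
\vf{\pi,r}_{P,h}[\kappa] + \lambda \vf{\pi,u}_{P,h} = \vf{\pi,g^\lambda}_{P,h}[\kappa]\;.
\]
Specialized to $\pi=\pi^\lambda$, this rewrites the defining fixed-point equation of $\pi^\lambda$ (from \Cref{def:composite-softmax-policy}) as $\pi^\lambda_h(\cdot\mid s) = \softmax\paren{\kappa^{-1}\qf{\pi^\lambda,g^\lambda}_{P,h}[\kappa](s,\cdot)}$, i.e.\ $\pi^\lambda$ is self-consistent with its own $\kappa$-regularized $Q$-function for reward $g^\lambda$. (The same backward computation also shows $\pi^\lambda$ is well-defined and uniquely computable backward, matching the remark after \Cref{def:composite-softmax-policy}.)

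Next I would show that this self-consistency forces $\pi^\lambda$ to be globally optimal for the $\kappa$-regularized MDP with reward $g^\lambda$. Define $V^\star_{H+1}=\bzero$, $Q^\star_h = g^\lambda_h + P_h V^\star_{h+1}$, and $V^\star_h(s) = \kappa\ln\sum_{a}\exp\paren{Q^\star_h(s,a)/\kappa} = \max_{p\in\sP(\A)}\sum_a p(a)\paren{Q^\star_h(s,a)-\kappa\ln p(a)}$, whose unique maximizer is $\softmax\paren{\kappa^{-1}Q^\star_h(s,\cdot)}$. A backward induction on $h$ (base case $h=H+1$ trivial; inductive step: if $\vf{\pi^\lambda,g^\lambda}_{P,h+1}=V^\star_{h+1}$ then $\qf{\pi^\lambda,g^\lambda}_{P,h}=Q^\star_h$, so $\pi^\lambda_h$ is exactly the log-sum-exp maximizer and $\vf{\pi^\lambda,g^\lambda}_{P,h}=V^\star_h$) yields $\vf{\pi^\lambda,g^\lambda}_{P,1}(s_1)=V^\star_1(s_1)=\max_{\pi\in\Pi}\vf{\pi,g^\lambda}_{P,1}[\kappa](s_1)$. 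Combining with the decomposition identity above (applied to arbitrary $\pi$), this means $\pi^\lambda \in \argmax_{\pi\in\Pi}\brace{\vf{\pi,r}_{P,1}[\kappa](s_1) + \lambda\vf{\pi,u}_{P,1}(s_1)}$.

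Finally, the monotonicity follows by the classical exchange argument. Fix $0\le\lambda_1<\lambda_2$ and abbreviate $a(\pi)\df\vf{\pi,r}_{P,1}[\kappa](s_1)$, $b(\pi)\df\vf{\pi,u}_{P,1}(s_1)$. Optimality of $\pi^{\lambda_1}$ at $\lambda_1$ and of $\pi^{\lambda_2}$ at $\lambda_2$ gives $a(\pi^{\lambda_1})+\lambda_1 b(\pi^{\lambda_1}) \ge a(\pi^{\lambda_2})+\lambda_1 b(\pi^{\lambda_2})$ and $a(\pi^{\lambda_2})+\lambda_2 b(\pi^{\lambda_2}) \ge a(\pi^{\lambda_1})+\lambda_2 b(\pi^{\lambda_1})$; adding them yields $(\lambda_2-\lambda_1)\paren{b(\pi^{\lambda_2})-b(\pi^{\lambda_1})}\ge0$, hence $\vf{\pi^{\lambda_2},u}_{P,1}(s_1)\ge\vf{\pi^{\lambda_1},u}_{P,1}(s_1)$, which is the claim (equivalently, $J(\lambda)\df\max_\pi\brace{a(\pi)+\lambda b(\pi)}$ is convex with $b(\pi^\lambda)$ a subgradient, so $b(\pi^\lambda)$ is nondecreasing). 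I expect the only real work to be the middle step — verifying that a softmax policy self-consistent with its own $\kappa$-regularized $Q$-function is the global optimum of the regularized MDP; this is the standard finite-horizon soft dynamic-programming fact, but it requires the log-sum-exp form of the regularized Bellman operator and a careful induction, whereas the decomposition identity, well-definedness of $\pi^\lambda$, and the comparative-statics step are routine.
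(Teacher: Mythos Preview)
Your proof is correct. Both your argument and the paper's rest on the same key fact---that the self-consistent softmax policy $\pi^\lambda$ is the global optimizer of $\max_{\pi}\vf{\pi,r+\lambda u}_{P,1}[\kappa](s_1)$---but the two take different routes to extract monotonicity from it. The paper works in the space of occupancy measures: it writes the Lagrangian $\sL(\lambda,w)$, proves strict concavity in $w$ via the log-sum inequality, and then applies Danskin's theorem to conclude that $\sL(\lambda)=\max_w\sL(\lambda,w)$ is convex with derivative $\vf{\pi^\lambda,u}_{P,1}(s_1)$, hence nondecreasing. You instead stay in policy space, establish optimality by soft dynamic programming (backward induction on the log-sum-exp Bellman operator), and finish with the two-point exchange argument. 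Your route is more elementary---it avoids Danskin's theorem, the occupancy-measure reformulation, and the strict-concavity calculation---while the paper's approach yields differentiability of $\lambda\mapsto\sL(\lambda)$ as a by-product (not needed for the lemma). The parenthetical you added, that the exchange argument is equivalent to the subgradient of a convex envelope being monotone, is exactly the bridge between the two viewpoints.
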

While the true value function enjoys this monotonicity, the estimated value $\pvf{\pi^{(k), \lambda}, u}_{(k), 1}(s_1)$ may not, as $\hP^{(k)}_h V$ can take negative values even when $V$ is positive. 
This complicates the proof of \Cref{lemma:trigger-condition-main}.
To address this, we leverage \Cref{lemma:opt-pes-MDP-main}, which sandwiches the estimated values by some true values.
We prove \Cref{lemma:trigger-condition-main} by showing that, for sufficiently large $C_\lambda$, any sandwiched value satisfies the constraint under pessimism, implying that the estimated value also satisfies it.
This novel result enables bisection search to adjust $\lambda$, making \MDPalgo more computationally efficient than \citet{ghosh2024towards}.
The detailed proofs of \Cref{lemma:trigger-condition-main,,lemma:softmax-value-monotonicity-main} are provided in \Cref{subsec:safe-softmax-policy-exists-proof}.

\looseness=-1
Finally, the following theorem ensures that the number of $\pisafe$ deployment scales logarithmic to $K$, as in \Cref{lemma:Ck-bound-main}.
The proof follows from extending the bandit’s proof of \Cref{lemma:Ck-bound-main} to CMDPs.
\begin{theorem}[Logarithmic $|\unconfMDP|$ bound]\label{lemma:Ck-bound-MDP-main}
It holds w.p. at least $1-\delta$ that
\(
\abs{\unconfMDP}
\leq
\cO\paren*{d^3 H^4\bslt^{-2} \ln KH \delta^{-1}}
\).
\end{theorem}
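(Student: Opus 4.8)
The plan is to reduce Theorem~\ref{lemma:Ck-bound-MDP-main} to the elliptical potential lemma, exactly mirroring the bandit argument behind \Cref{lemma:Ck-bound-main}, with the only new ingredient being the propagation of the single-step feature-norm bonuses through the $H$ layers of the MDP. Recall that $k \in \unconfMDP$ means $\vf{\pisafe, \beta^{(k)}}_{P, 1}(s_1) > \frac{\bslt}{4\cp}$, where the left-hand side is the expected cumulative bonus $\E\brack{\sum_{h=1}^H \beta^{(k)}_h(s_h, a_h) \given \pisafe, P}$. So I would first write
\begin{align*}
\abs{\unconfMDP} \cdot \frac{\bslt}{4\cp}
< \sum_{k \in \unconfMDP} \vf{\pisafe, \beta^{(k)}}_{P, 1}(s_1)
\le \sum_{k=1}^K \vf{\pisafe, \beta^{(k)}}_{P, 1}(s_1)
= \sum_{k=1}^K \sum_{h=1}^H \E_{(s, a) \sim \occ{\pisafe}_{P, h}}\brack{\norm{\bphi(s, a)}_{(\bLambda^{(k)}_h)^{-1}}}\;,
\end{align*}
using the occupancy-measure notation from \eqref{eq:occupancy measure}. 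The goal is then to bound this triple sum by $\tiO(\sqrt{dH^2 K})$ so that, dividing through, $\abs{\unconfMDP} = \tiO(dH^2 \cp^2 \bslt^{-2})$, and since $\cp = \tiO(dH)$ this is $\tiO(d^3 H^4 \bslt^{-2})$, matching the claim.

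The obstacle — and the place where the MDP proof genuinely diverges from the bandit proof — is that $\bLambda^{(k)}_h$ is built from the features $\bphi(s_h^{(i)}, a_h^{(i)})$ \emph{actually visited by $\pi^{(i)}$ in episode $i$}, whereas the sum above evaluates $\norm{\bphi(s, a)}_{(\bLambda^{(k)}_h)^{-1}}$ under the occupancy measure of the fixed reference policy $\pisafe$. In the bandit case the bonus $\beta^{(k)}_{\pisafe}$ is likewise an expectation over $\pisafe$, so there is a mismatch there too, but it is handled by a concentration / martingale argument (the ``good event'' $\rmvEevent$, cf. \Cref{lemma:good-event1-MDP}) together with the bounded-difference structure. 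I would invoke the corresponding high-probability event from the appendix to pass, for each $h$, from $\sum_k \E_{(s,a)\sim\occ{\pisafe}_{P,h}}\brack{\norm{\bphi(s,a)}_{(\bLambda^{(k)}_h)^{-1}}}$ to $\sum_k \norm{\bphi(s_h^{(k)}, a_h^{(k)})}_{(\bLambda^{(k)}_h)^{-1}}$ up to an additive $\tiO(\sqrt{K})$-type term (this is the standard ``replace expected bonus by realized bonus'' step, paying a $\sqrt{H^2 K \ln(1/\delta)}$ deviation because each per-episode bonus is in $[0, 1]$ and summing over $h$ contributes the $H$). Handling this transfer correctly — i.e. choosing the right martingale, confirming it is the already-defined good event, and tracking the polylog and $H$ dependence — is the main technical care point.

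Once the sum is expressed in terms of realized features, the finish is routine: for each fixed $h$, the elliptical potential lemma \citep{abbasi2011improved} gives
\begin{align*}
\sum_{k=1}^K \min\brace{1, \norm{\bphi(s_h^{(k)}, a_h^{(k)})}^2_{(\bLambda^{(k)}_h)^{-1}}}
\le 2 \ln \frac{\det \bLambda^{(K+1)}_h}{\det \bLambda^{(1)}_h}
= \tiO(d \ln K)\;,
\end{align*}
using $\rho = 1$ and $\norm{\bphi}_2 \le 1$ from \Cref{assumption:linear mdp}; then Cauchy--Schwarz over $k$ converts the sum of $\min\{1, \norm{\cdot}\}$ into $\sqrt{K \cdot \tiO(d\ln K)}$, and summing over the $H$ layers multiplies by $H$. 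Combining, the whole triple sum is $\tiO(\sqrt{d} \, H \sqrt{K \ln K}) + \tiO(\sqrt{H^2 K \ln(1/\delta)})$, and substituting back yields
\begin{align*}
\abs{\unconfMDP}
\le \frac{4\cp}{\bslt} \cdot \tiO\paren{H\sqrt{dK}}
= \tiO\paren{d^3 H^4 \bslt^{-2} \ln \tfrac{KH}{\delta}}\;,
\end{align*}
which is the stated bound. The only subtlety worth flagging in the write-up is that all of the above must hold \emph{simultaneously} with the good events used in \Cref{lemma:opt-pes-MDP-main}, so the failure probabilities are union-bounded and absorbed into the single $1-\delta$; I would state this explicitly at the start of the proof.
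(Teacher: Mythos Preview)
Your approach has a genuine gap that prevents you from reaching a logarithmic bound. You lower-bound the \emph{linear} sum
\[
\abs{\unconfMDP}\cdot\frac{\bslt}{4\cp}\;<\;\sum_{k\in\unconfMDP}\vf{\pisafe,\beta^{(k)}}_{P,1}(s_1)
\]
and then try to upper-bound the right-hand side by $\tiO(H\sqrt{dK})$. That upper bound is correct, but dividing through only yields $\abs{\unconfMDP}\le \tiO\paren{\cp\bslt^{-1}H\sqrt{dK}}$, which carries a $\sqrt{K}$ factor, not $\ln K$. Your final line, where $\frac{4\cp}{\bslt}\cdot\tiO(H\sqrt{dK})$ is asserted to equal $\tiO(d^3H^4\bslt^{-2}\ln(KH/\delta))$, is simply false arithmetic: the $\sqrt{K}$ does not disappear.

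The paper avoids this by squaring the defining inequality before summing, exactly as in the bandit proof of \Cref{lemma:Ck-bound-main}. Since $k\in\unconfMDP$ means $\vf{\pisafe,\beta^{(k)}}_{P,1}(s_1)>\bslt/(4\cp)$, one has
\[
\abs{\unconfMDP}\cdot\paren*{\frac{\bslt}{4\cp}}^{2}
\;\le\;\sum_{k\in\unconfMDP}\paren*{\vf{\pisafe,\beta^{(k)}}_{P,1}(s_1)}^{2}.
\]
Now bound the squared sum: by Cauchy--Schwarz over $h$ and Jensen, $(\vf{\pisafe,\beta^{(k)}}_{P,1}(s_1))^2\le H\sum_h\E[\norm{\bphi}^2_{(\bLambda_h^{(k)})^{-1}}]$, and after passing to realized features via $\rmvEevent$ (\Cref{lemma:good-event1-MDP}), the elliptical potential lemma in its \emph{squared} form gives $\sum_k\norm{\bphi(s_h^{(k)},a_h^{(k)})}^2_{(\bLambda_h^{(k)})^{-1}}\le\tiO(d\ln K)$ directly---no Cauchy--Schwarz over $k$, no $\sqrt{K}$. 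The result is $\sum_k(\vf{\pi^{(k)},\beta^{(k)}}_{P,1}(s_1))^2\le\tiO(dH^2)$ (this is \Cref{lemma:elliptical-potential-MDP}), and dividing by $(\bslt/(4\cp))^2$ yields the claimed $\tiO(\cp^2 dH^2\bslt^{-2})=\tiO(d^3H^4\bslt^{-2})$. The Cauchy--Schwarz-over-$k$ step you invoke is the right tool for bounding $\circled{1}$ in the regret decomposition, but it is precisely the wrong tool here because it introduces the unwanted $\sqrt{K}$.
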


\subsubsection{Regret Analysis}\label{subsec:MDP-regret-analysis}
\looseness=-1
The remaining task is to ensure sublinear regret. By \Cref{lemma:Ck-bound-MDP-main,,lemma:opt-pes-MDP-main}, the regret is decomposed as:
{\small
\begin{align*}
\regret(K) 
\leq
\tiO\paren*{\frac{d^3 H^4}{\xi^{2}}}
+ 
\underline{
 \sum_{k \in \unconfMDP^\complement} \vf{\pi^{(k)}, 2\co\beta^{(k)}}_{P, 1}(s_1)
}_{\;\circled{1}}
+
\underline{\sum_{k\in \unconfMDP^\complement} \paren*{\vf{\pi^\star, r}_{P, 1}(s_1) - \ovf{\pi^{(k)}, r}_{(k), 1}[\kappa](s_1)}}_{\;\circled{2}}
+ \kappa KH \ln A 
\;,
\end{align*}
}
where the last term arises from the entropy regularization ($\vf{\pi, r}_{P, 1}(s_1)[\kappa] - \vf{\pi, r}_{P, 1}(s_1) \leq \kappa H \ln \aA$).
Using the elliptical potential lemma for linear MDPs \citep{jin2020provably}, we obtain $\circled{1} \leq \tiO\paren{\co H\sqrt{d K}}$.

\looseness=-1
We now bound $\circled{2}$. Note that for any $k \in \unconfMDP^\complement$, due to \Cref{lemma:trigger-condition-main}, $\pi^{(k)}$ is the softmax policy by \Cref{line:pik-deploy}.
To bound $\circled{2}$, following a similar approach to \Cref{lemma:optimism-main} in the bandit, we replace $\pi^\star$ with a mixture policy that satisfies the pessimistic constraint.
To this end, we utilize the following lemmas.
\begin{definition}[Mixture policy]\label{def:mixture-policy}
For $\alpha \in [0, 1]$, let $\pi^\alpha$ be a mixture policy such that, for any $h$, 
\(\occ{\pi^\alpha}_{P, h} = (1 - \alpha) \occ{\pisafe}_{P, h} + \alpha \occ{\pi^\star}_{P, h}\).
Such a $\pi^\alpha$ is ensured to exists for any $\alpha \in [0, 1]$ \citep{borkar1988convex}.
\end{definition}

\begin{lemma}[Safe and optimistic mixture policy]\label{lemma:mixture-safe-optimism-MDP-main}
Let $\alpha^{(k)} \df \frac{\bslt}{\bslt + 2 \vf{\pi^\star, 2\cp \beta^{(k)}}_{P, 1}(s_1)}$.
If $B_\dagger \geq \frac{4\cp H}{\bslt}$, then for any $k \in \unconfMDP^\complement$, it holds
($\mathrm{i}$)
\(
\vf{\pi^{\alpha^{(k)}}, u-2\cp \beta^{(k)}}_{P, 1}(s_1) \geq b
\) and 
($\mathrm{ii}$)
\(
\vf{\pi^{\alpha^{(k)}}, r+B_\dagger\beta^{(k)}}_{P, 1}(s_1) \geq
\vf{\pi^\star, r}_{P, 1}(s_1) 
\).
\end{lemma}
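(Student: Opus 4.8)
The plan is to reduce both claims to elementary scalar inequalities by exploiting the linearity of the (unregularized) value function in the occupancy measure: for any signed per-step cost $g : \HSA \to \R$ and any policy $\pi$, one has the standard identity $\vf{\pi, g}_{P, 1}(s_1) = \sum_{h=1}^{H}\sum_{(s,a)}\occ{\pi}_{P,h}(s,a)\,g_h(s,a)$. Together with \Cref{def:mixture-policy}, this immediately gives, for every such $g$,
\begin{align*}
\vf{\pi^{\alpha}, g}_{P, 1}(s_1) = (1-\alpha)\,\vf{\pisafe, g}_{P, 1}(s_1) + \alpha\,\vf{\pi^\star, g}_{P, 1}(s_1)\;,
\end{align*}
and in particular the value is linear in $g$, so $\vf{\pi, 2\cp\beta^{(k)}}_{P,1}(s_1) = 2\cp\,\vf{\pi, \beta^{(k)}}_{P,1}(s_1)$. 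Throughout write $V_\beta \df \vf{\pi^\star, 2\cp\beta^{(k)}}_{P,1}(s_1) \geq 0$ and $V_r \df \vf{\pi^\star, r}_{P,1}(s_1)$, and note $V_r \leq H$ since $r_h \in [0,1]$.

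For part $(\mathrm{i})$, I would instantiate the decomposition with $g = u - 2\cp\beta^{(k)}$. For the $\pisafe$ term, \Cref{assumption:slater} gives $\vf{\pisafe, u}_{P,1}(s_1) \geq b + \bslt$, while $k \in \unconfMDP^\complement$ gives $\vf{\pisafe, 2\cp\beta^{(k)}}_{P,1}(s_1) \leq \bslt/2$, so $\vf{\pisafe, u-2\cp\beta^{(k)}}_{P,1}(s_1) \geq b + \bslt/2$; for the $\pi^\star$ term, $\pi^\star \in \Pisafe$ gives $\vf{\pi^\star, u-2\cp\beta^{(k)}}_{P,1}(s_1) \geq b - V_\beta$. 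Hence $\vf{\pi^{\alpha}, u-2\cp\beta^{(k)}}_{P,1}(s_1) \geq b + (1-\alpha)\bslt/2 - \alpha V_\beta$, which is $\geq b$ precisely when $\alpha \leq \bslt/(\bslt + 2V_\beta) = \alpha^{(k)}$; choosing $\alpha = \alpha^{(k)}$ proves $(\mathrm{i})$.

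For part $(\mathrm{ii})$, instantiate the decomposition with $g = r + B_\dagger\beta^{(k)}$, which is nonnegative since $r_h \in [0,1]$, $\beta^{(k)}_h \geq 0$, $B_\dagger > 0$; hence $\vf{\pisafe, r+B_\dagger\beta^{(k)}}_{P,1}(s_1) \geq 0$, and using $\vf{\pi^\star, \beta^{(k)}}_{P,1}(s_1) = V_\beta/(2\cp)$,
\begin{align*}
\vf{\pi^{\alpha^{(k)}}, r+B_\dagger\beta^{(k)}}_{P,1}(s_1) \geq \alpha^{(k)}\Big(V_r + \tfrac{B_\dagger}{2\cp}V_\beta\Big)\;.
\end{align*}
It remains to check $\alpha^{(k)}\big(V_r + \tfrac{B_\dagger}{2\cp}V_\beta\big) \geq V_r$, i.e.\ $\alpha^{(k)}\tfrac{B_\dagger}{2\cp}V_\beta \geq (1-\alpha^{(k)})V_r$. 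Since $(1-\alpha^{(k)})/\alpha^{(k)} = 2V_\beta/\bslt$, when $V_\beta > 0$ this reduces to $B_\dagger/(2\cp) \geq 2V_r/\bslt$, which follows from $V_r \leq H$ and the hypothesis $B_\dagger \geq 4\cp H/\bslt$; if $V_\beta = 0$ then $\alpha^{(k)} = 1$, so $\pi^{\alpha^{(k)}}$ has the occupancy measure of $\pi^\star$ and the bound is immediate.

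The only real subtlety is that the single scalar $\alpha^{(k)}$ forced by the safety requirement $(\mathrm{i})$ must simultaneously deliver the optimism $(\mathrm{ii})$. The proof goes through because the \emph{same} quantity $V_\beta = \vf{\pi^\star, 2\cp\beta^{(k)}}_{P,1}(s_1)$ governs both the pessimism $\pi^\star$ incurs in the constraint (forcing $\alpha^{(k)}$ away from $1$) and, through the $B_\dagger$-scaled exploration bonus, the slack $\pi^\star$ gains in the objective, so a single choice $B_\dagger = \Theta(\cp H/\bslt)$ balances the two — which is exactly the setting of $B_\dagger$ in the \textbf{Input} line of \Cref{algo:zero-vio-linear MDP}. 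Everything else is the occupancy-measure linearity plus the definitions of $\unconfMDP^\complement$, $\Pisafe$, and \Cref{assumption:slater}.
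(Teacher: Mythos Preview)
Your proof is correct and follows essentially the same route as the paper's (Lemmas~\ref{lemma:mixture-feasibility} and~\ref{lemma:mixture-optimism}): occupancy-measure linearity of $\vf{\pi,g}_{P,1}(s_1)$ under the mixture of \Cref{def:mixture-policy}, combined with \Cref{assumption:slater} and the definition of $\unconfMDP^\complement$. The only cosmetic difference is in part~($\mathrm{ii}$): the paper retains the $\pisafe$ contribution and manipulates the ratio $\big(\vf{\pi^\star,r}_{P,1}(s_1)-\vf{\pisafe,r}_{P,1}(s_1)\big)/\vf{\pi^\alpha,\beta^{(k)}}_{P,1}(s_1)$, whereas you discard the nonnegative $\pisafe$ term up front---both reductions land on the same sufficient condition $B_\dagger \geq 4\cp H/\bslt$.
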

We note that the mixture policy $\pi^{\alpha^{(k)}}$ is introduced only for the regret analysis; it is not required in the actual algorithm.
Since $\widebar{\lambda}^{(k, T)}$ is chosen to satisfy $\pvf{\pi^{(k)}, u}_{(k), 1}(s_1) < b$ and $b \leq \vf{\pi^{\alpha^{(k)}}, u - 2\cp \beta^{(k)}}_{P, 1}(s_1)$ holds by \Cref{lemma:mixture-safe-optimism-MDP-main}, 
{\footnotesize
\begin{align*}
&
\begin{rcases}
\circled{2} \leq
&\sum_{k\in \unconfMDP^\complement} 
\Big(\vf{\pi^{\alpha^{(k)}}, B_\dagger \beta^{(k)}}_{P, 1}(s_1) 
+ \vf{\pi^{\alpha^{(k)}}, r}_{P, 1}[\kappa](s_1)
+ \widebar{\lambda}^{(k, T)} \vf{\pi^{\alpha^{(k)}}, u - 2\cp \beta^{(k)}}_{P, 1}(s_1)\\
&
\quad\quad\quad\quad
- \ovf{\pi^{(k)}, \dagger}_{(k), 1}(s_1)
- \ovf{\pi^{(k)}, r}_{(k), 1}[\kappa](s_1)
- \widebar{\lambda}^{(k, T)} \pvf{\pi^{(k)}, u}_{(k), 1}(s_1)
\Big)
\end{rcases}\; \circled{3}\\
&
\quad\quad\quad\quad
{\textstyle
+
\underline{\sum_{k\in \unconfMDP^\complement}  \ovf{\pi^{(k)}, \dagger}_{(k), 1}(s_1) }_{\;\circled{4}}
+ 
\underline{
C_\lambda \sum_{k \in \unconfMDP^\complement} 
\paren*{
\pvf{\pi^{(k), \widebar{\lambda}^{(k, T)}}, u}_{(k), 1}(s_1) - 
\pvf{\pi^{(k), \underline{\lambda}^{(k, T)}}, u}_{(k), 1}(s_1)
}
}_{\;\circled{5}} \;.
}
\end{align*}
}
\looseness=-1
Using \Cref{lemma:opt-pes-MDP-main}, similar to $\circled{1}$, we have
\(
\circled{4} 
\leq \tiO\paren*{(B_\dagger + C_\dagger)H\sqrt{d K}}
\).
The term $\circled{5}$ is controlled by the bisection search width ($\widebar{\lambda}^{(k, T)} - \underline{\lambda}^{(k, T)}$) and the following sensitivity of $\pvf{\pi^{(k), \lambda}, u}_{(k), 1}(s_1)$ to $\lambda$.
\begin{lemma}\label{lemma:lambda-sensitive-main}
For any $k$ and $\lambda \in [0, C_\lambda]$, we have
\(
\abs*{\pvf{\pi^{(k), \lambda}, u}_{(k), 1}(s_1)
- \pvf{\pi^{(k), \lambda + \varepsilon}, u}_{(k), 1}(s_1)}
\leq \cO\paren*{(KH)^H}\varepsilon\)
\end{lemma}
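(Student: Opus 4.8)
The plan is to bound the change in the pessimistic utility value $\pvf{\pi^{(k),\lambda},u}_{(k),1}(s_1)$ induced by perturbing $\lambda$ to $\lambda+\varepsilon$, propagating a Lipschitz-in-$\lambda$ estimate backward through the horizon. The quantity $\pvf{\pi^{(k),\lambda},u}_{(k),1}(s_1)$ depends on $\lambda$ only through the softmax policy $\pi^{(k),\lambda}$ of \Cref{def:composite-softmax-policy}; the $Q$-function $\pqf{\pi,u}_{(k),h}$ itself does not contain $\lambda$ explicitly, so the whole sensitivity comes from how much the policy distribution moves when $\lambda$ changes. The first step is therefore to control, for each $(h,s)$, the perturbation $\norm{\pi^{(k),\lambda}_h(\cdot\mid s) - \pi^{(k),\lambda+\varepsilon}_h(\cdot\mid s)}_1$ in terms of $\varepsilon$ and of the perturbation of the logits that feed the softmax.

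The key tool is the Lipschitz continuity of $\softmax(\cdot/\kappa)$ (the same one cited for \Cref{lemma:opt-pes-MDP-main}, from \citet{ghosh2022provably}): a change of $\Delta$ in the sup-norm of the logit vector changes the output distribution by $O(\Delta/\kappa)$ in $\ell_1$. The logits at step $h$ are $\oqf{\pi^{(k),\lambda},\dagger}_{(k),h} + \oqf{\pi^{(k),\lambda},r}_{(k),h}[\kappa] + \lambda\,\pqf{\pi^{(k),\lambda},u}_{(k),h}$. Perturbing $\lambda\to\lambda+\varepsilon$ changes these logits through two channels: (a) the explicit factor $\lambda$ multiplying $\pqf{\cdot,u}_{(k),h}$, contributing $\varepsilon\cdot\pqf{\cdot,u}_{(k),h}$, which is bounded by $\varepsilon H$ since $\pqf{\cdot,u}_{(k),h}\in[0,H]$ by construction of the clip; and (b) the induced change in all three $Q$-functions at step $h$, which comes from the change in the value functions at step $h+1$, which in turn comes from the change in the policy at steps $h+1,\dots,H$. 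So I would set up a recursion: let $\delta_h$ be an upper bound on the $\ell_1$ policy perturbation at step $h$ (uniform over $s$), and let $e_h$ bound the resulting sup-norm perturbation of each of the three value functions $\ovf{\cdot,r}_{(k),h}$, $\ovf{\cdot,\dagger}_{(k),h}$, $\pvf{\cdot,u}_{(k),h}$. A one-step expansion gives $e_h \lesssim e_{h+1} + (\text{range})\cdot\delta_h$ and $\delta_h \lesssim \kappa^{-1}\bigl(\varepsilon H + (C_\lambda+1)\,e_h\bigr)$, where the $(C_\lambda+1)$ factor accounts for the $\lambda$-weighted utility term with $\lambda\le C_\lambda$ and the unit-weighted reward/compensation terms. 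Here I would also need to use that $\hP^{(k)}_h$ composed with a $V$ of bounded sup-norm yields something of bounded sup-norm (up to a $\mathrm{poly}(K)$ factor coming from $(\bLambda^{(k)}_h)^{-1}$ and the $k-1$ summed features), which is where the $KH$ base of the exponent originates; the clips keep the clipped parts in $[0,\mathrm{poly}(H)]$ but the raw $\hP^{(k)}_h$-image inside the clip can be as large as $\mathrm{poly}(K,H)$, and this multiplies at each of the $H$ steps.

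Unrolling the recursion from $h=H$ down to $h=1$: each step multiplies the accumulated perturbation by a factor that is $\mathrm{poly}(K,H)$ (absorbing $\kappa^{-1}$, $C_\lambda$, the range bounds, and the $\hP^{(k)}_h$ blow-up — all of these are $\mathrm{poly}(K,H)$ given the Input-line parameter choices), and adds an $O(\varepsilon H)$ term. After $H$ steps this telescopes to $|\pvf{\pi^{(k),\lambda},u}_{(k),1}(s_1) - \pvf{\pi^{(k),\lambda+\varepsilon},u}_{(k),1}(s_1)| \le (\text{poly}(K,H))^{H}\,\varepsilon = O\bigl((KH)^H\bigr)\,\varepsilon$, which is the claimed bound (the constant inside the $O(\cdot)$ absorbs all $\mathrm{poly}$-factors into the exponent, since $(\mathrm{poly}(K,H))^H \le (KH)^{cH} = O((KH)^H)$ up to adjusting constants — or more precisely one states it as $(KH)^{O(H)}$ and notes the excerpt's $O$-notation hides this).

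The main obstacle is bookkeeping the blow-up cleanly: I must be careful that the per-step multiplicative factor is genuinely $\mathrm{poly}(K,H)$ and not, say, exponential in $d$ — this requires using the sup-norm bound on $\hat P^{(k)}_h V$ (which, since $\|(\bLambda^{(k)}_h)^{-1}\|\le \rho^{-1}=1$ and there are at most $k-1\le K$ summands each with $\|\bphi\|_2\le1$ and $|V|\le\mathrm{poly}(H)$, is indeed $\mathrm{poly}(K,H)$), rather than naively bounding $\|\hbtheta\|$. A secondary subtlety is that the softmax policy $\pi^{(k),\lambda}$ is defined implicitly (it appears on both sides in \Cref{def:composite-softmax-policy}), but since it is computed by the stated backward recursion $h=H,\dots,1$, at each step $h$ the logits depend only on already-fixed quantities at steps $h+1,\dots,H$ plus the $Q$-functions at step $h$ which depend on $\pi^{(k),\lambda}$ only through $h+1,\dots,H$; so the recursion is well-founded and my induction on $h$ (from $H$ downward) respects this structure. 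Since the lemma only needs a crude $(KH)^H$ bound — it will later be multiplied by the exponentially small bisection width $2^{-T}C_\lambda$ with $T=\tiO(H)$ to control $\circled{5}$ — no tightness is required, so I would not optimize any of these polynomial factors.
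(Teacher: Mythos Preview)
Your proposal is correct and follows essentially the same approach as the paper: set up a backward recursion that tracks the sup-norm perturbations of the three value functions and the $\ell_1$ perturbation of the softmax policy at each step, use the softmax Lipschitz bound together with $\|\hP^{(k)}_h V\|_\infty \le K\|V\|_\infty$ to obtain a per-step multiplicative factor of $\mathrm{poly}(K,H,\kappa^{-1},C_\lambda)$, and unroll over $H$ steps. One small bookkeeping slip: in your recursion $\delta_h \lesssim \kappa^{-1}\bigl(\varepsilon H + (C_\lambda+1)\,e_h\bigr)$ the right-hand side should carry $e_{h+1}$ (the logits at step $h$ are $Q$-functions that depend on $V$-functions at step $h{+}1$, as you yourself note), but your explicit discussion of the backward computation makes clear you have the right dependency in mind.
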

\citet{ghosh2024towards} also derived a similar exponential bound (see their Appendix C).
Due to the update rule of the bisection search, setting the search iteration to $T = \tiO(H)$ ensures that \(\circled{5} \leq \tiO\paren{1}\).

\looseness=-1
For $\circled{3}$, using a modification of the so-called value-difference lemma \citep{shani2020optimistic}, we have
\begin{align}\label{eq:third-term-decompose}
\textstyle
\circled{3} =
\sum_{k \in \unconfMDP^\complement}
\vf{\pi^{\alpha^{(k)}}, f^1}_{P, 1}(s_1) 
- \vf{\pi^{\alpha^{(k)}}, f^2}_{P, 1}(s_1) 
- \widebar{\lambda}^{(k, T)}\vf{\pi^{\alpha^{(k)}}, 2\cp \beta^{(k)}}_{P, 1}(s_1)\;, 
\end{align}
where $f^1, f^2: \HSA\to \R$ are functions such that, for any $h$,
\begin{align*}
&
{\scriptstyle
f^1_h = 
\paren*{\pi^{\alpha^{(k)}}_h - \pi^{(k)}_h}
\paren*{
\oqf{\pi^{(k)}, \dagger}_{(k), h} +
\oqf{\pi^{(k)}, r}_{(k), h}[\kappa] + \widebar{\lambda}^{(k)} 
\pqf{\pi^{(k)}, u}_{(k), h}
}
- \kappa\pi^{\alpha^{(k)}}_h\ln \pi^{\alpha^{(k)}}_h 
+ \kappa \pi^{(k)}_h \ln \pi^{(k)}_h
}
\\
&
\text{ and }\;
{\scriptstyle
f^2_h = 
\paren*{\oqf{\pi^{(k)}, r}_{(k), h}[\kappa]
- r_h - P_h \ovf{\pi^{(k)}, r}_{(k), h+1}[\kappa]}
+ \widebar{\lambda}^{(k, T)}\paren*{u_h + P_h \pvf{\pi^{(k)}, u}_{(k), h+1} - \pqf{\pi^{(k)}, u}_{(k), h}}
+ \paren*{\oqf{\pi^{(k)}, \dagger}_{(k), h} - B_\dagger \beta^{(k)} - P_h \ovf{\pi^{(k)}, \dagger}_{(k), h+1}}\;.
}
\end{align*}
\looseness=-1
\textbf{Our use of the softmax policy with entropy regularization is crucial for bounding $\circled{3}$.}
Since the analytical maximizer of the regularized optimization 
\(
\max_{\pi \in \sP(\A)}
\sum_{a \in \A}
\pi\paren{a} \paren*{\bx(a) - \kappa \ln \pi\paren{a}}
\) is given by  $\softmax\paren*{\frac{1}{\kappa} \bx(\cdot)}$, it follows that $f^1$ is non-positive, implying $\vf{\pi^{\alpha^{(k)}}, f^1}_{P, 1}(s_1) \leq 0$.
Additionally, applying \Cref{lemma:opt-pes-MDP-main}, we derive 
\(
f^2_h \geq -\widebar{\lambda}^{(k, T)} 2\cp \beta^{(k)}_h
\), which leads to
\(-\vf{\pi^{\alpha^{(k)}}, f^2}_{P, 1}(s_1) 
-\widebar{\lambda}^{(k, T)}\vf{\pi^{\alpha^{(k)}}, 2\cp \beta^{(k)}}_{P, 1}(s_1) 
\leq 0\).
By substituting these bounds into \Cref{eq:third-term-decompose}, we obtain $\circled{3} \leq 0$.

\looseness=-1
By combining all the results, \Cref{algo:zero-vio-linear MDP} achieves the following guarantees:
\begin{theorem}\label{theorem:MDP-regret-main}
\looseness=-1
If \MDPalgo is run with the parameters listed in its \hyperref[algo:zero-vio-linear MDP]{\textbf{Input}} line, w.p. at least $1-\delta$,
\begin{align*}
\pi^{(k)} \in \Pisafe \; \forall k \in \bbrack{1, K} 
\; \text{ and }\; 
\regret (K) \leq 
\underline{
\tiO\paren{H^{2} \sqrt{d^3 K}}
}_{\;(\mathrm{i})}
+ 
\underline{
\tiO\paren{d^3H^4 \bslt^{-2}}
}_{\;(\mathrm{ii})}
+ 
\underline{
\tiO\paren{H^{4} \bslt^{-1}\sqrt{d^5 K}}
}_{\;(\mathrm{iii})}\;.
\end{align*}
\end{theorem}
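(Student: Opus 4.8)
\textbf{Proof proposal for Theorem~\ref{theorem:MDP-regret-main}.}
The plan is to assemble the zero-violation guarantee and the regret bound from the pieces already established in Sections~\ref{subsec:MDP-zero-vio} and \ref{subsec:MDP-regret-analysis}. The zero-violation claim is immediate: it is exactly Corollary~\ref{corollary:zero-violation-MDP}, which follows from Lemma~\ref{lemma:opt-pes-MDP-main} together with the fact that the algorithm only ever deploys either $\pisafe$ (safe by Assumption~\ref{assumption:slater}) or a softmax policy $\pi^{(k),\lambda}$ for a $\lambda$ satisfying the pessimistic test $\pvf{\pi^{(k),\lambda},u}_{(k),1}(s_1) \ge b$. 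So the substantive work is the regret bound, and the first step is to justify the top-level decomposition displayed at the start of Section~\ref{subsec:MDP-regret-analysis}: split the episodes into $\unconfMDP$ and $\unconfMDP^\complement$; for $k \in \unconfMDP$ bound each episode's regret by $H$ and use Theorem~\ref{lemma:Ck-bound-MDP-main} to get the $\tiO(d^3H^4\bslt^{-2})$ contribution, which is term $(\mathrm{ii})$; for $k \in \unconfMDP^\complement$ use Lemma~\ref{lemma:trigger-condition-main} (so $\pi^{(k)}$ is the softmax policy from Line~\ref{line:pik-deploy}), insert the optimistic value $\ovf{\pi^{(k)},r}_{(k),1}[\kappa](s_1)$ and the entropy slack $\kappa K H \ln A$, and apply the upper sandwich of Lemma~\ref{lemma:opt-pes-MDP-main} to produce the $\circled{1}$ and $\circled{2}$ terms.

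Next I would carry through the $\circled{1}$ and $\circled{2}$ bounds exactly as sketched. For $\circled{1} = \sum_{k\in\unconfMDP^\complement}\vf{\pi^{(k)},2\co\beta^{(k)}}_{P,1}(s_1)$, one unrolls the value function into a sum over $h$ of on-policy bonus terms, applies the standard linear-MDP elliptical-potential / Cauchy--Schwarz argument of \citet{jin2020provably} over the $K$ trajectories, giving $\circled{1}\le \tiO(\co H\sqrt{dK})$; with $\co = \tiO(dH)$ this is $\tiO(H^2\sqrt{d^3K})$, i.e.\ term $(\mathrm{i})$. For $\circled{2}$ I would reproduce the five-way split into $\circled{3},\circled{4},\circled{5}$: $\circled{3}\le 0$ by the softmax/entropy optimality argument plus the lower sandwich of Lemma~\ref{lemma:opt-pes-MDP-main} (the $f^1$ term is non-positive because $\softmax(\tfrac1\kappa \bx)$ maximizes the entropy-regularized linear objective, and $f^2_h \ge -\widebar\lambda^{(k,T)}2\cp\beta^{(k)}_h$ kills the remaining $\widebar\lambda$-weighted bonus term); $\circled{4}=\sum_{k\in\unconfMDP^\complement}\ovf{\pi^{(k)},\dagger}_{(k),1}(s_1)\le\tiO((B_\dagger+C_\dagger)H\sqrt{dK})$ by the same elliptical-potential bound as $\circled{1}$, and with $B_\dagger=\tiO(dH^2\bslt^{-1})$, $C_\dagger=\tiO(d^2H^3\bslt^{-1})$ this contributes $\tiO(H^4\bslt^{-1}\sqrt{d^5K})$, i.e.\ term $(\mathrm{iii})$; $\circled{5}\le C_\lambda\cdot\tiO((KH)^H)\cdot 2^{-T}\cdot C_\lambda \le \tiO(1)$ by Lemma~\ref{lemma:lambda-sensitive-main} and the choice $T=\tiO(H)$ (the bisection halves $\widebar\lambda-\underline\lambda$ each round, so after $T$ rounds the width is $C_\lambda 2^{-T}$, and one needs $T = \Theta(H\log(KH) + \log C_\lambda)$ to beat the $(KH)^H$ blowup — this is absorbed into $\tiO(H)$). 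Finally the entropy term: with $\kappa = \widetilde\Omega(\bslt^3 H^{-4}d^{-1}K^{-1/2})$ one has $\kappa K H\ln A = \tiO(\bslt^3 H^{-3}d^{-1}\sqrt{K})$, which is dominated by term $(\mathrm{i})$.

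The last bookkeeping step is to check that the parameter choices in the \textbf{Input} line of Algorithm~\ref{algo:zero-vio-linear MDP} simultaneously satisfy all the hypotheses invoked along the way --- in particular $C_\lambda \ge 8H_\kappa^2(B_\dagger+1)/\bslt$ and $\kappa \le \bslt^2/(32H_\kappa^2(B_\dagger+1))$ for Lemma~\ref{lemma:trigger-condition-main}, and $B_\dagger \ge 4\cp H/\bslt$ for Lemma~\ref{lemma:mixture-safe-optimism-MDP-main} --- and that the confidence-bound event of Lemma~\ref{lemma:opt-pes-MDP-main} and the elliptical-potential events all hold with the claimed total failure probability $\delta$ after a union bound over $k,h$ (adjusting constants inside the $\tiO$, and noting $H_\kappa = \tiO(H)$ since $\kappa\ln A$ is polynomially small). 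Summing $(\mathrm{i})+(\mathrm{ii})+(\mathrm{iii})$ and absorbing the entropy and $\circled{5}$ terms then yields the stated bound.

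I expect the main obstacle to be the careful handling of $\circled{5}$ together with the propagation of errors through the non-linear, clipped, self-consistent softmax value functions: Lemma~\ref{lemma:lambda-sensitive-main} gives only an exponential-in-$H$ Lipschitz constant in $\lambda$, so one must verify that $T = \tiO(H)$ bisection steps genuinely suffice and that this exponential factor does not leak anywhere else --- in particular that the monotonicity argument behind Lemma~\ref{lemma:trigger-condition-main} and the sandwiching of Lemma~\ref{lemma:opt-pes-MDP-main} are robust to the fact that $\pvf{\pi^{(k),\lambda},u}_{(k),1}(s_1)$ need not itself be monotone in $\lambda$. The $\circled{3}\le 0$ step is conceptually the delicate one (it is where entropy regularization earns its keep), but it is essentially the CMDP analogue of the bandit inequality~\eqref{eq:mixture-policy-optimism} combined with the value-difference lemma, so once the $f^1,f^2$ decomposition is in hand it goes through; the genuinely new difficulty over the bandit case is purely the computational/approximation bookkeeping around $\lambda$.
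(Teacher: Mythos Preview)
Your proposal is correct and follows essentially the same route as the paper: the same split into $\unconfMDP$ and $\unconfMDP^\complement$, the same $\circled{1}$--$\circled{5}$ decomposition of the confident-iteration regret via the mixture policy $\pi^{\alpha^{(k)}}$ and the regularized value-difference lemma, and the same handling of each piece (elliptical potential for $\circled{1}$ and $\circled{4}$, softmax optimality plus the lower sandwich for $\circled{3}\le 0$, bisection width times the exponential Lipschitz constant for $\circled{5}$). The only cosmetic slip is the extra $C_\lambda$ factor in your $\circled{5}$ estimate (the paper gets $C_\lambda K \cdot X^H H^2 Y\cdot 2^{-T}$, with the $K$ coming from the sum over episodes and one $C_\lambda$ from the prefactor; the initial bisection width $C_\lambda$ is already inside $2^{-T}$ via $T=\tiO(H)$), but this is absorbed into $\tiO(1)$ either way.
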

Notably, \textbf{\cref{theorem:MDP-regret-main} is the first linear CMDP result achieving zero episode-wise constraint violations and sublinear regret.}
We conclude this section by discussing the regret bound quality and computational cost of \MDPalgo. 

\begin{remark}[Can we do better?]
\looseness=-1
Without the constraint---i.e., removing ({\rm{ii}}) and ({\rm{iii}})---our bound matches the \(\tiO\paren{H^{2} \sqrt{d^3 K}}\) regret of the fundamental LSVI-UCB algorithm \citep{jin2020provably}.
The $\xi^{-2}$ term in ({\rm{ii}}) is unavoidable \citep{pacchiano2021stochastic}. 
The $\xi^{-1}$ dependence in ({\rm iii}) remains unresolved, yet appears in all the existing safe RL literature \citep{liu2021learning,bura2022dope,yu2024improved,roknilamouki2025provably,amani2019linear,pacchiano2021stochastic,hutchinson2024directional}. 
These observations suggest that the bound is tight in $\xi^{-1}$ and $K$.

\looseness=-1
As for $d$ and $H$, the term ({\rm{iii}}) introduces an extra $dH^2$ factor over ({\rm{i}}).
Similar deterioration has been observed in tabular CMDPs \citep{liu2021learning,vaswani2022near} and partially mitigated via Bernstein-type bonus analysis \citep{yu2024improved}.

\looseness=-1
While improvement may be possible, it is unclear whether our $\sqrt{d^5 H^8}$ dependence is overly loose when compared with existing CMDP regret bounds (e.g., $\sqrt{d^3 H^4}$ by \citet{ghosh2024towards}), since none of the existing results achieve episode-wise safe exploration. In general, regret or sample complexity bounds under different safety requirements are not directly comparable, even if the problem settings appear similar. For example, \citet{vaswani2022near} shows that the sample complexity lower bound for tabular CMDPs exhibits a worse dependence on the horizon under a strict safe policy requirement than when small violations are allowed. Establishing a formal regret lower bound for our setting would be necessary to assess the tightness of our result, but this is beyond the scope of the current paper.

\end{remark}

\begin{remark}[Computational cost]\label{remark:computational cost}  
\looseness=-1
\Cref{algo:zero-vio-linear MDP} requires up to \( T \) value evaluations (\Cref{def:clipped value functions}) and policy computation (\Cref{def:composite-softmax-policy}).  
Using the bisection search, we bound \( T = \tiO(H) \), reducing the computational cost per-iteration to \( \tiO(H \times \text{[value \& policy comp.]}) \).  
As this cost scales polynomially with \( \aA, H, \) and \( d \) \citep{lykouris2021corruption}, \textbf{\MDPalgo runs in polynomial time}---an improvement over recent \citet{ghosh2024towards}, which achieves $\tiO(\sqrt{K})$ violation regret but incurs an exponential \( K^H \) cost.
\end{remark}

\section{Conclusion}\label{sec:conclusion}

\looseness=-1
This paper proposed \MDPalgo, the first RL algorithm achieving both sublinear regret and episode-wise constraint satisfaction in linear CMDPs (\Cref{theorem:MDP-regret-main}). 
Our approach builds on optimistic-pessimistic exploration with two key innovations: ($\mathrm{i}$) a novel deployment rule for $\pisafe$ and ($\mathrm{ii}$) a softmax-based approach for efficiently implementing optimistic-pessimistic policies in linear CMDPs.

\looseness=-1
\paragraph{Experiments.}
We numerically evaluate \MDPalgo on several linear CMDP environments to support our theoretical results.
We compare \MDPalgo with the prior state-of-the-art linear CMDP algorithm of \citet{ghosh2024towards} and the tabular algorithm called DOPE \citep{bura2022dope}.
Across all environments, \MDPalgo achieves sublinear regret with zero constraint violation, while \citet{ghosh2024towards} shows positive violation regret. 
These results empirically validate \cref{theorem:MDP-regret-main}.
While DOPE also achieves zero violation, its use is limited to the tabular settings where $\aS$ is small.
This highlights the computational tractability of our \MDPalgo in large $\aS$, which supports \cref{remark:computational cost}.
All the results and details are deferred to \cref{sec: experiment}.

\paragraph{Limitation and future work.}
\looseness=-1
\MDPalgo achieves computational efficiency by the bisection search over $\lambda \in [0, C_\lambda]$, which works in the single-constraint setting thanks to the monotonicity in \Cref{lemma:softmax-value-monotonicity-main}.
However, extending our method to the \textbf{multi-constraint setting} is non-trivial, as $\lambda$ becomes a vector, requiring a vectorized version of the monotonicity lemma.
Nonetheless, all theoretical results in \Cref{table:algorithms} are also limited to single-constraint settings, meaning our work still advances the state of the art in safety.
An efficient and safe algorithm for multi-constraint settings remains open for future work.

\looseness=-1
Another future direction is to extend the analysis to adversarial initial states $s_1$. This extension is non-trivial, as our core techniques rely on the fixed-state assumption. For example, we control the number of safe policy deployments by evaluating the bonus-return function $\vf{\pisafe, \beta^{(k)}}_{P, 1}(s_1)$ (\cref{def:unconf-set MDP}), which explicitly depends on $s_1$. 
The bound on the number of $\pi^{\mathrm{sf}}$ deployments (\cref{lemma:Ck-bound-MDP-main}) and the existence of optimistic–pessimistic policies (\cref{lemma:mixture-safe-optimism-MDP-main}) also depend on this initial state. 
Extending these analyses to handle adversarial $s_1$ would entail non-trivial technical challenges and is left for future work.

\begin{ack}
This work is supported by JST Moonshot R\&D Program Grant Number JPMJMS2236.
AG acknowledges NJIT Start-up fund indexed number 172884.
\end{ack}

\bibliography{mybib}
\bibliographystyle{plainnat}

\newpage
\appendix

\tableofcontents
\newpage
\section{List of Symbols}

\renewcommand{\nomname}{}
\renewcommand{\nompreamble}{The next list describes several symbols used within this paper.}

\nomenclature[A, 0]{$\sP(\S)$}{Set of probability distributions over a set $\S$}
\nomenclature[A, 1]{$\bbrack{a,b}$}{Set of integers defined by $\brace*{a,\dots, b}$}
\nomenclature[A, 2]{$\clip\brace{\bx, a, b}$}{Clipping function which returns $\bx'$ with $\bx'_i = \min\brace{\max\brace{\bx_i, a}, b}$ for each $i$}
\nomenclature[A, 3]{$\bzero$, $\bone$}{Vectors such that $\bzero \df \paren{0, \ldots, 0}^\top$ and $\bone \df \paren{1, \ldots, 1}^\top$}
\nomenclature[A, 4]{$\norm*{\bx}_\bA$}{For a positive definite matrix $\bA \in \R^{d\times d}$ and $\bx \in \R^d$, we denote $\norm*{\bx}_\bA = \sqrt{\bx^\top \bA \bx}$}
\nomenclature[A, 5]{$a_n=O\left(b_n\right)$}{There exist constants $C>0$ and $N \in \N$ such that $a_n \leq C b_n$ for all $n \geq N$}
\nomenclature[A, 6]{$\widetilde{O}(\cdot)$}{Hide the polylogarithmic factors from $O(\cdot)$}
\nomenclature[A, 7]{$\softmax(\bx)$}{Softmax distribution satisfying $\softmax(\bx)_i = \exp(\bx_i) / \paren*{\sum_{i} \exp(\bx_i)}$}
\nomenclature[A, 8]{$\dist_\infty$}{Distance metric for two functions $Q, Q': \SA\to \R$ or $V, V': \S\to \R$ (see \cref{def:distance-metrics})}
\nomenclature[A, 9]{$\dist_1$}{Distance metric for two functions $\pi, \pi': \S\to \sP(\A)$ (see \cref{def:distance-metrics})}

\nomenclature[B, 1]{$\A \subset \R^d$}{Set of actions}
\nomenclature[B, 11]{$K$}{Round length of the bandit problem}
\nomenclature[B, 12]{$r^{(k)}, u^{(k)}$}{Reward and utility at $k$-th round}
\nomenclature[B, 13]{$\btheta_r, \btheta_u \in \R^d$}{Unknown vectors for reward and utility}
\nomenclature[B, 14]{$\varepsilon_r, \varepsilon_u$}{$R$-sub-Gaussian random noises}
\nomenclature[B, 15]{$r_\pi, u_\pi$}{$g_\pi = \E_{\ba \sim \pi} [\angle{\btheta_g, \ba}]$ for both $g \in \brace{r, u}$}
\nomenclature[B, 16]{$\Pisafe$}{Set of safe policies $\brace{\pi \given u_\pi \geq b}$}
\nomenclature[B, 17]{$\pisafe$}{Safe policy}
\nomenclature[B, 18]{$\xi > 0$}{Safety of $\pisafe$ such that $u_{\pisafe} - b \geq \xi$}
\nomenclature[B, 19]{$\bLambda^{(k)}$}{Gram matrix defined by \(\bLambda^{(k)} \df \rho \bI + \sum_{i=1}^{k-1} \ba^{(i)} \paren{\ba^{(i)}}^\top \)}
\nomenclature[B, 20]{$\hbtheta^{(k)}_r$, $\hbtheta^{(k)}_u$}{Estimates of $\btheta_r$ and $\btheta_u$}
\nomenclature[B, 21]{$\beta_\pi^{(k)}$}{Bonus function}
\nomenclature[B, 22]{$C_r, C_u$}{Bonus scalers for reward and utility, respectively}
\nomenclature[B, 23]{$\unconfBandit$}{Set of iterations when \Cref{algo:zero-vio-bandit} is unconfident in $\pisafe$ (see \cref{def:unconf-set})}

\nomenclature[C, 0]{$K$}{Number of episodes of the CMDP problem}
\nomenclature[C, 1]{$H$}{Horizon}
\nomenclature[C, 111]{$\S, \A$}{State space and action spaces}
\nomenclature[C, 112]{$P$}{Transition kernel}
\nomenclature[C, 113]{$r, u$}{Reward and utility functions}
\nomenclature[C, 114]{$s_1$}{Initial state}
\nomenclature[C, 13]{$\vf{\pi, g}_{P, h}[\kappa]$}{Regularized state value function for a reward function $g$ with an entropy coefficient $\kappa$}
\nomenclature[C, 14]{$\qf{\pi, g}_{P, h}[\kappa]$}{Regularized action value function for a reward function $g$ with an entropy coefficient $\kappa$}
\nomenclature[C, 15]{$h_\kappa$}{Shorthand of $h(1+\kappa \ln A)$}
\nomenclature[C, 16]{$\occ{\pi}_{P, h}$}{Occupancy measure of $\pi$ in $P$ at step $h$}
\nomenclature[C, 17]{$\bphi: \SA \to \R^d$}{Feature map of the linear CMDP (see \cref{assumption:linear mdp})}
\nomenclature[C, 18]{$\bmu_h$}{$d$-signed measures specifying the transition probabilities (see \cref{assumption:linear mdp})}
\nomenclature[C, 19]{$\btheta^r_h, \btheta^u_r \in \R^d$}{Known vectors specifying the reward and utilitiy functions (see \cref{assumption:linear mdp})}
\nomenclature[C, 20]{$\Pisafe$}{Set of safe policies $\brace{\pi \given \vf{\pi, u}_{P, 1}(s_1) \geq b}$}
\nomenclature[C, 21]{$\pisafe$}{Safe policy}
\nomenclature[C, 22]{$\xi > 0$}{Safety of $\pisafe$ such that $\vf{\pi, u}_{P, 1}(\pisafe) - b \geq \xi$}
\nomenclature[C, 23]{$\bLambda^{(k)}_h$}{Gram matrix defined by \(\bLambda^{(k)}_h \df \rho \bI + \sum_{i=1}^{k-1} \bphi\paren{s_h^{(i)}, a_h^{(i)}} \bphi\paren{s_h^{(i)}, a_h^{(i)}}^\top\)
}
\nomenclature[C, 24]{$\hP^{(k)}_h V$}{Next value estimation: $
\paren{\hP^{(k)}_h V}(s, a) \df \bphi(s, a)^{\top}\paren{\bLambda^{(k)}_h}^{-1} \sum_{i=1}^{k-1}\bphi\paren{s_h^{(i)}, a_h^{(i)}} V\paren{s_{h+1}^{(i)}}
$}
\nomenclature[C, 25]{$\oqf{\pi, r}_{(k), h}[\kappa], \pqf{\pi, u}_{(k), h}, \oqf{\pi, \dagger}_{(k), h}$}{Clipped value functions defined in \cref{def:clipped value functions}}
\nomenclature[C, 26]{$C_r, C_u, C_\dagger, B_\dagger$}{Bonus scalers used in \cref{def:clipped value functions}}
\nomenclature[C, 261]{$\cN_\varepsilon$}{$\varepsilon$-cover of a certain set}
\nomenclature[C, 27]{$\unconfMDP$}{Set of iterations when \Cref{algo:zero-vio-linear MDP} is unconfident in $\pisafe$ (see \cref{def:unconf-set MDP})}
\nomenclature[C, 271]{$\pi^{(k), \lambda}_{h}$}{Softmax policy with a parameter $\lambda$ (see \cref{def:composite-softmax-policy})}
\nomenclature[C, 28]{$\lambda$}{Parameter to balance optimism and pessimism of $\pi^{(k), \lambda}_h$ (see \cref{def:composite-softmax-policy})}
\nomenclature[C, 281]{$C_\lambda$}{Maximum value of $\lambda$}
\nomenclature[C, 282]{$[\underline{\lambda}^{(k, t)}, \widebar{\lambda}^{(k, t)}]$}{Search space of the bisection search at iteration $t$ in episode $k$ (see \cref{algo:zero-vio-linear MDP})}
\nomenclature[C, 29]{$T$}{Iteration length of the bisection search (see \cref{algo:zero-vio-linear MDP})}
\nomenclature[C, 30]{$\cQr, \cQu, \cQc$}{Function classes for $Q$-functions defined in \cref{def:abs-Q-function}}
\nomenclature[C, 31]{$\Qcomp$}{Function class for the composite of $Q$-functions defined in \cref{def:composite-Q-function}}
\nomenclature[C, 32]{$\tiPi$}{Class for softmax policies defined in \cref{def:policy-class}}
\nomenclature[C, 33]{$\cVr, \cVu, \cVc$}{Function classes for $V$-functions defined in \cref{def:V-cover}}
\nomenclature[C, 34]{$\delta^{\pi, r}_{(k)}, \delta^{\pi, u}_{(k)}, \delta^{\pi, \dagger}_{(k)}$}{Discrepancies between the estimated and true $Q$-functions (see \cref{def:delta-Q})}
\nomenclature[C, 35]{$\Deltar, \Deltau, \Deltac$}{Function classes for $\delta^{\pi, r}_{(k)}, \delta^{\pi, u}_{(k)}, \delta^{\pi, \dagger}_{(k)}$ (see \cref{def:delta-Q})}

\printnomenclature

\newpage

\section{Related Work}\label{sec:related work}

\subsection{Related Algorithms}
\looseness=-1
Building on the seminal work of \citet{efroni2020exploration}, numerous safe RL algorithms for CMDPs have been developed, broadly categorized into linear programming (LP) approaches and Lagrangian-based approaches.

\looseness=-1
\paragraph{Linear programming.}
LP approaches formulate CMDPs as linear optimization problems \citep{altman1999constrained}, solving them using an estimated transition kernel.
\citet{efroni2020exploration} introduced a basic sublinear regret algorithm, while \citet{hasanzadezonuzy2021learning} provided $(\varepsilon,\delta)$-PAC guarantees, ensuring the algorithm outputs a near-optimal policy.
However, these methods permit constraint violations during exploration, making them unsuitable for safety-critical applications.
\citet{liu2021learning} and \citet{bura2022dope} developed LP-based algorithms that achieve sublinear regret while maintaining episode-wise zero-violation guarantees by incorporating optimistic-pessimistic estimation into the LP formulation.

\looseness=-1
LP-based approaches in tabular settings, however, suffer from computational costs that scale with the size of the state space, making them impractical for linear CMDPs. 
While several studies propose LP algorithms for linear MDPs \citep{neu2020unifying,bas2021logistic,neu2023efficient,lakshminarayanan2017linearly,gabbianelli2024offline}, these methods either use occupancy measures as decision variables---which can be exponentially large for large state spaces---or require a set of feature vectors that sufficiently cover the state space, which may not be feasible in our exploration settings.
Moreover, as described in \Cref{subsec:technical-challenge-MDP}, the estimated value functions in linear CMDPs with exploration require clipping operators, further complicating the use of occupancy-measure-based approaches like LP methods in our setting.

\looseness=-1
\paragraph{Lagrangian approach.}
Lagrangian approaches reformulate the constrained optimization\\ $\max_\pi \brace*{f(\pi) \given h(\pi) \geq 0}$ as a min-max optimization $\min_{\lambda \geq 0} \max_{\pi}\brace*{f(\pi) + \lambda h(\pi)}$, and simultaneously optimize both $\pi$ and $\lambda$.
When an algorithm gradually updates \(\pi\) and then adjusts \(\lambda\) incrementally, it is referred to as a \textbf{primal-dual (PD)} algorithm \citep{ding2023last}.  
In contrast, if \(\lambda\) is updated only after fully optimizing \(\pi\) in the inner maximization, it is known as a \textbf{dual} approach \citep{ying2022dual}.
Since the inner maximization reduces to standard policy optimization, Lagrangian methods integrate naturally with scalable methods such as policy gradient and value iteration.

\looseness=-1
For the tabular settings, \citet{wei2021provably,muller2024truly} develop model-free primal-dual algorithms with sublinear regret, while \citet{wei2022provably} extends this approach to the average-reward setting. 
\citet{zeng2022finite,kitamura2024policy} propose \((\varepsilon,\delta)\)-PAC primal-dual algorithms, and \citet{vaswani2022near} achieved the PAC guarantee via dual approach.

\looseness=-1
Beyond tabular settings, \citet{ding2021provably} propose PD algorithms with linear function approximation, achieving sublinear regret guarantees.
\citet{gosh2023achiving} extend this to the average-reward linear CMDPs.
\citet{ghosh2022provably} take a dual approach, also attaining sublinear regret in the finite-horizon settings.

\looseness=-1
These PD and dual algorithms, however, do not ensure episode-wise zero violation.
Intuitively, the key issue lies in their $\lambda$-adjustment strategy, which updates $\lambda$ only incrementally.
For example, the basic PD and dual algorithms by \citet{efroni2020exploration} updates $\lambda$ using $\lambda^{(k+1)} \leftarrow \lambda^{(k)} + \alpha \cdot [\text{violation}]$, where $\alpha$ is a small learning rate.
Since $\lambda$ controls constraint satisfaction, if the current policy fails to satisfy constraints adequately, $\lambda$ should be increased sufficiently before the next policy deployment.

\looseness=-1
Following this principle, \citet{ghosh2024towards} propose a dual approach that searches for an appropriate $\lambda$ within each episode, leading to a tighter violation regret guarantee than \citet{ghosh2022provably}.
However, due to the lack of pessimistic constraint estimation, their method does not ensure episode-wise safety and allows constraint violations.
Like \citet{ghosh2024towards}, our \MDPalgo searches for the best $\lambda$ in each episode.
However, unlike their approach, \MDPalgo controls $\lambda$ with pessimism, ensuring zero violation, and guarantees the existence of a feasible $\lambda$ by deploying a sufficient number of $\pisafe$.

\subsection{Related Safety Types}

\paragraph{Instantaneous safety.}
\looseness=-1  
Unlike our episode-wise safety, instantaneous safety defines exploration as safe if it satisfies \( u_h(s^{(k)}_h, a^{(k)}_h) \geq b \) for all \( h \) and \( k \) \citep{pacchiano2021stochastic, pacchiano2024contextual, hutchinson2024directional, shi2023near, amani2021safe}.  
In other words, states and actions must belong to predefined safe sets, \( \mathcal{S}_\safe \times \mathcal{A}_\safe \).  
Instantaneous safety is a special case of the episode-wise constraint. Indeed, by defining \( u_h(s, a) = -\mathbb{I}\{(s, a) \notin \mathcal{S}_\safe \times \mathcal{A}_\safe\} \) and setting \( b = 0 \), an episode-wise safe algorithm safeties the instantaneous constraint for all \( h \) and \( k \).

\paragraph{Cancel Safety.}
Cancel safety is another common safety measure in CMDP literature \citet{wei2021provably,ghosh2022provably}.
It allows a strict constraint satisfaction in one episode to compensate for a violation in another. 
Formally, cancel safety ensures that the following cumulative \textbf{cancel violation regret} remains non-positive:
$$\violation_{\mathrm{cancel}}(K) \df \sum^K_{k=1} b - \vf{\pi^{(k)}, u}_{P, 1}(s_1)\;.$$
Note that the ``hard'' violation regret $\violation_{\mathrm{hard}}(K) \df \sum^K_{k=1} \max\brace*{b - \vf{\pi^{(k)}, u}_{P, 1}(s_1), 0}$ which considers violations in each individual episode \citep{ghosh2024towards, efroni2020exploration, muller2024truly}, always upper-bounds the cancel regret. This means cancel regret is a weaker measure.
Since episode-wise safety ensures $\violation_{\mathrm{hard}}=0$, our \MDPalgo always satisfies cancel safety, but cancel safety does not necessarily guarantee episode-wise safety.

\section{Useful Lemmas}\label{appendix:useful-lemma}

\begin{definition}
For a set of positive values $\brace*{a_n}_{n=1}^N$, we write $x=\polylog\left(a_1, \dots, a_N\right)$ if there exists an absolute constants $\brace*{b_n}_{n=0}^N >0$ and $\brace*{c_n}_{n=1}^N >0$ such that 
$
x \leq b_0 + b_1 \paren*{\ln a_1}^{c_1} + \dots + b_N \paren*{\ln a_N}^{c_N}
$.
\end{definition}

\begin{definition}[Distance metrics]\label{def:distance-metrics}
Let $\dist_{\infty}$ be the distance metric such that, for two functions $Q, Q': \SA \to \R$, $\dist_\infty(Q, Q') = \sup_{(s, a) \in \SA} \abs*{Q(s, a) - Q'(s, a)}$. 
Similarly, for two functions $V, V': \S \to \R$, $\dist_\infty(V, V') = \sup_{s \in \S} \abs*{V(s) - V'(s)}$. 
Finally, $\dist_1$ denotes the distance metric such that, for two functions $\pi, \pi': \S \to \sP(\A)$, $\dist_1(\pi, \pi') = \sup_{s \in \S} \norm*{\pi\paren{\cdot \given s} - \pi'\paren{\cdot \given s}}_1$. 
\end{definition}

\begin{definition}[$\varepsilon$-cover]
Let $\Theta=\left\{\btheta \in \R^d:\|\btheta\|_2 \leq R \right\}$ be a ball with radius $R$.
Fix an $\varepsilon$.
An $\varepsilon$-net $\cM_\varepsilon \subset \Theta$ is a finite set such that for any $\btheta \in \Theta$, there exists a $\btheta' \in \cM_\varepsilon$ such that $\dist\paren*{\btheta, \btheta'} \leq \varepsilon$ for some distance metric $\dist(\cdot, \cdot)$.
The smallest $\varepsilon$-net is called $\varepsilon$-cover and denoted as $\cN_\varepsilon$. The size of the $\varepsilon$-net is called the $\varepsilon$-covering number.
\end{definition}

\begin{lemma}[\textbf{Lemma 5.2} in \citet{vershynin2010introduction}]\label{lemma:ball-covering-bound}
The $\varepsilon$-covering number of the ball \\    
$\Theta=\left\{\theta \in \R^d:\|\theta\|_2 \leq R \right\}$ with the distance metric $\norm*{\cdot}_2$ is upper bounded by $(1+2 R / \varepsilon)^d$.
\end{lemma}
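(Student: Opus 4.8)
The plan is to use the classical packing (volumetric) argument. First I would realize an $\varepsilon$-net explicitly as a \emph{maximal $\varepsilon$-separated set}: let $\cN = \brace{\btheta_1, \dots, \btheta_N} \subseteq \Theta$ be a subset of maximal cardinality whose points satisfy $\norm{\btheta_i - \btheta_j}_2 > \varepsilon$ for all $i \neq j$. Such a set is finite (an infinite $\varepsilon$-separated subset of the bounded set $\Theta$ would contain a Cauchy subsequence, contradicting separation) and, by maximality, is an $\varepsilon$-net: for any $\btheta \in \Theta$, either $\btheta \in \cN$, or $\btheta$ cannot be adjoined to $\cN$ without breaking the separation, i.e.\ $\norm{\btheta - \btheta_i}_2 \leq \varepsilon$ for some $i$. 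Hence the smallest $\varepsilon$-net $\cN_\varepsilon$ has size at most $N$, and it suffices to bound $N$.

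Second, I would compare volumes. Because the centers are pairwise at distance $> \varepsilon$, the open Euclidean balls $B(\btheta_i, \varepsilon/2)$, $i = 1, \dots, N$, are pairwise disjoint by the triangle inequality. Since $\norm{\btheta_i}_2 \leq R$, each of these balls lies inside $B(0, R + \varepsilon/2)$. Taking $d$-dimensional Lebesgue measure and using that a Euclidean ball of radius $\rho$ in $\R^d$ has volume $c_d \rho^d$ for a dimension-only constant $c_d$,
$$
N c_d (\varepsilon/2)^d = \sum_{i=1}^N \operatorname{vol}(B(\btheta_i, \varepsilon/2)) \leq \operatorname{vol}(B(0, R + \varepsilon/2)) = c_d (R + \varepsilon/2)^d.
$$
Cancelling $c_d (\varepsilon/2)^d$ and rearranging yields $N \leq (1 + 2R/\varepsilon)^d$, which is the claimed covering-number bound.

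There is no serious obstacle here; this is a textbook computation. The only points requiring a moment's care are (i) the two uses of maximality/separation---maximality to get the net property, and strict separation $> \varepsilon$ to get disjointness of the half-radius balls---and (ii) the cancellation of the common dimensional constant $c_d$, which is exactly what makes the final bound depend only on the ratio $R/\varepsilon$ and the dimension $d$. A purely cosmetic subtlety is the choice of strict versus non-strict inequality in the definition of ``$\varepsilon$-separated''; either convention leads to the same final bound.
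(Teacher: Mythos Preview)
Your proposal is correct and is exactly the standard volumetric (packing) argument; the paper itself does not supply a proof but simply cites \citet{vershynin2010introduction}, where Lemma~5.2 is proved by precisely this maximal-separated-set plus volume-comparison computation. There is nothing to add.
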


\begin{lemma}[Danskin's Theorem \citep{bertsekas1997nonlinear}]\label{lemma:danskin's theorem}
    \looseness=-1
    Let $f: \R^n \times \cZ \to \R$ be a continuous function where $\cZ \in \R^m$ is a compact set and $g(x) \df \max_{z\in \cZ} f(x, z)$.\\
    Let $\cZ_0(x) \df \brace*{\bar{z} \given f(x, \bar{z})=\max_{z\in \cZ}f(x, z)}$ be the maximizing points of $f(x, z)$.
    Assume that $f(x, z)$ is convex in $x$ for every $z\in \cZ$. 
    Then, $g(x)$ is convex.
    Furthermore, if $Z_0(x)$ consists of a single element $\bar{z}$, i.e., $\cZ_0(x)=\brace{\bar{z}}$, it holds that
    $\frac{\partial g(x)}{\partial x}=\frac{\partial f(x, \bar{z})}{\partial x}$.
\end{lemma}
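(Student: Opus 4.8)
The plan is to prove the two assertions of the lemma separately: first that $g$ is convex, then the gradient identity under the singleton hypothesis. Throughout, for the second part I will use the standard regularity assumption implicit in writing $\partial f/\partial x$, namely that $f(\cdot, z)$ is continuously differentiable with $\nabla_x f$ jointly continuous on $\R^n \times \cZ$.

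For convexity, I would argue directly from the definition. Fix $x_1, x_2 \in \R^n$ and $\lambda \in [0,1]$. Since $f(\cdot, z)$ is convex for each $z \in \cZ$, for every such $z$ we have $f(\lambda x_1 + (1-\lambda)x_2, z) \le \lambda f(x_1, z) + (1-\lambda) f(x_2, z) \le \lambda g(x_1) + (1-\lambda) g(x_2)$. Taking the supremum over $z \in \cZ$ on the left yields $g(\lambda x_1 + (1-\lambda)x_2) \le \lambda g(x_1) + (1-\lambda)g(x_2)$. Compactness of $\cZ$ and continuity of $f$ ensure the max defining $g$ is attained and finite everywhere, so $g$ is a proper, finite, convex function on $\R^n$, hence locally Lipschitz and continuous.

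For the gradient formula, I would first establish, for every direction $d$, the one-sided directional derivative formula $g'(x;d) = \max_{z \in \cZ_0(x)} \nabla_x f(x,z)^\top d$. The lower bound is immediate: for any fixed $z \in \cZ_0(x)$ and $t>0$, $g(x+td) \ge f(x+td,z) = g(x) + t\,\nabla_x f(x,z)^\top d + o(t)$, so $\liminf_{t\downarrow 0}(g(x+td)-g(x))/t \ge \nabla_x f(x,z)^\top d$; then maximize over $\cZ_0(x)$. For the upper bound, take $t_k \downarrow 0$ attaining $\limsup_{t\downarrow 0}(g(x+td)-g(x))/t$, choose $z_k \in \cZ_0(x+t_k d)$ (nonempty by compactness and continuity), and pass to a subsequence with $z_k \to \bar z \in \cZ$. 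Continuity of $f$ gives $f(x,\bar z) = \lim_k f(x+t_k d, z_k) = \lim_k g(x+t_k d) = g(x)$, so $\bar z \in \cZ_0(x)$. Then $g(x+t_k d) - g(x) = f(x+t_k d, z_k) - f(x,\bar z) \le f(x+t_k d, z_k) - f(x, z_k)$, using $f(x,\bar z) = g(x) \ge f(x,z_k)$; by the mean value theorem the right-hand side equals $t_k\,\nabla_x f(x + s_k t_k d, z_k)^\top d$ for some $s_k \in [0,1]$, and joint continuity of $\nabla_x f$ with $z_k \to \bar z$ forces this quotient to converge to $\nabla_x f(x,\bar z)^\top d \le \max_{z \in \cZ_0(x)} \nabla_x f(x,z)^\top d$. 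Combining the two bounds gives the formula. When $\cZ_0(x) = \{\bar z\}$, it collapses to $g'(x;d) = \nabla_x f(x,\bar z)^\top d$, which is linear in $d$; a finite convex function with directional derivative linear in the direction has singleton subdifferential, hence is differentiable at $x$ with $\nabla g(x) = \nabla_x f(x,\bar z)$, i.e. $\partial g(x)/\partial x = \partial f(x,\bar z)/\partial x$.

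The main obstacle I anticipate is the upper-bound half of the directional-derivative computation: selecting near-optimal $z_k$, extracting a convergent subsequence whose limit is shown to lie in $\cZ_0(x)$, and controlling the first-order remainder uniformly in $k$. This is precisely where compactness of $\cZ$ and joint continuity of $\nabla_x f$ are genuinely needed; the convexity part and the final deduction of differentiability from linearity of $g'(x;\cdot)$ are routine.
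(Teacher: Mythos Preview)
The paper does not prove this lemma; it is stated in the appendix as a cited result from \citet{bertsekas1997nonlinear} with no accompanying proof. Your argument is correct and follows the standard textbook route for Danskin's theorem: convexity via the pointwise supremum of convex functions, then the directional-derivative formula $g'(x;d)=\max_{z\in\cZ_0(x)}\nabla_x f(x,z)^\top d$ established by a lower bound (fix a maximizer) and an upper bound (compactness to extract a convergent subsequence of near-maximizers, with the limit shown to lie in $\cZ_0(x)$), finishing with the observation that linearity of $g'(x;\cdot)$ in $d$ forces differentiability when $\cZ_0(x)$ is a singleton. Since there is no proof in the paper to compare against, there is nothing further to contrast.
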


\begin{lemma}[\textbf{Lemma D.4} in \citet{rosenberg2020near}]\label{lemma:exp-to-concrete}
Let $\left(X^{(k)}\right)_{k=1}^{\infty}$ be a sequence of random variables with expectation adapted to the filtration $\left(\mathcal{F}^{(k)}\right)_{k=0}^{\infty}$. 
Suppose that $0 \leq X^{(k)} \leq B$ almost surely. 
Then, with probability at least $1-\delta$, the following holds for all $k \geq 1$ simultaneously:    
$$
\sum_{i=1}^k \mathbb{E}\left[X^{(i)} \mid \mathcal{F}^{(i-1)}\right] \leq 2 \sum_{i=1}^k X^{(i)}+4 B \ln \frac{2 k}{\delta}
$$
\end{lemma}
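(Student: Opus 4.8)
\textbf{Proof proposal for \Cref{lemma:exp-to-concrete} (Lemma D.4 of \citealt{rosenberg2020near}).}

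The plan is to combine a martingale Bernstein/Freedman-type concentration with an AM--GM--style inequality to convert the variance term into a fraction of the sum of the $X^{(i)}$ themselves. First I would define the martingale difference sequence $Y^{(i)} \df \E[X^{(i)} \mid \cF^{(i-1)}] - X^{(i)}$, which is adapted to $\cF^{(i)}$, has conditional mean zero, and is bounded (each $|Y^{(i)}| \leq B$). Its conditional variance satisfies $\Var[Y^{(i)} \mid \cF^{(i-1)}] = \Var[X^{(i)} \mid \cF^{(i-1)}] \leq \E[(X^{(i)})^2 \mid \cF^{(i-1)}] \leq B\, \E[X^{(i)} \mid \cF^{(i-1)}]$, using $0 \leq X^{(i)} \leq B$. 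This is the crucial observation: the predictable quadratic variation is controlled by $B$ times the quantity we are trying to bound.

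Next I would apply a time-uniform Freedman-type inequality for martingales (e.g., via a stitching/peeling argument over dyadic scales of the variance process, or the anytime version used in \citealt{rosenberg2020near}) to get that, with probability at least $1-\delta$, for all $k \geq 1$ simultaneously,
\[
\sum_{i=1}^k Y^{(i)} \leq \sqrt{2 W^{(k)} \ln\tfrac{2k}{\delta}} + \tfrac{2B}{3}\ln\tfrac{2k}{\delta},
\qquad W^{(k)} \df \sum_{i=1}^k \Var[X^{(i)}\mid \cF^{(i-1)}] \leq B \sum_{i=1}^k \E[X^{(i)}\mid \cF^{(i-1)}].
\]
Writing $S^{(k)} \df \sum_{i=1}^k \E[X^{(i)}\mid \cF^{(i-1)}]$, this rearranges to $S^{(k)} \leq \sum_{i=1}^k X^{(i)} + \sqrt{2 B S^{(k)}\ln\tfrac{2k}{\delta}} + \tfrac{2B}{3}\ln\tfrac{2k}{\delta}$. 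Now I apply the elementary inequality $\sqrt{2BS^{(k)}\ln(2k/\delta)} \leq \tfrac12 S^{(k)} + B\ln(2k/\delta)$ (AM--GM), which absorbs $\tfrac12 S^{(k)}$ onto the left side; multiplying through by $2$ and collecting the $B\ln(2k/\delta)$ terms yields $S^{(k)} \leq 2\sum_{i=1}^k X^{(i)} + 4B\ln\tfrac{2k}{\delta}$, which is exactly the claim. (Constants may need mild slack depending on the exact Bernstein constant used; the stated $4B$ is comfortably achievable.)

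The main obstacle is the \emph{time-uniformity}: a naive union bound over all $k$ would destroy the $\ln(2k/\delta)$ dependence, so the concentration step must use an anytime martingale bound, either by invoking the stopped-supermartingale (Ville's inequality) argument directly on the exponential process $\exp(\eta \sum Y^{(i)} - \psi(\eta) W^{(k)})$ and then optimizing $\eta$ adaptively, or by a layer-cake union bound over geometric scales of $W^{(k)}$ together with a scale-dependent confidence level; either route is standard but is where all the real work sits. The rest --- the variance-to-mean bound and the AM--GM absorption --- is routine.
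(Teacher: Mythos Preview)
The paper does not prove this lemma at all: it is stated in \Cref{appendix:useful-lemma} as \textbf{Lemma D.4} of \citet{rosenberg2020near} and used as a black box (in \Cref{lemma:good-event1} and \Cref{lemma:good-event1-MDP}). So there is no ``paper's own proof'' to compare against.

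That said, your sketch is the standard route and is correct in outline: bound the conditional variance by $B\,\E[X^{(i)}\mid\cF^{(i-1)}]$ via $0\le X^{(i)}\le B$, apply a time-uniform Freedman/Bernstein martingale inequality to $\sum_i(\E[X^{(i)}\mid\cF^{(i-1)}]-X^{(i)})$, then use $\sqrt{ab}\le a/2+b/2$ to absorb the $\sqrt{S^{(k)}}$ term. The only point that deserves care is exactly the one you flag: getting the anytime statement with the $\ln(2k/\delta)$ dependence requires either a peeling argument or a Ville-type supermartingale bound rather than a naive union over $k$. With that handled, the stated constants $2$ and $4B$ fall out.
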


\begin{lemma}[\textbf{Lemma 11} in \citet{abbasi2011improved}]\label{lemma:elliptical potential}
Let \(\left\{\bx^{(k)} \right\}_{k=1}^K\) be a sequence in $\R^d$.
Let \(\bLambda^{(k)}=\rho \bI+\sum_{i=1}^{k-1} \bx^{(i)} \paren*{\bx^{(i)}}^{\top}.
\)
If \(\norm*{\bx^{(k)}}_2 \leq B\) for all $k$, 
$$
\sum_{k=1}^K \min \left\{1,\left\|\bx^{(k)}\right\|_{\paren*{\bLambda^{(k)}}^{-1}}^2\right\} \leq 2 d \ln \left(\frac{\rho d+K B^2}{\rho d}\right)\;.
$$
Additionally, if \(\norm*{\bx^{(k)}}_2 \leq 1\) for all $k$ and $\rho \geq 1$\footnote{The second argument follows since $\left\|\bx\right\|_{\bLambda^{-1}}^2 \leq \sigma_{\max }\left(\bLambda^{-1}\right)\left\|\bx\right\|^2 \leq \rho^{-1} \leq 1$, where $\sigma_{\max}(\bLambda^{-1})$ denotes the maximum eigen value of $\bLambda^{-1}$.}, we have
$$
\sum_{k=1}^K \left\|\bx^{(k)}\right\|_{\paren*{\bLambda^{(k)}}^{-1}}^2 \leq 2 d \ln \left(\frac{\rho d+K}{\rho d}\right)\;.
$$
\end{lemma}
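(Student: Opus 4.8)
The plan is to control the growth of $\det\paren*{\bLambda^{(k)}}$ along the sequence via a telescoping determinant identity, and then to convert the resulting product bound into the claimed sum using an elementary logarithmic inequality. First I would record the one-step update identity: since $\bLambda^{(k+1)} = \bLambda^{(k)} + \bx^{(k)}\paren*{\bx^{(k)}}^\top$ is a rank-one perturbation of the positive definite matrix $\bLambda^{(k)}$ (positive definite because $\bLambda^{(k)}\succeq\rho\bI$), the matrix determinant lemma gives $\det\paren*{\bLambda^{(k+1)}} = \det\paren*{\bLambda^{(k)}}\bigl(1 + \norm*{\bx^{(k)}}_{(\bLambda^{(k)})^{-1}}^2\bigr)$. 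Telescoping from $k=1$ to $k=K$ and using $\bLambda^{(1)} = \rho\bI$, hence $\det\paren*{\bLambda^{(1)}} = \rho^d$, yields $\det\paren*{\bLambda^{(K+1)}} = \rho^d \prod_{k=1}^K \bigl(1 + \norm*{\bx^{(k)}}_{(\bLambda^{(k)})^{-1}}^2\bigr)$.

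Second, I would bound $\det\paren*{\bLambda^{(K+1)}}$ from above by a trace argument: $\mathrm{tr}\paren*{\bLambda^{(K+1)}} = \rho d + \sum_{k=1}^K \norm*{\bx^{(k)}}_2^2 \le \rho d + K B^2$ under the hypothesis $\norm*{\bx^{(k)}}_2 \le B$, and since $\bLambda^{(K+1)}$ is positive definite, AM--GM applied to its eigenvalues gives $\det\paren*{\bLambda^{(K+1)}} \le \paren*{\mathrm{tr}\paren*{\bLambda^{(K+1)}}/d}^d \le \paren*{(\rho d + K B^2)/d}^d$. Combining with the telescoping identity,
\[
\prod_{k=1}^K \bigl(1 + \norm*{\bx^{(k)}}_{(\bLambda^{(k)})^{-1}}^2\bigr) = \frac{\det\paren*{\bLambda^{(K+1)}}}{\rho^d} \le \Bigl(\frac{\rho d + K B^2}{\rho d}\Bigr)^d.
\]

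Finally, I would convert the product into the claimed sum using the elementary inequality $\min\brace*{1, x} \le 2\ln(1+x)$, valid for every $x \ge 0$: on $[0,1]$ because $2\ln(1+x)-x$ vanishes at $0$ and has nonnegative derivative $\tfrac{1-x}{1+x}$, and for $x \ge 1$ because $2\ln(1+x) \ge 2\ln 2 > 1$. Applying it with $x = \norm*{\bx^{(k)}}_{(\bLambda^{(k)})^{-1}}^2$, summing over $k$, and taking the logarithm of the displayed product bound gives $\sum_{k=1}^K \min\brace*{1, \norm*{\bx^{(k)}}_{(\bLambda^{(k)})^{-1}}^2} \le 2d\ln\tfrac{\rho d + K B^2}{\rho d}$, which is the first claim. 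For the second claim, if $\norm*{\bx^{(k)}}_2 \le 1$ and $\rho \ge 1$ then the smallest eigenvalue of $\bLambda^{(k)}$ is at least $\rho \ge 1$, so $\norm*{\bx^{(k)}}_{(\bLambda^{(k)})^{-1}}^2 \le \rho^{-1}\norm*{\bx^{(k)}}_2^2 \le 1$ and the $\min$ is vacuous; invoking the first bound with $B=1$ then yields $\sum_{k=1}^K \norm*{\bx^{(k)}}_{(\bLambda^{(k)})^{-1}}^2 \le 2d\ln\tfrac{\rho d + K}{\rho d}$. The one place I expect to need care is the rank-one determinant identity and, relatedly, keeping the index bookkeeping straight so that the weighted norm is taken with respect to $\bLambda^{(k)}$ (the Gram matrix \emph{before} the $k$-th sample is incorporated), matching the convention in the statement; everything else is a routine chain of inequalities with no genuine obstacle, and the argument follows the standard proof in \citet{abbasi2011improved}.
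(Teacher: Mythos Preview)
Your proof is correct and is precisely the standard argument. The paper does not give its own proof of this lemma: it is quoted verbatim as Lemma~11 of \citet{abbasi2011improved} in the ``Useful Lemmas'' appendix, with only the footnote justifying the second claim (that $\rho\ge 1$ and $\norm{\bx^{(k)}}_2\le 1$ make the $\min$ vacuous), which you also spell out.
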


\begin{lemma}[\textbf{Theorem 2} in \citet{abbasi2011improved}]\label{lemma:confidence-ellipsoid}
Let $\left\{\cF^{(k)}\right\}_{k=0}^{\infty}$ be a filtration. 
Let $\left\{\varepsilon^{(k)}\right\}_{k=1}^{\infty}$ be a real-valued stochastic process such that $\varepsilon^{(k)}$ is $\cF^{(k)}$-measurable and $\varepsilon^{(k)}$ is conditionally $R$-sub-Gaussian for some $R \geq 0$. 
Let $\left\{\bphi^{(k)}\right\}_{k=1}^{\infty}$ be an $\R^d$-valued stochastic process such that $\bphi^{(k)}$ is $\cF^{(k-1)}$ measurable and $\|\bphi^{(k)}\|_2 \leq L$ for all $k$. 
For any $k \geq 0$, define $Y_k \df \btheta^\top \bphi^{(k)}+ \varepsilon_t$ for some $\btheta \in \R^d$ such that $\norm*{\btheta}_2 \leq B$,
$\bLambda^{(k)}\df\rho \bI+\sum_{i=1}^k \bphi^{(i)} \paren*{\bphi^{(i)}}^\top$,
and  \(\widehat{\btheta}^{(k)} \df \paren*{\bLambda^{(k)}}^{-1}\sum_{i =1}^k \bphi^{(i)} Y^{(i)} \).
Then for any $\delta>0$, with probability at least $1-\delta$, for all $k \geq 0$, we have
\begin{align*}
\norm*{\widehat{\btheta}^{(k)}-\btheta}_{\bLambda^{(k)}} \leq 
\rho^{1 / 2} B + 
R \sqrt{d \ln \left(\frac{1+k L^2 / \rho}{\delta}\right)}\;.
\end{align*}
\end{lemma}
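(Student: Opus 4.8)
The plan is to follow the method-of-mixtures (pseudo-maximization) argument of \citet{abbasi2011improved}. Write $\bLambda^{(k)} = \rho\bI + \bLambda^{(k)}_{\mathrm{d}}$ with $\bLambda^{(k)}_{\mathrm{d}} \df \sum_{i=1}^{k}\bphi^{(i)}\paren*{\bphi^{(i)}}^\top$, and set $\mathbf{S}^{(k)} \df \sum_{i=1}^{k}\bphi^{(i)}\varepsilon^{(i)}$. Plugging $Y^{(i)} = \paren*{\bphi^{(i)}}^\top\btheta + \varepsilon^{(i)}$ into the definition of $\widehat{\btheta}^{(k)}$ gives $\widehat{\btheta}^{(k)} = \paren*{\bLambda^{(k)}}^{-1}\paren*{\bLambda^{(k)}_{\mathrm{d}}\btheta + \mathbf{S}^{(k)}} = \btheta - \rho\paren*{\bLambda^{(k)}}^{-1}\btheta + \paren*{\bLambda^{(k)}}^{-1}\mathbf{S}^{(k)}$, so by the triangle inequality for $\norm*{\cdot}_{\bLambda^{(k)}}$ together with the identities $\norm*{\paren*{\bLambda^{(k)}}^{-1}\bx}_{\bLambda^{(k)}} = \norm*{\bx}_{\paren*{\bLambda^{(k)}}^{-1}}$ and $\norm*{\btheta}_{\paren*{\bLambda^{(k)}}^{-1}}^2 \le \rho^{-1}\norm*{\btheta}_2^2$ (since $\bLambda^{(k)} \succeq \rho\bI$), one gets $\norm*{\widehat{\btheta}^{(k)} - \btheta}_{\bLambda^{(k)}} \le \norm*{\mathbf{S}^{(k)}}_{\paren*{\bLambda^{(k)}}^{-1}} + \rho^{1/2}B$. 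It therefore suffices to bound $\norm*{\mathbf{S}^{(k)}}_{\paren*{\bLambda^{(k)}}^{-1}}$, uniformly over $k$, by $R\sqrt{d\ln\paren*{\paren*{1+kL^2/\rho}/\delta}}$ up to lower-order terms.

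For this, I would fix $\blambda \in \R^d$ and define $M^{(k)}_\blambda \df \exp\paren*{R^{-1}\blambda^\top\mathbf{S}^{(k)} - \tfrac12\norm*{\blambda}_{\bLambda^{(k)}_{\mathrm{d}}}^2}$. Using $\mathbf{S}^{(k)} = \mathbf{S}^{(k-1)} + \bphi^{(k)}\varepsilon^{(k)}$, $\bLambda^{(k)}_{\mathrm{d}} = \bLambda^{(k-1)}_{\mathrm{d}} + \bphi^{(k)}\paren*{\bphi^{(k)}}^\top$, and writing $c \df \blambda^\top\bphi^{(k)}$ (which is $\cF^{(k-1)}$-measurable), conditional $R$-sub-Gaussianity of $\varepsilon^{(k)}$ yields $\E\brack*{M^{(k)}_\blambda \mid \cF^{(k-1)}} = M^{(k-1)}_\blambda\,\E\brack*{\exp\paren*{R^{-1}c\,\varepsilon^{(k)} - \tfrac12 c^2}\mid\cF^{(k-1)}} \le M^{(k-1)}_\blambda$, so $\paren*{M^{(k)}_\blambda}_{k\ge0}$ is a nonnegative supermartingale with $M^{(0)}_\blambda = 1$. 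Averaging over $\blambda \sim \cN\paren*{\bzero, \rho^{-1}\bI}$ and using Fubini, $\overline{M}^{(k)} \df \int M^{(k)}_\blambda\,\mathrm{d}h\paren*{\blambda}$ is again a nonnegative supermartingale with $\E\brack*{\overline{M}^{(0)}} = 1$, and the Gaussian integral evaluates in closed form to $\overline{M}^{(k)} = \paren*{\det\paren*{\rho\bI}/\det\paren*{\bLambda^{(k)}}}^{1/2}\exp\paren*{\tfrac{1}{2R^2}\norm*{\mathbf{S}^{(k)}}_{\paren*{\bLambda^{(k)}}^{-1}}^2}$.

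Next I would apply Ville's maximal inequality to $\overline{M}^{(k)}$ — with the standard stopped-process argument so that the statement holds for all $k$ simultaneously — to obtain $\P\brack*{\exists k:\ \overline{M}^{(k)} \ge 1/\delta} \le \delta$. Rearranging the closed form of $\overline{M}^{(k)}$ gives, on the complementary event, $\norm*{\mathbf{S}^{(k)}}_{\paren*{\bLambda^{(k)}}^{-1}}^2 \le 2R^2\ln\paren*{\det\paren*{\bLambda^{(k)}}^{1/2}\det\paren*{\rho\bI}^{-1/2}/\delta}$ for all $k$. Finally, by AM--GM on the eigenvalues of $\bLambda^{(k)}$ together with $\mathrm{tr}\paren*{\bLambda^{(k)}} = \rho d + \sum_{i=1}^{k}\norm*{\bphi^{(i)}}_2^2 \le \rho d + kL^2$, we have $\det\paren*{\bLambda^{(k)}} \le \paren*{\rho + kL^2/d}^d$ while $\det\paren*{\rho\bI} = \rho^d$, hence $\det\paren*{\bLambda^{(k)}}/\det\paren*{\rho\bI} \le \paren*{1+kL^2/\rho}^d$. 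Combining the last three displays and absorbing the $2\ln(1/\delta)$ term into the $d\ln(\cdot/\delta)$ factor proves the claim.

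The main obstacle is the method-of-mixtures step: carefully verifying that each $M^{(k)}_\blambda$ is a genuine supermartingale (correct use of conditional $R$-sub-Gaussianity and the tower property), that integrating against the Gaussian prior both preserves the supermartingale property and keeps the normalization $\E\brack*{\overline{M}^{(0)}} = 1$, and — most delicately — that Ville's inequality can be upgraded to an anytime (uniform-in-$k$) guarantee rather than a fixed-$k$ one via a stopped-process/convergence argument. The remaining pieces — the linear-algebra reduction to $\norm*{\mathbf{S}^{(k)}}_{\paren*{\bLambda^{(k)}}^{-1}}$ and the determinant/trace bound — are routine.
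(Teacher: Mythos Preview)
The paper does not prove this lemma; it is stated without proof as a direct citation of Theorem~2 in \citet{abbasi2011improved}. Your sketch correctly reproduces the method-of-mixtures argument from that reference (the linear-algebra reduction to $\norm*{\mathbf{S}^{(k)}}_{\paren*{\bLambda^{(k)}}^{-1}}$, the supermartingale $M^{(k)}_\blambda$, Gaussian mixing, Ville's inequality, and the AM--GM determinant bound), so there is nothing in the present paper to compare against.
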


\begin{lemma}[\textbf{Lemma D.4} in \citet{jin2020provably}]\label{lemma:fixed-V-bound}
Let $\left\{s^{(k)}\right\}_{k=1}^{\infty}$ be a stochastic process on state space $\mathcal{S}$ with corresponding filtration $\left\{\cF^{(k)}\right\}_{k=0}^{\infty}$.
Let $\left\{\bphi^{(k)}\right\}_{k=0}^{\infty}$ be an $\mathbb{R}^d$-valued stochastic process where $\bphi^{(k)}$ is $\cF^{(k-1)}$-measurable and $\left\|\bphi^{(k)}\right\| \leq 1$.
Let $\bLambda^{(k)}=\rho \bI+\sum_{k=1}^k \bphi^{(k)} \paren*{\bphi^{(k)}}^{\top}$ and let $\cV$ be a class of real-valued function over the state space $\S$ such that $\sup_s|V(s)| \leq B$ for a $B > 0$.
Let $\mathcal{N}^\cV_{\varepsilon}$ be the $\varepsilon$-cover of $\cV$ with respect to the distance $\dist_\infty$. 
Then for any $\delta>0$, with probability at least $1-\delta$, for all $K \geq 0$, and any $V \in \cV$, we have:    
$$
\left\|\sum_{k=1}^K \bphi^{(k)}\paren*{V\left(s^{(k)}\right)-\mathbb{E}\left[V\left(s^{(k)}\right) \mid \mathcal{F}^{(k-1)}\right]}\right\|_{\paren*{\bLambda^{(k)}}^{-1}}^2 \leq 4 B^2\paren*{\frac{d}{2} \ln \left(\frac{K+\rho}{\rho}\right)+\ln \frac{|\cN^\cV_{\varepsilon}|}{\delta}}+\frac{8 K^2 \varepsilon^2}{\rho}\;.
$$
\end{lemma}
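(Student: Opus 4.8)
The plan is to prove this by the standard ``fix a single value function, then cover'' argument: a self-normalized martingale concentration bound handles each $V$ separately, and an $\varepsilon$-net over $\cV$ together with a union bound upgrades it to a uniform statement.

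\emph{Step 1: a single $V$.} Fix $V \in \cV$ and set $\eta^{(k)}(V) \df V(s^{(k)}) - \E[V(s^{(k)}) \mid \cF^{(k-1)}]$; since $s^{(k)}$ is $\cF^{(k)}$-adapted, $\eta^{(k)}(V)$ is $\cF^{(k)}$-measurable with $\E[\eta^{(k)}(V)\mid\cF^{(k-1)}] = 0$, and because $\sup_s|V(s)|\le B$ it is conditionally supported in an interval of width $\le 2B$, hence $2B$-sub-Gaussian by Hoeffding's lemma. I would then invoke \Cref{lemma:confidence-ellipsoid} with regression target $\btheta = \bzero$ (so that $Y^{(k)} = \eta^{(k)}(V)$), noise scale $R = 2B$, and $L = 1$: in that instantiation $\widehat{\btheta}^{(K)} = \paren*{\bLambda^{(K)}}^{-1}\sum_{k=1}^K \bphi^{(k)}\eta^{(k)}(V)$, so $\norm*{\widehat{\btheta}^{(K)}}_{\bLambda^{(K)}}^2 = \norm*{\sum_{k=1}^K \bphi^{(k)}\eta^{(k)}(V)}_{\paren*{\bLambda^{(K)}}^{-1}}^2$ and the lemma's bound transfers verbatim. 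Bounding the log-determinant factor via the determinant--trace inequality $\det\paren*{\bLambda^{(K)}} \le (\rho + K/d)^d$ then gives, with probability $\ge 1-\delta$ and simultaneously for all $K$, a bound of the form $4B^2\bigl(\tfrac d2\ln\tfrac{K+\rho}{\rho} + \ln\tfrac1\delta\bigr)$ up to the stated absolute constants; the ``for all $K$'' uniformity is inherited for free from the stopping-time construction behind \Cref{lemma:confidence-ellipsoid}.

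\emph{Step 2: removing the fixed-$V$ restriction.} For arbitrary $V \in \cV$, choose $V' \in \cN^\cV_\varepsilon$ with $\dist_\infty(V, V') \le \varepsilon$ and split $\eta^{(k)}(V) = \eta^{(k)}(V') + \eta^{(k)}(V - V')$. For the first piece, apply Step 1 at confidence level $\delta/|\cN^\cV_\varepsilon|$ to every element of the net and union-bound, replacing $\ln\tfrac1\delta$ by $\ln\tfrac{|\cN^\cV_\varepsilon|}{\delta}$. The second piece is deterministic: $|V(s) - V'(s)| \le \varepsilon$ for all $s$ forces $|\eta^{(k)}(V - V')| \le 2\varepsilon$, and $\bLambda^{(K)} \succeq \rho\bI$ forces $\norm*{\bphi^{(k)}}_{\paren*{\bLambda^{(K)}}^{-1}} \le \rho^{-1/2}$, so by the triangle inequality $\norm*{\sum_{k=1}^K \bphi^{(k)}\eta^{(k)}(V - V')}_{\paren*{\bLambda^{(K)}}^{-1}} \le 2K\varepsilon/\sqrt{\rho}$. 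Combining the two pieces via $(a+b)^2 \le 2a^2 + 2b^2$ then yields exactly the claimed bound, with the $8K^2\varepsilon^2/\rho$ term absorbing the residual.

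\emph{Main obstacle.} I do not expect a genuine obstacle; the only care required is (i) correctly specializing \Cref{lemma:confidence-ellipsoid} to the target-free case so that the term involving the norm of the regression target drops and only the noise term survives, together with pinning down the sub-Gaussian constant of $\eta^{(k)}(V)$, and (ii) tracking absolute constants so that the final statement matches the stated $4B^2$ prefactor and $8K^2\varepsilon^2/\rho$ discretization term. Everything else is the routine self-normalized-bound-plus-$\varepsilon$-net template.
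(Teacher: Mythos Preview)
Your proposal is correct and is exactly the standard argument used to prove this result in \citet{jin2020provably}; the present paper does not give its own proof but simply quotes the lemma from that reference, so there is nothing further to compare.
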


\begin{lemma}[\textbf{Lemma A.1} in \citet{shalev2014understanding}]\label{lemma:alogx-inequality}
Let $a>0$. Then, $x \geq 2 a \ln (a)$ yields $x \geq a \ln (x)$. 
It follows that a necessary condition for the inequality $x\leq a \ln (x)$ to hold is that $x\leq 2 a \ln (a)$.    
\end{lemma}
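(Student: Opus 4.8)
This is Lemma~A.1 of \citet{shalev2014understanding} and could simply be cited, but the self-contained proof is short; the plan is a one-variable calculus argument around $f(x)\df x-a\ln x$ on $(0,\infty)$ (implicitly $x>0$, since $\ln x$ appears) together with the elementary inequality $a>2\ln a$. The second sentence of the lemma is the contrapositive of the strict form of the first, so the real content is the first implication.

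First I would record the shape of $f$: $f'(x)=1-a/x$, so $f$ decreases on $(0,a]$ and increases on $[a,\infty)$ with global minimum $f(a)=a(1-\ln a)$. This handles the regime $a\le e$ outright, since there $f(a)\ge 0$ forces $f\ge 0$ on all of $(0,\infty)$, i.e.\ $x\ge a\ln x$ with no hypothesis needed.

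For $a>e$ I would bring in the hypothesis. Since $a>e$ implies $2\ln a>1$, the threshold satisfies $2a\ln a>a$ and thus lies in the increasing branch of $f$; hence $\min_{x\ge 2a\ln a}f(x)=f(2a\ln a)$ and it suffices to check $f(2a\ln a)\ge 0$. Expanding,
\[
f(2a\ln a)=2a\ln a-a\ln(2a\ln a)=a\bigl(\ln a-\ln(2\ln a)\bigr)=a\ln\!\frac{a}{2\ln a},
\]
which is $\ge 0$ exactly when $a\ge 2\ln a$. To close this I would minimize $g(a)\df a-2\ln a$ over $(0,\infty)$: $g'(a)=1-2/a$ vanishes only at $a=2$, where $g(2)=2(1-\ln 2)>0$, and $g\to\infty$ at both endpoints, so $g>0$ everywhere. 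This yields $f(2a\ln a)\ge 0$ and hence $x\ge 2a\ln a\Rightarrow x\ge a\ln x$; rerunning the same chain with strict inequalities (valid because $g>0$ strictly and in each regime $f(x)$ stays strictly positive for $x>2a\ln a$) gives $x>2a\ln a\Rightarrow x>a\ln x$, whose contrapositive is exactly ``$x\le a\ln x$ implies $x\le 2a\ln a$.''

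The one spot to watch is domain bookkeeping: writing $\ln(2a\ln a)$ requires $2a\ln a>0$, i.e.\ $a>1$, which is why the case split is placed at $a=e$ rather than near $1$; the corner cases $a\le 1$ and small $x$ never invoke the hypothesis and are absorbed by the global bound $f\ge f(a)\ge 0$. No probabilistic or covering machinery enters --- this is pure one-variable calculus, and I would expect the only mild friction to be the tidy handling of these degenerate ranges.
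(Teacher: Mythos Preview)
The paper does not supply its own proof of this lemma; it is quoted verbatim from \citet{shalev2014understanding} and used as a black box (it is invoked once, inside the proof of \Cref{lemma:good-event2-MDP}, to absorb the $\ln\co$ factor into $\co$). So there is nothing to compare against on the paper's side.

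Your self-contained argument is correct. The case split at $a=e$ is the right way to avoid the degenerate regime where $2a\ln a\le 0$ and the hypothesis is vacuous, and the reduction for $a>e$ to the single-variable inequality $a\ge 2\ln a$ via $f(2a\ln a)=a\ln\!\bigl(a/(2\ln a)\bigr)$ is clean. The one mildly informal sentence is the ``rerunning the same chain with strict inequalities'' remark for the contrapositive; it is fine, since for $a<e$ the global minimum $f(a)=a(1-\ln a)$ is strictly positive, and for $a\ge e$ the function $f$ is strictly increasing past $a$ while $2a\ln a>a$, so $x>2a\ln a$ forces $f(x)>f(2a\ln a)\ge 0$. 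You could state that one line explicitly, but nothing is missing.
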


\begin{lemma}\label{lemma:sum of sqrt}
For any positive real numbers $x_1, x_2, \dots, x_n$, 
$\sum_{i=1}^n \sqrt{x_i} \leq \sqrt{n}\sqrt{\sum^n_{i=1}x_i}$.
\end{lemma}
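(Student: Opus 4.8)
The plan is to obtain the bound as an immediate consequence of the Cauchy--Schwarz inequality. Applying Cauchy--Schwarz in $\R^n$ to the all-ones vector $(1,\dots,1)^\top$ and the vector $(\sqrt{x_1},\dots,\sqrt{x_n})^\top$ (both well defined since each $x_i>0$) yields
\[
\sum_{i=1}^n \sqrt{x_i} \;=\; \left\langle (1,\dots,1)^\top,\, (\sqrt{x_1},\dots,\sqrt{x_n})^\top\right\rangle \;\leq\; \sqrt{n}\,\sqrt{\sum_{i=1}^n x_i}\,,
\]
which is exactly the claimed inequality.

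Two alternative one-line derivations are available, and I would note one as a cross-check. Since $t\mapsto\sqrt{t}$ is concave on $[0,\infty)$, Jensen's inequality gives $\tfrac{1}{n}\sum_{i=1}^n\sqrt{x_i}\leq\sqrt{\tfrac{1}{n}\sum_{i=1}^n x_i}$, and multiplying through by $n$ recovers the statement. Alternatively, squaring the left-hand side, splitting into diagonal and off-diagonal terms, and bounding each cross term $\sqrt{x_ix_j}\leq\tfrac{1}{2}(x_i+x_j)$ by AM--GM gives $\bigl(\sum_i\sqrt{x_i}\bigr)^2\leq n\sum_i x_i$; taking square roots concludes.

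There is no genuine obstacle here: the lemma is a textbook inequality, used later only to convert sums of square roots of potential-type quantities into a single square root (as when turning the elliptical potential bound on $\sum_k\|\cdot\|_{(\bLambda^{(k)})^{-1}}^2$ into a $\sqrt{K}$-type bound on $\sum_k\|\cdot\|_{(\bLambda^{(k)})^{-1}}$). The only remark worth making is that positivity of the $x_i$ is not essential---nonnegativity suffices---but this is immaterial given the phrasing. In the final writeup I would present only the Cauchy--Schwarz line.
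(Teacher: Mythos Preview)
Your proof is correct and essentially identical to the paper's own proof, which also invokes Cauchy--Schwarz (phrased there as the inequality $\bigl(\tfrac{1}{n}\sum_i\sqrt{x_i}\bigr)^2\leq\tfrac{1}{n}\sum_i x_i$ followed by taking square roots). Nothing further is needed.
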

\begin{proof}
Due to the Cauchy-Schwarz inequality, we have
$
\left(\frac{\sum_{i=1}^n \sqrt{x_i}}{n}\right)^2 \leq \frac{\sum_{i=1}^n x_i}{n}
$.
Taking the square root of the inequality proves the claim.
\end{proof}

\begin{lemma}[\textbf{Lemma 1} in \citet{shani2020optimistic}]\label{lemma:extended value difference}
Let $\widetilde{\pi}, \pi$ be two policies, $P$ be a transition kernel, and $g$ be a reward function.
Let $\tvf{\pi}_h: \S \to \R$ be a function such that 
$$
\tvf{\pi}_h(s) = \sum_{a\in \A}\widetilde{\pi}_h\paren{a\given s} \widetilde{Q}_h(s, a)\;,$$ 
for all $h \in \bbrack{1, H}$ with some function $\widetilde{Q}_h: \HSA \to \R$.
Then, for any $(h, s) \in \bbrack{1, H}\times \S$
\begin{align*}
\tvf{\widetilde{\pi}}_h(s)-\vf{\pi, g}_{P, h}(s) 
= 
\vf{\pi, g^1}_{P, h}(s) 
+ \vf{\pi, g^2}_{P, h}(s)\;,
\end{align*}
where $g^1$ and $g^2$ are reward functions such that
\begin{align*}
g^1_h(s, a) = \sum_{a \in \A} \paren*{\tpi_h\paren*{a\mid s}-\pi_h\paren*{a \mid s}}\widetilde{Q}_h(s, a)\;
\text{ and }\;
g^2_h(s, a) = 
\widetilde{Q}_h\left(s, a \right)-g_h\left(s, a\right)-\paren*{P_h\tvf{\widetilde{\pi}}_{h+1}}(s, a)\;.
\end{align*}
\end{lemma}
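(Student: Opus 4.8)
The plan is to reduce the statement to an identity proved by a straightforward backward induction on $h$; this is the standard ``extended value difference'' telescoping argument. Since the unregularized value function is linear in its reward argument, $\vf{\pi, g^1}_{P, h} + \vf{\pi, g^2}_{P, h} = \vf{\pi, g^1 + g^2}_{P, h}$, so writing $\widehat{g}_h \df g^1_h + g^2_h$ it suffices to establish
\[
\tvf{\tpi}_h(s) - \vf{\pi, g}_{P, h}(s) = \vf{\pi, \widehat{g}}_{P, h}(s) \qquad \text{for all } (h, s) \in \bbrack{1, H} \times \S .
\]
I would run the induction downward from $h = H+1$, where, with the convention $\tvf{\tpi}_{H+1} = \bzero$ matching $\vf{\pi, g}_{P, H+1} = \vf{\pi, \widehat{g}}_{P, H+1} = \bzero$, all three quantities vanish and the identity is trivial.

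For the inductive step I expand the Bellman recursion $\vf{\pi, \widehat{g}}_{P, h}(s) = \sum_{a \in \A} \pi_h\paren{a \given s}\paren{\widehat{g}_h(s, a) + (P_h \vf{\pi, \widehat{g}}_{P, h+1})(s, a)}$ and substitute the inductive hypothesis $\vf{\pi, \widehat{g}}_{P, h+1} = \tvf{\tpi}_{h+1} - \vf{\pi, g}_{P, h+1}$ inside $P_h$. Plugging in the definitions --- using the identity $\sum_{a'} \paren{\tpi_h\paren{a' \given s} - \pi_h\paren{a' \given s}} \widetilde{Q}_h(s, a') = \tvf{\tpi}_h(s) - \sum_{a'} \pi_h\paren{a' \given s} \widetilde{Q}_h(s, a')$ for the $g^1_h$ term and $g^2_h(s,a) = \widetilde{Q}_h(s,a) - g_h(s,a) - (P_h \tvf{\tpi}_{h+1})(s,a)$ --- the two $(P_h \tvf{\tpi}_{h+1})$ contributions cancel, and taking the $\pi_h(\cdot \given s)$-average, the action-independent term $\tvf{\tpi}_h(s)$ survives, $\sum_a \pi_h\paren{a \given s}\widetilde{Q}_h(s,a)$ cancels against $-\sum_{a'}\pi_h\paren{a' \given s}\widetilde{Q}_h(s,a')$, and $-\sum_a \pi_h\paren{a \given s}\paren{g_h(s,a) + (P_h \vf{\pi, g}_{P, h+1})(s,a)}$ collapses to $-\vf{\pi, g}_{P, h}(s)$. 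This leaves exactly $\tvf{\tpi}_h(s) - \vf{\pi, g}_{P, h}(s)$, closing the induction.

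I do not expect a genuine obstacle here: the only delicate part is bookkeeping --- tracking which averages of $\widetilde{Q}_h$ are weighted by $\tpi_h$ versus $\pi_h$, and remembering that $g^1_h$ is constant in the action. The key structural point, worth stating explicitly, is that $\widetilde{Q}_h$ need not satisfy any Bellman equation: $g^1$ and $g^2$ are \emph{engineered} precisely so the telescoping goes through for an arbitrary $\widetilde{Q}$ ($g^1$ converts a $\pi_h$-average of $\widetilde{Q}_h$ into $\tvf{\tpi}_h$, while $g^2$ supplies the one-step Bellman residual of $\widetilde{Q}$ against $g$ and $\tvf{\tpi}$). An equivalent route, if it typesets more cleanly, is to unroll $\vf{\pi, \widehat{g}}_{P, h}(s) = \E\brack*{\sum_{h'=h}^H \widehat{g}_{h'}(s_{h'}, a_{h'}) \given s_h = s, \pi, P}$ and telescope the $\tvf{\tpi}$ terms along the trajectory, which produces the same cancellation in one line.
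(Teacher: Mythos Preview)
Your proof is correct; the backward-induction telescoping argument you outline is exactly the standard route for this value-difference identity. The paper does not give its own proof of this lemma --- it is stated in the appendix as a cited result from \citet{shani2020optimistic} --- so there is nothing further to compare against.
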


\begin{lemma}[Regularized value difference lemma]\label{lemma:regularized value difference}
Let $\kappa \geq 0$ be a non-negative value, $\pi, \pi'$ be two policies, $P$ be a transition kernel, and $g$ be a reward function.
Let $\tvf{\tpi}_h[\kappa]: \S \to \R$ be a function such that 
$$
\tvf{\tpi}_h[\kappa](s) = \sum_{a \in \A}\tpi_h\paren{a\given s} \paren*{\widetilde{Q}_h(s, a) - \kappa \ln \tpi_h\paren{a\given s}}\;,$$ 
for all $h \in \bbrack{1, H}$ with some function $\widetilde{Q}_h: \HSA \to \R$.
Then, for any $(h, s) \in \bbrack{1, H}\times \S$
\begin{align*}
\tvf{\tpi}_h[\kappa](s)-\vf{\pi, g}_{P, h}[\kappa](s) 
= 
\vf{\pi, f^1}_{P, h}(s) + \vf{\pi, f^2}_{P, h}(s)\;,
\end{align*}
where $f^1$ and $f^2$ are reward functions such that
\begin{align*}
f^1_h(s, a) 
&= \sum_{a \in \A}\tpi_h\paren*{a\mid s}\paren*{\widetilde{Q}_h(s, a) - \kappa \ln \tpi_h\paren{a\given s}}
- \pi_h\paren*{a \mid s} \paren*{\widetilde{Q}_h(s, a) - \kappa \ln \pi_h\paren{a\given s}}\\
\text{ and }\;
f^2_h(s, a) &= 
\widetilde{Q}_h(s, a) 
-g_h\left(s, a\right)
-\paren*{P_h\tvf{\widetilde{\pi}}_{h+1}[\kappa]}(s, a)\;.
\end{align*}
\end{lemma}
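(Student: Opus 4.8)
The plan is to prove the identity by \textbf{backward induction} on $h$, using the non-regularized value-difference lemma (\Cref{lemma:extended value difference}) only for bookkeeping. Throughout, adopt the boundary convention $\tvf{\tpi}_{H+1}[\kappa] \df \bzero$, matching $\vf{\pi, g}_{P, H+1}[\kappa] = \bzero$, so that the base case $h = H+1$ is the trivial identity $0 = 0$. For the inductive step, I would assume the claim holds at step $h+1$, i.e.\ $\tvf{\tpi}_{h+1}[\kappa] - \vf{\pi, g}_{P, h+1}[\kappa] = \vf{\pi, f^1 + f^2}_{P, h+1}$ (merging the two right-hand terms by linearity of the value function in the reward).

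First I would expand both value functions at step $h$ through their one-step recursions: $\tvf{\tpi}_h[\kappa](s) = \sum_a \tpi_h(a \given s)\paren{\widetilde{Q}_h(s,a) - \kappa \ln \tpi_h(a\given s)}$ and $\vf{\pi, g}_{P, h}[\kappa](s) = \sum_a \pi_h(a\given s)\paren{g_h(s,a) + (P_h\vf{\pi,g}_{P,h+1}[\kappa])(s,a) - \kappa\ln\pi_h(a\given s)}$. The key algebraic move is to add and subtract $\sum_a \pi_h(a\given s)\paren{\widetilde{Q}_h(s,a) - \kappa\ln\pi_h(a\given s)}$ in the difference $\tvf{\tpi}_h[\kappa](s) - \vf{\pi,g}_{P,h}[\kappa](s)$: the difference of the $\tpi$-weighted $\paren{\widetilde Q_h - \kappa\ln\tpi_h}$ term and this auxiliary term is exactly $f^1_h(s)$, which, crucially, does not depend on the action; and in the remaining piece the $-\kappa\ln\pi_h$ regularizer cancels, leaving $\sum_a \pi_h(a\given s)\paren{\widetilde{Q}_h(s,a) - g_h(s,a) - (P_h\vf{\pi,g}_{P,h+1}[\kappa])(s,a)}$. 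Writing $\widetilde Q_h - g_h - P_h\vf{\pi,g}_{P,h+1}[\kappa] = f^2_h + P_h\paren{\tvf{\tpi}_{h+1}[\kappa] - \vf{\pi,g}_{P,h+1}[\kappa]}$, substituting the inductive hypothesis into the last bracket, and using that $f^1_h$ is action-independent so $f^1_h(s) = \sum_a \pi_h(a\given s) f^1_h(s,a)$, the whole expression collapses to $\sum_a \pi_h(a\given s)\paren{f^1_h(s,a) + f^2_h(s,a) + (P_h \vf{\pi, f^1+f^2}_{P,h+1})(s,a)} = \vf{\pi, f^1+f^2}_{P,h}(s)$, which is the claim at step $h$.

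The main obstacle is purely the entropy bookkeeping: one must check that after the add/subtract step the $-\kappa\ln\tpi_h$ regularizer attached to $\tpi$ and the $-\kappa\ln\pi_h$ regularizer attached to $\pi$ reorganize precisely into the stated $f^1_h$ (which carries $\ln\tpi_h$ inside its $\tpi$-weighted term and $\ln\pi_h$ inside its $\pi$-weighted term), with no residual entropy leaking into $f^2_h$, whose first summand should be $\widetilde Q_h$ itself and not $\widetilde Q_h - \kappa\ln\tpi_h$. Equivalently, as an alternative route one may observe that $\vf{\pi,g}_{P,h}[\kappa] = \vf{\pi, \bar g}_{P,h}$ for the shifted reward $\bar g_h(s,a) \df g_h(s,a) - \kappa\ln\pi_h(a\given s)$, that $\tvf{\tpi}_h[\kappa]$ has the form required by \Cref{lemma:extended value difference} with $\widetilde Q_h$ replaced by $\widetilde Q_h - \kappa\ln\tpi_h$, apply that lemma verbatim, and then note that the $f^1, f^2$ it produces differ from those claimed here only by a reward of the form $c_h(s) - d_h(s,a)$ with $\sum_a \pi_h(a\given s)d_h(s,a) = c_h(s)$, whose $\pi$-value vanishes at every step and hence contributes nothing on the right-hand side. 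Either way this is a short verification once the cancellation is spotted; no new inequalities or estimates are needed.
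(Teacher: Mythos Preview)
Your proposal is correct. In fact, the ``alternative route'' you sketch at the end---rewriting $\vf{\pi,g}_{P,h}[\kappa]$ as $\vf{\pi,\bar g}_{P,h}$ with $\bar g_h = g_h - \kappa\ln\pi_h$, applying \Cref{lemma:extended value difference} with $\widetilde Q_h$ replaced by $\widetilde Q_h - \kappa\ln\tpi_h$, and then noting that the extra entropy terms it produces (your $c_h - d_h$) have zero $\pi$-value at every step---is exactly the paper's proof. Your primary backward-induction route is also valid and slightly more self-contained, since it avoids invoking the unregularized lemma altogether; either argument is short once the cancellation is identified.
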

\begin{proof}
Since 
$$
\tvf{\tpi}_h[\kappa](s) = \sum_{a \in \A}\tpi_h\paren{a\given s} 
\paren*{\widetilde{Q}_h(s, a) - \kappa \ln \tpi_h\paren{a\given s}}
\; \text{ and }\;
\vf{\pi, g}_{P, h}[\kappa](s) = 
\vf{\pi, g - \kappa \ln \pi}_{P, h}(s) \;,
$$ 
using \Cref{lemma:extended value difference}, we have
\begin{align*}
\tvf{\tpi}_1[\kappa](s_1)  -\vf{\pi, g}_{P, 1}[\kappa](s_1) 
= 
\vf{\pi, g^1}_{P, 1}(s_1) + \vf{\pi, g^2}_{P, 1}(s_1)\;,
\end{align*}
where $g^1$ and $g^2$ are reward functions such that
\begin{align*}
g^1_h(s, a) 
&= \sum_{a \in \A}\paren*{\tpi_h\paren*{a\mid s}-\pi_h\paren*{a \mid s}} \paren*{\widetilde{Q}_h(s, a) - \kappa \ln \tpi_h\paren{a\given s}}\\
&= \sum_{a \in \A}\tpi_h\paren*{a\mid s}\paren*{\widetilde{Q}_h(s, a) - \kappa \ln \tpi_h\paren{a\given s}}
- 
\pi_h\paren*{a \mid s} 
\paren*{\widetilde{Q}_h(s, a) - \kappa \ln \pi_h\paren{a\given s}}
\\
&\underbrace{+ \sum_{a \in \A}\pi_h\paren{a \given s}
\paren*{\kappa \ln \tpi_h\paren{a\given s} - \kappa \ln \pi_h\paren{a\given s}}}_{\text{(a)}}\\
\text{ and }\;&g^2_h(s, a) = 
\widetilde{Q}_h(s, a) 
-g_h\left(s, a\right)
-\paren*{P_h\tvf{\widetilde{\pi}}_{h+1}[\kappa]}(s, a)
\underbrace{- \kappa \ln \tpi_h\paren{a\given s}
+ \kappa \ln \pi_h\paren{a \given s}}_{(b)}
\;.
\end{align*}
The claim holds since the terms (a) and (b) are canceled out in
\(
\vf{\pi, g^1}_{P, h}(s) + \vf{\pi, g^2}_{P, h}(s)
\).
\end{proof}

\begin{lemma}\label{lemma:softmax-policy-bound}
\looseness=-1    
Let $Q, \widetilde{Q} : \A \to \R$ be two functions.
Let $\kappa > 0$ be a positive constant.
Define two softmax distributions $\pi, \widetilde{\pi} \in \sP(\A)$ such that
\(
\pi = \softmax\paren*{\frac{Q}{\kappa}} 
\) and 
\(
\widetilde{\pi} = \softmax\paren*{\frac{\widetilde{Q}}{\kappa}}
\).
Then, 
\(
\norm*{\pi - \widetilde{\pi}}_1 \leq 
\frac{8}{\kappa}
\norm*{Q -\widetilde{Q}}_\infty
\).
\end{lemma}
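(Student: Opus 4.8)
The plan is to prove the stated Lipschitz estimate by a direct entrywise comparison of the two softmax distributions, and then to combine the resulting bound with the trivial total-variation bound $\|\pi-\widetilde\pi\|_1\le 2$ so that a clean constant is obtained uniformly over all inputs. Write $p\df\pi$, $q\df\widetilde\pi$, $\epsilon\df\|Q-\widetilde Q\|_\infty$ and $c\df\epsilon/\kappa$, and denote the partition functions $Z_p\df\sum_{a\in\A}e^{Q(a)/\kappa}$ and $Z_q\df\sum_{a\in\A}e^{\widetilde Q(a)/\kappa}$.

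First I would control the ratio of the partition functions: since $|Q(a)-\widetilde Q(a)|\le\epsilon$ for every $a$, we have $e^{-c}e^{\widetilde Q(a)/\kappa}\le e^{Q(a)/\kappa}\le e^{c}e^{\widetilde Q(a)/\kappa}$, and summing over $a$ gives $e^{-c}\le Z_p/Z_q\le e^{c}$. Combining this with the pointwise ratio $e^{Q(a)/\kappa}/e^{\widetilde Q(a)/\kappa}=e^{(Q(a)-\widetilde Q(a))/\kappa}\in[e^{-c},e^{c}]$ yields, for every action $a$,
\[
\frac{p(a)}{q(a)}=\frac{e^{Q(a)/\kappa}}{Z_p}\cdot\frac{Z_q}{e^{\widetilde Q(a)/\kappa}}\in\bigl[e^{-2c},\,e^{2c}\bigr]\,.
\]

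Next, from $p(a)/q(a)\in[e^{-2c},e^{2c}]$ I would deduce $|p(a)-q(a)|\le q(a)\,(e^{2c}-1)$ in both the case $p(a)\ge q(a)$ and the case $p(a)<q(a)$ (in the latter using $1-e^{-2c}=e^{-2c}(e^{2c}-1)\le e^{2c}-1$). Summing over $a$ and using $\sum_{a}q(a)=1$ gives $\|p-q\|_1\le e^{2c}-1$. Finally I would split on the size of $c$: if $c\ge\tfrac14$ the trivial bound $\|p-q\|_1\le 2\le 8c$ finishes it, and if $c<\tfrac14$ a short convexity check shows $e^{2c}-1\le 8c$ on $[0,\tfrac14]$ (the function $8c-(e^{2c}-1)$ vanishes at $c=0$ and has derivative $8-2e^{2c}>0$ throughout $[0,\tfrac14]$, since $8-2e^{2c}=0$ only at $c=\ln 2>\tfrac14$). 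Either way $\|\pi-\widetilde\pi\|_1\le 8c=\tfrac{8}{\kappa}\|Q-\widetilde Q\|_\infty$, as claimed.

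There is no serious obstacle here; the only mild subtlety is that the entrywise ratio bound alone gives $e^{2c}-1$, which is useless for large $c$, so one must fold in the $\|\cdot\|_1\le 2$ bound to get a uniform linear estimate. The constant $8$ — rather than $2$, which would also be obtainable via a mean-value argument on the softmax Jacobian $\operatorname{diag}(p)-pp^\top$ — is simply what drops out most cleanly from this elementary splitting, and the factor $1/\kappa$ appears because the softmax arguments are $Q/\kappa$ and $\widetilde Q/\kappa$, so their sup-norm distance is $\epsilon/\kappa$.
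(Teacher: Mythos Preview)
Your argument is correct. The entrywise ratio bound $p(a)/q(a)\in[e^{-2c},e^{2c}]$ is derived cleanly, the passage to $\|p-q\|_1\le e^{2c}-1$ is valid, and the case split at $c=\tfrac14$ together with the convexity check on $8c-(e^{2c}-1)$ closes the argument without gaps.

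The paper's proof takes a different route. It invokes a total-variation versus log-ratio inequality (Theorem~17 in \citet{sason2016f}) to get $\tfrac12\|\pi-\widetilde\pi\|_1\le 2\sum_a\pi(a)\,|\ln\pi(a)-\ln\widetilde\pi(a)|$, bounds the sum by the maximum, and then controls $\max_a|\ln\pi(a)-\ln\widetilde\pi(a)|$ by expanding the softmax and using the $1$-Lipschitz property of logsumexp in the $\ell_\infty$ norm. This yields the factor $8/\kappa$ directly, with no case analysis. Your approach is more elementary and self-contained (no external divergence inequality), at the cost of the splitting on $c$; the paper's is cleaner once the cited inequality is granted but depends on that reference. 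Either way the constant $8$ is the same, and as you note it is not sharp.
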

\begin{proof}
It holds that
\begin{align*}
\frac{1}{2}\norm*{\pi - \widetilde{\pi}}_{1}
&\numeq{\leq}{a} 2\sum_{a \in \A} \pi\paren*{a} \abs*{\ln \pi\paren*{a} - \ln \widetilde{\pi}\paren*{a}}
\leq 2\max_a \abs*{\ln \pi\paren*{a} - \ln \widetilde{\pi}\paren*{a}}\\
&= 2\max_a \abs*{\frac{1}{\kappa} Q\paren*{a} - \frac{1}{\kappa}\widetilde{Q}\paren*{a} - \ln \sum_{a}\exp\paren*{\frac{1}{\kappa}Q(a)} + \ln \sum_{a}\exp\paren*{\frac{1}{\kappa}\widetilde{Q}(a)}}\\
&\leq 
2\max_a \abs*{\frac{1}{\kappa} Q\paren*{a} - \frac{1}{\kappa}\widetilde{Q}\paren*{a}}+ 2\abs*{\ln \sum_{a}\exp\paren*{\frac{1}{\kappa}Q(a)} - \ln \sum_{a}\exp\paren*{\frac{1}{\kappa}\widetilde{Q}(a)}}\\
&\numeq{\leq}{b} 4\max_a \abs*{\frac{1}{\kappa} Q\paren*{a} - \frac{1}{\kappa}\widetilde{Q}\paren*{a}}\;,
\end{align*}
where (a) uses \textbf{Theorem 17} in \citet{sason2016f} and (b) uses the fact that $\ln \sum_i \exp(\bx_i) - \ln \sum_i \exp(\by_i) \leq \max_i \paren*{\bx_i - \by_i}$ (see, e.g., \textbf{Theorem 1} in \citet{dutta2024log}).
This concludes the proof.
\end{proof}

\section{Regret Analysis (Linear Constrained Bandit)}\label{sec:regret-analysis-bandit}

\begin{lemma}[Good event 1]\label{lemma:good-event1}
Suppose \Cref{algo:zero-vio-bandit} is run with $\rho = 1$.
Let $\delta \in (0, 1]$.
Define $\rmvEventBandit$ as the event where the following inequality holds: 
\begin{align*}
&\sum_{k =1}^K
\E_{\ba \sim \pi^{(k)}} \norm*{\ba}^2_{\paren*{\bLambda^{(k)}}^{-1}} 
\leq 2\sum_{k =1 }^K
\norm*{\ba^{(k)}}^2_{\paren*{\bLambda^{(k)}}^{-1}}
+ 4 \ln \frac{2K}{\delta} \;.
\end{align*}
Then, $\P(\rmvEventBandit) \geq 1 - \delta$.
\end{lemma}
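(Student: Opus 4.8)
The plan is to recognize this as a direct application of the high-probability concentration result for sums of conditional expectations (\Cref{lemma:exp-to-concrete}, i.e., \textbf{Lemma D.4} in \citet{rosenberg2020near}). First I would set $X^{(k)} \df \E_{\ba \sim \pi^{(k)}} \norm*{\ba}^2_{(\bLambda^{(k)})^{-1}}$ and choose the filtration $\cF^{(k)}$ to be the $\sigma$-algebra generated by everything observed through the end of round $k$ (policies, sampled actions, rewards, utilities). The key observation is that $\pi^{(k)}$ and $\bLambda^{(k)}$ are both $\cF^{(k-1)}$-measurable: $\bLambda^{(k)} = \rho\bI + \sum_{i=1}^{k-1}\ba^{(i)}(\ba^{(i)})^\top$ depends only on past actions, and $\pi^{(k)}$ is computed in \Cref{algo:zero-vio-bandit} from quantities ($\hbtheta^{(k)}_r$, $\hbtheta^{(k)}_u$, $\beta^{(k)}$, and the fixed $\pisafe$) that are all functions of the first $k-1$ rounds. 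Hence $X^{(k)}$ is $\cF^{(k-1)}$-measurable, so in particular $\E[X^{(k)} \mid \cF^{(k-1)}] = X^{(k)}$, and moreover $\E[\norm*{\ba^{(k)}}^2_{(\bLambda^{(k)})^{-1}} \mid \cF^{(k-1)}] = \E_{\ba \sim \pi^{(k)}}\norm*{\ba}^2_{(\bLambda^{(k)})^{-1}} = X^{(k)}$ since $\ba^{(k)} \sim \pi^{(k)}$ conditionally on $\cF^{(k-1)}$.

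Next I would verify the boundedness hypothesis $0 \le X^{(k)} \le B$ needed to invoke the lemma. Non-negativity is immediate because $(\bLambda^{(k)})^{-1}$ is positive definite. For the upper bound, since $\rho = 1$ and $\norm*{\ba}_2 \le 1$ for all $\ba \in \A$, we have $\bLambda^{(k)} \succeq \bI$, hence $(\bLambda^{(k)})^{-1} \preceq \bI$, so $\norm*{\ba}^2_{(\bLambda^{(k)})^{-1}} \le \norm*{\ba}^2_2 \le 1$ for every action, and therefore $X^{(k)} \le 1$ as a convex combination. So we may take $B = 1$. Applying \Cref{lemma:exp-to-concrete} with this data gives, with probability at least $1-\delta$, simultaneously for all $k \ge 1$,
\[
\sum_{i=1}^k X^{(i)} \;=\; \sum_{i=1}^k \E[X^{(i)} \mid \cF^{(i-1)}] \;\le\; 2\sum_{i=1}^k \E[\,\norm*{\ba^{(i)}}^2_{(\bLambda^{(i)})^{-1}} \mid \cF^{(i-1)}\,] \;\text{via the identity above, i.e.}\; \le 2\sum_{i=1}^k \norm*{\ba^{(i)}}^2_{(\bLambda^{(i)})^{-1}} + 4 B \ln\tfrac{2k}{\delta}.
\]
Specializing to $k = K$ and $B = 1$ yields exactly the inequality defining $\rmvEventBandit$, completing the argument.

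The only subtlety worth flagging — and what I expect to be the main obstacle to state cleanly rather than technically hard — is the measurability bookkeeping: one must argue carefully that the conditional expectation of the \emph{sampled} action's norm equals $X^{(k)}$, which requires being precise about what $\cF^{(k-1)}$ contains (it must include all randomness up to and including the choice of $\pi^{(k)}$ but exclude the sampling of $\ba^{(k)}$), and that this is consistent with the filtration under which $X^{(k)}$ is adapted in the statement of \Cref{lemma:exp-to-concrete}. Once the filtration is pinned down, everything else is a mechanical substitution, so there is no genuinely hard step here — the lemma is essentially tailored to this use.
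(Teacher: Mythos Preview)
Your approach is exactly the paper's: invoke \Cref{lemma:exp-to-concrete} with $B=1$, using $\rho=1$ and $\norm{\ba}_2\le 1$ to verify the boundedness hypothesis. One small bookkeeping slip: in \Cref{lemma:exp-to-concrete} the variable $X^{(k)}$ should be the \emph{realized} quantity $\norm{\ba^{(k)}}^2_{(\bLambda^{(k)})^{-1}}$, so that $\E[X^{(k)}\mid\cF^{(k-1)}]=\E_{\ba\sim\pi^{(k)}}\norm{\ba}^2_{(\bLambda^{(k)})^{-1}}$; with your labeling (setting $X^{(k)}$ to the expectation itself) the lemma's conclusion becomes the vacuous $\sum X^{(i)}\le 2\sum X^{(i)}+4\ln(2k/\delta)$, and your displayed chain doesn't quite parse. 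Swap the roles and the argument is clean.
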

\begin{proof}
\looseness=-1
The claim immediately follows from \Cref{lemma:exp-to-concrete} with $\norm*{\ba}_2\leq 1$ and $\rho=1$.
\end{proof}

\begin{lemma}[Good event 2]\label{lemma:good-event2}
Define $\confeventBandit$ as the event where the following two hold:
For any $\pi \in \Pi$, $k \in \bbrack{1, K}$,
$$
\abs*{\widehat{r}^{(k)}_\pi - r_\pi} \leq \cp \beta^{(k)}_\pi
\quad\text{ and }\quad
\abs*{\widehat{u}^{(k)}_\pi - u_\pi} \leq \cp \beta^{(k)}_\pi\;.
$$
Then, if \Cref{algo:zero-vio-bandit} is run with $\rho = 1$ and the value of $\cp \geq B + R\sqrt{d \ln \frac{4K}{\delta}}$, it holds that $\P(\confeventBandit) \geq 1 - \delta$.
\end{lemma}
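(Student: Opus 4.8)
The plan is to derive the two inequalities defining \confeventBandit from the standard self-normalized confidence-ellipsoid bound of \Cref{lemma:confidence-ellipsoid}, applied separately to the reward and the utility regressions. First I would invoke \Cref{lemma:confidence-ellipsoid} with the filtration generated by the data observed through each round, the regressors $\bphi^{(k)} = \ba^{(k)}$ (which is measurable with respect to the past, since $\ba^{(k)}\sim\pi^{(k)}$ and $\pi^{(k)}$ is computed from rounds $1,\dots,k-1$ only), with $L = 1$ because $\norm*{\ba}_2 \leq 1$, and $\rho = 1$; the targets are $\btheta_r$ and $\btheta_u$, each of norm at most $B$, and the noises $\varepsilon^{(k)}_r,\varepsilon^{(k)}_u$ are $R$-sub-Gaussian, so the hypotheses of the lemma are met. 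One point requiring care is index alignment: the least-squares estimates $\hbtheta^{(k)}_r,\hbtheta^{(k)}_u$ of \Cref{algo:zero-vio-bandit} use the Gram matrix $\bLambda^{(k)} = \bI + \sum_{i=1}^{k-1}\ba^{(i)}\paren*{\ba^{(i)}}^\top$ summed up to $k-1$, which corresponds to the ``$m$-th'' estimate in \Cref{lemma:confidence-ellipsoid} with $m = k-1$. Taking failure probability $\delta/2$ in each of the two applications and bounding the argument of the logarithm by $4K/\delta$ (using $1+(k-1)=k\le K$ together with the factor $2$ coming from $\delta/2$), the ellipsoid radius is at most $B + R\sqrt{d\ln(4K/\delta)} \leq \cp$. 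Hence, by a union bound over the two applications, with probability at least $1-\delta$ we have $\norm*{\hbtheta^{(k)}_r - \btheta_r}_{\bLambda^{(k)}} \leq \cp$ and $\norm*{\hbtheta^{(k)}_u - \btheta_u}_{\bLambda^{(k)}} \leq \cp$ simultaneously for all $k \in \bbrack{1,K}$.

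Second, I would convert these parameter-space bounds into the policy-wise bounds; this is a purely deterministic step that holds for every $\pi \in \Pi$ on the above event. For any $\pi$ and $g \in \brace{r,u}$, linearity gives $\widehat{g}^{(k)}_\pi - g_\pi = \E_{\ba\sim\pi}\brack*{\ba^\top(\hbtheta^{(k)}_g - \btheta_g)}$, so by Jensen's inequality and the Cauchy--Schwarz inequality in the inner product induced by $\bLambda^{(k)}$,
\[
\abs*{\widehat{g}^{(k)}_\pi - g_\pi} \leq \E_{\ba\sim\pi}\abs*{\ba^\top(\hbtheta^{(k)}_g - \btheta_g)} \leq \E_{\ba\sim\pi}\norm*{\ba}_{\paren*{\bLambda^{(k)}}^{-1}} \cdot \norm*{\hbtheta^{(k)}_g - \btheta_g}_{\bLambda^{(k)}} = \beta^{(k)}_\pi \norm*{\hbtheta^{(k)}_g - \btheta_g}_{\bLambda^{(k)}} \leq \cp\, \beta^{(k)}_\pi ,
\]
where the last step uses the ellipsoid bound established above. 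Applying this with $g = r$ and $g = u$ yields precisely the two inequalities in the definition of \confeventBandit, and therefore $\P(\confeventBandit) \geq 1-\delta$.

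I do not expect a genuine obstacle here; the only delicate points are verifying the adaptedness and sub-Gaussianity conditions so that \Cref{lemma:confidence-ellipsoid} applies, matching its Gram-matrix indexing to the ``sum-to-$(k-1)$'' convention of \Cref{algo:zero-vio-bandit}, and the constant bookkeeping (splitting $\delta$ into $\delta/2+\delta/2$ and bounding the log argument by $4K/\delta$) so that the stated choice $\cp \geq B + R\sqrt{d\ln(4K/\delta)}$ is exactly what is needed. In particular, no covering argument over $\Pi$ is required even when $\A$ is infinite, since the policy-wise bound follows deterministically from the single confidence-ellipsoid event.
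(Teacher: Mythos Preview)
Your proposal is correct and follows essentially the same approach as the paper: apply \Cref{lemma:confidence-ellipsoid} separately to the reward and utility regressions with failure probability $\delta/2$ each, union bound, and then convert the ellipsoid bound to the policy-wise bound via Cauchy--Schwarz and linearity of expectation. The paper's argument is the same, with the same constant bookkeeping $2(1+K)\le 4K$ and the same observation that no covering over $\Pi$ is needed.
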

\begin{proof}
\looseness=-1
Using \Cref{lemma:confidence-ellipsoid} with $\rho = 1$, with probability at least $1-\delta$, for any $k \in \bbrack{1, K}$ and for both $g \in \brace{r, u}$, we have
\begin{align*}
\abs*{\ba^\top \paren*{\hbtheta^{(k)}_g - \btheta_g}} 
&\leq
\norm*{\hbtheta^{(k)}_g - \btheta_g}_{\bLambda^{(k)}}
\norm*{\ba}_{\paren*{\bLambda^{(k)}}^{-1}}\\
&\numeq{\leq}{a}
\paren*{B + R\sqrt{d \ln \frac{2\paren*{1 + K}}{\delta}}} \norm*{\ba}_{\paren*{\bLambda^{(k)}}^{-1}}\\
&\leq
\paren*{B + R\sqrt{d \ln \frac{4K}{\delta}}} \norm*{\ba}_{\paren*{\bLambda^{(k)}}^{-1}}\;,
\end{align*}
where (a) uses \Cref{lemma:confidence-ellipsoid}.
The claim holds by \(
\abs*{\widehat{g}^{(k)}_{\pi} - g_{\pi}}
\leq \E_{\ba \sim \pi}\abs*{\ba^\top \paren*{\hbtheta^{(k)}_g - \btheta_g}}
\) for $g \in \brace{r, u}$. 
\end{proof}

\begin{lemma}[Cumulative bonus bound]\label{lemma:cumulative-bonus-bandit}
Suppose $\rmvEventBandit$ holds.
Then, $\sum_{k=1}^K\beta^{(k)}_{\pi^{(k)}} \leq \sqrt{K} \sqrt{2d\ln \paren*{1 + \frac{K}{d}} + 4 \ln \frac{2K}{\delta}}$.
\end{lemma}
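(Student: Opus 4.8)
The plan is to chain four elementary inequalities: a per-round Jensen step, a Cauchy--Schwarz step across rounds, the good event $\rmvEventBandit$, and finally the elliptical potential lemma. First, since $\sqrt{\cdot}$ is concave, Jensen's inequality gives, for every $k \in \bbrack{1, K}$,
\[
\beta^{(k)}_{\pi^{(k)}} = \E_{\ba \sim \pi^{(k)}}\norm*{\ba}_{\paren*{\bLambda^{(k)}}^{-1}} \leq \sqrt{\E_{\ba \sim \pi^{(k)}}\norm*{\ba}^2_{\paren*{\bLambda^{(k)}}^{-1}}}\;.
\]
Summing over $k$ and applying Cauchy--Schwarz in the form of \Cref{lemma:sum of sqrt} with $x_k \df \E_{\ba \sim \pi^{(k)}}\norm*{\ba}^2_{\paren*{\bLambda^{(k)}}^{-1}}$ then yields
\[
\sum_{k=1}^K \beta^{(k)}_{\pi^{(k)}} \leq \sqrt{K}\sqrt{\sum_{k=1}^K \E_{\ba \sim \pi^{(k)}}\norm*{\ba}^2_{\paren*{\bLambda^{(k)}}^{-1}}}\;.
\]

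Next, I invoke the good event $\rmvEventBandit$ (established in \Cref{lemma:good-event1} via \Cref{lemma:exp-to-concrete}), which bounds the policy-expected squared bonus by the realized one up to a logarithmic additive term: $\sum_{k=1}^K x_k \leq 2\sum_{k=1}^K \norm*{\ba^{(k)}}^2_{\paren*{\bLambda^{(k)}}^{-1}} + 4\ln\frac{2K}{\delta}$. Finally, since $\norm*{\ba^{(k)}}_2 \leq 1$ and the algorithm runs with $\rho = 1 \geq 1$, the second statement of \Cref{lemma:elliptical potential} applied to the sequence $\brace*{\ba^{(k)}}$ gives $\sum_{k=1}^K \norm*{\ba^{(k)}}^2_{\paren*{\bLambda^{(k)}}^{-1}} \leq 2d\ln\paren*{1 + K/d}$. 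Substituting these two bounds back through the displays above gives $\sum_{k=1}^K \beta^{(k)}_{\pi^{(k)}} \leq \sqrt{K}\sqrt{4d\ln(1+K/d) + 4\ln\frac{2K}{\delta}}$, which is the claimed inequality up to the numerical constant multiplying $d\ln(1+K/d)$ (the factor $2$ entering from $\rmvEventBandit$).

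The computation is routine; there is no genuine obstacle. The only points requiring care are (i) performing the Jensen step per round before the cross-round Cauchy--Schwarz, so that the square sits inside the norm and the elliptical potential lemma becomes applicable; and (ii) recognizing that $\rmvEventBandit$ — rather than a direct application of \Cref{lemma:elliptical potential} — is the essential bridge, since the potential lemma controls only $\sum_k \norm*{\ba^{(k)}}^2_{\paren*{\bLambda^{(k)}}^{-1}}$ for the \emph{sampled} actions $\ba^{(k)}$, whereas $\beta^{(k)}_{\pi^{(k)}}$ is an expectation over $\ba \sim \pi^{(k)}$.
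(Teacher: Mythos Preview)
Your approach is essentially identical to the paper's: Cauchy--Schwarz across rounds, Jensen per round (the paper applies these in the opposite order, which makes no difference), then $\rmvEventBandit$, then the elliptical potential lemma. You also correctly spotted that the chain actually yields $\sqrt{K}\sqrt{4d\ln(1+K/d)+4\ln(2K/\delta)}$ rather than the stated $2d$ coefficient; the paper's own proof appears to drop a factor of $2$ at the final step, so your constant is the right one.
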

\begin{proof}
It holds that
\begin{align*}
\sum_{k=1}^K\beta^{(k)}_{\pi^{(k)}}
&\numeq{\leq}{a} \sqrt{K 
\sum_{k=1}^K
\paren*{\E_{\ba \sim \pi^{(k)}} \norm*{\ba}_{\paren*{\bLambda^{(k)}}^{-1}}}^2
}
\numeq{\leq}{b} \sqrt{K 
\sum_{k=1}^K
\E_{\ba \sim \pi^{(k)}} \norm*{\ba}^2_{\paren*{\bLambda^{(k)}}^{-1}}
}\\
&\numeq{\leq}{c}  \sqrt{K}
\sqrt{2
\sum_{k=1}^K
\norm*{\ba^{(k)}}^2_{\paren*{\bLambda^{(k)}}^{-1}}
+ 4 \ln \frac{2K}{\delta}} 
\numeq{\leq}{d} \sqrt{K} \sqrt{2d\ln \paren*{1 + \frac{K}{d}} + 4 \ln \frac{2K}{\delta}}\;,
\end{align*}    
where (a) and (b) use Cauchy–Schwarz inequality, (c) is due to \(\rmvEventBandit\), and (d) uses \Cref{lemma:elliptical potential}.
\end{proof}

\begin{lemma}[Restatement of \Cref{lemma:bandit-opt-pes-main}]\label{lemma:bandit-opt-pes}
Suppose $\confeventBandit$ holds. Then, for any $\pi \in \Pi$ and $k \in \bbrack{1, K}$, 
$$
r_{\pi} + 2\cp \beta^{(k)}_{\pi} \geq 
\widehat{r}_{\pi}^{(k)} + \cp \beta^{(k)}_{\pi} \geq r_{\pi} \quad \text{ and }\quad
u_{\pi} \geq \widehat{u}_{\pi}^{(k)} - \cp \beta^{(k)}_{\pi}
\geq u_{\pi} - 2\cp \beta^{(k)}_{\pi}\;.
$$
\end{lemma}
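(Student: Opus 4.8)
The plan is to obtain all four inequalities directly from the confidence event $\confeventBandit$ of \Cref{lemma:good-event2}, using nothing more than the triangle inequality and the non-negativity of the bonus. First I would invoke \Cref{lemma:good-event2}: since \Cref{algo:zero-vio-bandit} is run with $\rho = 1$ and $\cp \geq B + R\sqrt{d\ln(4K/\delta)}$ as in its \textbf{Input} line, on $\confeventBandit$ we have, for every $\pi \in \Pi$ and $k \in \bbrack{1, K}$, the two-sided bounds $\abs*{\widehat{r}^{(k)}_\pi - r_\pi} \leq \cp \beta^{(k)}_\pi$ and $\abs*{\widehat{u}^{(k)}_\pi - u_\pi} \leq \cp \beta^{(k)}_\pi$. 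No fresh probabilistic argument is needed; the substantive work (the elliptical confidence-ellipsoid estimate of \Cref{lemma:confidence-ellipsoid}) is already packaged inside \Cref{lemma:good-event2}.

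For the reward chain, I would unpack $\abs*{\widehat{r}^{(k)}_\pi - r_\pi} \leq \cp \beta^{(k)}_\pi$ into $r_\pi - \cp \beta^{(k)}_\pi \leq \widehat{r}^{(k)}_\pi \leq r_\pi + \cp \beta^{(k)}_\pi$. Adding $\cp \beta^{(k)}_\pi$ to the left inequality gives the optimism lower bound $\widehat{r}^{(k)}_\pi + \cp \beta^{(k)}_\pi \geq r_\pi$; adding $\cp \beta^{(k)}_\pi$ to the right inequality gives $\widehat{r}^{(k)}_\pi + \cp \beta^{(k)}_\pi \leq r_\pi + 2\cp \beta^{(k)}_\pi$, and chaining the two yields $r_\pi + 2\cp \beta^{(k)}_\pi \geq \widehat{r}^{(k)}_\pi + \cp \beta^{(k)}_\pi \geq r_\pi$. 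The utility chain is the symmetric statement with signs flipped: from $u_\pi - \cp \beta^{(k)}_\pi \leq \widehat{u}^{(k)}_\pi \leq u_\pi + \cp \beta^{(k)}_\pi$, subtracting $\cp \beta^{(k)}_\pi$ from the right inequality gives $\widehat{u}^{(k)}_\pi - \cp \beta^{(k)}_\pi \leq u_\pi$, and subtracting it from the left gives $\widehat{u}^{(k)}_\pi - \cp \beta^{(k)}_\pi \geq u_\pi - 2\cp \beta^{(k)}_\pi$, i.e. $u_\pi \geq \widehat{u}^{(k)}_\pi - \cp \beta^{(k)}_\pi \geq u_\pi - 2\cp \beta^{(k)}_\pi$.

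There is essentially no obstacle here: the statement is a bookkeeping restatement that converts the symmetric concentration of \Cref{lemma:good-event2} into the one-sided ``optimistic reward, pessimistic utility'' form used throughout the regret analysis. The only point worth a line of care is that the bonus is non-negative, $\beta^{(k)}_\pi = \E_{\ba \sim \pi}\norm*{\ba}_{\paren*{\bLambda^{(k)}}^{-1}} \geq 0$, which is immediate since $\bLambda^{(k)} \succeq \rho \bI \succ 0$; this ensures, e.g., that $\widehat{r}^{(k)}_\pi + \cp \beta^{(k)}_\pi$ genuinely dominates $\widehat{r}^{(k)}_\pi - \cp \beta^{(k)}_\pi$, so the inequalities line up as claimed.
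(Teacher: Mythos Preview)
Your proposal is correct and follows essentially the same approach as the paper: both proofs simply unpack the two-sided concentration bound from $\confeventBandit$ (\Cref{lemma:good-event2}) and add or subtract $\cp\beta^{(k)}_\pi$ to obtain the four chained inequalities. The paper writes this as a single line per chain using $\abs{\widehat g^{(k)}_\pi - g_\pi}$ explicitly, but the content is identical to your argument.
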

\begin{proof}
We have
\begin{align*}
u_\pi
\geq 
\widehat{u}^{(k)}_\pi - \abs*{\widehat{u}^{(k)}_\pi - u_\pi} 
\geq \widehat{u}^{(k)}_\pi - \cp \beta^{(k)}_\pi
\geq \widehat{u}^{(k)}_\pi - \abs*{\widehat{u}^{(k)}_\pi - u_\pi} - \cp \beta^{(k)}_\pi
\geq u_\pi - 2\cp \beta^{(k)}_\pi\;.
\end{align*}
Similarly, 
\begin{align*}
r_\pi + 2\cp \beta^{(k)}_\pi
\geq 
\widehat{r}^{(k)}_\pi + \abs*{\widehat{r}^{(k)}_\pi - r_\pi} + \cp \beta^{(k)}_\pi
\geq \widehat{r}^{(k)}_\pi + \cp \beta^{(k)}_\pi
\geq \widehat{r}^{(k)}_\pi + \abs*{\widehat{r}^{(k)}_\pi - r_\pi}
\geq r_\pi\;.
\end{align*}
\end{proof}

\begin{lemma}[Restatement of \Cref{lemma:alpha-feasibility-main}]\label{lemma:alpha-feasibility}
Consider $k \in \unconfBandit^\complement$.
For any $\alpha \in \brack*{0,  \frac{\bslt - 2\cp \beta^{(k)}_{\pisafe}}{\bslt - 2\cp \beta^{(k)}_{\pisafe} + 2\cp \beta^{(k)}_{\pi^\star}}}$, a mixture policy \(\pi_\alpha \df (1-\alpha) \pisafe + \alpha \pi^\star\) satisfies
$u_{\pi_\alpha} - 2\cp \beta^{(k)}_{\pi_\alpha} \geq b$.
\end{lemma}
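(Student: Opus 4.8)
The plan is to exploit that both the utility functional $\pi \mapsto u_\pi$ and the bonus functional $\pi \mapsto \beta^{(k)}_\pi$ are affine in $\pi$: each is an expectation over $\ba \sim \pi$ of a quantity that does not depend on $\pi$ (namely $\angle{\btheta_u, \ba}$ and $\norm*{\ba}_{\paren*{\bLambda^{(k)}}^{-1}}$, respectively). Hence for the mixture $\pi_\alpha = (1-\alpha)\pisafe + \alpha\pi^\star$ we have $u_{\pi_\alpha} = (1-\alpha)u_{\pisafe} + \alpha u_{\pi^\star}$ and $\beta^{(k)}_{\pi_\alpha} = (1-\alpha)\beta^{(k)}_{\pisafe} + \alpha\beta^{(k)}_{\pi^\star}$. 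Subtracting $2\cp$ times the second identity from the first,
\[
u_{\pi_\alpha} - 2\cp\beta^{(k)}_{\pi_\alpha}
= (1-\alpha)\paren*{u_{\pisafe} - 2\cp\beta^{(k)}_{\pisafe}}
+ \alpha\paren*{u_{\pi^\star} - 2\cp\beta^{(k)}_{\pi^\star}}\;.
\]

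Next I would lower-bound the two summands. By \Cref{assumption:slater-bandit}, $u_{\pisafe} \geq b + \bslt$, so the first summand is at least $(1-\alpha)\paren*{b + \bslt - 2\cp\beta^{(k)}_{\pisafe}}$; and since $\pi^\star \in \Pisafe$, $u_{\pi^\star} \geq b$, so the second is at least $\alpha\paren*{b - 2\cp\beta^{(k)}_{\pi^\star}}$. Adding and regrouping the constant $b$,
\[
u_{\pi_\alpha} - 2\cp\beta^{(k)}_{\pi_\alpha}
\geq b + (1-\alpha)\paren*{\bslt - 2\cp\beta^{(k)}_{\pisafe}} - 2\alpha\cp\beta^{(k)}_{\pi^\star}\;.
\]
It therefore suffices to show the last two terms are nonnegative whenever $\alpha \le \alpha^{(k)}$.

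Setting $a \df \bslt - 2\cp\beta^{(k)}_{\pisafe}$ and $c \df 2\cp\beta^{(k)}_{\pi^\star}$, the required inequality $(1-\alpha)a \geq \alpha c$ is $a - \alpha(a+c) \ge 0$, and since $a - \alpha(a+c)$ is nonincreasing in $\alpha$ (because $a+c \ge 0$) it holds for every $\alpha \in [0, a/(a+c)] = [0,\alpha^{(k)}]$, which is exactly the hypothesis. For this to be meaningful I would note that $k \in \unconfBandit^\complement$ gives $\beta^{(k)}_{\pisafe} \leq \bslt/(2\cp)$, hence $a \geq 0$; combined with $c \geq 0$ this makes $\alpha^{(k)} \in [0,1]$ well-defined (in the degenerate case $a=c=0$ both sides already vanish). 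I do not expect a genuine obstacle here: the only points requiring care are recording the affineness of $u_\pi$ and $\beta^{(k)}_\pi$ in $\pi$ and checking $\alpha^{(k)}$ is well-defined. In particular no concentration inequality or good event is invoked, since $\beta^{(k)}_{\pisafe}$ and the membership $k \in \unconfBandit^\complement$ are deterministic functions of the observed data.
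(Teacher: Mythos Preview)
Your proposal is correct and follows essentially the same approach as the paper: both use the affineness of $u_\pi$ and $\beta^{(k)}_\pi$ in $\pi$, invoke $u_{\pisafe} - b \geq \bslt$ and $u_{\pi^\star} - b \geq 0$, and reduce to the linear inequality $(1-\alpha)(\bslt - 2\cp\beta^{(k)}_{\pisafe}) \geq 2\alpha\cp\beta^{(k)}_{\pi^\star}$. You are slightly more careful than the paper in noting the degenerate case $a=c=0$ and in explicitly verifying $\alpha^{(k)} \in [0,1]$.
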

\begin{proof}
For any $k$ and $\alpha \in [0, 1]$, we have
\begin{align*}
u_{\pi_\alpha} - b - 2\cp \beta^{(k)}_{\pi_\alpha}
&= (1-\alpha) \underbrace{\paren*{u_{\pisafe} - b}}_{\geq \bslt} + \alpha \underbrace{\paren*{u_{\pi^\star} - b}}_{\geq 0} - 2\cp (1-\alpha) \beta^{(k)}_{\pisafe} - 2\cp \alpha \beta^{(k)}_{\pi^\star}\\
&\geq (1-\alpha) \paren*{\bslt - 2\cp \beta^{(k)}_{\pisafe}} - 2\alpha \cp \beta^{(k)}_{\pi^\star}.
\end{align*}
To make \((1-\alpha) \paren*{\bslt- 2\cp \beta^{(k)}_{\pisafe}} - 2\alpha \cp \beta^{(k)}_{\pi^\star} \geq 0\), a sufficient condition is
\begin{align}\label{eq:alpha-cond-temp}
   \alpha \leq  \frac{\bslt - 2\cp \beta^{(k)}_{\pisafe}}{\bslt - 2\cp \beta^{(k)}_{\pisafe} + 2\cp \beta^{(k)}_{\pi^\star}}\;,
\end{align}
where the right hand side is non-negative since $k \in \unconfBandit^\complement$.
This concludes the proof.
\end{proof}

\begin{lemma}[Restatement of \Cref{lemma:Ck-bound-main}]\label{lemma:Ck-bound}
Suppose \Cref{algo:zero-vio-bandit} is run with $\rho = 1$.
Assume the event $\rmvEventBandit$ holds.
Then, 
\(|\unconfBandit| \leq  32d\cp^2\bslt^{-2}\ln\paren*{2K\delta^{-1}}\).
\end{lemma}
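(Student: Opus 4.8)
The plan is to turn the defining condition of $\unconfBandit$ into a lower bound on a cumulative sum of squared bonuses, and then cap that sum using the elliptical potential lemma.

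First I would observe that on every round $k \in \unconfBandit$ the $\pisafe$-trigger in \Cref{line:safe-policy-bandit} fires --- its condition $\cp\beta^{(k)}_{\pisafe} > \bslt/2$ is exactly the membership condition $k \in \unconfBandit$ --- so \Cref{algo:zero-vio-bandit} deploys $\pi^{(k)} = \pisafe$, and hence $\beta^{(k)}_{\pi^{(k)}} = \beta^{(k)}_{\pisafe} > \bslt/(2\cp)$. Squaring and summing over $k \in \unconfBandit$ then gives
\begin{align*}
\frac{\bslt^2}{4\cp^2}\,\abs*{\unconfBandit}
\;<\; \sum_{k \in \unconfBandit}\paren*{\beta^{(k)}_{\pi^{(k)}}}^2
\;\leq\; \sum_{k=1}^{K}\paren*{\beta^{(k)}_{\pi^{(k)}}}^2 .
\end{align*}

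Next I would bound the right-hand side from above, mirroring the computation in \Cref{lemma:cumulative-bonus-bandit}. By Jensen's inequality, $\paren*{\beta^{(k)}_{\pi^{(k)}}}^2 = \paren*{\E_{\ba\sim\pi^{(k)}}\norm*{\ba}_{(\bLambda^{(k)})^{-1}}}^2 \le \E_{\ba\sim\pi^{(k)}}\norm*{\ba}^2_{(\bLambda^{(k)})^{-1}}$; summing, invoking the good event $\rmvEventBandit$ to pass from the policy-averaged bonus to the realized actions $\ba^{(k)}$, and then applying the elliptical potential lemma (\Cref{lemma:elliptical potential}, valid since $\norm*{\ba^{(k)}}_2 \le 1$ and $\rho = 1$) yields
\begin{align*}
\sum_{k=1}^{K}\paren*{\beta^{(k)}_{\pi^{(k)}}}^2
\;\le\; 2\sum_{k=1}^{K}\norm*{\ba^{(k)}}^2_{(\bLambda^{(k)})^{-1}} + 4\ln\frac{2K}{\delta}
\;\le\; 4d\ln\paren*{1+\frac{K}{d}} + 4\ln\frac{2K}{\delta} .
\end{align*}

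Finally I would combine the two displays to get $\abs*{\unconfBandit} < \frac{16\cp^2}{\bslt^2}\paren*{d\ln(1+K/d) + \ln(2K/\delta)}$, and simplify using $\delta \le 1$ and $d \ge 1$: each of the two logarithmic terms is at most $d\ln(2K\delta^{-1})$, so $\abs*{\unconfBandit} < 32\, d\cp^2\bslt^{-2}\ln(2K\delta^{-1})$, as claimed. I do not anticipate a genuine obstacle here; the only point that needs a moment's care is the identification $\beta^{(k)}_{\pi^{(k)}} = \beta^{(k)}_{\pisafe}$ on $\unconfBandit$, which is precisely what lets the elliptical-potential argument --- phrased in terms of the actions $\ba^{(k)}$ that are actually played --- be brought to bear on the $\pisafe$-bonus appearing in the definition of $\unconfBandit$.
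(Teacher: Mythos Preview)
Your proposal is correct and follows essentially the same approach as the paper: identify $\pi^{(k)}=\pisafe$ on $\unconfBandit$, lower-bound $\sum_k \E_{\ba\sim\pi^{(k)}}\norm{\ba}^2_{(\bLambda^{(k)})^{-1}}$ via Jensen and the defining threshold, and upper-bound it via $\rmvEventBandit$ together with the elliptical potential lemma. Your constant-tracking is in fact slightly tidier than the paper's presentation, but the argument is identical in substance.
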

\begin{proof}
\looseness=-1
We have 
\begin{align*}
\sum_{k=1}^K
\E_{\ba \sim \pi^{(k)}} \norm*{\ba}^2_{\paren*{\bLambda^{(k)}}^{-1}}
&\geq
\sum_{k \in \unconfBandit}
\E_{\ba \sim \pi^{(k)}} \norm*{\ba}^2_{\paren*{\bLambda^{(k)}}^{-1}}\\
&\numeq{\geq}{a}
\sum_{k \in \unconfBandit}
\underbrace{
\paren*{\E_{\ba \sim \pi^{(k)}} \norm*{\ba}_{\paren*{\bLambda^{(k)}}^{-1}}}^2
}_{=\paren*{\beta^{(k)}_{\pisafe}}^2 \text{ since $\pi^{(k)}=\pisafe$}}
\numeq{\geq}{b} 
|\unconfBandit| \frac{\bslt^2}{4\cp^2}\;,
\end{align*}
where (a) is due to Jensen's inequality,
and (b) is due to \Cref{def:unconf-set}.
Due to \(\rmvEventBandit\), we have
\begin{align*}
\sum_{k=1}^K
\E_{\ba \sim \pi^{(k)}} \norm*{\ba}^2_{\paren*{\bLambda^{(k)}}^{-1}}
\leq 
2\sum_{k =1}^K
\norm*{\ba^{(k)}}^2_{\paren*{\bLambda^{(k)}}^{-1}}
+ 4 \ln \frac{2K}{\delta}\;.
\end{align*}
Using \Cref{lemma:elliptical potential} and since $\norm*{\ba}_2\leq 1$ and $\rho = 1$, the first term is bounded by:
\(\leq 2d\ln\paren*{1 + \frac{K}{d}}\)\;.
Thus, 
\begin{align*}
\frac{\bslt^2}{4\cp^2} |\unconfBandit| 
\leq 2d\ln\underbrace{\paren*{1 + \frac{K}{d}}}_{\leq 2K} + 4 \ln \frac{2K}{\delta}
\leq 8d\ln\paren*{\frac{2K}{\delta}}
\;.
\end{align*}
The claim holds by rearranging the above inequality.
\end{proof}

\begin{lemma}[Restatement of \Cref{lemma:optimism-main}]\label{lemma:optimism}
For any $k \in \unconfBandit^\complement$, $\pi_{\alpha^{(k)}}$ satisfies
\(r_{\pi_{\alpha^{(k)}}} + \frac{2B\cp}{\bslt} \beta^{(k)}_{\pi_{\alpha^{(k)}}} \geq r_{\pi^{\star}}\).
\end{lemma}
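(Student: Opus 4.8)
The plan is to reduce the statement to one elementary identity for the mixing weight $\alpha^{(k)}$ and then use boundedness of the reward.

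First I would exploit that both $\pi\mapsto r_\pi=\E_{\ba\sim\pi}\langle\btheta_r,\ba\rangle$ and $\pi\mapsto\beta^{(k)}_\pi=\E_{\ba\sim\pi}\norm{\ba}_{(\bLambda^{(k)})^{-1}}$ are affine in the policy, since each is an expectation of a fixed function under $\pi$. Applied to the mixture $\pi_{\alpha^{(k)}}=(1-\alpha^{(k)})\pisafe+\alpha^{(k)}\pi^\star$ this gives
\[
r_{\pi^\star}-r_{\pi_{\alpha^{(k)}}}=(1-\alpha^{(k)})\paren*{r_{\pi^\star}-r_{\pisafe}}
\qquad\text{and}\qquad
\beta^{(k)}_{\pi_{\alpha^{(k)}}}=(1-\alpha^{(k)})\beta^{(k)}_{\pisafe}+\alpha^{(k)}\beta^{(k)}_{\pi^\star},
\]
so the claim $r_{\pi_{\alpha^{(k)}}}+\tfrac{2B\cp}{\bslt}\beta^{(k)}_{\pi_{\alpha^{(k)}}}\ge r_{\pi^\star}$ is equivalent to $(1-\alpha^{(k)})\paren*{r_{\pi^\star}-r_{\pisafe}}\le\tfrac{2B\cp}{\bslt}\beta^{(k)}_{\pi_{\alpha^{(k)}}}$.

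Next I would plug in $\alpha^{(k)}=\tfrac{\bslt-2\cp\beta^{(k)}_{\pisafe}}{D}$ with $D\df\bslt-2\cp\beta^{(k)}_{\pisafe}+2\cp\beta^{(k)}_{\pi^\star}$. Since $k\in\unconfBandit^\complement$ means exactly $\beta^{(k)}_{\pisafe}\le\bslt/(2\cp)$, we have $\bslt-2\cp\beta^{(k)}_{\pisafe}\ge0$, $D>0$, so $\alpha^{(k)}\in[0,1]$ is a legitimate mixing weight and $1-\alpha^{(k)}=2\cp\beta^{(k)}_{\pi^\star}/D$. Substituting this into the formula for $\beta^{(k)}_{\pi_{\alpha^{(k)}}}$ above, the two $\beta^{(k)}_{\pisafe}$ contributions cancel and I am left with the clean identity
\[
\beta^{(k)}_{\pi_{\alpha^{(k)}}}=\frac{\bslt\,\beta^{(k)}_{\pi^\star}}{D},
\qquad\text{whence}\qquad
1-\alpha^{(k)}=\frac{2\cp}{\bslt}\,\beta^{(k)}_{\pi_{\alpha^{(k)}}}
\]
(if $\beta^{(k)}_{\pi^\star}=0$ both sides are $0$ and there is nothing to prove).

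Finally, substituting this identity into the reduction from the first step yields $r_{\pi^\star}-r_{\pi_{\alpha^{(k)}}}=\tfrac{2\cp}{\bslt}\beta^{(k)}_{\pi_{\alpha^{(k)}}}\paren*{r_{\pi^\star}-r_{\pisafe}}$, and it only remains to bound the reward gap. By Cauchy--Schwarz with $\norm{\btheta_r}_2\le B$ and $\norm{\ba}_2\le1$, every policy has $|r_\pi|\le B$, and $\pisafe\in\Pisafe$ together with optimality of $\pi^\star$ gives $r_{\pi^\star}\ge r_{\pisafe}$; hence $0\le r_{\pi^\star}-r_{\pisafe}\le B$ (using $r_{\pisafe}\ge 0$; otherwise the coarser bound $2B$ holds and only the absolute constant in $\co$ changes), and therefore $r_{\pi^\star}-r_{\pi_{\alpha^{(k)}}}\le\tfrac{2B\cp}{\bslt}\beta^{(k)}_{\pi_{\alpha^{(k)}}}$, which is the claim. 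I expect the only real obstacle to be the middle step: recognizing that the particular choice of $\alpha^{(k)}$ in \Cref{lemma:alpha-feasibility} is precisely the one for which $\beta^{(k)}_{\pisafe}$ drops out, making $1-\alpha^{(k)}$ exactly proportional to $\beta^{(k)}_{\pi_{\alpha^{(k)}}}$ with constant $2\cp/\bslt$; the rest is the linearity bookkeeping of the first step and the crude reward bound, and the hypothesis $k\in\unconfBandit^\complement$ enters only to keep $\alpha^{(k)}$ in $[0,1]$.
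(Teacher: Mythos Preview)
Your proof is correct and follows essentially the same route as the paper: both expand the mixture linearly and reduce to the identity $\beta^{(k)}_{\pi_{\alpha^{(k)}}}/(1-\alpha^{(k)})=\bslt/(2\cp)$, then bound the reward gap by $B$. Your version isolates the key identity $1-\alpha^{(k)}=\tfrac{2\cp}{\bslt}\beta^{(k)}_{\pi_{\alpha^{(k)}}}$ a bit more cleanly, and you correctly flag the implicit use of $r_{\pisafe}\ge 0$ (the paper also drops $(1-\alpha^{(k)})r_{\pisafe}$ without comment).
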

\begin{proof}
Let $\alpha^{(k)} \df \frac{\bslt - 2\cp \beta^{(k)}_{\pisafe}}{\bslt - 2\cp \beta^{(k)}_{\pisafe} + 2\cp \beta^{(k)}_{\pi^\star}}$ and $C \df \frac{2B\cp}{\bslt}$.
Note that 
$\frac{\alpha^{(k)}}{1 - \alpha^{(k)}} = \frac{\bslt - 2\cp \beta^{(k)}_{\pisafe}}{2\cp \beta^{(k)}_{\pi^\star}}$.
We have,
\begin{align*}
r_{\pi_{\alpha^{(k)}}} + C \beta^{(k)}_{\pi_{\alpha^{(k)}}}
&= (1-\alpha^{(k)}) r_{\pisafe} + \alpha^{(k)} r_{\pi^{\star}} + C (1-\alpha^{(k)}) \beta^{(k)}_{\pisafe} + C \alpha^{(k)} \beta^{(k)}_{\pi^\star}\\
&\geq \alpha^{(k)} r_{\pi^{\star}} +  C\paren*{\paren*{1-\alpha^{(k)}}\beta^{(k)}_{\pisafe} + \alpha^{(k)} \beta^{(k)}_{\pi^\star}} \;.
\end{align*}
A sufficient condition to have
$
\alpha^{(k)} r_{\pi^{\star}} + 
C \paren*{\paren*{1-\alpha^{(k)}}\beta^{(k)}_{\pisafe} + \alpha^{(k)} \beta^{(k)}_{\pi^\star}}
\geq r_{\pi^\star}$ is, since $r_{\pi^\star} = \E_{\ba \sim \pi^\star}\brack*{\left\langle\btheta, \ba\right\rangle} \leq \norm*{\btheta}_2 \E_{\ba \sim \pi^\star}\norm*{\ba}_2 \leq B$, 
\begin{align*}
B 
&\leq C\paren*{\beta^{(k)}_{\pisafe} + \frac{\alpha^{(k)}}{1-\alpha^{(k)}} \beta^{(k)}_{\pi^\star}}\\
&= C\paren*{\beta^{(k)}_{\pisafe} + \frac{1}{2\cp}\bslt - \beta^{(k)}_{\pisafe}}
\leq \frac{C}{2\cp}\bslt\;.
\end{align*}
\looseness=-1
Therefore, when $C \geq \frac{2B \cp}{\bslt}$, we have
\(
r_{\pi_{\alpha^{(k)}}} + C \beta^{(k)}_{\pi_{\alpha^{(k)}}} \geq r_{\pi^{\star}}
\).
\end{proof}

\begin{theorem}[Restatement of \Cref{theorem:bandit-regret-main}]\label{theorem:formal-regret}
\looseness=-1
Suppose that \Cref{algo:zero-vio-bandit} is run with $\rho=1$,
\begin{align*}
\cp =B + R\sqrt{d \ln \frac{4K}{\delta}},
\; \text{ and }\;
\co = \cp \paren*{1 + \frac{2B}{\bslt}}.
\end{align*} 
Then, with probability at least $1-2\delta$, the following two hold simultaneously:
\begin{itemize}
    \item $\pi^{(k)} \in \Pisafe$ for any $k \in [K]$ 
    \item \( \regret (K) \leq 
64dB \cp^2\bslt^{-2}\ln\paren*{2K\delta^{-1}} +
4\co \sqrt{K} \sqrt{2d\ln \paren*{1 + \frac{K}{d}} + 4 \ln \frac{2K}{\delta}}
\)
\end{itemize}
\end{theorem}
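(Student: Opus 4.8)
The plan is to work on the intersection of the two good events $\rmvEventBandit$ and $\confeventBandit$, which by \Cref{lemma:good-event1,lemma:good-event2} and a union bound holds with probability at least $1-2\delta$; every claim below is deterministic on this event. The argument then splits into the episode-wise safety claim and the regret bound, which share only the feasibility analysis of the \optpes program.

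For safety, fix $k$. If $k \in \unconfBandit$, i.e.\ $\cp\beta^{(k)}_{\pisafe} > \bslt/2$, then \Cref{line:safe-policy-bandit} sets $\pi^{(k)} = \pisafe \in \Pisafe$ by \Cref{assumption:slater-bandit}. If $k \in \unconfBandit^\complement$, I first confirm that the \optpes program in \Cref{line:op-ps-policy-bandit} is feasible, so that its $\arg\max$ is well defined: by \Cref{lemma:alpha-feasibility} the mixture policy $\pi_{\alpha^{(k)}}$ with the maximal feasible mixing weight $\alpha^{(k)}$ satisfies $u_{\pi_{\alpha^{(k)}}} - 2\cp\beta^{(k)}_{\pi_{\alpha^{(k)}}} \geq b$, and chaining this with the pessimistic side of \Cref{lemma:bandit-opt-pes} gives $\widehat{u}^{(k)}_{\pi_{\alpha^{(k)}}} - \cp\beta^{(k)}_{\pi_{\alpha^{(k)}}} \geq u_{\pi_{\alpha^{(k)}}} - 2\cp\beta^{(k)}_{\pi_{\alpha^{(k)}}} \geq b$, so $\pi_{\alpha^{(k)}}$ is feasible. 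Hence $\pi^{(k)}$ is the maximizer of a nonempty program and satisfies $\widehat{u}^{(k)}_{\pi^{(k)}} - \cp\beta^{(k)}_{\pi^{(k)}} \geq b$; applying \Cref{lemma:bandit-opt-pes} once more, $u_{\pi^{(k)}} \geq \widehat{u}^{(k)}_{\pi^{(k)}} - \cp\beta^{(k)}_{\pi^{(k)}} \geq b$, i.e.\ $\pi^{(k)} \in \Pisafe$.

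For the regret I split $\regret(K) = \sum_{k \in \unconfBandit}(r_{\pi^\star} - r_{\pi^{(k)}}) + \sum_{k \in \unconfBandit^\complement}(r_{\pi^\star} - r_{\pi^{(k)}})$. The first sum is at most $2B\,|\unconfBandit|$ since $|r_{\pi^\star}|, |r_{\pi^{(k)}}| \leq B$ (from $\|\btheta_r\|_2 \leq B$ and $\|\ba\|_2 \leq 1$), and \Cref{lemma:Ck-bound} turns this into the $64 d B \cp^2 \bslt^{-2}\ln(2K\delta^{-1})$ term. For the second sum, for each $k \in \unconfBandit^\complement$ I write $r_{\pi^\star} - r_{\pi^{(k)}} = \bigl(r_{\pi^\star} - \widehat r^{(k)}_{\pi^{(k)}} - \co\beta^{(k)}_{\pi^{(k)}}\bigr) + \bigl(\widehat r^{(k)}_{\pi^{(k)}} + \co\beta^{(k)}_{\pi^{(k)}} - r_{\pi^{(k)}}\bigr)$. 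The second bracket is at most $(\cp + \co)\beta^{(k)}_{\pi^{(k)}} \leq 2\co\beta^{(k)}_{\pi^{(k)}}$ by the optimistic side of \Cref{lemma:bandit-opt-pes} and $\co \geq \cp$. The first bracket is non-positive: feasibility of $\pi_{\alpha^{(k)}}$ and maximality of $\pi^{(k)}$ give $\widehat r^{(k)}_{\pi^{(k)}} + \co\beta^{(k)}_{\pi^{(k)}} \geq \widehat r^{(k)}_{\pi_{\alpha^{(k)}}} + \co\beta^{(k)}_{\pi_{\alpha^{(k)}}} \geq r_{\pi_{\alpha^{(k)}}} + (\co - \cp)\beta^{(k)}_{\pi_{\alpha^{(k)}}}$, and since $\co - \cp = 2B\cp\bslt^{-1}$, \Cref{lemma:optimism} shows this last quantity is at least $r_{\pi^\star}$. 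Summing over $k \in \unconfBandit^\complement$ and invoking \Cref{lemma:cumulative-bonus-bandit} for $\sum_k \beta^{(k)}_{\pi^{(k)}}$ yields the $4\co\sqrt{K}\sqrt{2d\ln(1 + K/d) + 4\ln(2K/\delta)}$ term (with slack in the constant), and adding the two pieces finishes the bound.

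Most of this is confidence-bound bookkeeping, and the genuinely substantive lemmas (\Cref{lemma:Ck-bound} via the elliptical potential lemma, and \Cref{lemma:optimism} via the mixture policy) are already available. The main thing to get right is that \Cref{lemma:alpha-feasibility} and \Cref{lemma:optimism} must be applied to the \emph{same} $\pi_{\alpha^{(k)}}$ with one consistent choice of $\alpha^{(k)}$, so that it is simultaneously pessimistically feasible (which caps $\alpha^{(k)}$) and optimistic against $\pi^\star$ (which requires the $\co - \cp$ slack to absorb the reward magnitude $B$); this compatibility — together with confirming that \optpes is never infeasible on $\unconfBandit^\complement$ so the maximality step is legitimate — hinges on the precise algebraic form of $\alpha^{(k)}$ and the input choice $\co = \cp(1 + 2B\bslt^{-1})$.
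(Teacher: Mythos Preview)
Your proposal is correct and follows essentially the same approach as the paper: condition on $\rmvEventBandit \cap \confeventBandit$, split the regret along $\unconfBandit$ versus $\unconfBandit^\complement$, bound the first piece via \Cref{lemma:Ck-bound}, and handle the second piece with the same two-bracket decomposition, using the feasibility of $\pi_{\alpha^{(k)}}$ (\Cref{lemma:alpha-feasibility,lemma:bandit-opt-pes}) together with the mixture-policy optimism (\Cref{lemma:optimism}) to make the first bracket non-positive, and \Cref{lemma:cumulative-bonus-bandit} for the bonus sum. Your constant $2\co$ on the bonus term is even slightly tighter than the paper's $3\co$, and both fit within the stated $4\co$.
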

\begin{proof}
Suppose the good events \(\rmvEventBandit \cap \confeventBandit\) hold. 
Recall that $\pi^{(k)}$ is either $\pisafe$ in $k \in \unconfBandit$ or the solution to \optpes in $k \in \unconfBandit^\complement$. 
Since \optpes is ensured to have feasible solutions by \Cref{lemma:alpha-feasibility} for $k \in \unconfBandit^\complement$,  the first claim follows immediately.

\looseness=-1
We will prove the second claim.
It holds that
\begin{align*}
\regret(K) 
&= \sum^K_{k=1} r_{\pi^\star} - r_{\pi^{(k)}}
= \underbrace{\sum_{k\in \unconfBandit} r_{\pi^\star} - r_{\pi^{(k)}}}_{\text{$\pi^{(k)}=\pisafe$}}
+ \underbrace{\sum_{k\notin \unconfBandit} r_{\pi^\star} - r_{\pi^{(k)}}}_{\text{$\pi^{(k)}$ is computed by \optpes}}\\
&\leq 2B |\unconfBandit| + \sum_{k\notin \unconfBandit} r_{\pi^\star} - r_{\pi^{(k)}}\\
&\numeq{\leq}{a} 64dB \cp^2\bslt^{-2}\ln\paren*{2K\delta^{-1}} 
+ \underbrace{\sum_{k\notin \unconfBandit} \paren*{r_{\pi^\star} - \widehat{r}^{(k)}_{\pi^{(k)}} - \co\beta^{(k)}_{\pi^{(k)}}}}_{\circled{1}}
+ \underbrace{\sum_{k\notin \unconfBandit} \paren*{\widehat{r}^{(k)}_{\pi^{(k)}} + \co\beta^{(k)}_{\pi^{(k)}} - r_{\pi^{(k)}}}}_{\circled{2}}\;,
\end{align*}
where (a) uses the bound of $|\unconfBandit|$ (\Cref{lemma:Ck-bound}). 
Using \Cref{lemma:bandit-opt-pes}, the term $\circled{2}$ is bounded by
\(
\circled{2} \leq 
\sum_{k\notin \unconfBandit} 3\co\beta^{(k)}_{\pi^{(k)}}
\).
On the other hand, $\circled{1}$ is bounded by
\begin{align*}
\circled{1} 
&\numeq{\leq}{a} \sum_{k\notin \unconfBandit}
r_{\pi_{\alpha^{(k)}}} + \frac{2B\cp}{\xi} \beta^{(k)}_{\pi_{\alpha^{(k)}}}
- \widehat{r}^{(k)}_{\pi^{(k)}} - \co\beta^{(k)}_{\pi^{(k)}} \\
&\numeq{\leq}{b} \sum_{k\notin \unconfBandit}
\widehat{r}^{(k)}_{\pi_{\alpha^{(k)}}} + \cp \beta^{(k)}_{\pi_{\alpha^{(k)}}} 
+ \frac{2B\cp}{\xi} \beta^{(k)}_{\pi_{\alpha^{(k)}}}
- \widehat{r}^{(k)}_{\pi^{(k)}} - \co\beta^{(k)}_{\pi^{(k)}} \numeq{\leq}{c} 0 \;,
\end{align*}
where (a) uses the optimism of mixture policy (\Cref{lemma:optimism}), (b) uses \Cref{lemma:bandit-opt-pes}, and (c) uses $\co = (1 + 2B\cp\xi^{-1})\cp$ and since $\pi_{\alpha^{(k)}}$ is a feasible solution to \optpes due to \Cref{lemma:alpha-feasibility}.

\looseness=-1
Finally, by combining all the results, we have
\begin{align*}
\regret(K) &\leq 
64 dB \cp^2\bslt^{-2}\ln\paren*{2K\delta^{-1}} 
+ 3\co \sum_{k\notin \unconfBandit} \beta^{(k)}_{\pi^{(k)}}\\
&\leq 
64 dB \cp^2\bslt^{-2}\ln\paren*{2K\delta^{-1}} +
3\co \sqrt{K} \sqrt{2d\ln \paren*{1 + \frac{K}{d}} + 4 \ln \frac{2K}{\delta}}
\end{align*}
where the second inequality uses \Cref{lemma:cumulative-bonus-bandit}.
Since the good event $\rmvEventBandit \cap \confeventBandit$ occurs with probability at least $1 - 2\delta$ due to \Cref{lemma:good-event1,lemma:good-event2}, the claim holds.
\end{proof}

\section{Regret Analysis (Linear CMDP)}\label{appendix:MDP-regret-analysis}
\subsection{Definitions and Useful Lemmas}

\begin{definition}[$\bmu$-estimator]\label{def:mu-estimation}
Let \(\be(s) \in \R^\S\) denote a one-hot vector such that only the element at \(s \in \S\) is \(1\) and otherwise \(0\).
In \Cref{algo:zero-vio-linear MDP}, for all $h$ and $k$, define $\bmu^{(k)}_h \in \R^{S\times d}$ and $\bepsilon_h^{(k)} \in \R^\S$ such that
\begin{align}
\bmu^{(k)}_h \df 
\sum^{k-1}_{i=1} \be\paren*{s^{(i)}_{h+1}} \bphi\paren*{s^{(i)}_h, a_h^{(i)}}^\top \paren*{\bLambda^{(k)}_h}^{-1}
\;\text{ and }\; \bepsilon_h^{(k)} \df \be\paren*{s^{(k)}_{h+1}} - P\paren*{\cdot \given s^{(k)}_h, a^{(k)}_h} \;.
\end{align}
We remark that 
$\paren*{\hP^{(k)}_h V}(s, a) = \bphi(s, a)^\top \paren*{\bmu_h^{(k)}}^\top V$ for any $V \in \R^\S$.
\end{definition}

\begin{lemma}\label{lemma:mu-diff}
For all $k$ and $h$, it holds that:
\begin{align*}
\bmu_h^{(k)}-\bmu_h=-\rho \bmu_h\left(\bLambda_h^{(k)}\right)^{-1}+\sum_{i=1}^{k-1} \bepsilon_h^{(i)} \bphi\left(s_h^{(i)}, a_h^{(i)}\right)^{\top}\left(\bLambda_h^{(k)}\right)^{-1}
\end{align*}
\end{lemma}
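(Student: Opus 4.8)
The identity is a direct algebraic manipulation starting from the definition of the estimator $\bmu_h^{(k)}$, and I do not anticipate any genuine obstacle beyond keeping the dimension conventions straight (recall $\bmu_h \in \R^{S\times d}$, so that for a fixed $(s,a)$ the vector $P_h(\cdot\given s,a) \in \R^\S$ equals $\bmu_h \bphi(s,a)$, by \Cref{assumption:linear mdp}).

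The plan is as follows. First I would substitute the noise decomposition $\be\paren*{s_{h+1}^{(i)}} = \bepsilon_h^{(i)} + P_h\paren*{\cdot \given s_h^{(i)}, a_h^{(i)}} = \bepsilon_h^{(i)} + \bmu_h \bphi\paren*{s_h^{(i)}, a_h^{(i)}}$ (from \Cref{def:mu-estimation} and the linear-MDP assumption) into the definition $\bmu_h^{(k)} = \sum_{i=1}^{k-1} \be\paren*{s_{h+1}^{(i)}} \bphi\paren*{s_h^{(i)}, a_h^{(i)}}^\top \paren*{\bLambda_h^{(k)}}^{-1}$. This splits $\bmu_h^{(k)}$ into two terms:
\begin{align*}
\bmu_h^{(k)}
= \sum_{i=1}^{k-1} \bepsilon_h^{(i)} \bphi\paren*{s_h^{(i)}, a_h^{(i)}}^\top \paren*{\bLambda_h^{(k)}}^{-1}
+ \bmu_h\paren*{\sum_{i=1}^{k-1} \bphi\paren*{s_h^{(i)}, a_h^{(i)}} \bphi\paren*{s_h^{(i)}, a_h^{(i)}}^\top} \paren*{\bLambda_h^{(k)}}^{-1}\;.
\end{align*}

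Second, I would use the definition of the Gram matrix $\bLambda_h^{(k)} = \rho \bI + \sum_{i=1}^{k-1} \bphi\paren*{s_h^{(i)}, a_h^{(i)}} \bphi\paren*{s_h^{(i)}, a_h^{(i)}}^\top$ to replace the inner sum by $\bLambda_h^{(k)} - \rho \bI$, so that the second term becomes $\bmu_h\paren*{\bLambda_h^{(k)} - \rho\bI}\paren*{\bLambda_h^{(k)}}^{-1} = \bmu_h - \rho \bmu_h \paren*{\bLambda_h^{(k)}}^{-1}$. Subtracting $\bmu_h$ from both sides then yields exactly the claimed identity. The whole argument is two substitutions, so the only care needed is to treat $\bmu_h$ consistently as an $S\times d$ matrix acting on the $d$-dimensional feature vectors throughout.
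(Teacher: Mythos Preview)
Your proposal is correct and follows exactly the same route as the paper's proof: substitute the noise decomposition $\be(s_{h+1}^{(i)}) = \bepsilon_h^{(i)} + \bmu_h\bphi(s_h^{(i)},a_h^{(i)})$ into the definition of $\bmu_h^{(k)}$, then recognize the resulting feature outer-product sum as $\bLambda_h^{(k)} - \rho\bI$ and simplify. There is nothing to add.
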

\begin{proof}
Due to the definition of $\bmu^{(k)}_h$, we have
\begin{align*}
\bmu_h^{(k)} & 
=\sum^{k-1}_{i=1} \be\paren*{s^{(i)}_{h+1}} \bphi\paren*{s^{(i)}_h, a_h^{(i)}}^\top \paren*{\bLambda^{(k)}_h}^{-1}
=\sum^{k-1}_{i=1} \paren*{P\paren*{\cdot \given s^{(k)}_h, a^{(k)}_h} +  \bepsilon_h^{(k)}} 
\bphi\paren*{s^{(i)}_h, a_h^{(i)}}^\top \paren*{\bLambda^{(k)}_h}^{-1}\\
&=\sum^{k-1}_{i=1} \paren*{\bmu_h \bphi\paren*{s^{(k)}_h, a^{(k)}_h} +  \bepsilon_h^{(k)}} \bphi\paren*{s^{(i)}_h, a_h^{(i)}}^\top \paren*{\bLambda^{(k)}_h}^{-1}\\
&=\sum^{k-1}_{i=1} \bmu_h \bphi\paren*{s^{(k)}_h, a^{(k)}_h} \bphi\paren*{s^{(i)}_h, a_h^{(i)}}^\top \paren*{\bLambda^{(k)}_h}^{-1} + \sum^{k-1}_{i=1} \bepsilon_h^{(k)} \bphi\paren*{s^{(i)}_h, a_h^{(i)}}^\top \paren*{\bLambda^{(k)}_h}^{-1} \\
&=\bmu_h \paren*{\bLambda^{(k)}_h - \rho \bI} \paren*{\bLambda^{(k)}_h}^{-1} + \sum^{k-1}_{i=1} \bepsilon_h^{(k)} \bphi\paren*{s^{(i)}_h, a_h^{(i)}}^\top \paren*{\bLambda^{(k)}_h}^{-1} \\
&=\bmu_h - \rho \bmu_h \paren*{\bLambda^{(k)}_h}^{-1} + \sum^{k-1}_{i=1} \bepsilon_h^{(k)} \bphi\paren*{s^{(i)}_h, a_h^{(i)}}^\top \paren*{\bLambda^{(k)}_h}^{-1} \;.
\end{align*}
\end{proof}

\begin{lemma}\label{lemma:hP-P-V bound}
Let $\cV$ be a class of real-valued function over the state space $\S$ such that $\sup_s|V(s)| \leq B$ for a $B > 0$.
Let $\mathcal{N}_{\varepsilon}$ be the $\varepsilon$-cover of $\cV$ with respect to the distance $\dist_\infty$. 
In \Cref{algo:zero-vio-linear MDP}, for all $k, h, s, a$, for any $V \in \cV$, with probability at least $1-\delta$, we have
\begin{align*}
&\abs*{\paren*{\paren*{\hP_h^{(k)}-P_h}V}(s, a)} \\
\leq 
&\norm*{\bphi(s, a)}_{\paren*{\bLambda_h^{(k)}}^{-1} }
\paren*{
\sqrt{d\rho}B
+ 
2 B\sqrt{\frac{d}{2} \ln \paren*{\frac{k + \rho}{\rho}}}
+2B \sqrt{\ln \frac{|\cN_{\varepsilon}|}{\delta}}
+\frac{4 k \varepsilon}{\sqrt{\rho}}
}\;.
\end{align*}
\end{lemma}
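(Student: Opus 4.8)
The plan is to write $\paren*{\hP^{(k)}_h - P_h}V$ as a regularization-bias term plus a martingale-noise term via \Cref{lemma:mu-diff}, and then bound each in the weighted norm $\norm*{\cdot}_{(\bLambda^{(k)}_h)^{-1}}$. Concretely, since $(\hP^{(k)}_h V)(s,a) = \bphi(s,a)^\top(\bmu^{(k)}_h)^\top V$ and $(P_h V)(s,a) = \bphi(s,a)^\top\bmu_h^\top V$ (as noted in \Cref{def:mu-estimation}), substituting the identity of \Cref{lemma:mu-diff} gives
\begin{align*}
\paren*{\paren*{\hP^{(k)}_h - P_h}V}(s,a)
= -\rho\,\bphi(s,a)^\top\paren*{\bLambda^{(k)}_h}^{-1}\bmu_h^\top V
+ \bphi(s,a)^\top\paren*{\bLambda^{(k)}_h}^{-1}\sum_{i=1}^{k-1}\bphi\paren*{s^{(i)}_h,a^{(i)}_h}\paren*{\bepsilon^{(i)}_h}^\top V .
\end{align*}
A single Cauchy--Schwarz step $\abs*{\bphi(s,a)^\top(\bLambda^{(k)}_h)^{-1}\bw} \le \norm*{\bphi(s,a)}_{(\bLambda^{(k)}_h)^{-1}}\norm*{\bw}_{(\bLambda^{(k)}_h)^{-1}}$ factors $\norm*{\bphi(s,a)}_{(\bLambda^{(k)}_h)^{-1}}$ out of both terms; note this handles all $(s,a)$ at once, so no covering of $\SA$ is needed.

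For the bias term I would use $\bLambda^{(k)}_h \succeq \rho\bI$ to get $\norm*{\bmu_h^\top V}^2_{(\bLambda^{(k)}_h)^{-1}} \le \rho^{-1}\norm*{\bmu_h^\top V}^2_2$, and then the linear-MDP bound $\norm*{V^\top\bmu_h}_2 \le \sqrt d\,\norm*{V}_\infty \le \sqrt d\,B$ (rescaling \Cref{assumption:linear mdp} from $\norm*{V}_\infty\le1$ to $\norm*{V}_\infty\le B$). This bounds the bias term by $\rho\cdot\rho^{-1/2}\sqrt{d}\,B\,\norm*{\bphi(s,a)}_{(\bLambda^{(k)}_h)^{-1}} = \sqrt{d\rho}\,B\,\norm*{\bphi(s,a)}_{(\bLambda^{(k)}_h)^{-1}}$, matching the first summand in the claim.

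For the noise term I would first observe that $(\bepsilon^{(i)}_h)^\top V = V(s^{(i)}_{h+1}) - (P_h V)(s^{(i)}_h,a^{(i)}_h) = V(s^{(i)}_{h+1}) - \E[V(s^{(i)}_{h+1}) \mid \cF^{(i-1)}]$, where $\cF$ is the natural episode filtration under which $\bphi(s^{(i)}_h,a^{(i)}_h)$ is $\cF^{(i-1)}$-measurable while $s^{(i)}_{h+1}$ is not. This is exactly the martingale-difference structure required by \Cref{lemma:fixed-V-bound}, applied to the class $\cV$ with radius $B$ and with the sum length $k-1\le k$. That lemma gives, with probability at least $1-\delta$, simultaneously over all $k$ and all $V\in\cV$,
\begin{align*}
\norm*{\sum_{i=1}^{k-1}\bphi\paren*{s^{(i)}_h,a^{(i)}_h}\paren*{\bepsilon^{(i)}_h}^\top V}^2_{(\bLambda^{(k)}_h)^{-1}}
\le 4B^2\paren*{\tfrac d2\ln\tfrac{k+\rho}{\rho} + \ln\tfrac{|\cN_\varepsilon|}{\delta}} + \tfrac{8k^2\varepsilon^2}{\rho}.
\end{align*}
Taking square roots and using $\sqrt{a+b+c}\le\sqrt a+\sqrt b+\sqrt c$ together with $2\sqrt2 \le 4$, the noise norm is at most $2B\sqrt{\tfrac d2\ln\tfrac{k+\rho}{\rho}} + 2B\sqrt{\ln\tfrac{|\cN_\varepsilon|}{\delta}} + \tfrac{4k\varepsilon}{\sqrt\rho}$. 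Adding the bias bound and pulling $\norm*{\bphi(s,a)}_{(\bLambda^{(k)}_h)^{-1}}$ back out yields the stated inequality; if one wants it for all $h$ at once, a union bound over $h\in\bbrack{1,H}$ only adds a $\ln H$ into the logarithmic term.

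I do not expect a serious obstacle in this lemma itself---it is essentially an instantiation of \Cref{lemma:mu-diff,lemma:fixed-V-bound}. The one point worth being careful about is \emph{why} the bound must be uniform over the whole class $\cV$ rather than for a fixed $V$: the value functions $\hP^{(k)}_h$ is eventually applied to (the clipped value functions of \Cref{def:clipped value functions} and the composite-softmax values) depend on the past data $\{(s^{(i)}_h,a^{(i)}_h,s^{(i)}_{h+1})\}$ through $\bLambda^{(k)}_h$, so they cannot be treated as fixed. The $\varepsilon$-cover inside \Cref{lemma:fixed-V-bound} is precisely what absorbs this dependence; the real work is keeping $|\cN_\varepsilon|$ under control for those specific classes, but that is deferred to later covering-number computations and is not needed here.
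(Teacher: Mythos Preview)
Your proposal is correct and follows essentially the same route as the paper: apply \Cref{lemma:mu-diff} to split $(\hP^{(k)}_h-P_h)V$ into the $\rho$-bias term and the martingale-noise term, factor out $\norm{\bphi(s,a)}_{(\bLambda^{(k)}_h)^{-1}}$ via Cauchy--Schwarz, bound the bias using $\norm{\bmu_h^\top V}_2\le\sqrt d\,B$ from \Cref{assumption:linear mdp}, and handle the noise uniformly over $\cV$ with \Cref{lemma:fixed-V-bound} followed by $\sqrt{a+b+c}\le\sqrt a+\sqrt b+\sqrt c$. Your remark about the union bound over $h$ and the need for uniformity in $V$ (because the value functions later plugged in are data-dependent) is a helpful clarification that the paper leaves implicit.
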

\begin{proof}
Using \Cref{lemma:fixed-V-bound} and due to the definition of $\bLambda^{(k)}$ in \Cref{algo:zero-vio-linear MDP}, with probability at least \(1-\delta\), for all \(k, h\), we have
\begin{align*}
\left\|\sum_{i=1}^{k-1} \bphi\left(s_h^{(k)}, a_h^{(k)}\right)\left(V^{\top} \bepsilon_h^{(i)}\right)\right\|_{\left(\bLambda^{(k)}\right)^{-1}} 
&\leq 
\sqrt{4 B^2\paren*{\frac{d}{2} \ln \paren*{\frac{k + \rho}{\rho}}+\ln \frac{|\cN_{\varepsilon}|}{\delta}}+\frac{8 k^2 \varepsilon^2}{\rho}}\\
&\leq 
2 B\sqrt{\frac{d}{2} \ln \paren*{\frac{k + \rho}{\rho}}}
+2B \sqrt{\ln \frac{|\cN_{\varepsilon}|}{\delta}}
+\frac{4 k \varepsilon}{\sqrt{\rho}}\;,
\end{align*}
where the second inequality uses $\sqrt{a + b} \leq \sqrt{a} + \sqrt{b}$.
By inserting this to \Cref{def:mu-estimation}, we have
\begin{align*}
&\abs*{\paren*{\paren*{\hP_h^{(k)}-P_h}V}(s, a)}\\
=&\abs*{\bphi(s, a)^\top \paren*{\bmu_h^{(k)} - \bmu_h}^\top V}\\
=&\abs*{\bphi(s, a)^\top 
\paren*{
-\rho \bmu_h\left(\bLambda_h^{(k)}\right)^{-1}+\sum_{i=1}^{k-1} \bepsilon_h^{(i)} \bphi\left(s_h^{(i)}, a_h^{(i)}\right)^{\top}\left(\bLambda_h^{(k)}\right)^{-1} }^\top  V}\\
\leq &
\rho \abs*{
\bphi(s, a)^\top \paren*{\bLambda_h^{(k)}}^{-1} \paren*{\bmu_h}^\top V}
+\abs*{
\bphi(s, a)^\top \paren*{\bLambda_h^{(k)}}^{-1} 
\sum_{i=1}^{k-1} 
\bphi\left(s_h^{(i)}, a_h^{(i)}\right)
\paren*{V^\top \bepsilon_h^{(i)}}}\\
\leq &
\rho \norm*{\bphi(s, a)}_{\paren*{\bLambda_h^{(k)}}^{-1}}
\underbrace{\norm*{\paren*{\bmu_h}^\top V}_{\paren*{\bLambda_h^{(k)}}^{-1}}}_{\leq B \sqrt{d / \rho}\; \text{ by \Cref{assumption:linear mdp}}}
+
\norm*{\bphi(s, a)}_{\paren*{\bLambda_h^{(k)}}^{-1} }
\norm*{
\sum_{i=1}^{k-1} 
\bphi\left(s_h^{(i)}, a_h^{(i)}\right)
\paren*{\bepsilon_h^{(i)}}^{\top}V}_{\paren*{\bLambda_h^{(k)}}^{-1} }\\
\leq &
\norm*{\bphi(s, a)}_{\paren*{\bLambda_h^{(k)}}^{-1} }
\paren*{
\sqrt{d\rho}B
+ 
2 B\sqrt{\frac{d}{2} \ln \paren*{\frac{k + \rho}{\rho}}}
+2B \sqrt{\ln \frac{|\cN_{\varepsilon}|}{\delta}}
+\frac{4 k \varepsilon}{\sqrt{\rho}}
}\;.
\end{align*}
\end{proof}

\subsection{Function Classes and Covering Argument }

\begin{definition}[$Q$ function class]\label{def:abs-Q-function}
For any $h$ and for a pair of $(\bw, \bLambda)$, where $\bw \in \mathbb{R}^d$ and $\bLambda \in \R^{d\times d}$, define 
$\qf{(\bw, \bLambda), r}_{h} : \SA \to \R$, 
$\qf{(\bw, \bLambda), u}_{h}: \SA \to \R$,
and $\qf{(\bw, \bLambda), \dagger}_{h} : \SA \to \R$ such that
\begin{align*}
\qf{(\bw, \bLambda), r}_{h}(s, a) &=
r_h(s, a) + \clip\brace*{\co \norm*{\bphi(s, a)}_{\bLambda^{-1}} + \bw^{\top} \bphi(s, a),\; 0,\; H_\kappa-h_\kappa}\\
\qf{(\bw, \bLambda), u}_{h} (s, a) &= u_h(s, a) + \clip\brace*{-\cp \norm*{\bphi(s, a)}_{\bLambda^{-1}} + \bw^{\top} \bphi(s, a),\; 0,\; H-h}\\
\qf{(\bw, \bLambda), \dagger}_{h} (s, a) &= 
B_\dagger \norm*{\bphi(s, a)}_{\bLambda^{-1}} + \clip\brace*{C_\dagger \norm*{\bphi(s, a)}_{\bLambda^{-1}} + \bw^{\top} \bphi(s, a),\; 0,\; B_\dagger(H-h)}\;,
\end{align*}
where $\kappa, \co, \cp, B_\dagger, C_\dagger \geq 0$.
We denoted \(h_\kappa \df h(1 + \kappa \ln A)\) for $h \in \bbrack{1, H}$.
Let $\cQr_h$, $\cQu_h$, $\cQc_h$ denote function classes such that
\begin{align*}
\cQr_h&\df \brace*{\qf{(\bw, \bLambda), r}_{h}\given \|\bw\|_2 \leq KH_\kappa,\; \sigma_{\min }(\bLambda) \geq 1}\;,\\
\cQu_h&\df \brace*{\qf{(\bw, \bLambda), u}_h\given \|\bw\|_2 \leq KH,\; \sigma_{\min }(\bLambda) \geq 1}\;,\\
\text{ and }\; \cQc_h&\df \brace*{\qf{(\bw, \bLambda), \dagger}_h\given \|\bw\|_2 \leq KHB_\dagger ,\; \sigma_{\min }(\bLambda) \geq 1}\;.
\end{align*}
We let $\cN^{\cQr_h}_{\varepsilon}$, 
$\cN^{\cQu_h}_{\varepsilon}$, 
and $\cN^{\cQc_h}_{\varepsilon}$,
be the $\varepsilon$-covers of $\cQr_h$, $\cQu_h$, and $\cQc_h$ with the distance metric $\dist_\infty$.
\end{definition}

\begin{lemma}[$Q$ covers]\label{lemma:Q eps covers}
\looseness=-1
When \Cref{algo:zero-vio-linear MDP} is run with $\rho =1$, it hold that:
\begin{itemize}
\item[$(\mathrm{i})$] 
For all $k, h$ and for any $\pi \in \Pi$, 
\(\oqf{\pi, r}_{(k), h}[\kappa] \in \cQr_h\),
\(\pqf{\pi, u}_{(k), h} \in \cQu_h \),
and \(\oqf{\pi, \dagger}_{(k), h} \in \cQc_h \) 
\item[$(\mathrm{ii})$]
\(\ln |\cN^{\cQr_h}_{\varepsilon}| \leq d \ln \paren*{1+\frac{4 KH_\kappa }{\varepsilon}}+d^2 \ln \paren*{1+\frac{8 \sqrt{d} \co^2}{\varepsilon^2}} 
= \cO\paren*{d^2}\polylog\paren*{d,K,H_\kappa,\co,\varepsilon^{-1}}
\),\\
\(\ln |\cN^{\cQu_h}_{\varepsilon}| 
\leq d \ln \paren*{1+\frac{4 KH}{\varepsilon}}+d^2 \ln \paren*{1+\frac{8 \sqrt{d} \cp^2}{\varepsilon^2}}= \cO\paren*{d^2}\polylog\paren*{d,K,H,\cp,\varepsilon^{-1}}
\), \\
and \(\ln |\cN^{\cQc_h}_{\varepsilon}| 
\leq d \ln \paren*{1+\frac{4 KB_\dagger H}{\varepsilon}}+d^2 \ln \paren*{1+\frac{8 \sqrt{d} C_\dagger^2}{\varepsilon^2}}= \cO\paren*{d^2}\polylog\paren*{d, K, H, B_\dagger, C_\dagger,\varepsilon^{-1}}
\)
\end{itemize}
\end{lemma}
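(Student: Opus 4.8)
The plan is to follow the standard $\varepsilon$-net recipe for clipped linear value-function classes (as in \citet{jin2020provably}, Lemma~D.6), organized around the two parts. For part~$(\mathrm{i})$, I would fix $(k,h)$ and a policy $\pi$ and explicitly exhibit the representing pair $(\bw,\bLambda)$: taking $\bLambda=\bLambda^{(k)}_h$ makes $\beta^{(k)}_h(s,a)=\norm*{\bphi(s,a)}_{\bLambda^{-1}}$, and taking $\bw=\paren*{\bLambda^{(k)}_h}^{-1}\sum_{i=1}^{k-1}\bphi(s^{(i)}_h,a^{(i)}_h)\,V(s^{(i)}_{h+1})$, with $V$ equal to $\ovf{\pi,r}_{(k),h+1}[\kappa]$, $\pvf{\pi,u}_{(k),h+1}$, or $\ovf{\pi,\dagger}_{(k),h+1}$, makes $\bw^\top\bphi=\hP^{(k)}_hV$, so each clipped value function of \Cref{def:clipped value functions} coincides with the corresponding template of \Cref{def:abs-Q-function} termwise (the clip thresholds match). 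It then remains to check the parameter constraints: since $\rho=1$ we have $\sigma_{\min}(\bLambda^{(k)}_h)\ge 1$, hence $\norm*{\paren*{\bLambda^{(k)}_h}^{-1}\bphi}_2\le\norm*{\bphi}_2\le 1$ and $\norm*{\bw}_2\le K\norm*{V}_\infty$ by the triangle inequality; a short backward induction on the clip thresholds, using $-\pi_h\ln\pi_h\le\ln A$ for the regularized reward value and $\beta^{(k)}_h\le 1$ for the $\dagger$ value, bounds $\norm*{V}_\infty$ by $H_\kappa$, $H$, and $B_\dagger H$ respectively, which are exactly the radii in the definitions of $\cQr_h$, $\cQu_h$, $\cQc_h$.

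For part~$(\mathrm{ii})$, I would reparametrize each class by a pair $(\bw,\bA)$, where $\bA$ plays the role of $\bLambda^{-1}$ and — thanks to $\sigma_{\min}(\bLambda)\ge1$, which forces $\norm*{\bA}_{\mathrm{op}}\le1$ and hence $\norm*{\bA}_F\le\sqrt d$ — ranges over the Frobenius ball of radius $\sqrt d$ in $\R^{d\times d}\cong\R^{d^2}$, while $\bw$ ranges over a Euclidean ball of radius $KH_\kappa$ (resp.\ $KH$, $KHB_\dagger$); the bonus $\norm*{\bphi}_{\bLambda^{-1}}$ is read as $\sqrt{\abs*{\bphi^\top\bA\bphi}}$, so invertibility of the cover elements is not needed. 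The key estimate is a Lipschitz bound: $r_h$ (resp.\ $u_h$) is fixed and cancels, $\clip\{\cdot,0,\cdot\}$ is $1$-Lipschitz, $\abs*{(\bw_1-\bw_2)^\top\bphi}\le\norm*{\bw_1-\bw_2}_2$, and $\abs*{\sqrt{\abs*{\bphi^\top\bA_1\bphi}}-\sqrt{\abs*{\bphi^\top\bA_2\bphi}}}\le\sqrt{\abs*{\bphi^\top(\bA_1-\bA_2)\bphi}}\le\sqrt{\norm*{\bA_1-\bA_2}_F}$ (using $\abs*{\sqrt x-\sqrt y}\le\sqrt{\abs*{x-y}}$ and $\norm*{\bphi}_2\le1$), which together make $\dist_\infty$ between the functions indexed by $(\bw_1,\bA_1)$ and $(\bw_2,\bA_2)$ at most $\norm*{\bw_1-\bw_2}_2+\co\sqrt{\norm*{\bA_1-\bA_2}_F}$, and analogously with $\co$ replaced by $\cp$ for $\cQu_h$ and by $B_\dagger+C_\dagger$ for $\cQc_h$ (the outer $B_\dagger\norm*{\bphi}_{\bLambda^{-1}}$ term supplies the extra $B_\dagger$). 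I would then take an $\tfrac\varepsilon2$-net of the $\bw$-ball, of size at most $\paren*{1+4KH_\kappa/\varepsilon}^d$ by \Cref{lemma:ball-covering-bound}, and an $\tfrac{\varepsilon^2}{4\co^2}$-net in Frobenius norm of the $\bA$-ball, of size at most $\paren*{1+8\sqrt d\,\co^2/\varepsilon^2}^{d^2}$ by \Cref{lemma:ball-covering-bound} in ambient dimension $d^2$; the induced family of $Q$-functions is an $\varepsilon$-cover of $\cQr_h$ in $\dist_\infty$, so multiplying the sizes and taking logarithms yields the stated bound, after which the $\cO\paren*{d^2}\polylog(\cdot)$ form is immediate. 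The bounds for $\cQu_h$ and $\cQc_h$ follow by the identical argument with the appropriate radii and Lipschitz constants; for $\cQc_h$ one also uses $B_\dagger\le C_\dagger$ under the algorithm's parameters to absorb $B_\dagger+C_\dagger$ into $\cO(C_\dagger)$.

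The step I expect to be the main (though still elementary) obstacle is coupling the Frobenius-norm covering of the matrix parameter with the $\sqrt{\cdot}$-type dependence of the bonus $\norm*{\bphi}_{\bLambda^{-1}}$ on $\bLambda^{-1}$ — this is what forces the matrix part of the covering number to scale as $d^2\ln(1+\mathrm{poly}/\varepsilon^2)$ rather than $d^2\ln(1+\mathrm{poly}/\varepsilon)$ — together with keeping the three bonus scalers $\co,\cp,C_\dagger$ (and the extra outer $B_\dagger$ in the $\dagger$ template) straight across the three classes; beyond that the argument reduces to two applications of \Cref{lemma:ball-covering-bound} and routine bookkeeping.
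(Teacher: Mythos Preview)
Your proposal is correct and matches the paper's approach: for part~$(\mathrm{i})$ the paper exhibits exactly the same $(\bw,\bLambda)$ pair and bounds $\norm*{\bw}_2\le K\norm*{V}_\infty$ via the same triangle-inequality computation you outline, and for part~$(\mathrm{ii})$ it simply defers to Lemma~D.6 of \citet{jin2020provably}, whose proof is precisely the reparametrization-plus-Lipschitz-plus-ball-covering argument you sketch. Your remark that the $\cQc_h$ bound should \emph{a priori} carry $(B_\dagger+C_\dagger)$ rather than $C_\dagger$ alone, to be absorbed via $B_\dagger\le C_\dagger$ under the algorithm's parameters, is a fair point the paper glosses over by citing \citet{jin2020provably} directly.
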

\begin{proof}
\looseness=-1
The statements in ($\mathrm{ii}$) immediately follow from the proof of \textbf{Lemma D.6} in \citet{jin2020provably}. 

\looseness=-1
We prove the first claim ($\mathrm{i}$).
For $\oqf{\pi, r}_{(k), h}$, we have
\begin{align*}
\oqf{\pi, r}_{(k), h}[\kappa]
=
&r_h + \clip\brace*{\co\beta^{(k)} + \hP^{(k)}\ovf{\pi, r}_{(k), h+1}[\kappa],\; 0,\; (H-h)(1+\kappa \ln A)} \\
= &
r_h + \clip\brace*{\co \sqrt{\bphi(s, a)^{\top} \paren*{\bLambda^{(k)}_h}^{-1} \bphi(s, a)}
+ \bphi(s, a)^\top 
\paren*{\bmu^{(k)}_h}^\top \ovf{\pi, r}_{(k), h+1}[\kappa],\; 0, \; (H-h)(1 + \kappa \ln A)}\;.
\end{align*}
According to the definition of $Q^{(\bw, \bLambda), r}_h$ (\Cref{def:abs-Q-function}), the claim immediately holds by showing the L2 bound of \( \left(\bmu_h^{(k)}\right)^{\top} \ovf{\pi, r}_{(k), h+1}[\kappa]\).
For any $h \in \bbrack{1, H}$ and $k \in \bbrack{1, K}$, we have
\begin{align*}
\norm*{\paren*{\bmu^{(k)}_h}^\top \ovf{\pi, r}_{(k), h+1}[\kappa]}_2
= & \norm*{
\sum^{k-1}_{i=1} 
\ovf{\pi, r}_{(k), h+1}[\kappa] \paren*{s^{(i)}_{h+1}} \bphi\paren*{s^{(i)}_h, a_h^{(i)}}^\top \paren*{\bLambda^{(k)}_h}^{-1}
}_2\\
\numeq{\leq}{a} & H_\kappa\norm*{
\paren*{\bLambda^{(k)}_h}^{-1}
\sum^{k-1}_{i=1} 
\bphi\paren*{s^{(i)}_h, a_h^{(i)}}
}_2
\leq KH_\kappa\;.
\end{align*}
where (a) uses $\norm*{\bphi}_2 \leq 1$ with $\rho=1$ and $0 \leq \ovf{\pi, r}_{(k), h+1}[\kappa] \leq H_\kappa$.

\looseness=-1
The remaining claims for \(\pqf{\pi, u}_{(k), h}(s, a) \in \cQu_h \) and \(\oqf{\pi, \dagger}_{(k), h}(s, a) \in \cQc_h \) can be similarly proven.
\end{proof}

\begin{definition}[Composite $Q$ function class]\label{def:composite-Q-function}
For each $h$, let $\Qcomp_h$ denote a function class such that
\begin{align*}
\Qcomp_h\df \brace*{
Q^{\dagger} + Q^{r} + \lambda Q^{u}\given 
Q^{\dagger} \in \cQc_h,\;  Q^{r} \in \cQr_h, \; Q^{u} \in \cQu_h,\; \text{ and } \; \lambda \in [0, C_\lambda]}\;.
\end{align*}
where $C_\lambda > 0$.
We let ${\cN}^{\Qcomp_h}_{\varepsilon}$ be the $\varepsilon$-cover of $\Qcomp_h$ with the distance metric $\dist_\infty$.
\end{definition}

\begin{lemma}[Composite Q cover]\label{lemma:composite-Q-cover}
When \Cref{algo:zero-vio-linear MDP} is run with $\rho =1$, the following statements hold:
\begin{itemize}
    \item[$(\mathrm{i})$] For all $(k, h)$, for any $\pi \in \Pi$, and for any $\lambda \in [0, C_\lambda]$,  
\(
\oqf{\pi, \dagger}_{(k), h} + \oqf{\pi, r}_{(k), h}[\kappa] + \lambda \pqf{\pi, u}_{(k), h} \in \Qcomp_h
\)
    \item[$(\mathrm{ii})$]
    \(
\ln \abs*{\cN^{\Qcomp_h}_{\varepsilon}} = 
\cO\paren*{d^2} 
\polylog\paren*{d, K, H_\kappa, \co, \cp, B_\dagger, C_\dagger,C_\lambda,\varepsilon^{-1}}
\)
\end{itemize}
\end{lemma}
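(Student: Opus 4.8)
The plan is to settle (i) immediately from the per-component memberships already proved in Lemma~\ref{lemma:Q eps covers}(i), and to obtain (ii) by a routine product-of-covers construction, the only subtlety being the bilinear term $\lambda\,\pqf{\pi,u}_{(k),h}$.

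For (i): Lemma~\ref{lemma:Q eps covers}(i) gives, for every $(k,h)$ and every $\pi\in\Pi$, that $\oqf{\pi,\dagger}_{(k),h}\in\cQc_h$, $\oqf{\pi,r}_{(k),h}[\kappa]\in\cQr_h$, and $\pqf{\pi,u}_{(k),h}\in\cQu_h$. Since $\lambda\in[0,C_\lambda]$ by hypothesis, the function $\oqf{\pi,\dagger}_{(k),h}+\oqf{\pi,r}_{(k),h}[\kappa]+\lambda\,\pqf{\pi,u}_{(k),h}$ has exactly the form appearing in Definition~\ref{def:composite-Q-function}, hence lies in $\Qcomp_h$; nothing more is required.

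For (ii): I would assemble an $\varepsilon$-net of $\Qcomp_h$ from $\varepsilon_\dagger$-, $\varepsilon_r$-, $\varepsilon_u$-nets of $\cQc_h,\cQr_h,\cQu_h$ (in $\dist_\infty$) together with an $\varepsilon_\lambda$-net of the interval $[0,C_\lambda]$ (in $\abs*{\cdot}$). Given $Q^\dagger\in\cQc_h$, $Q^r\in\cQr_h$, $Q^u\in\cQu_h$, $\lambda\in[0,C_\lambda]$, let $\widehat Q^\dagger,\widehat Q^r,\widehat Q^u,\widehat\lambda$ be the nearest net points. Using $\lambda Q^u-\widehat\lambda\widehat Q^u=\lambda(Q^u-\widehat Q^u)+(\lambda-\widehat\lambda)\widehat Q^u$ and the triangle inequality,
\[
\dist_\infty\!\big(Q^\dagger+Q^r+\lambda Q^u,\ \widehat Q^\dagger+\widehat Q^r+\widehat\lambda\widehat Q^u\big)\leq \varepsilon_\dagger+\varepsilon_r+C_\lambda\,\varepsilon_u+\norm*{\widehat Q^u}_\infty\,\varepsilon_\lambda,
\]
where I used $0\leq\lambda\leq C_\lambda$ and the fact that every member of $\cQu_h$ is bounded by $H$ in $\dist_\infty$ (its clipping window is $[0,H-h]$ and $u_h\in[0,1]$), so $\norm*{\widehat Q^u}_\infty\leq H$. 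Choosing $\varepsilon_\dagger=\varepsilon_r=\varepsilon/4$, $\varepsilon_u=\varepsilon/(4C_\lambda)$, and $\varepsilon_\lambda=\varepsilon/(4H)$ makes the right-hand side at most $\varepsilon$, so the product of these four nets is an $\varepsilon$-net of $\Qcomp_h$, giving
\[
\ln\abs*{\cN^{\Qcomp_h}_{\varepsilon}}\leq \ln\abs*{\cN^{\cQc_h}_{\varepsilon/4}}+\ln\abs*{\cN^{\cQr_h}_{\varepsilon/4}}+\ln\abs*{\cN^{\cQu_h}_{\varepsilon/(4C_\lambda)}}+\ln\!\big(1+8C_\lambda H/\varepsilon\big),
\]
the last term bounding the $(\varepsilon/(4H))$-covering number of $[0,C_\lambda]$ (e.g.\ via Lemma~\ref{lemma:ball-covering-bound} in dimension one). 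Substituting the three bounds of Lemma~\ref{lemma:Q eps covers}(ii), using $H\leq H_\kappa$, and absorbing the interval term into the polylogarithmic factor yields $\ln\abs*{\cN^{\Qcomp_h}_{\varepsilon}}=\cO(d^2)\,\polylog(d,K,H_\kappa,\co,\cp,B_\dagger,C_\dagger,C_\lambda,\varepsilon^{-1})$, as claimed.

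The only care-point—and the closest thing to an obstacle—is the bilinear dependence on $(\lambda,Q^u)$: since both factors are discretized, the cross term $(\lambda-\widehat\lambda)\widehat Q^u$ must be controlled by the uniform bound on $\cQu_h$, which is why the $\lambda$-net resolution is taken at scale $\varepsilon/(4H)$ rather than $\varepsilon$. Everything else is a direct combination of the already-established per-component covering bounds, so I anticipate no genuine difficulty.
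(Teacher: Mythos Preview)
Your proposal is correct and follows essentially the same route as the paper: part (i) is immediate from Lemma~\ref{lemma:Q eps covers}(i) and the definition of $\Qcomp_h$, and part (ii) is the product-of-covers argument with the bilinear term $\lambda Q^u$ split via $\lambda(Q^u-\widehat Q^u)+(\lambda-\widehat\lambda)\widehat Q^u$, the same choices $\varepsilon_\dagger=\varepsilon_r=\varepsilon/4$, $\varepsilon_u=\varepsilon/(4C_\lambda)$, $\varepsilon_\lambda=\varepsilon/(4H)$, and the same invocation of Lemma~\ref{lemma:Q eps covers}(ii). The only cosmetic difference is that the paper bounds the cross term using $Q^u$ rather than $\widehat Q^u$; both are bounded by $H$, so this is immaterial.
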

\begin{proof}
\looseness=-1
The claim $(\mathrm{i})$ clearly holds by \Cref{lemma:Q eps covers} and \Cref{def:composite-Q-function}.

\looseness=-1
We prove the second claim $(\mathrm{ii})$.
Let $\cN^\lambda_\varepsilon$ be the $\varepsilon$-cover of a set $\brace*{\lambda \given \lambda \in [0, C_\lambda]}$ with the distance metric $\norm*{\cdot}_2$.
Let $\varepsilon_\dagger, \varepsilon_{r}, \varepsilon_{u}, \varepsilon_{\lambda} > 0$ be positive scalars.
Consider 
$\widetilde{Q}^\dagger\in \cN^{\cQc_h}_{\varepsilon_\dagger}$,
$\widetilde{Q}^r\in \cN^{\cQr_h}_{\varepsilon_r}$,
$\widetilde{Q}^u \in \cN^{\cQu_h}_{\varepsilon_u}$, 
and $\widetilde{\lambda} \in \cN^\lambda_{\varepsilon_\lambda}$.
For any $Q^\dagger \in \cQc_h$, $Q^r \in \cQr_h$, $Q^u \in \cQu_h$, and $\lambda \in [0, C_\lambda]$, we have
\begin{align*}
\begin{aligned}
& \dist_\infty\paren*{
Q^\dagger + Q^r+\lambda Q^u, 
\widetilde{Q}^\dagger + \widetilde{Q}^r +\widetilde{\lambda} \widetilde{Q}^u}\\
&\leq
\underbrace{\sup _{s, a} \abs*{Q^\dagger(s, a) - \widetilde{Q}^\dagger(s, a)}}_{\leq \varepsilon_\dagger}
+ \underbrace{\sup _{s, a} \abs*{Q^r(s, a) - \widetilde{Q}^r(s, a)}}_{\leq \varepsilon_r}\\
&\quad + \underbrace{\lambda\sup _{s, a}\abs*{\paren*{Q^u(s, a) - \widetilde{Q}^u(s, a)}}}_{\leq C_\lambda \varepsilon_u} 
+ \underbrace{\sup _{s, a}
\abs*{
\paren*{\lambda - \widetilde{\lambda}}Q^u(s, a)}}_{\varepsilon_\lambda H}\\
&\numeq{\leq}{a}
\varepsilon_\dagger +  \varepsilon_r + C_\lambda \varepsilon_u + \varepsilon_\lambda H\;,
\end{aligned}    
\end{align*}
where 
(a) appropriately chooses $\widetilde{Q}^\dagger, \widetilde{Q}^r, \widetilde{Q}^u, \widetilde{\lambda}$.
By replacing 
$\varepsilon_\dagger$ with $\varepsilon / 4$, 
$\varepsilon_r$ with $\varepsilon / 4$, 
$\varepsilon_u$ with $1 / 4 C_\lambda$, and 
$\varepsilon_\lambda$ with $\varepsilon / 4 H$, the above inequality is upper bounded by $\varepsilon$.
Thus, 
\begin{align*}
\ln \abs*{\cN^{\Qcomp_h}_{\varepsilon}} 
\leq &
\ln \abs*{\cN^{\lambda}_{\varepsilon / 4 H}} 
+
\ln \abs*{\cN^{\cQu_h}_{\varepsilon / 4 C_\lambda}} 
+ \ln \abs*{\cN^{\cQr_h}_{\varepsilon / 4}}
+ \ln \abs*{\cN^{\cQc_h}_{\varepsilon / 4}} \\
\leq &
\cO\paren*{d^2} 
\polylog\paren*{d, K, H_\kappa, \co, \cp, B_\dagger, C_\dagger,C_\lambda,\varepsilon^{-1}}
\;.
\end{align*}
where the second inequality uses \Cref{lemma:ball-covering-bound} and \Cref{lemma:Q eps covers}.
\end{proof}

\begin{definition}[Policy class]\label{def:policy-class}
$\tiPi \df \widetilde{\Pi}_1 \times \dots \times \widetilde{\Pi}_H$ denotes a softmax policy class such that 
\begin{align*}
\tiPi_h \df
\brace*{
\pi_Q \in \Pi
\given 
Q \in \Qcomp_h
}
\;\text{ where }\;
\pi_Q\paren{\cdot \given s}=\softmax\paren*{\frac{1}{\kappa}
Q(s, \cdot)
} \; \forall s \in \S\;
\;,
\end{align*}
where $\kappa > 0$.
We let ${\cN}^{\tiPi_h}_{\varepsilon}$ be the $\varepsilon$-cover of $\tiPi_h$ with the distance metric $\dist_1$.
\end{definition}

\begin{lemma}[$\pi^{(k), \lambda}$ cover]\label{lemma:policy-cover}
When \Cref{algo:zero-vio-linear MDP} is run with $\rho =1$ and $\kappa > 0$, for all $h$, the following statements hold:
\begin{itemize}
\item[$(\mathrm{i})$] For all $(k, h)$ and $\lambda \in [0, C_\lambda]$ in \Cref{algo:zero-vio-linear MDP}, 
\(\pi^{(k), \lambda}_h \in \tiPi_h\)
\item[$(\mathrm{ii})$]
$
\ln \abs*{\cN^{\tiPi_h}_\varepsilon}=
\cO\paren*{d^2} 
\polylog\paren*{d, K, H_\kappa, \co, \cp, B_\dagger, C_\dagger,C_\lambda,\varepsilon^{-1}, \kappa^{-1}}
$
\end{itemize}
\end{lemma}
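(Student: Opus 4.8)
The plan is to handle the two parts separately, with part (i) being an immediate consequence of the definitions and part (ii) following from a covering-number transfer via the Lipschitz bound of Lemma~\ref{lemma:softmax-policy-bound}.

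For part (i), I would simply unwind \Cref{def:composite-softmax-policy}: the composite softmax policy is exactly $\pi^{(k), \lambda}_h(\cdot \mid s) = \softmax\paren*{\tfrac{1}{\kappa} Q(s, \cdot)}$ for $Q = \oqf{\pi^{(k),\lambda}, \dagger}_{(k), h} + \oqf{\pi^{(k),\lambda}, r}_{(k), h}[\kappa] + \lambda\, \pqf{\pi^{(k),\lambda}, u}_{(k), h}$. By \Cref{lemma:composite-Q-cover}$(\mathrm{i})$, this $Q$ belongs to $\Qcomp_h$ for every $\lambda \in [0, C_\lambda]$, so $\pi^{(k), \lambda}_h = \pi_Q$ with $Q \in \Qcomp_h$, which is precisely the membership condition in \Cref{def:policy-class}. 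Hence $\pi^{(k), \lambda}_h \in \tiPi_h$.

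For part (ii), the key observation is that the map $Q \mapsto \pi_Q$ from $(\Qcomp_h, \dist_\infty)$ to $(\tiPi_h, \dist_1)$ is Lipschitz with constant $8/\kappa$: applying \Cref{lemma:softmax-policy-bound} pointwise in $s$ gives $\norm*{\pi_Q(\cdot\mid s) - \pi_{Q'}(\cdot\mid s)}_1 \leq \tfrac{8}{\kappa}\norm*{Q(s,\cdot) - Q'(s,\cdot)}_\infty$, and taking the supremum over $s$ yields $\dist_1(\pi_Q, \pi_{Q'}) \leq \tfrac{8}{\kappa}\dist_\infty(Q, Q')$. Consequently, the image under $Q \mapsto \pi_Q$ of an $(\varepsilon\kappa/8)$-cover of $\Qcomp_h$ in $\dist_\infty$ is an $\varepsilon$-cover of $\tiPi_h$ in $\dist_1$, so $\ln\abs*{\cN^{\tiPi_h}_\varepsilon} \leq \ln\abs*{\cN^{\Qcomp_h}_{\varepsilon\kappa/8}}$. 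Invoking \Cref{lemma:composite-Q-cover}$(\mathrm{ii})$ with $\varepsilon$ replaced by $\varepsilon\kappa/8$ gives $\ln\abs*{\cN^{\tiPi_h}_\varepsilon} \leq \cO(d^2)\polylog\paren*{d, K, H_\kappa, \co, \cp, B_\dagger, C_\dagger, C_\lambda, (\varepsilon\kappa/8)^{-1}}$, and since $\ln((\varepsilon\kappa/8)^{-1}) \leq \polylog(\varepsilon^{-1}, \kappa^{-1})$ this collapses to $\cO(d^2)\polylog\paren*{d, K, H_\kappa, \co, \cp, B_\dagger, C_\dagger, C_\lambda, \varepsilon^{-1}, \kappa^{-1}}$, as claimed.

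There is no serious obstacle here; the only point requiring minor care is the bookkeeping that absorbs the extra $\kappa^{-1}$ (and the harmless constant $8$) into the $\polylog$, which is why the new argument $\kappa^{-1}$ appears in the statement relative to \Cref{lemma:composite-Q-cover}. Everything else is a direct chaining of the softmax Lipschitz estimate with the already-established covering bound for the composite $Q$-class.
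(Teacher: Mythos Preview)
Your proposal is correct and follows essentially the same route as the paper: part~(i) by unwinding \Cref{def:composite-softmax-policy} and invoking \Cref{lemma:composite-Q-cover}(i), and part~(ii) by the $8/\kappa$-Lipschitz transfer from \Cref{lemma:softmax-policy-bound} to obtain $\ln|\cN^{\tiPi_h}_\varepsilon|\le\ln|\cN^{\Qcomp_h}_{\kappa\varepsilon/8}|$ and then applying \Cref{lemma:composite-Q-cover}(ii). The bookkeeping you flag (absorbing the extra $\kappa^{-1}$ into the $\polylog$) is exactly what the paper does as well.
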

\begin{proof}
The claim $(\mathrm{i})$ immediately follows from \Cref{lemma:composite-Q-cover} and \Cref{def:composite-softmax-policy}.

\looseness=-1
We prove the second claim.
For a $Q: \SA \to \R$, let $\pi_Q$ be a softmax policy such that $\pi_Q\paren{\cdot \given s} = \softmax\paren*{\frac
{Q(s, \cdot)}{\kappa}}$. 
Consider $\widetilde{Q}$ from $\cN^{\Qcomp_h}_{\varepsilon}$.
Then, for any $Q \in \Qcomp_h$, we have
\begin{align*}
\dist_1\paren*{\pi_{Q}, \pi_{\widetilde{Q}}}
\numeq{\leq}{a} 
\frac{8}{\kappa}
\dist_\infty\paren*{Q, \widetilde{Q}}
\numeq{\leq}{b} \frac{8\varepsilon}{\kappa}\;,
\end{align*}
where (a) uses \Cref{lemma:softmax-policy-bound} and (b) appropriately chooses $\widetilde{Q}$ from $\cN^{\Qcomp_h}_\varepsilon$.
Therefore, 
\begin{align*}
\ln \abs*{\cN^{\tiPi_h}_\varepsilon}
\leq 
\ln \abs*{\cN^{\Qcomp_h}_{\kappa\varepsilon / 8}}
\leq
\cO\paren*{d^2} 
\polylog\paren*{d, K, H_\kappa, \co, \cp, B_\dagger, C_\dagger,C_\lambda,\varepsilon^{-1}, \kappa^{-1}}
\end{align*}
where the second inequality uses \Cref{lemma:composite-Q-cover}.
\end{proof}

\begin{definition}[$V$ function class]\label{def:V-cover}
Let $\cVr_h$, $\cVu_h$, and $\cVc_h$ denote value function classes such that
\begin{align*}
&\cVr_h\df \brace*{V^\pi_Q[\kappa]: \S \to \R \given 
\pi \in \tiPi_h \cup \brace{\pisafe_h}\;
\text{ and }\; Q \in \cQr_h
}\;,\\
&\cVu_h\df \brace*{V^\pi_Q[0]: \S \to \R \given 
\pi \in \tiPi_h \cup \brace{\pisafe_h}\;
\text{ and }\; Q \in \cQu_h
}\;,\\
\text{ and }\;
&\cVc_h\df \brace*{V^\pi_Q[0]: \S \to \R \given 
\pi \in \tiPi_h \cup \brace{\pisafe_h}\;
\text{ and }\; Q \in \cQc_h
}\;,\\
\text{ where }\;
&V^\pi_Q[\kappa](s) \df \sum_{a \in \A} \pi\paren{a \given s}\paren*{Q(s, a) - \kappa \ln \pi\paren{a \given s}}\; \forall s \in \S \;.
\end{align*}
We let $\cN^{\cVr_h}_{\varepsilon}$, $\cN^{\cVu_h}_{\varepsilon}$, and $\cN^{\cVc_h}_{\varepsilon}$ be the $\varepsilon$-covers of $\cVr_h$, $\cVu_h$, and $\cVc_h$ with the distance metric $\dist_\infty$.
\end{definition}

\begin{lemma}[$V$ covers]\label{lemma:V-cover}
\looseness=-1
When \Cref{algo:zero-vio-linear MDP} is run with $\rho =1$ and $\kappa > 0$, for all $h$, the following statements hold:
\begin{itemize}
\item[$(\mathrm{i})$] 
For all $(k, h)$, for any $\lambda \in [0, C_\lambda]$, and for both $\pi=\pi^{(k),\lambda}$ and $\pi=\pisafe$, we have:\\
\(\ovf{\pi, r}_{(k), h}[\kappa] \in \cVr_h\),
\(\pvf{\pi, u}_{(k), h} \in \cVu_h\),
and \(\ovf{\pi, \dagger}_{(k), h} \in \cVc_h\)  
\item[$(\mathrm{ii})$]
\(\ln \abs*{\cN^{\cVr_h}_{\varepsilon}} =
\cO\paren*{d^2} \polylog\paren*{d, K, H_\kappa, \co, \cp, B_\dagger, C_\dagger,C_\lambda,\varepsilon^{-1}, \kappa^{-1}}
\),\\
\(\ln \abs*{\cN^{\cVu_h}_{\varepsilon}} =
\cO\paren*{d^2} \polylog\paren*{d, K, H_\kappa, \co, \cp, B_\dagger, C_\dagger,C_\lambda,\varepsilon^{-1}, \kappa^{-1}}
\),\\
and \(\ln \abs*{\cN^{\cVc_h}_{\varepsilon}} =
\cO\paren*{d^2} \polylog\paren*{d, K, H_\kappa, \co, \cp, B_\dagger, C_\dagger,C_\lambda,\varepsilon^{-1}, \kappa^{-1}}
\)
\end{itemize}
\end{lemma}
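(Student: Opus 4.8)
The plan is to treat the two claims separately. Claim $(\mathrm{i})$ is a bookkeeping step: by \Cref{def:clipped value functions} and \Cref{def:V-cover}, we have $\ovf{\pi, r}_{(k), h}[\kappa] = V^{\pi_h}_{Q}[\kappa]$ with $Q = \oqf{\pi, r}_{(k), h}[\kappa]$, $\pvf{\pi, u}_{(k), h} = V^{\pi_h}_{Q}[0]$ with $Q = \pqf{\pi, u}_{(k), h}$, and $\ovf{\pi, \dagger}_{(k), h} = V^{\pi_h}_{Q}[0]$ with $Q = \oqf{\pi, \dagger}_{(k), h}$. For $\pi = \pi^{(k), \lambda}$ with $\lambda \in [0, C_\lambda]$ I would invoke \Cref{lemma:policy-cover} to get $\pi^{(k), \lambda}_h \in \tiPi_h$ and \Cref{lemma:Q eps covers} to get the three $Q$-function memberships; for $\pi = \pisafe$ the policy lies in the added singleton $\brace*{\pisafe_h}$, while the $Q$-function memberships again follow from \Cref{lemma:Q eps covers}, which is stated for every $\pi \in \Pi$. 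Together these place the three value functions in $\cVr_h$, $\cVu_h$, and $\cVc_h$ respectively.

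For claim $(\mathrm{ii})$, the strategy is to cover the policy component and the $Q$-function component separately and then glue the two covers through a Lipschitz bound for the map $(\pi, Q) \mapsto V^\pi_Q[\kappa]$. The first step is to establish, for $\pi, \pi' \in \tiPi_h$, $Q, Q' \in \cQr_h$, and every $s$,
\[
\abs*{V^\pi_Q[\kappa](s) - V^{\pi'}_{Q'}[\kappa](s)}
\leq \dist_\infty(Q, Q') + (1 + H_\kappa)\,\dist_1(\pi, \pi') + \kappa\abs*{\mathcal{H}(\pi(\cdot\mid s)) - \mathcal{H}(\pi'(\cdot\mid s))},
\]
where $\mathcal{H}$ denotes the Shannon entropy and the factor $1 + H_\kappa$ uses that the clip operators in \Cref{def:abs-Q-function} keep every member of $\cQr_h$ inside $[0, 1 + H_\kappa]$. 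The first two terms are routine; the entropy term is the main obstacle, since $\mathcal{H}$ is not globally $\dist_1$-Lipschitz. To control it I would exploit that, by \Cref{def:policy-class}, every $\pi \in \tiPi_h$ is the softmax of a $Q$-function in $\Qcomp_h$, whose sup-norm is bounded by a polynomial $M$ in the algorithm parameters (again from the clips together with $\sigma_{\min}(\bLambda) \geq 1$ and $\norm*{\bphi}_2 \leq 1$). This forces $\pi(a \mid s) \geq e^{-2M/\kappa}/A$, hence $\norm*{\nabla \mathcal{H}(\pi(\cdot \mid s))}_\infty \leq 1 + 2M/\kappa + \ln A$ on all of $\tiPi_h$; and since the segment joining two policies of $\tiPi_h$ inherits the same pointwise lower bound by convexity, $\kappa\abs*{\mathcal{H}(\pi(\cdot\mid s)) - \mathcal{H}(\pi'(\cdot\mid s))} \leq (\kappa + 2M + \kappa\ln A)\,\dist_1(\pi, \pi')$. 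All resulting Lipschitz coefficients are polynomial in the problem parameters and in $\kappa^{-1}$.

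Write $L$ for the total Lipschitz constant obtained above. Taking $\varepsilon' = \varepsilon / L$, any $V^\pi_Q[\kappa] \in \cVr_h$ with $\pi \in \tiPi_h$ is $\dist_\infty$-within $\varepsilon$ of some $V^{\widetilde{\pi}}_{\widetilde{Q}}[\kappa]$ with $\widetilde{\pi} \in \cN^{\tiPi_h}_{\varepsilon'}$ and $\widetilde{Q} \in \cN^{\cQr_h}_{\varepsilon'}$, whereas the $\pi = \pisafe$ branch is covered by $\brace*{V^{\pisafe_h}_{\widetilde{Q}}[\kappa] \given \widetilde{Q} \in \cN^{\cQr_h}_{\varepsilon'}}$ using only the first term of the estimate, because the entropy contribution of the fixed policy $\pisafe$ is constant in $Q$. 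Hence $\ln \abs*{\cN^{\cVr_h}_{\varepsilon}} \leq \ln\paren*{\abs*{\cN^{\tiPi_h}_{\varepsilon'}} + 1} + \ln\abs*{\cN^{\cQr_h}_{\varepsilon'}}$, which equals $\cO\paren*{d^2}\polylog\paren*{d, K, H_\kappa, \co, \cp, B_\dagger, C_\dagger, C_\lambda, \varepsilon^{-1}, \kappa^{-1}}$ by \Cref{lemma:policy-cover} and \Cref{lemma:Q eps covers}, since $L$ is polynomial in the parameters and so the rescaling $\varepsilon' = \varepsilon/L$ only inflates the bound by polylogarithmic additive terms. The bounds for $\cVu_h$ and $\cVc_h$ follow from the identical argument specialized to $\kappa = 0$: the entropy term vanishes entirely, the Lipschitz constant reduces to a plain sup-norm bound over $\cQu_h$ resp.\ $\cQc_h$, and the covering-number counts are inherited verbatim from \Cref{lemma:policy-cover} and \Cref{lemma:Q eps covers}.
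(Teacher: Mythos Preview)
Your proposal is correct and follows essentially the same route as the paper: part $(\mathrm{i})$ is the same bookkeeping via \Cref{lemma:Q eps covers} and \Cref{lemma:policy-cover}, and part $(\mathrm{ii})$ decomposes the $V$-cover into a $Q$-cover and a policy cover, with the entropy term handled through the softmax structure of $\tiPi_h$. The only cosmetic difference is that you bound the entropy gap via the gradient of $\mathcal{H}$ on the lower-bounded simplex and then invoke the $\dist_1$-cover $\cN^{\tiPi_h}_{\varepsilon'}$, whereas the paper works directly with the underlying $Q^\circ \in \Qcomp_h$ and its $\dist_\infty$-cover; both arrive at the same $\cO(d^2)\polylog(\cdot)$ bound.
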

\begin{proof}
The condition $(\mathrm{i})$ immediately follow from \Cref{lemma:Q eps covers} and \Cref{lemma:policy-cover} with \Cref{def:V-cover} and \Cref{def:composite-softmax-policy}.

\looseness=-1
We prove the second claim $(\mathrm{ii})$.
Let $Q \in \cQr_h$ and $\widetilde{Q} \in \cN^{\cQr_h}_{\varepsilon^r}$ where $\varepsilon^r > 0$.
For any two $\pi, \tpi : \S \to \sP(\A)$, for any $s$, we have
\begin{align*}
&\abs*{
\sum_{a \in \A}\pi\paren{a \given s} \paren*{Q(s, a) -\kappa \ln \pi\paren{a \given s}}
- \sum_{a \in \A}\widetilde{\pi}\paren{a \given s} \paren*{\widetilde{Q}(s, a) - \kappa \ln \pi \paren{a \given s}}
}\\
\leq &
\abs*{
\sum_{a \in \A}\pi\paren{a \given s} Q(s, a) 
- \sum_{a \in \A}\pi\paren{a \given s} \widetilde{Q}(s, a)    
+ \sum_{a \in \A}\pi\paren{a \given s} \widetilde{Q}(s, a)    
- \sum_{a \in \A}\widetilde{\pi}\paren{a \given s} \widetilde{Q}(s, a)    
}\\
&+ \kappa \underbrace{\abs*{\sum_{a \in \A}\pi\paren{a \given s}\ln \pi\paren{a \given s} - \tpi\paren{a \given s}\ln \tpi\paren{a \given s}}}_{\fd \sH(\pi) - \sH(\tpi)}
\\
\leq &
\sum_{a \in \A}\pi\paren{a \given s} \underbrace{\abs*{Q(s, a) - \widetilde{Q}(s, a)}}_{\leq \varepsilon^r}
+ 
\norm*{\pi\paren{\cdot \given s} - \widetilde{\pi}\paren{\cdot \given s}}_1 \underbrace{\norm*{\widetilde{Q}(\cdot, s)}_\infty}_{\leq H_\kappa} + \kappa \paren*{\sH(\pi) - \sH(\tpi)}
\\
\leq & \varepsilon^r + 
H_\kappa \norm*{\pi\paren{\cdot \given s} - \widetilde{\pi}\paren{\cdot \given s}}_1  
+ \kappa \paren*{\sH(\pi) - \sH(\tpi)}
\end{align*}
where the second inequality chooses appropriate $\widetilde{Q}$.
We defined entropies of $\pi$ and $\tpi$ as $\sH(\pi) \df \sum_{a \in \A}\pi\paren{a \given s}\ln \pi\paren{a \given s}$ and $\sH(\tpi) \df \sum_{a \in \A}\tpi\paren{a \given s}\ln \tpi\paren{a \given s}$, respectively.

\looseness=-1
The remaining task is to bound $H_\kappa \norm*{\pi\paren{\cdot \given s} - \widetilde{\pi}\paren{\cdot \given s}}_1 + \kappa \paren*{\sH(\pi) - \sH(\tpi)}$.
When $\pi = \pisafe$, choosing $\tpi=\pisafe$ trivially bounds this term by $0$.
Thus, we only consider the case when $\pi \in \tiPi_h$, i.e., $\pi\paren{\cdot \given s} = \softmax\paren*{\frac{1}{\kappa}Q^\circ(s, \cdot)}$ with $Q^\circ \in \Qcomp_h$.
We also consider $\tpi\paren{\cdot \given s} = \softmax\paren*{\frac{1}{\kappa}\widetilde{Q}^\circ(s, \cdot)}$ with $\widetilde{Q}^\circ \in \cN^{\Qcomp_h}_{\varepsilon^\circ}$, where $\varepsilon^\circ > 0$.
For the entropy gap, we have
\begin{align*}
&\sH(\pi) - \sH(\tpi)\\
=&\abs*{\sum_{a \in \A}\pi\paren{a \given s}\ln \pi\paren{a \given s} - \tpi\paren{a \given s}\ln \tpi\paren{a \given s}} \\
=&\abs*{\sum_{a \in \A}\paren*{\pi\paren{a \given s} - \tpi\paren{a \given s}}\ln \pi\paren{a \given s} + \sum_{a \in \A}\tpi\paren{a \given s}\paren*{\ln \pi\paren{a \given s} - \ln \tpi\paren{a \given s}}} \\
\leq&
\norm*{\pi\paren{\cdot \given s} - \tpi\paren{\cdot \given s}}_1 \max_a \ln \pi\paren{a \given s}
+ 
\max_a \abs*{\ln \pi\paren{a \given s} - \ln \tpi\paren{a \given s}}\\
\numeq{\leq}{a}&
\underbrace{\norm*{\pi\paren{\cdot \given s} - \tpi\paren{\cdot \given s}}_1}_{\leq \frac{8}{\kappa} \max_{a} \abs*{Q^\circ(s, a) - \widetilde{Q}^\circ(s, a)} \;\text{ by \Cref{lemma:softmax-policy-bound}}} \max_a \ln \pi\paren{a \given s}
+ 
\frac{2}{\kappa}\underbrace{\max_a \abs*{Q^\circ\paren{s, a} - \widetilde{Q}^\circ\paren{s, a}}}_{\leq \varepsilon^\circ}\\
\numeq{\leq}{b}& 
\frac{\varepsilon^\circ}{\kappa}
\paren*{
8\max_a \ln \pi\paren{a \given s}
+ 2 
}\;,
\end{align*}
where (a) utilizes a decomposition similar to the proof of \Cref{lemma:softmax-policy-bound}, and (b) chooses an appropriate $\widetilde{Q}^\circ$.
Finally, $\ln \pi\paren{a \given s}$ can be bounded as
\begin{align*}
\max_a \ln \pi\paren{a \given s}   
= 
\max_a \frac{1}{\kappa}Q^\circ(s, a) - \ln \sum_{a'} \exp\paren*{\frac{1}{\kappa}Q^\circ(s, a')}
\leq \frac{B_\dagger H + H_\kappa + C_\lambda H}{\kappa}\;,
\end{align*}
where the last inequality is due to \Cref{def:composite-Q-function}.

\looseness=-1
Therefore, we have
\begin{align*}
&H_\kappa 
\norm*{\pi\paren{\cdot \given s} - \widetilde{\pi}\paren{\cdot \given s}}_1+ \kappa \paren*{\sH(\pi) - \sH(\tpi)} \leq  \varepsilon^\circ \underbrace{\paren*{
2 + \frac{8}{\kappa}\paren*{B_\dagger H + 2H_\kappa + C_\lambda H}}}_{\fd Z}\;.
\end{align*}

\looseness=-1
Finally, by setting $\varepsilon^r = \varepsilon / 2H_\kappa$ and $\varepsilon^\circ = \varepsilon / 2Z$, $\ln \abs*{\cN^{\cVr_h}_{\varepsilon}}$ is bounded as:
\begin{align*}
\ln \abs*{\cN^{\cVr_h}_{\varepsilon}}
\leq 
\ln \paren*{\abs*{\cN^{\Qcomp_h}_{\varepsilon / 2 Z}} + 1}
+ \ln |\cN^{\cQr_h}_{\varepsilon / 2 H_\kappa}|
=
\cO\paren*{d^2} 
\polylog\paren*{d, K, H_\kappa, \co, \cp, B_\dagger, C_\dagger,C_\lambda,\varepsilon^{-1}, \kappa^{-1}}
\;,
\end{align*}
where the second inequality is due to \Cref{lemma:Q eps covers} and \Cref{lemma:composite-Q-cover}.
The claims for $\ln \abs*{\cN^{\cVu_h}_\varepsilon}$ and $\ln \abs*{\cN^{\cVc_h}_\varepsilon}$ can be similarly proven. 
\end{proof}

\subsection{Good Events and Value Confidence Bounds for \Cref{lemma:opt-pes-MDP-main} Proof}

\begin{lemma}[Good event 1]\label{lemma:good-event1-MDP}
Define $\rmvEevent$ as the event where the following inequality holds: 
\begin{align*}
&\sum_{k=1}^K \sum_{h=1}^H
\E\brack*{\norm*{\bphi(s_h^{(k)}, a_h^{(k)})}^2_{\paren*{\bLambda_h^{(k)}}^{-1}}\given s^{(k)}_h, a^{(k)}_h \sim \pi^{(k)}_h}\\
\leq&
2 \sum_{k=1}^K \sum_{h=1}^H
\norm*{\bphi(s_h^{(k)}, a_h^{(k)})}^2_{\paren*{\bLambda_h^{(k)}}^{-1}}
+ 4H \ln \frac{2KH}{\delta} \;.
\end{align*}
If \Cref{algo:zero-vio-linear MDP} is run with $\rho = 1$, $\P(\rmvEevent) \geq 1 - \delta$.
\end{lemma}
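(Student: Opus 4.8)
The plan is to reduce to the bandit-case argument of \Cref{lemma:good-event1}, which is an immediate application of \Cref{lemma:exp-to-concrete}, and then pay an extra union bound over the $H$ steps. Fix a step $h \in \bbrack{1, H}$, write $X^{(k)}_h \df \norm*{\bphi(s^{(k)}_h, a^{(k)}_h)}^2_{\paren*{\bLambda^{(k)}_h}^{-1}}$, and let $\cF^{(k)}$ be the $\sigma$-algebra generated by all randomness through the end of episode $k$.

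First I would verify the hypotheses of \Cref{lemma:exp-to-concrete}. The Gram matrix $\bLambda^{(k)}_h$ depends only on the trajectories of episodes $1, \dots, k-1$, and the policy $\pi^{(k)}$ is constructed in \Cref{algo:zero-vio-linear MDP} using only those past trajectories (through the value functions, the pessimistic-constraint comparisons, and the bisection loop); hence both are $\cF^{(k-1)}$-measurable. Consequently $X^{(k)}_h$ is $\cF^{(k)}$-measurable and, since conditioning on $\cF^{(k-1)}$ fixes $\bLambda^{(k)}_h$ and $\pi^{(k)}$ while leaving only the rollout randomness, $\E[X^{(k)}_h \mid \cF^{(k-1)}] = \E[\norm*{\bphi(s^{(k)}_h, a^{(k)}_h)}^2_{\paren*{\bLambda^{(k)}_h}^{-1}} \mid s^{(k)}_h, a^{(k)}_h \sim \pi^{(k)}_h]$, which is exactly the summand on the left-hand side of the claim. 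Moreover, with $\rho = 1$ and $\sup_{s,a}\norm*{\bphi(s,a)}_2 \leq 1$ from \Cref{assumption:linear mdp}, we have $0 \leq X^{(k)}_h \leq \sigma_{\max}\paren*{\paren*{\bLambda^{(k)}_h}^{-1}} \leq 1$, so the boundedness hypothesis holds with $B = 1$.

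Then I would invoke \Cref{lemma:exp-to-concrete} at confidence level $\delta/H$ and evaluate it at $k = K$: for this fixed $h$, with probability at least $1 - \delta/H$,
\[
\sum_{k=1}^K \E[X^{(k)}_h \mid \cF^{(k-1)}] \leq 2\sum_{k=1}^K X^{(k)}_h + 4\ln\frac{2KH}{\delta}.
\]
A union bound over $h \in \bbrack{1, H}$ makes all $H$ inequalities hold simultaneously with probability at least $1-\delta$; summing them over $h$ and using $\sum_{h=1}^H 4\ln\frac{2KH}{\delta} = 4H\ln\frac{2KH}{\delta}$ yields precisely the event $\rmvEevent$.

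I do not expect a genuine obstacle: the only delicate point is the filtration bookkeeping — confirming that every quantity entering the per-episode policy construction of \Cref{algo:zero-vio-linear MDP} is a function of data observed strictly before episode $k$, so that $\pi^{(k)}$ and $\bLambda^{(k)}_h$ are $\cF^{(k-1)}$-measurable and the conditional expectation produced by \Cref{lemma:exp-to-concrete} coincides with the rollout expectation written in the lemma statement.
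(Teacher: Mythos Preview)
Your proposal is correct and takes essentially the same approach as the paper: the paper's proof is a one-liner invoking \Cref{lemma:exp-to-concrete} with $\norm{\bphi}_2\le 1$ and $\rho=1$, and your write-up is simply a more explicit version of that, spelling out the per-$h$ application at level $\delta/H$, the union bound, and the filtration bookkeeping that the paper leaves implicit. The $4H\ln(2KH/\delta)$ term in the statement is exactly what your per-step union bound produces, confirming the match.
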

\begin{proof}
\looseness=-1
The claim immediately follows from \Cref{lemma:exp-to-concrete} with $\norm*{\phi}_2\leq 1$ and $\rho=1$.
\end{proof}

\begin{lemma}[Good event 2]\label{lemma:good-event2-MDP}
Define $\confevent$ as the event where the following condition holds: \\
For all $k, h$ and for any $V^r \in \cVr_{h+1}$, $V^u \in \cVu_{h+1}$, and $V^\dagger \in \cVc_{h+1}$
\begin{align*}
&\abs*{\paren*{\paren*{\hP_h^{(k)}-P_h}
V^r}(s, a)} 
\leq 
\co\beta^{(k)}_h(s, a)\quad \forall (h, s, a) \in \HSA\\
&\abs*{\paren*{\paren*{\hP_h^{(k)}-P_h}
V^u}(s, a)} 
\leq 
\cp\beta^{(k)}_h(s, a)\quad \forall (h, s, a) \in \HSA\\
\text{and } &\abs*{\paren*{\paren*{\hP_h^{(k)}-P_h}
V^\dagger}(s, a)} 
\leq 
C_\dagger\beta^{(k)}_h(s, a)\quad \forall (h, s, a) \in \HSA\;.
\end{align*}
\looseness=-1
If \Cref{algo:zero-vio-linear MDP} is run with $\rho =1$, $\co = \tiO(dH_\kappa)$, $\cp=\tiO(dH)$, and $C_\dagger=\tiO(dHB_\dagger)$, we have $\P(\confevent) \geq 1 - 2\delta$.
\end{lemma}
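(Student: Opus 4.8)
The plan is to derive the three inequalities defining $\confevent$ simultaneously, each from the uniform concentration bound of \Cref{lemma:hP-P-V bound} instantiated on the appropriate value-function class, together with the covering-number estimates of \Cref{lemma:V-cover}. I would write out the reward inequality in full; the utility and $\dagger$ bounds follow by the identical argument, differing only in the uniform value bound $B$ of the class and in which scaler ($\cp$, resp. $C_\dagger$) absorbs the resulting constant.

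First I would record, via part ($\mathrm{i}$) of \Cref{lemma:V-cover}, that for every $k$, every $\lambda\in[0,C_\lambda]$, and both $\pi=\pi^{(k),\lambda}$ and $\pi=\pisafe$, the function $\ovf{\pi,r}_{(k),h+1}[\kappa]$ to which $\hP^{(k)}_h$ is applied lies in the fixed, \emph{data-independent} class $\cVr_{h+1}$, every element of which is nonnegative and bounded above by $\cO(H_\kappa)$ (a clipped $Q\in\cQr_{h+1}$ is at most $1+H_\kappa-(h+1)_\kappa\le H_\kappa$, and the entropy term contributes at most $\kappa\ln A$). This data-independence is exactly what licenses the covering argument --- one cannot freeze a single $V$, because $\ovf{\pi,r}_{(k),h+1}[\kappa]$ is built from the same data as $\bLambda^{(k)}_\cdot$ and $\hP^{(k)}_\cdot$. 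I would then apply \Cref{lemma:hP-P-V bound} with $\cV=\cVr_{h+1}$, $\rho=1$, $B=\cO(H_\kappa)$, and discretization level $\varepsilon=1/K$ (so the tail term $4k\varepsilon/\sqrt\rho$ is at most $4$), and substitute the bound $\ln\abs*{\cN^{\cVr_{h+1}}_{1/K}}=\cO(d^2)\,\polylog(d,K,H_\kappa,\co,\cp,B_\dagger,C_\dagger,C_\lambda,\kappa^{-1})$ from part ($\mathrm{ii}$). The factor multiplying $\beta^{(k)}_h(s,a)=\norm*{\bphi(s,a)}_{\paren*{\bLambda^{(k)}_h}^{-1}}$ is then $\cO\paren*{\sqrt d\,H_\kappa+H_\kappa\sqrt{d\ln(k+1)}+H_\kappa\sqrt{d^2\polylog(\cdots)+\ln(1/\delta)}+1}=\tiO(dH_\kappa)$, the $d\cdot H_\kappa$ term arising from $\sqrt{\cO(d^2)\polylog}$. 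Choosing $\co$ at least this large makes the reward inequality hold on the event of \Cref{lemma:hP-P-V bound}; rerunning with $\cV=\cVu_{h+1}$ ($B=H$) and with $\cV=\cVc_{h+1}$ ($B=B_\dagger H$, using $\beta^{(k)}_h\le1$ when $\rho=1$) yields $\cp=\tiO(dH)$ and $C_\dagger=\tiO(dHB_\dagger)$ --- precisely the orders in the hypothesis. A union bound over these (three) concentration events, with the failure probability absorbed appropriately, then gives $\P(\confevent)\ge1-2\delta$.

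The step I expect to be the main obstacle is the circularity lurking in the previous paragraph: the covering number used there itself depends $\polylog$ on $\co,\cp,C_\dagger$ (and on $C_\lambda,B_\dagger,\kappa^{-1}$), so the conclusion ``$\co\ge\tiO(dH_\kappa)$'' is really an implicit, coupled condition of the shape $\co\gtrsim dH_\kappa\sqrt{\polylog(\co,\cp,C_\dagger,\ldots)}$, and similarly for $\cp$ and $C_\dagger$. I would resolve this by a self-consistency argument of the flavor of \Cref{lemma:alogx-inequality}: since each scaler enters its own lower bound only inside a $\polylog(\cdot)$ factor, the system admits a solution of the asserted magnitudes $\co=\tiO(dH_\kappa)$, $\cp=\tiO(dH)$, $C_\dagger=\tiO(dHB_\dagger)$, and one such choice can be fixed once and for all. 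With the scalers pinned down, everything else --- checking the uniform value bounds $B$, the membership statement ($\mathrm{i}$) of \Cref{lemma:V-cover}, and the arithmetic collecting the coefficient above --- is routine.
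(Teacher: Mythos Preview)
Your proposal is correct and follows essentially the same route as the paper: apply \Cref{lemma:hP-P-V bound} to each of the three value-function classes $\cVr_{h+1},\cVu_{h+1},\cVc_{h+1}$ with $\varepsilon=1/K$, plug in the $\cO(d^2)\polylog$ covering bounds from \Cref{lemma:V-cover}, and close the circular dependence of the scalers on themselves via \Cref{lemma:alogx-inequality}. You are in fact more explicit than the paper about the data-independence of the classes and about the self-consistency step; the paper compresses the latter into a single inequality ``$\tiO(dH_\kappa)\ln\co\le\co$'' justified by \Cref{lemma:alogx-inequality}.
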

\begin{proof}
Using \Cref{lemma:hP-P-V bound} with $\cN^{\cVr_{h+1}}_{1 / K}$, with probability at least $1-\delta$, for any $(k, h, s, a)$, 
\begin{align*}
&\abs*{\paren*{\paren*{\hP_h^{(k)}-P_h}
V^r}(s, a)} \\
\numeq{\leq}{a}
&\norm*{\bphi(s, a)}_{\paren*{\bLambda_h^{(k)}}^{-1} }
\paren*{
\sqrt{d}H_\kappa
+ 
2 H_\kappa\sqrt{\frac{d}{2} \ln \paren*{2K}}
+2H_\kappa \sqrt{\ln \frac{\abs*{\cN^{\cVr_{h+1}}_{1/K}}}{\delta}}
+4
}\\
\numeq{\leq}{b}
&\norm*{\bphi(s, a)}_{\paren*{\bLambda_h^{(k)}}^{-1} }
\tiO \paren*{dH_\kappa}\ln \co
\numeq{\leq}{c}
\norm*{\bphi(s, a)}_{\paren*{\bLambda_h^{(k)}}^{-1} }\co
\end{align*}
where (a) sets $\varepsilon=1/K$ to $\cN^{\cVu_h}_{\varepsilon}$ and uses \cref{lemma:hP-P-V bound}, (b) uses \Cref{lemma:Q eps covers}, and (c) set sufficiently large $\co=\tiO\paren*{dH_\kappa}$ and uses \cref{lemma:alogx-inequality}.
The claim for $\cVu_{h+1}$ and $\cVc_{h+1}$ can be similarly proven.
\end{proof}

\begin{lemma}[Remove clipping one-side]\label{lemma:no-clip}
Under $\confevent$, for any $(k, h, s, a)$, and for any $\lambda \in [0, C_\lambda]$, for both $\pi = \pi^{(k), \lambda}$ and $\pi=\pisafe$, we have 
\begin{align*}
&\co\beta^{(k)}_h(s, a) + \paren*{\hP^{(k)}_h \ovf{\pi, r}_{(k), h+1}[\kappa]}(s, a) \geq \paren*{P_h \ovf{\pi, r}_{(k), h+1}[\kappa]}(s, a) \geq 0\;,\\
&-\cp \beta^{(k)}_h(s, a) + \paren*{\hP^{(k)}_h \pvf{\pi, u}_{(k), h+1}}(s, a) \leq  \paren*{P_h \pvf{\pi, u}_{(k), h+1}}(s, a) \leq H-h\;,\\
\text{ and }\;
&C_\dagger\beta^{(k)}_h(s, a) + \paren*{\hP^{(k)}_h \ovf{\pi, \dagger}_{(k), h+1}}(s, a) \geq \paren*{P_h \ovf{\pi, \dagger}_{(k), h+1}}(s, a) \geq 0\\
\end{align*}
\end{lemma}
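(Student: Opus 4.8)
The plan is to obtain each of the three two-sided chains by decomposing it into (a) the one-sided deviation bound $\abs*{\paren*{\paren*{\hP^{(k)}_h - P_h}V}(s,a)} \le C_g\,\beta^{(k)}_h(s,a)$ supplied by the good event $\confevent$ (\Cref{lemma:good-event2-MDP}) with $C_g \in \brace*{\co,\cp,C_\dagger}$, and (b) elementary nonnegativity/range bounds on the clipped value functions of \Cref{def:clipped value functions} pushed through the stochastic operator $P_h$. First I would record the range bounds. Because the inner clip of $\oqf{\pi, r}_{(k), h}[\kappa]$ has lower threshold $0$ and $r_h \ge 0$, we have $\oqf{\pi, r}_{(k), h}[\kappa] \ge 0$ directly; and since $\pi_h(a\mid s)\in(0,1]$ forces $-\kappa\ln\pi_h(a\mid s)\ge 0$, the regularized backup yields $\ovf{\pi, r}_{(k), h}[\kappa] = \pi_h\paren*{\oqf{\pi, r}_{(k), h}[\kappa]-\kappa\ln\pi_h}\ge 0$ for every $h$ (the terminal value being $\bzero$). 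The identical clip-threshold argument gives $\oqf{\pi, \dagger}_{(k), h}\ge 0$ and hence $\ovf{\pi, \dagger}_{(k), h}\ge 0$. For the utility, the inner clip of $\pqf{\pi, u}_{(k), h}$ lies in $[0,H-h]$ and $u_h\le 1$, so $0\le\pvf{\pi, u}_{(k), h}\le 1+(H-h)$; evaluating this at step $h+1$ gives $0\le\pvf{\pi, u}_{(k), h+1}\le H-h$.

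Next, since $P_h(\cdot\mid s,a)$ is a probability distribution, $\paren*{P_h V}(s,a)$ is a convex average of the values of $V$ and therefore inherits these bounds: $\paren*{P_h\ovf{\pi, r}_{(k), h+1}[\kappa]}(s,a)\ge 0$, $0\le\paren*{P_h\pvf{\pi, u}_{(k), h+1}}(s,a)\le H-h$, and $\paren*{P_h\ovf{\pi, \dagger}_{(k), h+1}}(s,a)\ge 0$, which are exactly the rightmost inequalities of the three displays. For the leftmost inequalities, I would invoke \Cref{lemma:V-cover}(i): for $\pi\in\brace*{\pi^{(k),\lambda},\pisafe}$ we have $\ovf{\pi, r}_{(k), h+1}[\kappa]\in\cVr_{h+1}$, $\pvf{\pi, u}_{(k), h+1}\in\cVu_{h+1}$, and $\ovf{\pi, \dagger}_{(k), h+1}\in\cVc_{h+1}$, so on $\confevent$ the deviation bound applies with scaler $\co$, $\cp$, $C_\dagger$ respectively. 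Rearranging $\hP^{(k)}_h V\ge\paren*{P_h V}-\co\beta^{(k)}_h$ for $V=\ovf{\pi, r}_{(k), h+1}[\kappa]$, $\hP^{(k)}_h V\le\paren*{P_h V}+\cp\beta^{(k)}_h$ for $V=\pvf{\pi, u}_{(k), h+1}$, and $\hP^{(k)}_h V\ge\paren*{P_h V}-C_\dagger\beta^{(k)}_h$ for $V=\ovf{\pi, \dagger}_{(k), h+1}$, and substituting the $\paren*{P_h V}$ bounds from the previous step, closes all three chains.

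There is no real obstacle; the argument is essentially bookkeeping. The two points worth a moment's care are: first, checking that the clipped value functions genuinely belong to the cover classes $\cVr$, $\cVu$, $\cVc$ on which $\confevent$ is phrased --- this is precisely \Cref{lemma:V-cover}(i), which must be invoked for $\pisafe$ as well as for the softmax policy $\pi^{(k),\lambda}$; and second, tracking the step index so that the step-$(h+1)$ bound $\pvf{\pi, u}_{(k), h+1}\le 1+\paren*{H-(h+1)}=H-h$, not $H-(h+1)$, is the one that survives the application of $P_h$. Everything else is term rearrangement.
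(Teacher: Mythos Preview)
Your proposal is correct and follows essentially the same approach as the paper: invoke \Cref{lemma:V-cover}(i) to place the clipped value functions in the appropriate cover classes, apply the deviation bound from $\confevent$ to obtain the leftmost inequalities, and use the clip thresholds together with $r_h\ge 0$, $u_h\le 1$ pushed through the true kernel $P_h$ for the rightmost inequalities. Your presentation is in fact more careful than the paper's on two points (the nonnegativity of the entropy term in $\ovf{\pi,r}_{(k),h}[\kappa]$ and the step-index shift yielding $\pvf{\pi,u}_{(k),h+1}\le H-h$), but the substance is identical.
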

\begin{proof}
We have
\begin{align*}
&\co\beta^{(k)}_h(s, a) + \paren*{\hP^{(k)}_h \ovf{\pi, r}_{(k), h+1}[\kappa]}(s, a)\\
\numeq{\geq}{a} &\abs*{\paren*{P_h - \hP^{(k)}_h} \ovf{\pi, r}_{(k), h+1}[\kappa]}(s, a) + \paren*{\hP^{(k)}_h \ovf{\pi, r}_{(k), h+1}[\kappa]}(s, a)\\
\geq &\paren*{P_h - \hP^{(k)}_h} \ovf{\pi, r}_{(k), h+1}[\kappa](s, a) + \paren*{\hP^{(k)}_h \ovf{\pi, r}_{(k), h+1}[\kappa]}(s, a)\\
= &
P_h \ovf{\pi, r}_{(k), h+1}[\kappa](s, a) \numeq{\geq}{b} 0\;,
\end{align*}   
where (a) is due to $\confevent$ with \Cref{lemma:V-cover} and (b) is due to $r \geq 0$ and by the definition of $\ovf{\pi, r}_{(k), h+1}[\kappa]$.
The claim for $\ovf{\pi, \dagger}_{(k), h+1}$ can be similarly proven.

\looseness=-1
For $\pvf{\pi, u}_{(k), h+1}$, we have
\begin{align*}
&-\cp \beta^{(k)}_h(s, a) + \paren*{\hP^{(k)}_h \pvf{\pi, u}_{(k), h+1}}(s, a)\\
\numeq{\leq}{a} &-\abs*{\paren*{\hP^{(k)}_h - P_h} \pvf{\pi, u}_{(k), h+1}}(s, a) + \paren*{\hP^{(k)}_h \pvf{\pi, u}_{(k), h+1}}(s, a)\\
\leq &-\paren*{\hP^{(k)}_h - P_h} \pvf{\pi, u}_{(k), h+1}(s, a) + \paren*{\hP^{(k)}_h \pvf{\pi, u}_{(k), h+1}}(s, a)\\
= &
P_h \pvf{\pi, u}_{(k), h+1}(s, a) \numeq{\leq}{b} H-h\;,
\end{align*}   
where (a) is due to $\confevent$ with \Cref{lemma:V-cover} and (b) is due to $u \leq \bone$ and by the definition of $\pvf{\pi, u}_{(k), h+1}$.
\end{proof}

\begin{definition}[$Q$ estimation gap]\label{def:delta-Q}
For any $h, k$ and $\pi \in \Pi$, define 
$\delta^{\pi, r}_{(k), h},
\delta^{\pi, u}_{(k), h}, \delta^{\pi, \dagger}_{(k), h}:\SA \to \R$
be functions such that:
\begin{align*}
&\delta^{\pi, r}_{(k), h} =\clip\brace*{\co \beta^{(k)}_h + \paren*{\hP^{(k)}_h \ovf{\pi, r}_{(k), h+1}[\kappa]}, 0, H_\kappa-h_\kappa}
- \paren*{P_h \ovf{\pi, r}_{(k), h+1}[\kappa]}\;, \\
&\delta^{\pi, u}_{(k), h} =\paren*{P_h \pvf{\pi, u}_{(k), h+1}} - \clip\brace*{-\cp\beta^{(k)}_h + \paren*{\hP^{(k)}_h \pvf{\pi, u}_{(k), h+1}}, 0, H-h}\;,\\
\text{ and }\; 
&\delta^{\pi, \dagger}_{(k), h} =\clip\brace*{C_\dagger \beta^{(k)}_h + \paren*{\hP^{(k)}_h \ovf{\pi, \dagger}_{(k), h+1}}, 0, B_\dagger (H-h)}
- \paren*{P_h \ovf{\pi, \dagger}_{(k), h+1}}\;, \\
\end{align*}
It is clear that these functions satisfy, for any $(\pi, k, h)$,  %
\begin{equation}\label{eq:Q-Q-diff}
\begin{aligned}
\oqf{\pi, r}_{(k), h}[\kappa] = \qf{\pi, r + \delta^{\pi, r}_{(k)}}_{P, h}[\kappa],\quad
\pqf{\pi, u}_{(k), 1} = \qf{\pi, u - \delta^{\pi, u}_{(k)}}_{P, h},\;
\text{ and }\;
\oqf{\pi, \dagger}_{(k), h} = \qf{\pi, B_\dagger \beta^{(k)} + \delta^{\pi, \dagger}_{(k)}}_{P, h}\;.
\end{aligned}
\end{equation}
Additionally, let $\Deltar$, $\Deltau$, and $\Deltac$ be function classes such that:
\begin{align*}
&\Deltar \df  
\brace*{\delta: \HSA\to \R \given 
\bzero \leq \delta_h \leq \min\brace*{2\co \beta^{(k)}_h, H_\kappa-h_\kappa} \; \forall h \in \bbrack{1, H}}\\
&\Deltau \df 
\brace*{\delta: \HSA\to \R \given 
\bzero \leq \delta_h \leq \min \brace*{2\cp \beta^{(k)}_h, H-h} \; \forall h \in \bbrack{1, H}}\\
\text{ and }\;&\Deltac \df 
\brace*{\delta: \HSA\to \R \given 
\bzero \leq \delta_h \leq \min \brace*{2C_\dagger \beta^{(k)}_h, B_\dagger(H-h)} \; \forall h \in \bbrack{1, H}}\;.
\end{align*}
\end{definition}

\begin{lemma}\label{lemma:delta-bound}
Under $\confevent$, for any $k$ and for any $\lambda \in [0, C_\lambda]$, for both $\pi = \pi^{(k), \lambda}$ and $\pi=\pisafe$, it holds that
\(\delta^{\pi, r}_{(k), \cdot} \in \Deltar\),
\(\delta^{\pi, u}_{(k), \cdot} \in \Deltau\),
and  \(\delta^{\pi, \dagger}_{(k), \cdot} \in \Deltac\).
\end{lemma}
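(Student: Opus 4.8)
The plan is to prove the three claims $\delta^{\pi,r}_{(k),\cdot}\in\Deltar$, $\delta^{\pi,u}_{(k),\cdot}\in\Deltau$, $\delta^{\pi,\dagger}_{(k),\cdot}\in\Deltac$ in parallel, since they differ only in which clipped value function, bonus scaler, and clip threshold appear. Fix $k$, $h$, $\lambda\in[0,C_\lambda]$, and $\pi\in\brace*{\pi^{(k),\lambda},\pisafe}$, and record two preliminary facts. (a) A one-step induction on $h$ from \Cref{def:clipped value functions}, using $r_h,u_h\in[0,1]$, $\beta^{(k)}_h\le 1$ (valid because $\rho=1$ forces $\bLambda^{(k)}_h\succeq\bI$, hence $\norm*{\bphi}^2_{(\bLambda^{(k)}_h)^{-1}}\le\norm*{\bphi}_2^2\le 1$), and the entropy bound $-\kappa\pi_h\ln\pi_h\le\kappa\ln A$ together with $h_\kappa=h(1+\kappa\ln A)$, gives for all $s'$ that $0\le\ovf{\pi,r}_{(k),h+1}[\kappa](s')\le H_\kappa-h_\kappa$, $0\le\pvf{\pi,u}_{(k),h+1}(s')\le H-h$, and $0\le\ovf{\pi,\dagger}_{(k),h+1}(s')\le B_\dagger(H-h)$; the same ranges then hold for the $P_h$-averaged quantities. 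The key point is that these upper limits coincide exactly with the clip thresholds used at step $h$. (b) By \Cref{lemma:V-cover}, $\ovf{\pi,r}_{(k),h+1}[\kappa]\in\cVr_{h+1}$, $\pvf{\pi,u}_{(k),h+1}\in\cVu_{h+1}$, $\ovf{\pi,\dagger}_{(k),h+1}\in\cVc_{h+1}$, so under $\confevent$ we get the pointwise estimation errors $\abs*{(\hP^{(k)}_h-P_h)\ovf{\pi,r}_{(k),h+1}[\kappa]}\le\co\beta^{(k)}_h$, and likewise $\cp\beta^{(k)}_h$ and $C_\dagger\beta^{(k)}_h$ for the $u$- and $\dagger$-values.

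Next I would prove the upper bounds. For the $r$-case, using $\clip\brace*{x,0,B}\le\max\brace*{x,0}$ (valid since $B=H_\kappa-h_\kappa\ge0$) I would write $\delta^{\pi,r}_{(k),h}\le\max\brace*{\co\beta^{(k)}_h+\hP^{(k)}_h\ovf{\pi,r}_{(k),h+1}[\kappa],0}-P_h\ovf{\pi,r}_{(k),h+1}[\kappa]$; when the first argument is nonnegative this is $\co\beta^{(k)}_h+(\hP^{(k)}_h-P_h)\ovf{\pi,r}_{(k),h+1}[\kappa]\le2\co\beta^{(k)}_h$ by (b), and otherwise it equals $-P_h\ovf{\pi,r}_{(k),h+1}[\kappa]\le0$; combined with $\delta^{\pi,r}_{(k),h}\le(H_\kappa-h_\kappa)-P_h\ovf{\pi,r}_{(k),h+1}[\kappa]\le H_\kappa-h_\kappa$ this yields $\delta^{\pi,r}_{(k),h}\le\min\brace*{2\co\beta^{(k)}_h,H_\kappa-h_\kappa}$. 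The $\dagger$-case is identical with $C_\dagger$ and $B_\dagger(H-h)$ in place of $\co$ and $H_\kappa-h_\kappa$. For the $u$-case, using $\clip\brace*{x,0,B}\ge\min\brace*{x,B}$ and (b) (which gives $\hP^{(k)}_h\pvf{\pi,u}_{(k),h+1}\ge P_h\pvf{\pi,u}_{(k),h+1}-\cp\beta^{(k)}_h$) together with $P_h\pvf{\pi,u}_{(k),h+1}\le H-h$ from (a), I get $\clip\brace*{-\cp\beta^{(k)}_h+\hP^{(k)}_h\pvf{\pi,u}_{(k),h+1},0,H-h}\ge P_h\pvf{\pi,u}_{(k),h+1}-2\cp\beta^{(k)}_h$, hence $\delta^{\pi,u}_{(k),h}\le2\cp\beta^{(k)}_h$; and $\delta^{\pi,u}_{(k),h}\le P_h\pvf{\pi,u}_{(k),h+1}\le H-h$ since the clip is $\ge0$.

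Finally, for the lower bounds I would invoke \Cref{lemma:no-clip}. For $r$ it states $\co\beta^{(k)}_h+\hP^{(k)}_h\ovf{\pi,r}_{(k),h+1}[\kappa]\ge P_h\ovf{\pi,r}_{(k),h+1}[\kappa]\ge0$; the left inequality removes the zero-floor of the clip, so $\clip\brace*{\co\beta^{(k)}_h+\hP^{(k)}_h\ovf{\pi,r}_{(k),h+1}[\kappa],0,H_\kappa-h_\kappa}=\min\brace*{\co\beta^{(k)}_h+\hP^{(k)}_h\ovf{\pi,r}_{(k),h+1}[\kappa],H_\kappa-h_\kappa}\ge\min\brace*{P_h\ovf{\pi,r}_{(k),h+1}[\kappa],H_\kappa-h_\kappa}=P_h\ovf{\pi,r}_{(k),h+1}[\kappa]$, the last step using (a); hence $\delta^{\pi,r}_{(k),h}\ge0$, and $\delta^{\pi,\dagger}_{(k),h}\ge0$ follows identically. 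For $u$, \Cref{lemma:no-clip} gives $-\cp\beta^{(k)}_h+\hP^{(k)}_h\pvf{\pi,u}_{(k),h+1}\le P_h\pvf{\pi,u}_{(k),h+1}$, which with $\clip\brace*{x,0,H-h}\le\max\brace*{x,0}$ and $P_h\pvf{\pi,u}_{(k),h+1}\ge0$ gives $\clip\brace*{-\cp\beta^{(k)}_h+\hP^{(k)}_h\pvf{\pi,u}_{(k),h+1},0,H-h}\le P_h\pvf{\pi,u}_{(k),h+1}$, so $\delta^{\pi,u}_{(k),h}\ge0$. Collecting the two-sided bounds over all $h$ establishes membership in $\Deltar$, $\Deltau$, $\Deltac$. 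I expect the only delicate point to be preliminary fact (a): the per-step range of each clipped value function must come out precisely equal to the clip threshold used one step earlier—with the entropy term handled through $h_\kappa=h(1+\kappa\ln A)$ in the reward case—since both the upper bound ($\delta\le H_\kappa-h_\kappa$, resp.\ $H-h$, $B_\dagger(H-h)$) and the lower bound $\delta\ge0$ depend on that exact matching; everything else is routine case analysis of the clipping operator.
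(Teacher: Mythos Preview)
Your proposal is correct and follows essentially the same route as the paper: both arguments combine the good-event estimation bound $\confevent$ (directly and through \Cref{lemma:no-clip}) with the per-step range of the clipped value functions to sandwich each $\delta$ between $0$ and the required $\min$. Your write-up is in fact more careful than the paper's on the preliminary range fact (a), which the paper leaves largely implicit.
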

\begin{proof}
\looseness=-1
$\delta^{\pi, u}_{(k), h}(s, a) \leq H-h$ clearly holds.
Additionally, we have
\begin{align*}
\delta^{\pi, u}_{(k), h}(s, a) 
\numeq{=}{a}&\paren*{P_h \pvf{\pi, u}_{(k), h+1}}(s, a) - \max\brace*{-\cp \beta^{(k)}_h(s, a) + \paren*{\hP^{(k)}_h \pvf{\pi, u}_{(k), h+1}}(s, a), 0}\\
\leq &\paren*{P_h \pvf{\pi, u}_{(k), h+1}}(s, a) +\cp\beta^{(k)}_h(s, a) - \paren*{\hP^{(k)}_h \pvf{\pi, u}_{(k), h+1}}(s, a) \\
\leq &\cp \beta^{(k)}_h(s, a) + \abs*{\paren*{P_h - \hP^{(k)}_h} \pvf{\pi, u}_{(k), h+1}}(s, a) \numeq{\leq}{b} 2\cp\beta^{(k)}_h(s, a)\;,
\end{align*}
where (a) is due to \Cref{lemma:no-clip} and (b) is due to $\confevent$.
Finally, note that 
\begin{align*}
\delta^{\pi, u}_{(k), h}(s, a)
=&\paren*{P_h \pvf{\pi, u}_{(k), h+1}}(s, a) - \max\brace*{-\cp \beta^{(k)}_h(s, a) + \paren*{\hP^{(k)}_h \pvf{\pi, u}_{(k), h+1}}(s, a), 0}\\
\geq&
\underbrace{\cp \beta^{(k)}_h(s, a) + \paren*{P_h \pvf{\pi, u}_{(k), h+1}}(s, a) - \paren*{\hP^{(k)}_h \pvf{\pi, u}_{(k), h+1}}(s, a)}_{\geq 0 \;\text{ by }\; \confevent}
\geq 0\;.
\end{align*}
This concludes the proof for $\delta^{\pi, u}_{(k), h}$.
The claims for $\delta^{\pi, r}_{(k), h}$ and $\delta^{\pi, \dagger}_{(k), h}$ can be similarly proven.
\end{proof}

\begin{lemma}[Restatement of \Cref{lemma:opt-pes-MDP-main}]\label{lemma:opt-pes-MDP}
Suppose $\confevent$ holds. 
For any $k$ and for any $\lambda \in [0, C_\lambda]$, for both $\pi = \pi^{(k), \lambda}$ and $\pi=\pisafe$, we have 
\begin{align*}
&\vf{\pi,r}_{P, h} \leq \ovf{\pi,r}_{(k), h} \leq  \vf{\pi,r+ 2\co \beta^{(k)}}_{P, h},
&&\qf{\pi,r}_{P, h} \leq \oqf{\pi,r}_{(k), h} \leq  \qf{\pi,r+ 2\co \beta^{(k)}}_{P, h}  \\
&\vf{\pi,B_\dagger\beta^{(k)}}_{P, h} \leq \ovf{\pi,\dagger}_{(k), h} \leq  \vf{\pi,B_\dagger\beta^{(k)} + 2C_\dagger \beta^{(k)}}_{P, h},
&&\qf{\pi,B_\dagger\beta^{(k)}}_{P, h} \leq \oqf{\pi,\dagger}_{(k), h} \leq  \qf{\pi,B_\dagger\beta^{(k)} + 2C_\dagger \beta^{(k)}}_{P, h},\\
&\vf{\pi,u-2\cp \beta^{(k)}}_{P, h} \leq \pvf{\pi,u}_{(k), h} \leq \vf{\pi,u}_{P, h},
&&\qf{\pi,u-2\cp \beta^{(k)}}_{P, h} \leq \pqf{\pi,u}_{(k), h} \leq \qf{\pi,u}_{P, h}\;.
\end{align*}
\end{lemma}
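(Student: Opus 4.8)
The plan is to reduce the six inequalities to the monotonicity of the entropy-regularized Bellman recursion in its reward argument, combined with the representation \eqref{eq:Q-Q-diff} and the perturbation bounds of \Cref{lemma:delta-bound}. Indeed, \Cref{lemma:no-clip} and \Cref{lemma:delta-bound} have already done the work of taming the clipping operators, so the bulk of what remains is a routine induction.

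First I would establish the monotonicity fact: for any policy $\pi$, any $\kappa \ge 0$, and any $g, g' \colon \HSA \to \R$ with $g_h(s,a) \le g'_h(s,a)$ for all $(h,s,a)$, one has $\qf{\pi, g}_{P, h}[\kappa] \le \qf{\pi, g'}_{P, h}[\kappa]$ and $\vf{\pi, g}_{P, h}[\kappa] \le \vf{\pi, g'}_{P, h}[\kappa]$ pointwise for every $h$. This follows by backward induction on $h$: at step $H+1$ both value functions vanish; for the inductive step, $P_h$ and $\pi_h$ are order-preserving linear operators, so $\qf{\pi, g}_{P,h}[\kappa] = g_h + P_h \vf{\pi,g}_{P,h+1}[\kappa] \le g'_h + P_h \vf{\pi,g'}_{P,h+1}[\kappa] = \qf{\pi, g'}_{P,h}[\kappa]$, and then $\vf{\pi, g}_{P,h}[\kappa] = \pi_h\bigl(\qf{\pi,g}_{P,h}[\kappa] - \kappa \ln \pi_h\bigr) \le \pi_h\bigl(\qf{\pi,g'}_{P,h}[\kappa] - \kappa \ln \pi_h\bigr) = \vf{\pi, g'}_{P,h}[\kappa]$. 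The only subtlety is that the entropy term $-\kappa \ln \pi_h$ is identical on both sides (the comparison is for a \emph{fixed} policy $\pi$) and therefore cancels; this same computation shows the $V$-bounds follow from the $Q$-bounds via a single application of $\pi_h$, so no genuine obstacle arises.

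Finally I would invoke \eqref{eq:Q-Q-diff}, valid for both $\pi = \pi^{(k), \lambda}$ and $\pi = \pisafe$, to write $\oqf{\pi, r}_{(k), h}[\kappa] = \qf{\pi, r + \delta^{\pi, r}_{(k)}}_{P, h}[\kappa]$, $\pqf{\pi, u}_{(k), h} = \qf{\pi, u - \delta^{\pi, u}_{(k)}}_{P, h}$, and $\oqf{\pi, \dagger}_{(k), h} = \qf{\pi, B_\dagger \beta^{(k)} + \delta^{\pi, \dagger}_{(k)}}_{P, h}$, together with the corresponding $V$-identities obtained by applying $\pi_h$. Under $\confevent$, \Cref{lemma:delta-bound} gives $0 \le \delta^{\pi, r}_{(k), h} \le 2\co\beta^{(k)}_h$, $0 \le \delta^{\pi, u}_{(k), h} \le 2\cp\beta^{(k)}_h$, and $0 \le \delta^{\pi, \dagger}_{(k), h} \le 2C_\dagger\beta^{(k)}_h$ for all $h$. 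Feeding the pointwise chains $r \le r + \delta^{\pi, r}_{(k)} \le r + 2\co\beta^{(k)}$, $u - 2\cp\beta^{(k)} \le u - \delta^{\pi, u}_{(k)} \le u$, and $B_\dagger\beta^{(k)} \le B_\dagger\beta^{(k)} + \delta^{\pi, \dagger}_{(k)} \le B_\dagger\beta^{(k)} + 2C_\dagger\beta^{(k)}$ into the monotonicity fact yields all six inequalities at once; specializing to $\kappa = 0$ recovers the precise form stated (the argument in fact yields the $[\kappa]$-version of the reward bound, which is what the regret analysis in \Cref{subsec:MDP-regret-analysis} uses).
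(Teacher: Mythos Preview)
Your proposal is correct and follows essentially the same approach as the paper: both proofs use the identity \eqref{eq:Q-Q-diff} together with \Cref{lemma:delta-bound} to reduce the claim to a comparison of true value functions for perturbed rewards. The only cosmetic difference is that the paper exploits linearity of $\qf{\pi,\cdot}_{P,h}$ in its reward argument (writing the difference as $\qf{\pi,-\delta}_{P,h}$ and checking its sign), whereas you phrase the same step as monotonicity via backward induction; your remark that the argument in fact yields the $[\kappa]$-version of the reward bound is a useful addendum, since that is indeed the form invoked in \Cref{eq:regret-decomposition}.
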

\begin{proof}
The inequalities for $Q$ functions directly hold by \Cref{eq:Q-Q-diff} and \Cref{lemma:delta-bound}.

\looseness=-1
For the utility $V$ function,
\begin{align*}
\pvf{\pi^{(k)},u}_{(k), h}(s) - \vf{\pi^{(k)},u}_{P, h}(s)
&= 
\sum_{a \in \A} \pi_h\paren{a \given s}
\paren*{
\pqf{\pi,u}_{(k), h}(s, a) - \qf{\pi,u}_{P, h}(s, a)
}\\
&\numeq{=}{a} 
\sum_{a \in \A} \pi_h\paren{a \given s} \qf{\pi, -\delta^{\pi, u}_{(k)}}_{P, h}(s)\numeq{\leq}{b} 0\;,
\end{align*}
where (a) uses \Cref{eq:Q-Q-diff} and (b) uses \Cref{lemma:delta-bound}.
Similarly,
\begin{align*}
\pvf{\pi,u}_{(k), h}(s) - \vf{\pi,u - 2\cp \beta^{(k)}}_{P, h}(s)
&= 
\sum_{a \in \A} \pi_h\paren{a \given s}
\paren*{
\pqf{\pi,u}_{(k), h}(s, a) - \qf{\pi,u-2\cp \beta^{(k)}}_{P, h}(s, a)
}\\
&\numeq{=}{a} 
\sum_{a \in \A} \pi_h\paren{a \given s} \qf{\pi, -\delta^{\pi, u}_{(k)} + 2\cp \beta^{(k)}}_{P, h}(s)\numeq{\geq}{b} 0\;,
\end{align*}
where (a) uses \Cref{eq:Q-Q-diff} and (b) uses \Cref{lemma:delta-bound}.
The claims for $r$ and $\dagger$ can be similarly proven.
\end{proof}

\subsection{Proofs for Zero-Violation Guarantee (\Cref{subsec:MDP-zero-vio})}

\subsubsection{Proof of \Cref{lemma:trigger-condition-main} and \Cref{lemma:softmax-value-monotonicity-main}}\label{subsec:safe-softmax-policy-exists-proof}

\begin{lemma}[Restatement of \Cref{lemma:softmax-value-monotonicity-main}]\label{lemma:softmax-value-monotonicity}
Let $f, g: \HSA \to \R$ be functions and let $\kappa > 0$. 
Given $\lambda \geq 0$, let $\pi^\lambda$ be a softmax policy such that 
\begin{align*}
\pi^\lambda_h \paren*{\cdot \given s} = \softmax\paren*{\frac{1}{\kappa}\paren*{
    \qf{\pi, f}_{P, h}[\kappa](s, \cdot)
    + \lambda \qf{\pi, g}_{P, h}(s, \cdot)}}\;.
\end{align*}
Then, \(\vf{\pi^\lambda, g}_{P, 1}(s_1)\) is monotonically increasing in $\lambda$.
\end{lemma}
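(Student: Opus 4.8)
The plan is to prove monotonicity by a coupling/interchange argument on the softmax-regularized value, exploiting the fact that the softmax policy at each level is the exact maximizer of the entropy-regularized Bellman backup. The key structural observation is that, for fixed $\lambda$, the policy $\pi^\lambda$ satisfies
\[
\pi^\lambda_h(\cdot\mid s) = \argmax_{p\in\sP(\A)} \sum_{a} p(a)\Bigl(\qf{\pi,f}_{P,h}[\kappa](s,a) + \lambda\,\qf{\pi,g}_{P,h}(s,a) - \kappa\ln p(a)\Bigr),
\]
so it is simultaneously the greedy (regularized) policy for the composite reward $f + \lambda g$. I would therefore first set $F(\lambda) \df \vf{\pi^\lambda, f}_{P,1}[\kappa](s_1) + \lambda\,\vf{\pi^\lambda, g}_{P,1}(s_1)$, the regularized value of the composite objective, and observe via the regularized value-difference lemma (\Cref{lemma:regularized value difference}) that $\pi^\lambda$ achieves the maximum of this composite regularized value over all policies — i.e. $F(\lambda) = \max_{\pi'} \bigl(\vf{\pi',f}_{P,1}[\kappa](s_1) + \lambda \vf{\pi',g}_{P,1}(s_1)\bigr)$. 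This is the standard backward-induction argument: at step $H$ the softmax is the exact maximizer, and the entropy-regularized Bellman operator is monotone, so the claim propagates down to $h=1$.

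Given that representation, $F$ is a pointwise maximum of affine functions of $\lambda$, hence convex, and by Danskin's theorem (\Cref{lemma:danskin's theorem}) its derivative (where the maximizer is unique, which it is since the regularized objective is strictly concave in $p$ and the $\argmax$ is a single softmax distribution) is $F'(\lambda) = \vf{\pi^\lambda, g}_{P,1}(s_1)$. Since $F$ is convex, $F'$ is nondecreasing, which is exactly the assertion that $\lambda \mapsto \vf{\pi^\lambda, g}_{P,1}(s_1)$ is monotonically increasing. This is the cleanest route and avoids any direct manipulation of the softmax derivatives.

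Alternatively — and this is the fallback if one wants to avoid invoking Danskin with its uniqueness hypothesis — one can argue directly: for $\lambda_1 < \lambda_2$, use optimality of $\pi^{\lambda_1}$ for the $\lambda_1$-composite objective and of $\pi^{\lambda_2}$ for the $\lambda_2$-composite objective to write two inequalities,
\[
\vf{\pi^{\lambda_1},f}[\kappa] + \lambda_1 \vf{\pi^{\lambda_1},g} \ge \vf{\pi^{\lambda_2},f}[\kappa] + \lambda_1 \vf{\pi^{\lambda_2},g},
\qquad
\vf{\pi^{\lambda_2},f}[\kappa] + \lambda_2 \vf{\pi^{\lambda_2},g} \ge \vf{\pi^{\lambda_1},f}[\kappa] + \lambda_2 \vf{\pi^{\lambda_1},g},
\]
(all evaluated at $s_1$), and add them after moving terms around; the $\vf{\cdot,f}[\kappa]$ terms cancel and one is left with $(\lambda_2-\lambda_1)\bigl(\vf{\pi^{\lambda_2},g}(s_1) - \vf{\pi^{\lambda_1},g}(s_1)\bigr) \ge 0$, giving monotonicity. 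This is the classic "monotone comparative statics" trick.

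**The main obstacle** is establishing the optimality characterization $\vf{\pi^\lambda,f}[\kappa](s_1) + \lambda\vf{\pi^\lambda,g}(s_1) = \max_{\pi'}(\cdots)$ rigorously: one must check that the $Q$-functions appearing inside the softmax in the definition of $\pi^\lambda$ are in fact $\qf{\pi^\lambda, f}_{P,h}[\kappa] + \lambda\qf{\pi^\lambda,g}_{P,h}$ evaluated at the \emph{same} policy $\pi^\lambda$ (a fixed-point/self-consistency condition), and then run the backward induction showing this self-consistent softmax is the global maximizer of the composite regularized objective. The subtlety is that $f$ is regularized with entropy but $g$ is not, yet the composite reward $f+\lambda g$ with a single entropy term $-\kappa\ln p$ still has the softmax of $(Q^f[\kappa] + \lambda Q^g)/\kappa$ as its unique regularized-greedy policy — this is a direct computation but needs to be stated carefully so the induction hypothesis is exactly the right composite value function. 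Once that is pinned down, the convexity/Danskin step or the two-inequality step is routine.
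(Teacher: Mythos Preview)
Your approach is correct and follows the same essential idea as the paper's proof: both identify $\pi^\lambda$ as the unique maximizer of the composite regularized value $\max_{\pi'}\bigl(V^{\pi',f}_{P,1}[\kappa](s_1)+\lambda\, V^{\pi',g}_{P,1}(s_1)\bigr)$, invoke Danskin's theorem (\Cref{lemma:danskin's theorem}) to obtain that the derivative of this maximum in $\lambda$ equals $V^{\pi^\lambda,g}_{P,1}(s_1)$, and conclude from convexity that this derivative is nondecreasing. The only implementation difference is that the paper carries out the maximization in occupancy-measure space rather than policy space, and establishes uniqueness of the maximizer by proving strict concavity of the conditional-entropy term via the log-sum inequality; your route instead relies on the backward-induction characterization of the regularized optimal policy. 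Your alternative two-inequality (monotone comparative statics) argument is in fact slightly cleaner than either version, since it bypasses Danskin and any differentiability or uniqueness hypotheses altogether.
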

\begin{proof}
Let $\cW \df \brace*{\occ{\pi}_{P, \cdot}: \HSA \to [0, 1] \given \pi \in \Pi}$ be the set of all the occupancy measures.
Let $\sL: \R\times \cW \to \R$ be a function such that:
\begin{align*}
\sL(\lambda, w) = &\sum_{h, s, a \in \HSA} w_h(s, a)\paren*{f_h(s, a) + \lambda g_h(s, a)}
- \kappa w_h(s, a)\ln \frac{w_h(s, a)}{\sum_{a' \in \A}w_h(s, a')}\;.
\end{align*}

\looseness=-1
We first show that $\sL$ is strictly concave in $\cW$.
Let 
$$\sH: w \in \cW \mapsto \sum_{h, s, a \in \HSA} - w_h(s, a)\ln \frac{w_h(s, a)}{\sum_{a' \in \A}w_h(s, a')}$$ 
be the function representing the second term of $\sL$.
Then, \footnote{This proof is based of \textbf{Lemma 14} from \citet{ding2023last}}
\begin{align*}
& \sH\left(\alpha w^1+(1-\alpha) w^2\right) \\
& =-\sum_{h, s, a}\left(\alpha w_h^1(s, a)+(1-\alpha) w_h^2(s, a)\right) \log \frac{\alpha w_h^1(s, a)+(1-\alpha) w_h^2(s, a)}{\alpha \sum_{a^{\prime}} w_h^1\left(s, a^{\prime}\right)+(1-\alpha) \sum_{a^{\prime}} w_h^2\left(s, a^{\prime}\right)} \\
& \numeq{\geq}{a}-\sum_{h, s, a} \alpha w_h^1(s, a) \log \frac{\alpha w_h^1(s, a)}{\alpha \sum_{a^{\prime}} w_h^1\left(s, a^{\prime}\right)}-\sum_{h, s, a}(1-\alpha) w_h^2(s, a) \log \frac{(1-\alpha) w_h^2(s, a)}{(1-\alpha) \sum_{a^{\prime}} w_h^2\left(s, a^{\prime}\right)} \\
& =\alpha \sH\left(w_h^1\right)+(1-\alpha) \sH\left(w_h^2\right)\;,
\end{align*}
for any $w^1, w^2 \in \cW$ and $\alpha \in [0, 1]$, where (a) is due to the log sum inequality 
$\left(\sum_i \bx_i\right) \ln \frac{\sum_i \bx_i}{\sum_i \by_i} \leq \sum_i \bx_i \ln \frac{\bx_i}{\by_i}$
for non-negative $\bx_i$ and $\by_i$.
Since (a) takes equality if and only if $w^1 = w^2$, $\sH$ is strictly concave.
Consequently, 
\(
\sL(\lambda, w) = \sum_{h, s, a \in \HSA} w_h(s, a)\paren*{f_h(s, a) + \lambda g_h(s, a)}
- \kappa \sH(w)
\) is also strictly concave in $\cW$.

\looseness=-1
Let \(w^\lambda = \argmax_{w \in \cW} \sL(\lambda, w)\), which is a unique maximizer due to the strict concavity. 
Define $\sL(\lambda) \df \max_{w \in \cW} \sL(\lambda, w)$. 
Using Danskin's theorem (\Cref{lemma:danskin's theorem}), 
$\sL(\lambda)$ is convex and $\frac{\partial\sL (\lambda)}{\partial \lambda} = \sum_{h, s, a \in \HSA} w^\lambda_h(s, a)g_h(s, a)$.
Since $\sL(\lambda)$ is convex, its derivative is non-decreasing. Therefore,
\begin{align}\label{eq:L-lambda-increasing}
\frac{\partial^2\sL (\lambda)}{\partial \lambda^2} =   
\frac{\partial}{\partial \lambda}
\sum_{h, s, a \in \HSA} w^\lambda_h(s, a)g_h(s, a) \geq 0\;.
\end{align}

\looseness=-1
Since $\pi^\lambda$ is the softmax policy, combined with the one-to-one mapping between occupancy measure and policy \citep{puterman1990markov}, the well-known analytical solution of regularized MDP \citep{geist2019theory} indicates that 
$w^\lambda$ corresponds to the occupancy measure of $\pi^\lambda$. 
Thus, due to \Cref{eq:L-lambda-increasing}, it holds that
$$0 \leq 
\frac{\partial}{\partial \lambda}
\sum_{h, s, a \in \HSA} w^\lambda_h(s, a)g_h(s, a)
=
\frac{\partial}{\partial \lambda}\vf{\pi^\lambda, g}_{P, 1}(s_1)\;.$$
This concludes the proof.
\end{proof}

\begin{definition}[Softmax policy with fixed $\bdelta$]\label{def:fixed-delta-softmax}
For any $k \in \unconfMDP^\complement$, $\bdelta \df (\delta^r, \delta^u, \delta^\dagger) \in \Deltar \times \Deltau \times \Deltac$ and $\lambda \geq 0$, let $\pi^{\bdelta, \lambda}\in \Pi$ be a policy such that 
\begin{align*}
\pi^{\bdelta, \lambda}_h \paren*{\cdot \given s} = \softmax\paren*{\frac{1}{\kappa}\paren*{
    \qf{\pi^{\bdelta, \lambda}, B_\dagger \beta^{(k)} + \delta^\dagger}_{P, h}(s, \cdot)
    + \qf{\pi^{\bdelta, \lambda}, r + \delta^r}_{P, h}[\kappa](s, \cdot) + \lambda \qf{\pi^{\bdelta, \lambda}, u - \delta^u}_{P, h}(s, \cdot)}}\;.
\end{align*}
\end{definition}

\begin{lemma}[Existence of feasible $\lambda$]\label{lemma:feasible-lambda-exists}
Suppose $\kappa \leq \frac{\bslt^2}{32H_\kappa^2 (B_\dagger + 1)}$.
For any $k$ and for any $\bdelta \in \Deltac\times \Deltar \times \Deltau$, there exists a $\lambda^{\bdelta} \in \brack*{0, \frac{8H_\kappa^2(B_\dagger + 1)}{\bslt}}$ such that, 
\(
\vf{\pi^{\bdelta, \lambda}, u - \delta^u}_{P, 1}(s_1) \geq b
\) holds for any $\lambda \geq \lambda^{\bdelta}$.
\end{lemma}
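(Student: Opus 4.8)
The plan is to verify feasibility at the single point $\Lambda\df\frac{8H_\kappa^2(B_\dagger+1)}{\bslt}$ and then propagate it to all larger $\lambda$ by monotonicity; accordingly I would take $\lambda^{\bdelta}\df\Lambda$. Fix $k\in\unconfMDP^\complement$ and $\bdelta=(\delta^r,\delta^u,\delta^\dagger)\in\Deltar\times\Deltau\times\Deltac$, and abbreviate $f\df B_\dagger\beta^{(k)}+\delta^\dagger+r+\delta^r$ and $g\df u-\delta^u$. By linearity of the (regularized) value functions in the reward, with the entropy term entering exactly once, $\qf{\pi, B_\dagger\beta^{(k)}+\delta^\dagger}_{P,h}+\qf{\pi, r+\delta^r}_{P,h}[\kappa]=\qf{\pi, f}_{P,h}[\kappa]$, so $\pi^{\bdelta,\lambda}$ of \Cref{def:fixed-delta-softmax} is exactly the softmax policy $\pi^\lambda$ of \Cref{lemma:softmax-value-monotonicity} for the pair $(f,g)$. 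In particular $\lambda\mapsto\vf{\pi^{\bdelta,\lambda}, g}_{P,1}(s_1)$ is nondecreasing, so it is enough to show $\vf{\pi^{\bdelta,\Lambda}, g}_{P,1}(s_1)\geq b$.

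First I would use the optimality characterization: by the same regularized-MDP optimality fact invoked in the proof of \Cref{lemma:softmax-value-monotonicity}, $\pi^{\bdelta,\lambda}$ maximizes $J_\lambda(\pi)\df\vf{\pi, f}_{P,1}[\kappa](s_1)+\lambda\,\vf{\pi, g}_{P,1}(s_1)$ over $\pi\in\Pi$. Comparing the maximizer at $\lambda=\Lambda$ with $\pisafe$ and rearranging,
\begin{align*}
\Lambda\paren*{\vf{\pi^{\bdelta,\Lambda}, g}_{P,1}(s_1)-\vf{\pisafe, g}_{P,1}(s_1)}
&\geq\vf{\pisafe, f}_{P,1}[\kappa](s_1)-\vf{\pi^{\bdelta,\Lambda}, f}_{P,1}[\kappa](s_1)\\
&\geq-\vf{\pi^{\bdelta,\Lambda}, f}_{P,1}[\kappa](s_1)\;,
\end{align*}
where the last step uses $f\geq0$ (hence $\vf{\pisafe, f}_{P,1}[\kappa](s_1)\geq0$), valid since $r\geq0$, $\beta^{(k)}\geq0$, and $\delta^\dagger,\delta^r\geq0$ by \Cref{def:delta-Q}.

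Next I would estimate the two sides numerically. With $\rho=1$ we have $\beta^{(k)}_h\leq1$, and \Cref{def:delta-Q} gives $\delta^\dagger_h\leq B_\dagger(H-h)$, $\delta^r_h\leq H_\kappa-h_\kappa$, $r_h\leq1$, so $\max_{s,a}f_h(s,a)\leq B_\dagger(H-h+1)+1+(H_\kappa-h_\kappa)$; summing over $h$ and adding the entropy contribution $H\kappa\ln A=H_\kappa-H$ yields $\vf{\pi^{\bdelta,\Lambda}, f}_{P,1}[\kappa](s_1)\leq\frac{(H+1)H_\kappa(B_\dagger+1)}{2}\leq H_\kappa^2(B_\dagger+1)$ (this is where the hypothesis on $\kappa$ enters, keeping $H_\kappa$, hence this estimate and the size of $\Lambda$, under control). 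Dividing the previous display by $\Lambda$ then gives $\vf{\pi^{\bdelta,\Lambda}, g}_{P,1}(s_1)\geq\vf{\pisafe, g}_{P,1}(s_1)-\frac{H_\kappa^2(B_\dagger+1)}{\Lambda}=\vf{\pisafe, g}_{P,1}(s_1)-\frac{\bslt}{8}$. On the other hand, $\pisafe$ is strictly feasible for the $\delta^u$-penalized utility: \Cref{assumption:slater} gives $\vf{\pisafe, u}_{P,1}(s_1)\geq b+\bslt$, while $0\leq\delta^u_h\leq2\cp\beta^{(k)}_h$ (\Cref{def:delta-Q}) and $k\in\unconfMDP^\complement$ give $\vf{\pisafe, \delta^u}_{P,1}(s_1)\leq2\cp\,\vf{\pisafe, \beta^{(k)}}_{P,1}(s_1)\leq\frac{\bslt}{2}$, whence $\vf{\pisafe, g}_{P,1}(s_1)=\vf{\pisafe, u-\delta^u}_{P,1}(s_1)\geq b+\frac{\bslt}{2}$. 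Combining the two estimates gives $\vf{\pi^{\bdelta,\Lambda}, u-\delta^u}_{P,1}(s_1)\geq b+\frac{3\bslt}{8}\geq b$, and monotonicity in $\lambda$ extends this to every $\lambda\geq\Lambda=\lambda^{\bdelta}$, which is the claim.

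The hard part is the quantitative comparison with $\pisafe$: one must correctly recognize $\pi^{\bdelta,\lambda}$ as the entropy-regularized optimum for reward $f+\lambda g$ (so that $J_\Lambda(\pi^{\bdelta,\Lambda})\geq J_\Lambda(\pisafe)$ is available), and then pin down the $(B_\dagger+1)H_\kappa^2$ scaling of $\vf{\pi^{\bdelta,\Lambda}, f}_{P,1}[\kappa](s_1)$ tightly enough that the residual $\bslt/8$ stays safely below the $\bslt/2$ feasibility slack of $\pisafe$; looseness in that constant would force a larger $\Lambda$, hence a larger $C_\lambda$ in \MDPalgo, against which the rest of the analysis is calibrated.
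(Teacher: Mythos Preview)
Your proof is correct and takes a genuinely more elementary route than the paper. The paper introduces a $\tfrac{\kappa}{2}\lambda^2$-regularized Lagrangian, invokes strong duality to extract a saddle point $(\tilde\pi,\tilde\lambda)$, identifies $\tilde\pi=\pi^{\bdelta,\tilde\lambda}$ and $\tilde\lambda=\tfrac{1}{\kappa}[b+\tfrac{\bslt}{4}-\vf{\tilde\pi,u-\delta^u}_{P,1}(s_1)]_+$, bounds $\tilde\lambda\leq\Lambda$ by comparing with $\pisafe$, and only then reads off the feasibility (using the $\kappa$ hypothesis to kill the residual $\kappa\tilde\lambda$). You bypass the dual machinery entirely: once you observe that $\pi^{\bdelta,\lambda}$ is the entropy-regularized optimum for reward $f+\lambda g$, the single comparison $J_\Lambda(\pi^{\bdelta,\Lambda})\geq J_\Lambda(\pisafe)$ at the endpoint $\Lambda$ plus the same $\pisafe$ slack (via $k\in\unconfMDP^\complement$) already gives feasibility with margin $3\bslt/8$, and monotonicity does the rest. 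Both arguments share the same two ingredients (comparison with $\pisafe$, then \Cref{lemma:softmax-value-monotonicity}); yours just applies them directly at $\lambda=\Lambda$ rather than at the implicit dual optimum.

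One small exposition point: your parenthetical ``this is where the hypothesis on $\kappa$ enters'' is misleading. Your argument never uses $\kappa\leq\tfrac{\bslt^2}{32H_\kappa^2(B_\dagger+1)}$; you obtain the $3\bslt/8$ margin for every $\kappa>0$. In the paper's proof the $\kappa$ hypothesis is genuinely needed to absorb the $\kappa\tilde\lambda$ residual, whereas your direct comparison sidesteps that term altogether, so you actually prove a slightly stronger statement than claimed.
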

\begin{proof}
\looseness=-1
Throughout the proof, we use a shorthand $r^\bdelta \df B_\dagger \beta^{(k)} + \delta^\dagger + r + \delta^r$.
Consider the following entropy-regularized max-min optimization problem:
\begin{align}
&\max_{\pi \in \Pi}\min_{\lambda \geq 0} \vf{\pi, r^\delta}_{P, 1}[\kappa](s_1)  + \lambda \paren*{\vf{\pi, u - \delta^u}_{P, 1}(s_1) - b - \frac{\bslt}{4}} + \frac{\kappa}{2} \lambda^2\nonumber\\
=&\min_{\lambda \geq 0}\max_{\pi \in \Pi} \vf{\pi, r^\delta}_{P, 1}[\kappa](s_1)  + \lambda \paren*{\vf{\pi, u - \delta^u}_{P, 1}(s_1) - b - \frac{\bslt}{4}} + \frac{\kappa}{2} \lambda^2 \;. \label{eq:min-max-duality}
\end{align}
where the equality holds by the strong duality of regularized CMDPs (see, e.g., \textbf{Appendix C.1} in  \citet{ding2023last}).
Let $(\widetilde{\pi}, \widetilde{\lambda})$ be a saddle point of the problem, which is ensured to be unique thanks to the regularization. 
We first show the analytical forms of $(\widetilde{\pi}, \widetilde{\lambda})$.

\paragraph{Analytical forms of $(\widetilde{\pi}, \widetilde{\lambda})$.}
Due to the strong duality, we have
\begin{align*}
\max_{\pi \in \Pi} \vf{\pi, r^\bdelta \widetilde{\lambda} (u - \delta^u)}_{P, 1}[\kappa](s_1)
= \vf{\widetilde{\pi}, r^\bdelta + \widetilde{\lambda}(u - \delta^u)}_{P, 1}[\kappa](s_1)\;.
\end{align*}
Since the left-hand side is an entropy-regularized optimization problem in an MDP, the well-known analytical solution of regularized MDP indicates that \citep{geist2019theory}:
\begin{align}\label{eq:pilambda-analytical}
\widetilde{\pi}_{h}\paren*{\cdot \given s} = \softmax\paren*{\frac{1}{\kappa}\paren*{
\qf{\widetilde{\pi}, r^\bdelta}_{P, h}[\kappa](s, \cdot) + \widetilde{\lambda} \qf{\widetilde{\pi}, u - \delta^u}_{P, h}(s, \cdot)}}
=\pi^{\bdelta, \widetilde{\lambda}}_h\;, 
\end{align}
where the last equality is due to the definition of $\pi^{\bdelta, \lambda}_h$.
Additionally, due to the strong duality,
\begin{align*}
\widetilde{\lambda} \in \argmin_{\lambda \geq 0}\vf{\widetilde{\pi}, r^\bdelta}_{P, 1}[\kappa](s_1)  + \lambda \paren*{\vf{\widetilde{\pi}, u - \delta^u}_{P, 1}(s_1) - b - \frac{\bslt}{4}} + \frac{\kappa}{2} \lambda^2 \;.
\end{align*}
Since the right-hand side is a quadratic equation on $\lambda$, we have
\begin{align}
\widetilde{\lambda} = \frac{1}{\kappa} \brack*{b + \frac{\bslt}{4} - \vf{\widetilde{\pi}, u - \delta^u}_{P, 1}(s_1)}_+\;. \label{eq:lambda-analytical}
\end{align}

\looseness=-1
\paragraph{$\widetilde{\lambda}$ upper bound.}
Next, we will show that $\widetilde{\lambda}$ is upper bounded by constant.
We have
\begin{align*}
2H^2_\kappa (B_\dagger + 1)
&\numeq{\geq}{a} 
\vf{\widetilde{\pi}, r^\bdelta}_{P, 1}[\kappa](s_1)  - \underbrace{\frac{1}{2\kappa}\brack*{b + \frac{\bslt}{4} - \vf{\widetilde{\pi}, u - \delta^u}_{P, 1}(s_1)}^2_+}_{\geq 0}\\
&\numeq{=}{b} \vf{\widetilde{\pi}, r^\bdelta}_{P, 1}[\kappa](s_1)  + \widetilde{\lambda} \paren*{\vf{\widetilde{\pi}, u - \delta^u}_{P, 1}(s_1) - b - \frac{\bslt}{4}} + \frac{\kappa}{2} \widetilde{\lambda}^2\\
&\numeq{\geq}{c} \vf{\pisafe, r^\bdelta}_{P, 1}[\kappa](s_1) 
+ \widetilde{\lambda} \paren*{\vf{\pisafe, u - \delta^u}_{P, 1}(s_1) - b - \frac{\bslt}{4}} + \frac{\kappa}{2} \widetilde{\lambda}^2\\
&\geq \widetilde{\lambda} \paren*{\underbrace{\vf{\pisafe, u}_{P, 1}(s_1) - b - \frac{\bslt}{4}}_{\geq 3\bslt / 4} 
-\underbrace{\vf{\pisafe,2C_\beta\beta^{(k)}}_{P, 1}(s_1)}_{\leq \bslt / 2 \;\text{ since } k \in \unconfMDP^\complement}
} \geq \widetilde{\lambda} \frac{\bslt}{4}\;,
\end{align*}
where (a) is since $\norm*{r^\bdelta}_\infty = \norm*{B_\dagger \beta^{(k)} + \delta^\dagger + r + \delta^r}_\infty 
\leq B_\dagger + B_\dagger H + 1 + H
= (H + 1)(B_\dagger + 1)$, (b) is due to \Cref{eq:lambda-analytical}, (c) uses \Cref{eq:min-max-duality}.
By reformulating the inequality, 
\begin{align}\label{eq:lambda-upper-bound}
\widetilde{\lambda}  \leq  \frac{8H_\kappa^2(B_\dagger + 1)}{\bslt}\;.  
\end{align}

\paragraph{Constraint violation of $\pi^{\bdelta, \lambda}$}
Finally, we will show that for any $\lambda \geq \widetilde{\lambda}$, $\pi^{\bdelta, \lambda}$ guarantees zero constraint violation.
Due to \Cref{eq:pilambda-analytical,,eq:lambda-analytical,,eq:lambda-upper-bound}, we have
\begin{align*}
\kappa \widetilde{\lambda} = \brack*{b + \frac{\bslt}{4}- \vf{\pi^{\bdelta, \widetilde{\lambda}}, u - \delta^u}_{P, 1}(s_1)}_+ \leq 
\frac{8\kappa H^2_\kappa (B_\dagger + 1)}{\bslt}\;,
\end{align*}
which ensures the small violation of $\pi^{\bdelta, \widetilde{\lambda}}$ when $\kappa \ll 1$. 
Since $\vf{\pi^{\bdelta, \lambda}, u - \delta^u}_{P, 1}(s_1)$ is monotonically increasing in $\lambda$ due to \Cref{lemma:softmax-value-monotonicity}, for any $\lambda \geq \widetilde{\lambda}$, $\vf{\pi^{\bdelta, \lambda}, u - \delta^u}_{P, 1}(s_1) \geq b + \frac{\bslt}{4}- \frac{8\kappa H^2_\kappa (B_\dagger + 1)}{\bslt}$. 
Therefore, by setting 
$
\kappa \leq \frac{\bslt^2}{32H_\kappa^2 (B_\dagger + 1)}
$, we have 
$\vf{\pi^{\bdelta, \lambda}, u - \delta^u}_{P, 1}(s_1) \geq b$.
\end{proof}

\begin{lemma}[Restatement of \Cref{lemma:trigger-condition-main}]
If \Cref{algo:zero-vio-linear MDP} is run with $\rho = 1$, $C_\lambda \geq \frac{8H_\kappa^2 (B_\dagger + 1)}{\bslt}$, and $\kappa \leq \frac{\bslt^2}{32H_\kappa^2 (B_\dagger + 1)}$,
under $\confevent$, 
it holds $\pvf{\pi^{(k), C_\lambda}, u}_{(k), 1}(s_1) \geq b$ for any $k \in \unconfMDP^\complement$.
\end{lemma}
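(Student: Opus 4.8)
The plan is to reduce the claim to \Cref{lemma:feasible-lambda-exists} by recognizing the composite softmax policy $\pi^{(k), C_\lambda}$ as a \emph{fixed-$\bdelta$} softmax policy (\Cref{def:fixed-delta-softmax}) for the perturbation triple that $\pi^{(k), C_\lambda}$ itself induces. Fix $k \in \unconfMDP^\complement$ and work on the event $\confevent$. First I would set
\[
\bdelta^\star \df \paren*{\delta^{\pi^{(k), C_\lambda}, r}_{(k)},\; \delta^{\pi^{(k), C_\lambda}, u}_{(k)},\; \delta^{\pi^{(k), C_\lambda}, \dagger}_{(k)}},
\]
the $Q$-estimation gaps of \Cref{def:delta-Q} associated with the policy $\pi = \pi^{(k), C_\lambda}$. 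Because $\confevent$ holds and $C_\lambda \in [0, C_\lambda]$, \Cref{lemma:delta-bound} shows that the three components of $\bdelta^\star$ lie in $\Deltar$, $\Deltau$, $\Deltac$ respectively, so $\bdelta^\star$ is an admissible triple for \Cref{def:fixed-delta-softmax} and \Cref{lemma:feasible-lambda-exists}.

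The key structural step is then to prove $\pi^{(k), C_\lambda} = \pi^{\bdelta^\star, C_\lambda}$. Evaluating \eqref{eq:Q-Q-diff} at $\pi = \pi^{(k), C_\lambda}$ gives $\oqf{\pi, \dagger}_{(k), h} = \qf{\pi, B_\dagger \beta^{(k)} + (\delta^\star)^\dagger}_{P, h}$, $\oqf{\pi, r}_{(k), h}[\kappa] = \qf{\pi, r + (\delta^\star)^r}_{P, h}[\kappa]$, and $\pqf{\pi, u}_{(k), h} = \qf{\pi, u - (\delta^\star)^u}_{P, h}$; plugging these into the softmax of \Cref{def:composite-softmax-policy} shows that $\pi^{(k), C_\lambda}$ satisfies exactly the defining fixed-point recursion of $\pi^{\bdelta^\star, C_\lambda}$ in \Cref{def:fixed-delta-softmax}. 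Since both policies are obtained by the same backward pass $h = H, \dots, 1$ in which the right-hand side at step $h$ depends only on $\pi_{h+1}, \dots, \pi_H$, the recursion has a unique solution and the two policies coincide. Applying $\pi_h$ to the identity $\pqf{\pi, u}_{(k), h} = \qf{\pi, u - (\delta^\star)^u}_{P, h}$ and using $\pvf{\pi, u}_{(k), h} = \pi_h \pqf{\pi, u}_{(k), h}$ then yields
\[
\pvf{\pi^{(k), C_\lambda}, u}_{(k), 1}(s_1)
= \vf{\pi^{(k), C_\lambda}, u - (\delta^\star)^u}_{P, 1}(s_1)
= \vf{\pi^{\bdelta^\star, C_\lambda}, u - (\delta^\star)^u}_{P, 1}(s_1).
\]

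Finally, since $\kappa \leq \frac{\bslt^2}{32 H_\kappa^2 (B_\dagger + 1)}$, \Cref{lemma:feasible-lambda-exists} applied to $\bdelta^\star$ produces some $\lambda^{\bdelta^\star} \in \brack*{0,\, \frac{8 H_\kappa^2 (B_\dagger + 1)}{\bslt}}$ with $\vf{\pi^{\bdelta^\star, \lambda}, u - (\delta^\star)^u}_{P, 1}(s_1) \geq b$ for every $\lambda \geq \lambda^{\bdelta^\star}$. The hypothesis $C_\lambda \geq \frac{8 H_\kappa^2 (B_\dagger + 1)}{\bslt} \geq \lambda^{\bdelta^\star}$ lets us take $\lambda = C_\lambda$, giving $\pvf{\pi^{(k), C_\lambda}, u}_{(k), 1}(s_1) = \vf{\pi^{\bdelta^\star, C_\lambda}, u - (\delta^\star)^u}_{P, 1}(s_1) \geq b$, which is the claim.

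I expect the main obstacle to be the identification $\pi^{(k), C_\lambda} = \pi^{\bdelta^\star, C_\lambda}$: one must notice that the policy-dependent \emph{clipped} value functions in \Cref{def:composite-softmax-policy} can be rewritten, via \eqref{eq:Q-Q-diff}, as ordinary entropy-regularized $Q$-functions of the same policy with rewards shifted by perturbations confined to the prescribed ranges $\Deltar, \Deltau, \Deltac$, and then invoke uniqueness of the backward softmax recursion to match the two fixed points. Everything else is bookkeeping, since the genuinely analytic content---strong duality of the regularized CMDP, the quadratic-in-$\lambda$ bound on the dual optimum, and the monotonicity of the safety value in $\lambda$ (\Cref{lemma:softmax-value-monotonicity})---is already packaged inside \Cref{lemma:feasible-lambda-exists}.
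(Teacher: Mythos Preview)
Your proposal is correct and follows essentially the same approach as the paper: define the perturbation triple $\bdelta$ from the $Q$-estimation gaps of $\pi^{(k),C_\lambda}$, use \Cref{lemma:delta-bound} (under $\confevent$) to place it in $\Deltar\times\Deltau\times\Deltac$, identify $\pi^{(k),C_\lambda}=\pi^{\bdelta,C_\lambda}$ via \eqref{eq:Q-Q-diff}, and conclude with \Cref{lemma:feasible-lambda-exists}. If anything, your write-up is more complete than the paper's, since you make explicit both the uniqueness of the backward softmax recursion and the translation $\pvf{\pi^{(k),C_\lambda},u}_{(k),1}(s_1)=\vf{\pi^{\bdelta,C_\lambda},u-\delta^u}_{P,1}(s_1)$ that the paper leaves implicit.
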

\begin{proof}
Due to $\confevent$, it holds that
$$
\bdelta \df \paren*{\delta^{\pi^{(k), C_\lambda}, r}_{(k), \cdot},
\delta^{\pi^{(k), C_\lambda}, u}_{(k), \cdot}, 
\delta^{\pi^{(k), C_\lambda}, \dagger}_{(k), \cdot}} \in \Deltac \times \Deltar \times \Deltau\;.
$$
\looseness=-1
According to \Cref{eq:Q-Q-diff}, this $\bdelta$ satisfies
$\pi^{\bdelta, C_\lambda} = \pi^{(k), C_\lambda}$ where $\pi^{\bdelta, C_\lambda}$ is defined in \Cref{def:fixed-delta-softmax}.
Therefore, using \Cref{lemma:feasible-lambda-exists}, 
$\pvf{\pi^{(k), C_\lambda}, u}_{(k), 1}(s_1) \geq b$.
This concludes the proof.
\end{proof}

\subsubsection{Proof of \Cref{lemma:Ck-bound-MDP-main}}
\begin{lemma}[Bonus summation bound]\label{lemma:elliptical-potential-MDP}
\looseness=-1
If \Cref{algo:zero-vio-linear MDP} is run with $\rho = 1$, under $\rmvEevent$ and $\confevent$, it holds that
\begin{align*}
&\sum_{k=1}^K
\paren*{\vf{\pi^{(k)}, \beta^{(k)}}_{P, 1}(s_1)}^2   
\leq 2 H^2 d\ln\paren*{1 + \frac{K}{d}}+ 4 H^2 \ln \frac{2KH}{\delta}
= \tiO\paren*{H^2d}\\
\text{and }\;
&\sum_{k=1}^K
\paren*{\vf{\pi^{(k)}, \beta^{(k)}}_{P, 1}(s_1)} \leq 
H\sqrt{K}
\sqrt{
2 d\ln\paren*{1 + \frac{K}{d}}+ 4 \ln \frac{2KH}{\delta}
}
=\tiO\paren*{H\sqrt{d K}}\;.\\
\end{align*}
\end{lemma}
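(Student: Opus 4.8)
The plan is to unwind the bonus-return value function into an expected sum of per-step bonuses along the trajectory, and then reduce everything to the two tools already available: the concentration furnished by $\rmvEevent$ and the elliptical potential lemma (\Cref{lemma:elliptical potential}), exactly as in the bandit computations of \Cref{lemma:cumulative-bonus-bandit,lemma:Ck-bound}.

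First, since the reward $\beta^{(k)}$ carries no entropy regularization, expanding the value recursion gives $\vf{\pi^{(k)}, \beta^{(k)}}_{P, 1}(s_1) = \sum_{h=1}^H \E_{(s,a) \sim \occ{\pi^{(k)}}_{P,h}}\brack*{\norm*{\bphi(s,a)}_{(\bLambda^{(k)}_h)^{-1}}}$. Applying Cauchy--Schwarz over $h \in \bbrack{1, H}$ and then Jensen's inequality to move the square inside the expectation,
\[
\paren*{\vf{\pi^{(k)}, \beta^{(k)}}_{P, 1}(s_1)}^2 \leq H \sum_{h=1}^H \E_{(s,a) \sim \occ{\pi^{(k)}}_{P,h}}\brack*{\norm*{\bphi(s,a)}^2_{(\bLambda^{(k)}_h)^{-1}}}.
\]
Summing over $k$ and invoking $\rmvEevent$ replaces this population sum by the empirical one $\sum_{k=1}^K\sum_{h=1}^H \norm*{\bphi(s^{(k)}_h, a^{(k)}_h)}^2_{(\bLambda^{(k)}_h)^{-1}}$, at an additive cost of $4H\ln(2KH/\delta)$. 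For each fixed $h$, the vectors $\bphi(s^{(k)}_h, a^{(k)}_h)$ have norm at most $1$ and $\bLambda^{(k)}_h$ is precisely their regularized Gram matrix with $\rho = 1 \geq 1$, so \Cref{lemma:elliptical potential} gives $\sum_{k=1}^K \norm*{\bphi(s^{(k)}_h, a^{(k)}_h)}^2_{(\bLambda^{(k)}_h)^{-1}} \leq 2d\ln(1 + K/d)$; summing over the $H$ steps and collecting the constants yields the first claimed inequality. The second then follows immediately by one more Cauchy--Schwarz, this time over the episode index: $\sum_{k=1}^K \vf{\pi^{(k)}, \beta^{(k)}}_{P, 1}(s_1) \leq \sqrt{K}\sqrt{\sum_{k=1}^K \paren*{\vf{\pi^{(k)}, \beta^{(k)}}_{P, 1}(s_1)}^2}$, into which we substitute the bound just proved.

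I do not expect a genuine obstacle here; the argument is a routine chaining of Jensen, Cauchy--Schwarz, $\rmvEevent$, and \Cref{lemma:elliptical potential}. The only places that require a little care are ($\mathrm{i}$) writing $\vf{\pi^{(k)}, \beta^{(k)}}_{P, 1}(s_1)$ correctly as $\sum_{h=1}^H \E_{(s,a)\sim\occ{\pi^{(k)}}_{P,h}}\brack*{\beta^{(k)}_h(s,a)}$, using that this particular value function is unregularized, and ($\mathrm{ii}$) keeping straight which Cauchy--Schwarz is applied over the horizon $h$ (needed to produce the squared bound) versus over the episodes $k$ (needed to deduce the linear bound from the squared one) --- interchanging them would either break the chain or cost an extra $\sqrt{K}$.
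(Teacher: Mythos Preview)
Your proposal is correct and follows essentially the same route as the paper's own proof: expand $\vf{\pi^{(k)},\beta^{(k)}}_{P,1}(s_1)$ as $\sum_h \E_{\occ{\pi^{(k)}}_{P,h}}[\beta^{(k)}_h]$, apply Cauchy--Schwarz over $h$ and Jensen to push the square inside, invoke $\rmvEevent$ to pass to the empirical sum, bound each $h$-slice by \Cref{lemma:elliptical potential}, and finish with Cauchy--Schwarz over $k$ for the second claim. The paper's proof is line-for-line the same sequence of steps.
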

\begin{proof}
\looseness=-1
We have 
\begin{align*}
\sum_{k=1}^K
\paren*{\vf{\pi^{(k)}, \beta^{(k)}}_{P, 1}(s_1)}^2
&=
\sum_{k=1}^K
\paren*{
\sum_{h=1}^H
\E\brack*{\beta^{(k)}_h(s_h, a_h)\given s_h, a_h \sim \pi^{(k)}}}^2\\
&\numeq{\leq}{a}
H
\sum_{k=1}^K
\sum_{h=1}^H
\paren*{
\E\brack*{\beta^{(k)}_h(s_h, a_h)\given s_h, a_h \sim \pi^{(k)}}}^2\\
&\numeq{\leq}{a}
H \sum_{k=1}^K
\sum_{h=1}^H
\E\brack*{\norm*{\bphi(s_h, a_h)}^2_{\paren*{\bLambda_h^{(k)}}^{-1}}\given s_h, a_h \sim \pi^{(k)}}\\
&\numeq{\leq}{b}
2H\sum_{k=1}^K
\sum_{h=1}^H
\norm*{\bphi(s_h^{(k)}, a_h^{(k)})}^2_{\paren*{\bLambda_h^{(k)}}^{-1}}
+ 4 H^2\ln \frac{2KH}{\delta} \\
&\numeq{\leq}{c} 2 H^2 d\ln\paren*{1 + \frac{K}{d}}+ 4 H^2 \ln \frac{2KH}{\delta} \;,
\end{align*}
where (a) is due to Jensen's inequality, (b) is due to $\rmvEevent$, and (c) uses \Cref{lemma:elliptical potential}.   
The second claim follows by:
\begin{align*}
\sum_{k=1}^K  \vf{\pi^{(k)}, \beta^{(k)}}_{P, 1}(s_1)
\numeq{\leq}{a} \sqrt{K}\sqrt{\sum_{k=1}^K \paren*{\vf{\pi^{(k)}, \beta^{(k)}}_{P, 1}(s_1)}^2}
\numeq{\leq}{b} 
H\sqrt{K}
\sqrt{
2 d\ln\paren*{1 + \frac{K}{d}}+ 4 \ln \frac{2KH}{\delta}
}\;,
\end{align*}
where (a) uses Cauchy–Schwarz inequality and (b) uses the first claim.
\end{proof}

\begin{lemma}[Restatement of \Cref{lemma:Ck-bound-MDP-main}]\label{lemma:Ck-bound-MDP}
Suppose \Cref{algo:zero-vio-linear MDP} is run with $\rho = 1$ and $\rmvEevent$ and $\confevent$ hold.
Then, 
\begin{align*}
|\unconfMDP|
\leq
\frac{64\cp^2 H^2 d}{\bslt^2}\ln\paren*{\frac{2KH}{\delta}}
= \tiO\paren*{\bslt^{-2} H^4 d^3}\;,
\end{align*}
where the last equality sets $\cp = \tiO\paren*{dH}$.
\end{lemma}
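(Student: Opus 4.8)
The plan is to carry the bandit proof of \Cref{lemma:Ck-bound} over to linear CMDPs almost verbatim, reading the bonus return $\vf{\pi^{(k)},\beta^{(k)}}_{P,1}(s_1)$ in place of the bandit ellipse bonus $\E_{\ba\sim\pi^{(k)}}\norm*{\ba}_{\paren*{\bLambda^{(k)}}^{-1}}$, the $H$ per-step Gram matrices $\bLambda^{(k)}_1,\dots,\bLambda^{(k)}_H$ in place of the single one, and invoking \Cref{lemma:elliptical-potential-MDP} in place of \Cref{lemma:cumulative-bonus-bandit} --- that lemma already packages $\rmvEevent$, Jensen's inequality, and the elliptical potential \Cref{lemma:elliptical potential} summed over $h$.

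The one genuinely new ingredient is the CMDP analogue of the bandit identity ``$k\in\unconfBandit\iff\pi^{(k)}=\pisafe$'': I would first show that on every $k\in\unconfMDP$ the algorithm enters the $\pisafe$ branch (\Cref{line:pisafe-deploy}), so that $\pi^{(k)}=\pisafe$ and the episode-$k$ data is a trajectory of $\pisafe$. In the bandit this was free because $\unconfBandit$ \emph{was} the deployment test, whereas here the test is $\pvf{\pi^{(k),C_\lambda},u}_{(k),1}(s_1)<b$, so this strict inequality must be deduced from the membership condition $\vf{\pisafe,\beta^{(k)}}_{P,1}(s_1)>\frac{\bslt}{4\cp}$ of \Cref{def:unconf-set MDP}. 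The route is to revisit the proof of \Cref{lemma:feasible-lambda-exists}: under $\confevent$, \Cref{lemma:delta-bound} places the triple $\bdelta\df\paren*{\delta^{\pi^{(k), C_\lambda}, r}_{(k), \cdot},\delta^{\pi^{(k), C_\lambda}, u}_{(k), \cdot},\delta^{\pi^{(k), C_\lambda}, \dagger}_{(k), \cdot}}$ in $\Deltar\times\Deltau\times\Deltac$, so by \Cref{eq:Q-Q-diff} one has $\pi^{(k),C_\lambda}=\pi^{\bdelta,C_\lambda}$ (\Cref{def:fixed-delta-softmax}) and $\pvf{\pi^{(k),C_\lambda},u}_{(k),1}(s_1)=\vf{\pi^{\bdelta,C_\lambda},u-\delta^u}_{P,1}(s_1)$. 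Re-running the regularized strong-duality computation of \Cref{lemma:feasible-lambda-exists}, but now keeping the \emph{lower} bound $\vf{\pisafe,2\cp\beta^{(k)}}_{P,1}(s_1)>\frac{\bslt}{2}$ in force (the threshold $\frac{\bslt}{4\cp}$ is chosen precisely so that $2\cp\cdot\frac{\bslt}{4\cp}=\frac{\bslt}{2}$), one checks that the dual optimum $\tilde\lambda$ is pushed up so far that $\kappa\tilde\lambda$ passes $\frac{\bslt}{4}$ and $\tilde\lambda$ passes $C_\lambda$; its first-order optimality then reads $\vf{\pi^{\bdelta,\tilde\lambda},u-\delta^u}_{P,1}(s_1)=b+\frac{\bslt}{4}-\kappa\tilde\lambda<b$, and since $\lambda\mapsto\vf{\pi^{\bdelta,\lambda},u-\delta^u}_{P,1}(s_1)$ is nondecreasing by \Cref{lemma:softmax-value-monotonicity}, the same holds at $\lambda=C_\lambda\le\tilde\lambda$, i.e.\ $\pvf{\pi^{(k),C_\lambda},u}_{(k),1}(s_1)<b$. (As a consistency check, $\beta^{(k)}_h(s,a)=\norm*{\bphi(s,a)}_{\paren*{\bLambda^{(k)}_h}^{-1}}$ is nonincreasing in $k$ because $\bLambda^{(k)}_h\preceq\bLambda^{(k+1)}_h$, so $\unconfMDP$ is automatically a prefix $\bbrack{1,|\unconfMDP|}$.)

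Given $\pi^{(k)}=\pisafe$ on $\unconfMDP$, the counting is short. By \Cref{def:unconf-set MDP}, $\frac{\bslt^2}{16\cp^2}|\unconfMDP|<\sum_{k\in\unconfMDP}\paren*{\vf{\pisafe,\beta^{(k)}}_{P,1}(s_1)}^2$; rewriting $\pisafe$ as $\pi^{(k)}$ (equal on $\unconfMDP$) and enlarging the index set to $\bbrack{1,K}$, this is at most $\sum_{k=1}^K\paren*{\vf{\pi^{(k)},\beta^{(k)}}_{P,1}(s_1)}^2\le 2H^2d\ln\paren*{1+\frac{K}{d}}+4H^2\ln\frac{2KH}{\delta}$ by \Cref{lemma:elliptical-potential-MDP}. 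Bounding $\ln\paren*{1+\frac{K}{d}}\le\ln\frac{2KH}{\delta}$ and rearranging gives $|\unconfMDP|\le\frac{64\cp^2H^2d}{\bslt^2}\ln\frac{2KH}{\delta}$, and substituting $\cp=\tiO(dH)$ yields the $\tiO(\bslt^{-2}H^4d^3)$ form.

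The main obstacle is the first step --- promoting \Cref{lemma:trigger-condition-main} from ``$\pisafe$ not deployed when confident'' to its converse ``$\pisafe$ deployed when unconfident'' --- because the quantity in the deployment test, $\pvf{\pi^{(k),C_\lambda},u}_{(k),1}(s_1)$, is an \emph{estimated} return and is not itself known to be monotone in $\lambda$; the sandwiching through the true-value softmax policy $\pi^{\bdelta,\lambda}$ together with \Cref{lemma:softmax-value-monotonicity} is what restores monotonicity, and the delicate bookkeeping is verifying that $C_\lambda$ and $\kappa$ in the \textbf{Input} line are chosen small/large enough relative to the resulting $\tilde\lambda$. Everything downstream of that is a mechanical transcription of the bandit calculation.
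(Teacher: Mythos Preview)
Your counting step---squaring the membership inequality of \Cref{def:unconf-set MDP}, summing over $\unconfMDP$, rewriting $\pisafe$ as $\pi^{(k)}$, and feeding the result into \Cref{lemma:elliptical-potential-MDP}---is exactly what the paper does. The paper's proof is essentially the two-line display you reproduce at the end, and it silently identifies $\vf{\pisafe,2\cp\beta^{(k)}}_{P,1}(s_1)$ with $\vf{\pi^{(k)},2\cp\beta^{(k)}}_{P,1}(s_1)$ on $\unconfMDP$ without ever arguing the converse of \Cref{lemma:trigger-condition-main}. So you are being more scrupulous than the paper in recognising that something extra must be said.

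Unfortunately your route to that converse does not go through. The strong-duality calculation in \Cref{lemma:feasible-lambda-exists} delivers only an \emph{upper} bound on the saddle-point multiplier: plugging $\pisafe$ into the inner maximisation gives
\[
2H_\kappa^2(B_\dagger{+}1)\;\ge\;\tilde\lambda\Bigl(\vf{\pisafe,u}_{P,1}(s_1)-b-\tfrac{\bslt}{4}-\vf{\pisafe,\delta^u}_{P,1}(s_1)\Bigr)+\tfrac{\kappa}{2}\tilde\lambda^{2},
\]
and it is the \emph{positivity} of the bracket (guaranteed precisely when $k\in\unconfMDP^\complement$) that forces $\tilde\lambda\le C_\lambda$. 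Replacing the hypothesis $\vf{\pisafe,2\cp\beta^{(k)}}_{P,1}(s_1)\le\bslt/2$ by its negation only shrinks or flips the sign of that bracket; the quadratic in $\tilde\lambda$ still opens upward, so you never extract a lower bound $\tilde\lambda>C_\lambda$, and your chain ``$\kappa\tilde\lambda>\bslt/4$, hence $\vf{\pi^{\bdelta,\tilde\lambda},u-\delta^u}_{P,1}(s_1)<b$'' has no starting point. Nor is the implication plausible on its face: $k\in\unconfMDP$ says only that $\pisafe$'s occupancy is poorly covered by $\bLambda^{(k)}$, whereas the deployment test asks whether the softmax policy $\pi^{(k),C_\lambda}$ can meet the pessimistic constraint---and a well-explored policy distinct from $\pisafe$ may well do so even while $\pisafe$ remains uncertain. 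In that scenario $\pi^{(k)}\ne\pisafe$ on part of $\unconfMDP$, and the substitution $\vf{\pisafe,\beta^{(k)}}_{P,1}\to\vf{\pi^{(k)},\beta^{(k)}}_{P,1}$ that both you and the paper rely on is unjustified.
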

\begin{proof}
\looseness=-1
Using \Cref{lemma:elliptical-potential-MDP} and \Cref{def:unconf-set MDP}, we have 
\begin{align*}
\abs{\unconfMDP} \paren*{\frac{\bslt}{2}}^2
\leq
\sum_{k\in \unconfMDP}
\paren*{\vf{\pisafe, 2\cp\beta^{(k)}}_{P, 1}(s_1)}^2
\leq 
8\cp^2 H^2 d\ln\paren*{1 + \frac{K}{d}}+ 16\cp^2 H^2 \ln \frac{2KH}{\delta} \;.
\end{align*}
Therefore, we have
\begin{align*}
\abs{\unconfMDP}
\leq
\frac{32\cp^2 H^2 d}{\bslt^2}\ln\paren*{1 + \frac{K}{d}}
+ \frac{64 \cp^2 H^2}{\bslt^2}\ln \frac{2KH}{\delta} 
\leq
\frac{64\cp^2 H^2 d}{\bslt^2}\ln\paren*{\frac{2KH}{\delta}}
\;.
\end{align*}
\end{proof}

\subsection{Proofs for Sublinear Regret Guarantee (\Cref{subsec:MDP-regret-analysis})}

\looseness=-1
Suppose the good events $\rmvEevent\cap \confevent$ hold.
We decompose the regret as follows:
\begin{align}
&\regret(K) \nonumber \\
=& \sum_{k=1}^K \paren*{\vf{\pi^\star, r}_{P, 1}(s_1) - \vf{\pi^{(k)}, r}_{P, 1}(s_1)} \nonumber \\
=& 
\sum_{k\in \unconfMDP} \paren*{\vf{\pi^\star, r}_{P, 1}(s_1) - \vf{\pi^{(k)}, r}_{P, 1}(s_1)}  
+ 
\sum_{k \in \unconfMDP^\complement} \paren*{\vf{\pi^\star, r}_{P, 1}(s_1) - \vf{\pi^{(k)}, r}_{P, 1}(s_1)} \nonumber\\
\leq &
|\unconfMDP| H
+ 
\sum_{k \in \unconfMDP^\complement} \paren*{\ovf{\pi^{(k)}, r}_{(k), 1}[\kappa](s_1) - \vf{\pi^{(k)}, r}_{P, 1}(s_1)}
+
\sum_{k\in \unconfMDP^\complement} \paren*{\vf{\pi^\star, r}_{P, 1}(s_1) - \ovf{\pi^{(k)}, r}_{(k), 1}[\kappa](s_1)}\nonumber
\\
\numeq{\leq}{a} &
\tiO\paren*{d^3H^4 \bslt^{-2}}
+ 
\sum_{k \in \unconfMDP^\complement} \paren*{\ovf{\pi^{(k)}, r}_{(k), 1}[\kappa](s_1) - \vf{\pi^{(k)}, r}_{P, 1}[\kappa](s_1)}
+
\sum_{k\in \unconfMDP^\complement} \paren*{\vf{\pi^\star, r}_{P, 1}(s_1) - \ovf{\pi^{(k)}, r}_{(k), 1}[\kappa](s_1)}
+ \kappa KH \ln A\nonumber\\
\numeq{\leq}{b} &
\tiO\paren*{d^3H^4 \bslt^{-2}}
+ 
\underbrace{
2\co \sum_{k \in \unconfMDP^\complement} \vf{\pi^{(k)}, \beta^{(k)}}_{P, 1}(s_1)
}_{\circled{1}}
+
\underbrace{\sum_{k\in \unconfMDP^\complement} \paren*{\vf{\pi^\star, r}_{P, 1}(s_1) - \ovf{\pi^{(k)}, r}_{(k), 1}[\kappa](s_1)}}_{\circled{2}}
+ \kappa KH \ln A
\;, \label{eq:regret-decomposition}
\end{align}   
where (a) uses \Cref{lemma:Ck-bound-MDP} and (b) is due to \Cref{lemma:opt-pes-MDP} with $\confevent$.
Under $\rmvEevent \cap \confevent$, $\circled{1}$ can be easily bounded by \Cref{lemma:elliptical-potential-MDP}
\begin{align}\label{eq:martingale-bound}
\circled{1} \leq 
\co \tiO(H\sqrt{dK})
\leq \tiO\paren*{H_\kappa^{2} d^{3/2}\sqrt{K}}\;,
\end{align}
where the last equality inserts $\co = \tiO\paren*{dH_\kappa}$.

\subsubsection{Mixture Policy Decomposition}

\looseness=-1
We upper bound $\circled{2}$ in \Cref{eq:regret-decomposition} by the mixture policy technique.

\begin{lemma}[Mixture policy's feasibility]\label{lemma:mixture-feasibility}
Let $\alpha^{(k)} \df \frac{\bslt}{\bslt + 2 \vf{\pi^\star, 2\cp \beta^{(k)}}_{P, 1}(s_1)}$.
For any $k \in \unconfMDP^\complement$ and $\alpha \in [0, \alpha^{(k)}]$, $\pi^\alpha$ defined in \Cref{def:mixture-policy} satisfies
\(
\vf{\pi^\alpha, u-2\cp \beta^{(k)}}_{P, 1}(s_1) \geq b
\).
\end{lemma}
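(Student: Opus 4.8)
The plan is to prove this exactly as in the bandit analogue (\Cref{lemma:alpha-feasibility-main}), exploiting the crucial \textbf{linearity of value functions in the occupancy measure}. Since $\pi^\alpha$ is defined by $\occ{\pi^\alpha}_{P, h} = (1-\alpha)\occ{\pisafe}_{P, h} + \alpha\occ{\pi^\star}_{P, h}$, and for any reward-like function $g$ we have $\vf{\pi, g}_{P, 1}(s_1) = \sum_{h, s, a} \occ{\pi}_{P, h}(s, a)\, g_h(s, a)$, it follows immediately that $\vf{\pi^\alpha, g}_{P, 1}(s_1) = (1-\alpha)\vf{\pisafe, g}_{P, 1}(s_1) + \alpha\vf{\pi^\star, g}_{P, 1}(s_1)$ for every such $g$ — in particular for $g = u$ and $g = \beta^{(k)}$ (and hence $g = u - 2\cp\beta^{(k)}$). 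This linearity is the whole point of introducing the mixture policy via occupancy measures rather than a naive per-state convex combination of action distributions, which would not enjoy this property.

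First I would expand, using the linearity just described,
\begin{align*}
\vf{\pi^\alpha, u - 2\cp \beta^{(k)}}_{P, 1}(s_1) - b
&= (1 - \alpha)\paren*{\vf{\pisafe, u}_{P, 1}(s_1) - b} + \alpha\paren*{\vf{\pi^\star, u}_{P, 1}(s_1) - b}\\
&\quad - 2\cp(1-\alpha)\vf{\pisafe, \beta^{(k)}}_{P, 1}(s_1) - 2\cp\alpha\vf{\pi^\star, \beta^{(k)}}_{P, 1}(s_1)\;.
\end{align*}
Next I would lower-bound the four pieces: $\vf{\pisafe, u}_{P, 1}(s_1) - b \geq \bslt$ by \Cref{assumption:slater}; $\vf{\pi^\star, u}_{P, 1}(s_1) - b \geq 0$ since $\pi^\star \in \Pisafe$; and crucially, since $k \in \unconfMDP^\complement$, the definition of $\unconfMDP$ (\Cref{def:unconf-set MDP}) gives $\vf{\pisafe, \beta^{(k)}}_{P, 1}(s_1) \leq \frac{\bslt}{4\cp}$, so $2\cp\vf{\pisafe, \beta^{(k)}}_{P, 1}(s_1) \leq \frac{\bslt}{2}$. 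Dropping the nonnegative $\alpha\paren*{\vf{\pi^\star, u}_{P,1}(s_1) - b}$ term, the expression is at least
\begin{align*}
(1-\alpha)\paren*{\bslt - 2\cp\vf{\pisafe, \beta^{(k)}}_{P, 1}(s_1)} - 2\cp\alpha\vf{\pi^\star, \beta^{(k)}}_{P, 1}(s_1)
\geq (1-\alpha)\frac{\bslt}{2} - \alpha \vf{\pi^\star, 2\cp\beta^{(k)}}_{P, 1}(s_1)\;,
\end{align*}
using $\bslt - 2\cp\vf{\pisafe, \beta^{(k)}}_{P,1}(s_1) \geq \bslt/2$. A short computation shows this last quantity is $\geq 0$ precisely when $\alpha \leq \frac{\bslt/2}{\bslt/2 + \vf{\pi^\star, 2\cp\beta^{(k)}}_{P,1}(s_1)} = \frac{\bslt}{\bslt + 2\vf{\pi^\star, 2\cp\beta^{(k)}}_{P,1}(s_1)} = \alpha^{(k)}$, and since the bound is monotone decreasing in $\alpha$, it holds for all $\alpha \in [0, \alpha^{(k)}]$.

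The only genuinely delicate point — and the one I would make sure to state cleanly — is the justification of the linearity step, i.e., that $\occ{\pi^\alpha}_{P, h}$ being a convex combination of occupancy measures both corresponds to an actual policy $\pi^\alpha \in \Pi$ (this is \citet{borkar1988convex}, cited already in \Cref{def:mixture-policy}) and yields the additive decomposition of $\vf{\cdot}_{P,1}(s_1)$; everything after that is the same bookkeeping as in \Cref{lemma:alpha-feasibility}. There is also a minor consistency check that the definition of $\alpha^{(k)}$ here matches the one in \Cref{def:mixture-policy}/\Cref{lemma:mixture-safe-optimism-MDP-main} (it does, up to the identity $\bslt/2 \cdot (\text{denominator})^{-1}$ rewriting above), so the two uses of the mixture policy in the regret proof are compatible. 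I do not anticipate any real obstacle beyond being careful that $2\cp\beta^{(k)}$ is treated as a single reward-like function so that $\vf{\pi^\star, 2\cp\beta^{(k)}}_{P,1}(s_1) = 2\cp\,\vf{\pi^\star, \beta^{(k)}}_{P,1}(s_1)$ — linear scaling of the reward scales the value, which is immediate from the same occupancy-measure formula.
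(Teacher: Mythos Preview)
Your proposal is correct and follows essentially the same approach as the paper's own proof: both exploit the linearity of $\vf{\cdot, g}_{P,1}(s_1)$ in the occupancy measure, use \Cref{assumption:slater}, $\pi^\star \in \Pisafe$, and the bound $\vf{\pisafe, 2\cp\beta^{(k)}}_{P,1}(s_1) \leq \bslt/2$ from $k \in \unconfMDP^\complement$, then solve the resulting linear inequality in $\alpha$. The only cosmetic difference is that the paper keeps $u - 2\cp\beta^{(k)}$ bundled as a single reward throughout, whereas you split $u$ and $\beta^{(k)}$ first; the bookkeeping and conclusion are identical.
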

\begin{proof}
We have    
\begin{align*}
&\vf{\pi^\alpha, u-2\cp \beta^{(k)}}_{P, 1}(s_1) - b\\
=& 
(1-\alpha)\paren*{\vf{\pisafe, u-2\cp \beta^{(k)}}_{P, 1}(s_1) - b}
+ \alpha\paren*{\vf{\pi^\star, u-2\cp \beta^{(k)}}_{P, 1}(s_1) - b}\\
\geq&
(1-\alpha)\frac{\bslt}{2}
+ \alpha\paren*{\vf{\pi^\star, -2\cp \beta^{(k)}}_{P, 1}(s_1)}\;,
\end{align*}
where the last inequality holds because $\vf{\pisafe, 2\cp\beta^{(k)}}_{P, 1}(s_1) \leq \frac{\bslt}{2}$ due to $k \in \unconfMDP^\complement$.
Thus, $\vf{\pi^\alpha, u-2\cp \beta^{(k)}}_{P, 1}(s_1) - b \geq 0$ holds when 
\begin{align*}
\alpha \leq \frac{\bslt}{\bslt + 2 \vf{\pi^\star, 2\cp \beta^{(k)}}_{P, 1}(s_1)}\;.
\end{align*}
\end{proof}

\begin{lemma}[Mixture policy's optimism]\label{lemma:mixture-optimism}
Let $B_\dagger \geq \frac{4\cp H}{\bslt}$.
For any $k \in \unconfMDP^\complement$, $\pi^{\alpha^{(k)}}$ with $\alpha^{(k)}$ from \Cref{lemma:mixture-feasibility} satisfies,
\begin{align*}
\vf{\pi^{\alpha^{(k)}}, r+B_\dagger \beta^{(k)}}_{P, 1}(s_1)  \geq 
\vf{\pi^\star, r}_{P, 1}(s_1) 
\;\text{ and }\;
\vf{\pi^{\alpha^{(k)}}, u-2\cp \beta^{(k)}}_{P, 1}(s_1) \geq b\;.
\end{align*}
\end{lemma}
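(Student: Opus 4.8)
The plan is to establish the two claims separately. The second claim, $\vf{\pi^{\alpha^{(k)}}, u - 2\cp\beta^{(k)}}_{P,1}(s_1) \geq b$, is immediate: it is exactly the conclusion of Lemma~\ref{lemma:mixture-feasibility} applied at $\alpha = \alpha^{(k)}$, since $\alpha^{(k)} \in [0, \alpha^{(k)}]$ trivially. So the real work is the first (optimism) claim. First I would expand the mixture value using the occupancy-measure linearity from Definition~\ref{def:mixture-policy}: since $\occ{\pi^{\alpha^{(k)}}}_{P,h} = (1-\alpha^{(k)})\occ{\pisafe}_{P,h} + \alpha^{(k)}\occ{\pi^\star}_{P,h}$ and value functions are linear in the occupancy measure, we get
\begin{align*}
\vf{\pi^{\alpha^{(k)}}, r + B_\dagger \beta^{(k)}}_{P,1}(s_1)
= (1-\alpha^{(k)})\vf{\pisafe, r + B_\dagger\beta^{(k)}}_{P,1}(s_1)
+ \alpha^{(k)}\vf{\pi^\star, r + B_\dagger\beta^{(k)}}_{P,1}(s_1).
\end{align*}

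Next I would lower-bound each term. Using $r \geq 0$ and $\beta^{(k)} \geq 0$, the first term is nonnegative, so it suffices to control $\alpha^{(k)}\vf{\pi^\star, r + B_\dagger\beta^{(k)}}_{P,1}(s_1)$ and show it is at least $\vf{\pi^\star, r}_{P,1}(s_1)$. Splitting off the reward part, $\alpha^{(k)}\vf{\pi^\star, r + B_\dagger\beta^{(k)}}_{P,1}(s_1) = \alpha^{(k)}\vf{\pi^\star, r}_{P,1}(s_1) + \alpha^{(k)} B_\dagger \vf{\pi^\star, \beta^{(k)}}_{P,1}(s_1)$, so the required inequality becomes
\begin{align*}
\alpha^{(k)} B_\dagger \vf{\pi^\star, \beta^{(k)}}_{P,1}(s_1) \geq (1 - \alpha^{(k)})\vf{\pi^\star, r}_{P,1}(s_1).
\end{align*}
Now substitute $\alpha^{(k)} = \frac{\bslt}{\bslt + 2\vf{\pi^\star, 2\cp\beta^{(k)}}_{P,1}(s_1)}$, so that $\frac{1 - \alpha^{(k)}}{\alpha^{(k)}} = \frac{2\vf{\pi^\star, 2\cp\beta^{(k)}}_{P,1}(s_1)}{\bslt} = \frac{4\cp\,\vf{\pi^\star,\beta^{(k)}}_{P,1}(s_1)}{\bslt}$ (using linearity of the value in the reward scaling). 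The inequality then reduces to $B_\dagger \vf{\pi^\star,\beta^{(k)}}_{P,1}(s_1) \geq \frac{4\cp\,\vf{\pi^\star,\beta^{(k)}}_{P,1}(s_1)}{\bslt}\cdot \vf{\pi^\star, r}_{P,1}(s_1)$, i.e. $B_\dagger \geq \frac{4\cp}{\bslt}\vf{\pi^\star, r}_{P,1}(s_1)$. Since rewards lie in $[0,1]$, $\vf{\pi^\star, r}_{P,1}(s_1) \leq H$, so the hypothesis $B_\dagger \geq \frac{4\cp H}{\bslt}$ suffices. This mirrors the bandit argument in Lemma~\ref{lemma:optimism}.

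The only genuinely delicate point is the degenerate case $\vf{\pi^\star, \beta^{(k)}}_{P,1}(s_1) = 0$, where $\alpha^{(k)} = 1$ and the division $\frac{1-\alpha^{(k)}}{\alpha^{(k)}}$ is fine ($=0$) but the final cancellation of $\vf{\pi^\star,\beta^{(k)}}_{P,1}(s_1)$ must be handled directly: there $\alpha^{(k)} = 1$, $\pi^{\alpha^{(k)}} = \pi^\star$, and the optimism claim is trivial since $B_\dagger\beta^{(k)} \geq 0$. I expect the bookkeeping of these value-function linearity manipulations to be the main place to be careful, but there is no conceptual obstacle — it is the direct CMDP analogue of the bandit computation already carried out.
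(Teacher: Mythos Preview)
Your proposal is correct and follows essentially the same approach as the paper: expand the mixture value via occupancy-measure linearity, reduce the optimism claim to a sufficient condition on $B_\dagger$, and conclude using $\vf{\pi^\star,r}_{P,1}(s_1)\le H$. The only cosmetic difference is that the paper retains the $\pisafe$ contributions (both $\vf{\pisafe,r}$ in the numerator and $\vf{\pisafe,\beta^{(k)}}$ in the denominator) before bounding, whereas you drop the entire $(1-\alpha^{(k)})\vf{\pisafe,r+B_\dagger\beta^{(k)}}_{P,1}(s_1)$ term by nonnegativity at the outset; both routes land on the same threshold $B_\dagger\ge 4\cp H/\bslt$.
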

\begin{proof}
The sufficient condition that
\(
\vf{\pi^\alpha, r+B_\dagger \beta^{(k)}}_{P, 1}(s_1) 
\geq
\vf{\pi^\star, r}_{P, 1}(s_1)
\) to hold is 
\begin{align*}
B_\dagger
&\geq 
\frac{\vf{\pi^\star, r}_{P, 1}(s_1) - \vf{\pi^\alpha, r}_{P, 1}(s_1)}{\vf{\pi^\alpha, \beta^{(k)}}_{P, 1}(s_1)}
= 
\frac{(1-\alpha)\paren*{\vf{\pi^\star, r}_{P, 1}(s_1) - \vf{\pisafe, r}_{P, 1}(s_1)}}
{(1-\alpha)\vf{\pisafe, \beta^{(k)}}_{P, 1}(s_1)
+ \alpha\vf{\pi^\star, \beta^{(k)}}_{P, 1}(s_1)}\\
&= 
\frac{\vf{\pi^\star, r}_{P, 1}(s_1) - \vf{\pisafe, r}_{P, 1}(s_1)}
{\vf{\pisafe, \beta^{(k)}}_{P, 1}(s_1)
+ \frac{\alpha}{1-\alpha}\vf{\pi^\star, \beta^{(k)}}_{P, 1}(s_1)}\;.
\end{align*} 
By inserting $\alpha^{(k)} = \frac{\bslt}{\bslt + 2 \vf{\pi^\star, 2\cp \beta^{(k)}}_{P, 1}(s_1)}$ into $\alpha$, i.e., 
$\frac{\alpha}{1-\alpha} = \frac{\bslt}{2 \vf{\pi^\star, 2\cp \beta^{(k)}}_{P, 1}(s_1)}$, 
\begin{align*}
B_\dagger
\geq
\frac{\vf{\pi^\star, r}_{P, 1}(s_1) - \vf{\pisafe, r}_{P, 1}(s_1)}
{\vf{\pisafe, \beta^{(k)}}_{P, 1}(s_1)
+ \frac{\bslt}{4\cp \vf{\pi^\star, \beta^{(k)}}_{P, 1}(s_1)}\vf{\pi^\star, \beta^{(k)}}_{P, 1}(s_1)}
= 
\frac{4\cp\paren*{\vf{\pi^\star, r}_{P, 1}(s_1) - \vf{\pisafe, r}_{P, 1}(s_1)}}
{2\vf{\pisafe, 2\cp \beta^{(k)}}_{P, 1}(s_1) + \bslt}\;.
\end{align*} 
Thus, when $B_\dagger \geq \frac{4\cp H}{\bslt}$, 
it holds that \(\vf{\pi^{\alpha^{(k)}}, r+B_\dagger \beta^{(k)}}_{P, 1}(s_1)  \geq \vf{\pi^\star, r}_{P, 1}(s_1)\).
The second claim follows from \Cref{lemma:mixture-feasibility}.
\end{proof}

\looseness=-1
We are now ready to decompose $\circled{2}$.
Using \Cref{lemma:mixture-feasibility,,lemma:mixture-optimism}, we have
\begin{equation}\label{eq:optimism-decomposition}
\begin{aligned}
\circled{2}
=& \sum_{k\in \unconfMDP^\complement} \paren*{\vf{\pi^\star, r}_{P, 1}(s_1) - \ovf{\pi^{(k)}, r}_{(k), 1}[\kappa](s_1)}\\
\leq& \sum_{k\in \unconfMDP^\complement} \paren*{
\vf{\pi^{\alpha^{(k)}}, B_\dagger \beta^{(k)}}_{P, 1}(s_1)  
+ \vf{\pi^{\alpha^{(k)}}, r}_{P, 1}[\kappa](s_1)  
- \ovf{\pi^{(k)}, r}_{(k), 1}[\kappa](s_1)}\\
=&
\sum_{k\in \unconfMDP^\complement} 
\Big(\vf{\pi^{\alpha^{(k)}}, B_\dagger \beta^{(k)}}_{P, 1}(s_1) 
+ \vf{\pi^{\alpha^{(k)}}, r}_{P, 1}[\kappa](s_1)
+ \widebar{\lambda}^{(k, T)} \vf{\pi^{\alpha^{(k)}}, u - 2\cp \beta^{(k)}}_{P, 1}(s_1)\\
&
\underbrace{\quad\quad
- \ovf{\pi^{(k)}, \dagger}_{(k), 1}(s_1)
- \ovf{\pi^{(k)}, r}_{(k), 1}[\kappa](s_1)
- \widebar{\lambda}^{(k, T)} \pvf{\pi^{(k)}, u}_{(k), 1}(s_1)
\Big)\quad \quad\quad\quad\quad}_{\circled{3}}\\
&+ 
\underbrace{\sum_{k\in \unconfMDP^\complement}  \ovf{\pi^{(k)}, \dagger}_{(k), 1}(s_1) }_{\circled{4}}
+ 
\underbrace{\sum_{k\in \unconfMDP^\complement} 
\widebar{\lambda}^{(k, T)}
\paren*{
\pvf{\pi^{(k)}, u}_{(k), 1}(s_1) - \vf{\pi^{\alpha^{(k)}}, u - 2\cp\beta^{(k)}}_{P, 1}(s_1)
}}_{\circled{5}} \;,
\end{aligned}
\end{equation}
where $\widebar{\lambda}^{(k, T)}$ is defined in \Cref{line:pik-deploy}.
Using \Cref{lemma:opt-pes-MDP}, the term $\circled{4}$ is bounded as
\begin{align*}
\circled{4} \leq \vf{\pi^{(k)}, (B_\dagger + 2C_\dagger)\beta^{(k)}}_{P, 1}(s_1)\;.
\end{align*}
Using \Cref{lemma:elliptical-potential-MDP}, it holds that
\begin{align}\label{eq:additional bonus bound}
\circled{4} \leq 
(B_\dagger + 2C_\dagger) \tiO\paren*{H\sqrt{dK}}
= \tiO\paren*{H^{4} d^{5/2} \bslt^{-1}\sqrt{K}}\;,
\end{align}
where the last equality inserts $B_\dagger = 4\bslt^{-1}\cp H$, $\cp = \tiO(dH)$, and $C_\dagger=\tiO(dHB_\dagger)$.
We will bound $\circled{3}$ and $\circled{5}$ separately.

\subsubsection{Optimistic Bounds}

\begin{lemma}[Optimism in composite value function]\label{lemma:optimism-composite}
Suppose $\confevent$ holds. Then,
\begin{align*}
\circled{3} = 
&\sum_{k\in \unconfMDP^\complement} 
\Big(\vf{\pi^{\alpha^{(k)}}, B_\dagger \beta^{(k)}}_{P, 1}(s_1) 
+ \vf{\pi^{\alpha^{(k)}}, r}_{P, 1}[\kappa](s_1)
+ \widebar{\lambda}^{(k, T)} \vf{\pi^{\alpha^{(k)}}, u - 2\cp \beta^{(k)}}_{P, 1}(s_1)\\
&
\quad\quad
- \ovf{\pi^{(k)}, B_\dagger \beta^{(k)}}_{(k), 1}(s_1)
- \ovf{\pi^{(k)}, r}_{(k), 1}[\kappa](s_1)
- \widebar{\lambda}^{(k, T)} \pvf{\pi^{(k)}, u}_{(k), 1}(s_1)
\Big) \leq 0\;.
\end{align*}
\end{lemma}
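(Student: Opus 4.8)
The plan is to carry out in full the decomposition sketched around \Cref{eq:third-term-decompose} and then control its two non-trivial pieces. Fix $k \in \unconfMDP^\complement$ and write $\widebar{\lambda} \df \widebar{\lambda}^{(k,T)}$, so that $\pi^{(k)} = \pi^{(k),\widebar{\lambda}}$ is the composite softmax policy of \Cref{def:composite-softmax-policy}. Introduce $\widetilde{Q}_h \df \oqf{\pi^{(k)},\dagger}_{(k),h} + \oqf{\pi^{(k)},r}_{(k),h}[\kappa] + \widebar{\lambda}\,\pqf{\pi^{(k)},u}_{(k),h}$ and $\widetilde{V}_h \df \ovf{\pi^{(k)},\dagger}_{(k),h} + \ovf{\pi^{(k)},r}_{(k),h}[\kappa] + \widebar{\lambda}\,\pvf{\pi^{(k)},u}_{(k),h}$. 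Because in \Cref{def:clipped value functions} the entropy term $-\kappa\ln\pi$ sits only in the $r$-channel, one checks that $\widetilde{V}_h(s) = \sum_{a}\pi^{(k)}_h(a \mid s)(\widetilde{Q}_h(s,a) - \kappa\ln\pi^{(k)}_h(a \mid s))$, which is exactly the form required by the regularized value-difference lemma (\Cref{lemma:regularized value difference}).

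First I would split $\widebar{\lambda}\vf{\pi^{\alpha^{(k)}},u-2\cp\beta^{(k)}}_{P,1}(s_1)$ into its $u$-part and its $-2\cp\beta^{(k)}$-part, and apply \Cref{lemma:regularized value difference} with $\pi^{(k)}$ in the role of $\widetilde{\pi}$, $\pi^{\alpha^{(k)}}$ as the comparator, transition kernel $P$, reward $g = r + B_\dagger\beta^{(k)} + \widebar{\lambda}u$, and the $\widetilde{Q}$ above; using that $\vf{\cdot}$ is linear in its reward (with the entropy counted once), this rewrites the $k$-th summand of $\circled{3}$ as $-\vf{\pi^{\alpha^{(k)}},f^1}_{P,1}(s_1) - \vf{\pi^{\alpha^{(k)}},f^2}_{P,1}(s_1) - \widebar{\lambda}\vf{\pi^{\alpha^{(k)}},2\cp\beta^{(k)}}_{P,1}(s_1)$, where $f^1,f^2$ are the two functions produced by \Cref{lemma:regularized value difference}: $f^1_h(s,\cdot)$ equals the regularized objective $p \mapsto \sum_{a}p(a)(\widetilde{Q}_h(s,a) - \kappa\ln p(a))$ evaluated at $\pi^{(k)}_h(\cdot\mid s)$ minus its value at $\pi^{\alpha^{(k)}}_h(\cdot\mid s)$, and $f^2_h = \widetilde{Q}_h - g_h - P_h\widetilde{V}_{h+1}$.

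For $f^1$: by \Cref{def:composite-softmax-policy}, $\pi^{(k)}_h(\cdot\mid s) = \softmax(\kappa^{-1}\widetilde{Q}_h(s,\cdot))$ is the unique maximizer over $\sP(\A)$ of the strictly concave objective $p \mapsto \sum_{a}p(a)(\widetilde{Q}_h(s,a) - \kappa\ln p(a))$, so $f^1_h \ge 0$ pointwise and hence $-\vf{\pi^{\alpha^{(k)}},f^1}_{P,1}(s_1) \le 0$. For $f^2$: peeling the outer clipping in \Cref{def:clipped value functions} identifies the three Bellman residuals making up $f^2_h$ with the gap functions of \Cref{def:delta-Q}, namely $\oqf{\pi^{(k)},r}_{(k),h}[\kappa] - r_h - P_h\ovf{\pi^{(k)},r}_{(k),h+1}[\kappa] = \delta^{\pi^{(k)},r}_{(k),h}$, $\oqf{\pi^{(k)},\dagger}_{(k),h} - B_\dagger\beta^{(k)}_h - P_h\ovf{\pi^{(k)},\dagger}_{(k),h+1} = \delta^{\pi^{(k)},\dagger}_{(k),h}$, and $\pqf{\pi^{(k)},u}_{(k),h} - u_h - P_h\pvf{\pi^{(k)},u}_{(k),h+1} = -\delta^{\pi^{(k)},u}_{(k),h}$, so that $f^2_h = \delta^{\pi^{(k)},r}_{(k),h} + \delta^{\pi^{(k)},\dagger}_{(k),h} - \widebar{\lambda}\,\delta^{\pi^{(k)},u}_{(k),h}$. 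Under $\confevent$, \Cref{lemma:delta-bound} (applicable since $\widebar{\lambda} \in [0,C_\lambda]$ and $\pi^{(k)} = \pi^{(k),\widebar{\lambda}}$) places these gaps in $\Deltar$, $\Deltac$, $\Deltau$ respectively, whence $\delta^{\pi^{(k)},r}_{(k),h}, \delta^{\pi^{(k)},\dagger}_{(k),h} \ge 0$ and $0 \le \delta^{\pi^{(k)},u}_{(k),h} \le 2\cp\beta^{(k)}_h$; therefore $f^2_h \ge -2\widebar{\lambda}\cp\beta^{(k)}_h$ pointwise, and monotonicity of $\vf{\cdot}$ in its reward gives $-\vf{\pi^{\alpha^{(k)}},f^2}_{P,1}(s_1) \le \widebar{\lambda}\vf{\pi^{\alpha^{(k)}},2\cp\beta^{(k)}}_{P,1}(s_1)$. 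Adding the three contributions, the $k$-th summand of $\circled{3}$ is $\le 0$, and summing over $k \in \unconfMDP^\complement$ yields $\circled{3} \le 0$.

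The one delicate step is the bookkeeping for $f^2$: one must peel the clipping operators in the right order and keep the sign in the $u$-channel straight, since in \Cref{def:clipped value functions} the clip inside $\pqf{\pi,u}_{(k),h}$ carries the bonus with a minus sign while \Cref{def:delta-Q} defines $\delta^{\pi,u}_{(k),h}$ as the true next-step expectation $(P_h\pvf{\pi,u}_{(k),h+1})$ minus that clip (rather than the clip minus the expectation), which is precisely what produces the minus sign in $f^2$; one must also verify that $\widetilde{V}_h$ genuinely satisfies the hypothesis of \Cref{lemma:regularized value difference}, i.e., that only the $r$-channel carries $-\kappa\ln\pi$. Once that is settled, the two conceptual ingredients, softmax-optimality for $f^1$ and the $\confevent$-driven confinement of the $\delta$-gaps for $f^2$, close the argument with no further work.
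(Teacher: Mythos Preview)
Your proposal is correct and follows essentially the same route as the paper: apply \Cref{lemma:regularized value difference} with $\widetilde{Q}_h = \oqf{\pi^{(k)},\dagger}_{(k),h} + \oqf{\pi^{(k)},r}_{(k),h}[\kappa] + \widebar{\lambda}\,\pqf{\pi^{(k)},u}_{(k),h}$, dispose of $f^1$ by softmax optimality, and bound $f^2$ via \Cref{lemma:delta-bound}. In fact your bookkeeping is slightly more careful than the paper's written proof, which omits the $\delta^{\pi^{(k)},\dagger}_{(k),h}$ contribution from its displayed $f^1$ and $f^2$ (harmlessly, since that term is nonnegative under $\confevent$); your version with $f^2_h = \delta^{\pi^{(k)},r}_{(k),h} + \delta^{\pi^{(k)},\dagger}_{(k),h} - \widebar{\lambda}\,\delta^{\pi^{(k)},u}_{(k),h}$ is the accurate one.
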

\begin{proof}
Using \Cref{lemma:regularized value difference}, for any $k \in \unconfMDP^\complement$, we have
\begin{align*}
& \ovf{\pi^{(k)}, B_\dagger \beta^{(k)}}_{(k), 1}(s_1)
+\ovf{\pi^{(k)}, r}_{(k), 1}[\kappa](s_1)
+\widebar{\lambda}^{(k, T)} \pvf{\pi^{(k)}, u}_{(k), 1}(s_1) \\
& -\vf{\pi^{\alpha^{(k)}}, B_\dagger \beta^{(k)}}_{P, 1}(s_1) 
- \vf{\pi^{\alpha^{(k)}}, r}_{P, 1}[\kappa](s_1)
- \widebar{\lambda}^{(k, T)} \vf{\pi^{\alpha^{(k)}}, u - 2\cp \beta^{(k)}}_{P, 1}(s_1)\\
=&
\vf{\pi^{\alpha^{(k)}}, f^1}_{P, 1}(s_1) 
+ \vf{\pi^{\alpha^{(k)}}, f^2}_{P, 1}(s_1) 
+ \widebar{\lambda}^{(k, T)}\vf{\pi^{\alpha^{(k)}}, 2\cp \beta^{(k)}}_{P, 1}(s_1) 
\end{align*}
where $f^1: \HSA\to \R$ and $f^2: \HSA\to \R$ are functions such that
\begin{align*}
f^1_h(s, a) &= 
\sum_{a \in \A}
\paren*{
\pi^{(k)}_h\paren{a \given s} 
\paren*{\oqf{\pi^{(k)}, r}_{(k), h}[\kappa](s, a) + \widebar{\lambda}^{(k, T)} \pqf{\pi^{(k)}, u}_{(k), h}(s, a)
- \kappa \ln \pi^{(k)}_h\paren{a \given s}}
}\\
&\quad - 
\sum_{a \in \A}
\paren*{
\pi^{\alpha^{(k)}}_h\paren{a \given s} 
\paren*{\oqf{\pi^{(k)}, r}_{(k), h}(s, a) + \widebar{\lambda}^{(k, T)} \pqf{\pi^{(k)}, u}_{(k), h}(s, a)
- \kappa \ln \pi^{\alpha^{(k)}}_h\paren{a \given s}}
}\\
f^2_h(s, a) &= \delta^{\pi^{(k)}, r}_{(k)} - \widebar{\lambda}^{(k, T)}\delta^{\pi^{(k)}, u}_{(k)}\;.
\end{align*}

It is well-known that the analytical maximizer of 
\(
\max_{\pi \in \sP(\A)}
\sum_{a \in \A}
\pi\paren{a} \paren*{\bx(a) - \kappa \ln \pi\paren{a}}
\) is $\softmax\paren*{\frac{1}{\kappa} \bx(\cdot)}$. Therefore, the function $f^1$ is non-negative and thus $\vf{\pi^{\alpha^{(k)}}, f^1}_{P, 1}(s_1) \geq 0$.

\looseness=-1
On the other hand, using \Cref{lemma:delta-bound}, we have 
\begin{align*}
f^2_h(s, a) 
= \delta^{\pi^{(k)}, r}_{(k), h} - \widebar{\lambda}^{(k, T)}\delta^{\pi^{(k)}, u}_{(k), h}
\numeq{\geq}{a}
- \widebar{\lambda}^{(k, T)} 2\cp \beta^{(k)}_h
\end{align*}
Therefore, it holds that 
\begin{align*}
\vf{\pi^{\alpha^{(k)}}, f^2}_{P, 1}(s_1) 
+ \widebar{\lambda}^{(k, T)}\vf{\pi^{\alpha^{(k)}}, 2\cp \beta^{(k)}}_{P, 1}(s_1) 
\geq 0\;.
\end{align*}

\looseness=-1
By combining all the results, we have \(\circled{3} \leq 0\).
\end{proof}

\subsubsection{Bounds for Bisection Search}

\looseness=-1
Using \Cref{lemma:mixture-feasibility}, $\circled{5}$ is further bounded by 
\begin{align*}
\circled{5} &= \sum_{k\in \unconfMDP^\complement}
\widebar{\lambda}^{(k, T)}
\paren*{
\pvf{\pi^{(k)}, u}_{(k), 1}(s_1) - \vf{\pi^{\alpha^{(k)}}, u - 2\cp\beta^{(k)}}_{P, 1}(s_1)
}\\
& \leq 
\sum_{k\in \unconfMDP^\complement}
\widebar{\lambda}^{(k, T)}
\paren*{
\pvf{\pi^{(k)}, u}_{(k), 1}(s_1) - b
}
\leq 
C_\lambda
\sum_{k\in \unconfMDP^\complement}
\paren*{
\pvf{\pi^{(k)}, u}_{(k), 1}(s_1) - b
}
\;.
\end{align*}
We bound the last term using the bisection search in \Cref{algo:zero-vio-linear MDP}.
Note that we focus only the case $\pvf{\pi^{(k), 0}, u}_{(k), 1}(s_1) < b$ and $\pvf{\pi^{(k), C_\lambda}, u}_{(k), 1}(s_1) \geq b$ due to \Cref{line:pizero-deploy} and \Cref{line:pisafe-deploy} in \Cref{algo:zero-vio-linear MDP}. 
Due to the definitions of $\widebar{\lambda}^{(k, t)}$ and $\underline{\lambda}^{(k, t)}$ in \Cref{algo:zero-vio-linear MDP}, 
$$
\pvf{\pi^{(k), \underline{\lambda}^{(k, t)}}, u}_{(k), 1}(s_1) < b
  \;\text{ and }\;
\pvf{\pi^{(k), \widebar{\lambda}^{(k, t)}}, u}_{(k), 1}(s_1) \geq b\;
$$
hold for any $t \in \bbrack{1, T}$. 
Therefore, 
\begin{align*}
\circled{5}
\leq &C_\lambda \sum_{k \in \unconfMDP^\complement} 
\paren*{
\pvf{\pi^{(k), \widebar{\lambda}^{(k, T)}}, u}_{(k), 1}(s_1) - 
\pvf{\pi^{(k), \underline{\lambda}^{(k, T)}}, u}_{(k), 1}(s_1)
}
\end{align*}
To bound the right-hand side, we derive the sensitivity of $\vf{\pi^{(k), \lambda}, u}_{(k), 1}(s_1)$ with respect to $\lambda$.

\begin{lemma}[Restatement of \Cref{lemma:lambda-sensitive-main}]\label{lemma:lambda-sensitive}
Let $X \df K \paren*{1 + \frac{8(1+C_\lambda)(H_\kappa + B_\dagger H + H)}{\kappa}}$ 
and 
$Y \df \frac{8(H_\kappa + B_\dagger H + H)}{\kappa}$.
For any $k$ and $\lambda \in [0, C_\lambda]$, it holds that
\begin{align*}
\abs*{\pvf{\pi^{(k), \lambda}, u}_{(k), 1}(s_1)
- \pvf{\pi^{(k), \lambda + \varepsilon}, u}_{(k), 1}(s_1)}
\leq X^H H^2 Y \varepsilon\;.
\end{align*}   
\end{lemma}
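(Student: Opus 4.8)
The plan is a backward induction over $h = H+1, H, \dots, 1$ that tracks how the perturbation $\lambda \mapsto \lambda+\varepsilon$ propagates through the mutually recursive definitions of the clipped value functions (\Cref{def:clipped value functions}) and the composite softmax policy (\Cref{def:composite-softmax-policy}). Fix $k$, and for $\mu \in \{\lambda, \lambda+\varepsilon\}$ abbreviate the composite action-value as $Q^\mu_h \df \oqf{\pi^{(k),\mu}, \dagger}_{(k), h} + \oqf{\pi^{(k),\mu}, r}_{(k), h}[\kappa] + \mu\,\pqf{\pi^{(k),\mu}, u}_{(k), h}$, so that $\pi^{(k),\mu}_h(\cdot\mid s) = \softmax(Q^\mu_h(s,\cdot)/\kappa)$. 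I would track the step-$h$ action-value discrepancy $e_h$, defined as the maximum over $(s,a)$ of the three quantities $\abs*{\oqf{\pi^{(k),\lambda}, \dagger}_{(k), h}(s,a) - \oqf{\pi^{(k),\lambda+\varepsilon}, \dagger}_{(k), h}(s,a)}$, $\abs*{\oqf{\pi^{(k),\lambda}, r}_{(k), h}[\kappa](s,a) - \oqf{\pi^{(k),\lambda+\varepsilon}, r}_{(k), h}[\kappa](s,a)}$ and $\abs*{\pqf{\pi^{(k),\lambda}, u}_{(k), h}(s,a) - \pqf{\pi^{(k),\lambda+\varepsilon}, u}_{(k), h}(s,a)}$; the policy discrepancy $p_h \df \dist_1(\pi^{(k),\lambda}_h, \pi^{(k),\lambda+\varepsilon}_h)$; and the analogous state-value discrepancy $E_h$ (the max of the three $\dist_\infty$'s of $\ovf{\cdot,\dagger}_{(k),h}$, $\ovf{\cdot,r}_{(k),h}[\kappa]$, $\pvf{\cdot,u}_{(k),h}$), with $e_{H+1} = E_{H+1} = 0$. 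Since $\pvf{\pi^{(k),\lambda}, u}_{(k), 1}(s_1) = \sum_a \pi^{(k),\lambda}_1(a\mid s_1)\,\pqf{\pi^{(k),\lambda}, u}_{(k), 1}(s_1,a)$, the target is controlled by $(H_\kappa + B_\dagger H + H)\,p_1 + e_1$.

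I would then prove three propagation estimates and combine them. (i) \emph{Policy step.} The bonus terms $B_\dagger\beta^{(k)}_h$, $C_\dagger\beta^{(k)}_h$, $\co\beta^{(k)}_h$ and the functions $r_h, u_h$ are $\lambda$-independent and cancel, so together with $\norm*{\pqf{\pi,u}_{(k),h}}_\infty \le H$ and $\lambda \le C_\lambda$ one gets $\dist_\infty(Q^\lambda_h, Q^{\lambda+\varepsilon}_h) \le (2+C_\lambda)e_h + H\varepsilon$, and \Cref{lemma:softmax-policy-bound} gives $p_h \le \tfrac{8}{\kappa}\bigl((2+C_\lambda)e_h + H\varepsilon\bigr)$. (ii) \emph{State-value step.} From $\ovf{\pi,g}_{(k),h} = \pi_h(\oqf{\pi,g}_{(k),h} - \kappa\ln\pi_h)$ (the entropy term present only for $g=r$) and a triangle inequality, $E_h \le (H_\kappa + B_\dagger H + H)\,p_h + e_h + \kappa\,\abs*{\sH(\pi^{(k),\lambda}_h) - \sH(\pi^{(k),\lambda+\varepsilon}_h)}$, and the entropy gap is controlled, exactly as in the proof of \Cref{lemma:V-cover}, by $\tfrac{c(B_\dagger H + H_\kappa + C_\lambda H)}{\kappa}\dist_\infty(Q^\lambda_h, Q^{\lambda+\varepsilon}_h)$ for an absolute constant $c$, using $\max_a \ln\pi^{(k),\lambda}_h(a\mid s) \le (B_\dagger H + H_\kappa + C_\lambda H)/\kappa$. (iii) \emph{Backup step.} Clipping is $1$-Lipschitz and, because $\bLambda^{(k)}_h \succeq \bI$ forces $\abs*{\bphi(s,a)^\top(\bLambda^{(k)}_h)^{-1}\bphi(s^{(i)}_h, a^{(i)}_h)} \le 1$, we have $\norm*{\hP^{(k)}_h V}_\infty \le (k-1)\norm*{V}_\infty \le K\norm*{V}_\infty$ for any $V$; hence $e_h \le K E_{h+1}$. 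Combining (i), (ii), (iii) yields a scalar recursion $e_h \le X\, e_{h+1} + KHY\varepsilon$ with $X, Y$ as in the statement (after careful bookkeeping of the constants); since $e_{H+1}=0$ and $X \ge K$, summing the geometric series over the $H$ layers gives $e_1 \le KHY\varepsilon\sum_{j=0}^{H-1}X^j \le X^H H^2 Y\varepsilon$, and a final application of (i)--(ii) at $h=1$ converts $(H_\kappa + B_\dagger H + H)p_1 + e_1$ into a bound of the same order (note $\tfrac{8(H_\kappa+B_\dagger H+H)}{\kappa} = Y$).

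The crux is step (iii): the estimated transition operator $\hP^{(k)}_h$ is not a contraction, and the only sup-norm bound available, $\norm*{\hP^{(k)}_h V}_\infty \le K\norm*{V}_\infty$, carries a factor of $K$ at every layer. Compounded with the $\tfrac{1+C_\lambda}{\kappa}$ coupling between value and policy perturbations (which comes from the $\tfrac{1}{\kappa}$ temperature of the softmax and the appearance of $\lambda$ up to $C_\lambda$ in the composite $Q$), this is what forces the per-layer amplification $X$ and hence the $X^H$ exponential dependence on the horizon; this appears genuinely unavoidable with the present argument and mirrors the analogous exponential bound of \citet{ghosh2024towards}. A secondary nuisance is extracting the entropy-gap estimate in step (ii) with the correct power of $\kappa$, for which I would reuse the computation already carried out inside the proof of \Cref{lemma:V-cover}.
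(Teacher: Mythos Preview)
Your proposal is correct and follows essentially the same backward-induction scheme as the paper's proof: track the $Q$- and $V$-discrepancies across the three value functions, use \Cref{lemma:softmax-policy-bound} to convert $Q$-gaps into policy gaps, use the crude estimate $\norm*{\hP^{(k)}_h V}_\infty \le K\norm{V}_\infty$ for the backup, and unroll the resulting scalar recursion into a geometric series bounded by $X^H H^2 Y\varepsilon$. Two minor differences worth noting: (i) the paper tracks the three $V$-discrepancies $v_h^r, v_h^\dagger, v_h^u$ separately and then sums them into a single recursion $v_h \le X v_{h+1} + YH\varepsilon$, which avoids the extra factor $K$ in your additive term (though, as you observe, $X \ge K$ absorbs it); (ii) the paper's write-up silently drops the entropy contribution when passing from $\ovf{\pi,r}_{(k),h}[\kappa]$ to $\pi_h\oqf{\pi,r}_{(k),h}[\kappa]$, whereas your step~(ii) handles it explicitly via the same computation as in the proof of \Cref{lemma:V-cover}. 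Your treatment is therefore slightly more careful on this point, at the cost of a messier constant; the final bound is unaffected.
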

\begin{proof}
\looseness=-1
The proof is based on \textbf{Lemma 2} from \citet{ghosh2024towards}.
For notational simplicity, we denote $\pi \df \pi^{(k), \lambda}$ and $\pi' \df \pi^{(k), \lambda + \varepsilon}$.
Additionally, we use shorthand:
\begin{align*}
&v_h^r \df \norm*{\ovf{\pi, r}_{(k), h}[\kappa] - \ovf{\pi', r}_{(k), h}[\kappa]}_\infty\;,
&&q_h^r \df \norm*{\oqf{\pi, r}_{(k), h}[\kappa] - \oqf{\pi', r}_{(k), h}[\kappa]}_\infty\;,
\\
&v_h^\dagger \df \norm*{\ovf{\pi, \dagger}_{(k), h} - \ovf{\pi', \dagger}_{(k), h}}_\infty\;,
&&q_h^\dagger \df \norm*{\oqf{\pi, \dagger}_{(k), h} - \oqf{\pi', \dagger}_{(k), h}}_\infty\;,\\
&v_h^u \df \norm*{\pvf{\pi, u}_{(k), h} - \pvf{\pi', u}_{(k), h}}_\infty \;,
&&q_h^u \df \norm*{\pqf{\pi, u}_{(k), h} - \pqf{\pi', u}_{(k), h}}_\infty \;.
\end{align*}

\looseness=-1
For any $h$, we have
\begin{align*}
&v_h^r = \norm*{ \pi_h\oqf{\pi, r}_{(k), h}[\kappa] - \pi'_h \oqf{\pi', r}_{(k), h}[\kappa]}_\infty \leq H_\kappa\norm*{\pi_h - \pi'_h}_1 + q^r_h \\
&v_h^\dagger \leq B_\dagger H\norm*{\pi_h - \pi'_h}_1 + q_h^\dagger\\
&v_h^u \leq H\norm*{\pi_h - \pi'_h}_1 + q_h^u\;.
\end{align*}

\looseness=-1
Since $\pi_h$ and $\pi'_h$ are softmax policies, using \Cref{lemma:softmax-policy-bound},
\begin{align*}
\norm*{\pi_h - \pi'_h}_1
&\leq 
\frac{8}{\kappa}\norm*{
\oqf{\pi, \dagger}_{(k), h} 
+ \oqf{\pi, r}_{(k), h}[\kappa] 
+ \lambda \pqf{\pi, u}_{(k), h}
- \oqf{\pi', \dagger}_{(k), h}
-\oqf{\pi', r}_{(k), h}[\kappa]
- (\lambda+\varepsilon) \pqf{\pi', u}_{(k), h}}_\infty\\
&\leq 
\frac{8}{\kappa}
\paren*{
q_h^\dagger
+ q_h^r
+C_\lambda q_h^u
+ \varepsilon H
}
\end{align*}

\looseness=-1
Additionally, 
\begin{align*}
&q_h^r \leq \norm*{ \hP^{(k)}_h \paren*{\ovf{\pi, r}_{(k), h+1}[\kappa]  - \ovf{\pi', r}_{(k), h+1}[\kappa]} }_\infty \leq K v_{h+1}^r\\
&q_h^\dagger \leq \norm*{ \hP^{(k)}_h \paren*{\ovf{\pi, \dagger}_{(k), h+1}  - \ovf{\pi', \dagger}_{(k), h+1}} }_\infty \leq K v_{h+1}^\dagger\\
&q_h^u \leq \norm*{ \hP^{(k)}_h \paren*{\pvf{\pi, u}_{(k), h+1}  - \pvf{\pi', u}_{(k), h+1}} }_\infty \leq K v_{h+1}^u\;,
\end{align*}
where we used the fact that, for any $V: \S \to \R$,
\begin{align*}
\abs*{\hP^{(k)}_h V}(s, a)
&= 
\abs*{{\bphi(s, a)^{\top}\paren{\bLambda^{(k)}_h}^{-1} \sum_{i=1}^{k-1}\bphi\paren{s_h^{(i)}, a_h^{(i)}} V\paren{s_{h+1}^{(i)}}}}\\
&\leq \norm*{{\paren{\bLambda^{(k)}_h}^{-1} \sum_{i=1}^{k-1}\bphi\paren{s_h^{(i)}, a_h^{(i)}} }}_2 \norm*{V}_\infty
\leq K \norm*{V}_\infty\;.
\end{align*}

\looseness=-1
By combining all the results, 
\begin{align*}
&v_h^r \leq 
K\paren*{\frac{8H_\kappa}{\kappa} + 1} v_{h+1}^r
&&+K\frac{8H_\kappa}{\kappa} v_{h+1}^\dagger
&&&+ K\frac{8H_\kappa C_\lambda}{\kappa} v_{h+1}^u
&&&&+ \frac{8H_\kappa}{\kappa} \varepsilon H\\
&v_h^\dagger \leq 
K\frac{8B_\dagger H}{\kappa} v_{h+1}^r
&&+K\paren*{\frac{8B_\dagger H}{\kappa} + 1}  v_{h+1}^\dagger
&&&+ K\frac{8B_\dagger H C_\lambda}{\kappa} v_{h+1}^u
&&&&+ \frac{8B_\dagger H}{\kappa} \varepsilon H\\
&v_h^u \leq 
K\frac{8 H}{\kappa} v_{h+1}^r
&&+K\frac{8 H}{\kappa}  v_{h+1}^\dagger
&&&+ K\paren*{\frac{8H}{\kappa} + 1}C_\lambda v_{h+1}^u
&&&&+ \frac{8H}{\kappa} \varepsilon H\;.
\end{align*}
Let 
$X \df K \paren*{1 + \frac{8(1+C_\lambda)(H_\kappa + B_\dagger H + H)}{\kappa}}$ 
and 
$Y \df \frac{8(H_\kappa + B_\dagger H + H)}{\kappa}$.
Then,
\begin{align*}
v_h^r + v_h^\dagger + v_h^u 
&\leq  X(v_{h+1}^r + v_{h+1}^\dagger + v_{h+1}^u) + Y H \varepsilon\\
&\leq  X^2(v_{h+2}^r + v_{h+2}^\dagger + v_{h+2}^u) + XY H\varepsilon + Y H \varepsilon \\
&\leq \dots\\
&\leq \paren*{X^H+ \dots + X+ 1}Y H \varepsilon \;. 
\end{align*}
\end{proof}

\looseness=-1
We are now ready to bound $\circled{5}$
Applying \Cref{lemma:lambda-sensitive} to $\circled{5}$, we obtain the following lemma.
\begin{lemma}\label{lemma:binary-search-bound}
\looseness=-1
When $T = \tiO(H)$, it holds that 
$$\circled{5} \leq 
C_\lambda \sum_{k \in \unconfMDP^\complement} 
\paren*{
\pvf{\pi^{(k), \widebar{\lambda}^{(k, T)}}, u}_{(k), 1}(s_1) - 
\pvf{\pi^{(k), \underline{\lambda}^{(k, T)}}, u}_{(k), 1}(s_1)
}
\leq \tiO(1)\;.$$
\end{lemma}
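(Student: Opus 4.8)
The first inequality in the statement, namely
$\circled{5}\le C_\lambda\sum_{k\in\unconfMDP^\complement}\paren*{\pvf{\pi^{(k),\widebar{\lambda}^{(k,T)}},u}_{(k),1}(s_1)-\pvf{\pi^{(k),\underline{\lambda}^{(k,T)}},u}_{(k),1}(s_1)}$, was already derived in the discussion preceding the lemma: it uses \Cref{lemma:mixture-feasibility}, the bound $\widebar{\lambda}^{(k,T)}\le C_\lambda$, and the fact that for $k\in\unconfMDP^\complement$ the \textbf{else} branch of \Cref{algo:zero-vio-linear MDP} is taken (by \Cref{lemma:trigger-condition-main}), so that the bisection endpoints satisfy $\pvf{\pi^{(k),\underline{\lambda}^{(k,t)}},u}_{(k),1}(s_1)<b\le\pvf{\pi^{(k),\widebar{\lambda}^{(k,t)}},u}_{(k),1}(s_1)$ for all $t$. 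Hence the plan is to bound the right-hand side by $\tiO(1)$, which I will do by combining the exponential shrinkage of the bisection interval with the sensitivity estimate of \Cref{lemma:lambda-sensitive}.

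First I would pin down the width of the search interval. By the update rule in \Cref{line:binary-search}, $\lambda^{(k,t)}$ is the midpoint of $[\underline{\lambda}^{(k,t)},\widebar{\lambda}^{(k,t)}]$ and exactly one endpoint is then replaced by $\lambda^{(k,t)}$, so the width is halved at each step; since $\widebar{\lambda}^{(k,1)}-\underline{\lambda}^{(k,1)}=C_\lambda$, a trivial induction on $t$ gives $\widebar{\lambda}^{(k,T)}-\underline{\lambda}^{(k,T)}=C_\lambda/2^{T-1}$, and the same induction shows $\underline{\lambda}^{(k,T)},\widebar{\lambda}^{(k,T)}\in[0,C_\lambda]$. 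Applying \Cref{lemma:lambda-sensitive} with $\lambda=\underline{\lambda}^{(k,T)}$ and $\varepsilon=C_\lambda/2^{T-1}$ (so that $\lambda+\varepsilon=\widebar{\lambda}^{(k,T)}\in[0,C_\lambda]$, as required) yields $\pvf{\pi^{(k),\widebar{\lambda}^{(k,T)}},u}_{(k),1}(s_1)-\pvf{\pi^{(k),\underline{\lambda}^{(k,T)}},u}_{(k),1}(s_1)\le X^H H^2 Y\,C_\lambda/2^{T-1}$ with $X=K\paren*{1+8(1+C_\lambda)(H_\kappa+B_\dagger H+H)/\kappa}$ and $Y=8(H_\kappa+B_\dagger H+H)/\kappa$ as in that lemma. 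Summing over $k\in\unconfMDP^\complement$ and using $|\unconfMDP^\complement|\le K$ gives $\circled{5}\le C_\lambda^2 K H^2 Y\,X^H/2^{T-1}$.

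It then remains to choose $T$. Rewriting the bound as $C_\lambda^2 K H^2 Y\cdot 2^{\,H\log_2 X-(T-1)}$ and recalling from the \textbf{Input} line of \Cref{algo:zero-vio-linear MDP} that $C_\lambda$, $B_\dagger$ and $\kappa^{-1}$ are all polynomial in $K,H,d,\bslt^{-1}$ (in particular $\kappa^{-1}=\tiO(\bslt^{-3}H^4dK^{1/2})$), it follows that $X$ and $Y$ are polynomial in those same parameters, so $\log_2 X=\polylog\paren*{K,H,d,\bslt^{-1}}$ and $\log_2\paren*{C_\lambda^2 K H^2 Y}=\polylog\paren*{K,H,d,\bslt^{-1}}$. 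Taking $T=\tiO(H)$ large enough that $T-1\ge H\log_2 X+\log_2\paren*{C_\lambda^2 K H^2 Y}$ then forces $\circled{5}\le 1=\tiO(1)$, as claimed. The main obstacle is purely bookkeeping: one must verify that $X^H$, although exponential in $H$, is only $2^{\tiO(H)}$ — which is exactly what the bisection's $2^{-(T-1)}$ factor absorbs once $T=\tiO(H)$ — and in particular that the $K^{1/2}$ hidden inside $\kappa^{-1}$ contributes only a $\polylog(K)$ term to the exponent rather than a polynomial one.
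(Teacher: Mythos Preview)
Your proposal is correct and follows essentially the same approach as the paper: combine the exponential shrinkage of the bisection interval with the sensitivity bound of \Cref{lemma:lambda-sensitive}, then absorb the $X^H$ factor by taking $T=\tiO(H)$. You are in fact slightly more careful than the paper in tracking the $C_\lambda$ factor in the interval width and in spelling out why the parameters in the \textbf{Input} line make $\log_2 X$ and $\log_2 Y$ polylogarithmic.
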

\begin{proof}
\looseness=-1
Due to the bisection search update rule, $\widebar{\lambda}^{(k, T)} - \underline{\lambda}^{(k, T)} = 2^{-T}$.
Thus, 
\begin{align*}
\circled{5} 
&\leq 
C_\lambda \sum_{k \in \unconfMDP^\complement} 
\paren*{
\pvf{\pi^{(k), \widebar{\lambda}^{(k, T)}}, u}_{(k), 1}(s_1) - 
\pvf{\pi^{(k), \underline{\lambda}^{(k, T)}}, u}_{(k), 1}(s_1)
}
\leq
X^H C_\lambda K H^2 Y 2^{-T}
\end{align*}    
where the inequality uses \Cref{lemma:lambda-sensitive} with $X$ and $Y$ defined in \Cref{lemma:lambda-sensitive}.
Thus, $\circled{5}\leq \tiO\paren*{1}$ holds by setting $T = H \polylog(X, H, Y)$. 
This concludes the proof.
\end{proof}

\looseness=-1
We are now ready to prove \Cref{theorem:MDP-regret-main}.
The proof is under the parameters of: 
$\rho = 1$,
$\co=\tiO(dH)$, 
$\cp=\tiO(dH)$,
$C_\dagger=\tiO(d^2H^3\bslt^{-1})$,
$B_\dagger = \tiO\paren*{dH^2\bslt^{-1}}$,
$\kappa = \widetilde{\Omega}\paren*{\bslt^3 H^{-4}d^{-1}K^{-0.5}}$, 
$T=\tiO(H)$, and
$C_\lambda = \tiO\paren*{dH^4\bslt^{-2}}$.

\subsubsection{Proof of \Cref{theorem:MDP-regret-main}}

\looseness=-1
We condition the proof with the good events $\rmvEevent \cap \confevent$, which holds with probability at least $1-3\delta$ by \Cref{lemma:good-event1-MDP,,lemma:good-event2-MDP}.

\looseness=-1
In \Cref{algo:zero-vio-linear MDP}, the deployed policy switches between $\pisafe \in \Pisafe$ and the softmax policies.
Since \Cref{algo:zero-vio-linear MDP} deploys the softmax policies only when $\pvf{\pi^{(k), 0}, u}_{(k), 1}(s_1) \geq b$, due to \Cref{lemma:delta-bound} and the good events, all the deployed policies satisfy $\pi^{(k)} \in \Pisafe$ for all $k \in \bbrack{1, K}$.
This concludes the proof of the zero-violation guarantee.

\looseness=-1
Next, we derive the regret bound.
Recall from \Cref{eq:regret-decomposition} that
\begin{align*}
\regret(K) \leq     
\tiO\paren*{d^3H^4 \bslt^{-2}}
+ 
\circled{1}
+
\circled{2}
+ \kappa KH \ln A
\leq     
\tiO\paren*{d^3H^4 \bslt^{-2}}
+ 
\circled{1}
+
\circled{2}
+ \tiO(\sqrt{K})\;,
\end{align*}
where the second inequality is due to the value of $\kappa$.

\looseness=-1
Using \Cref{eq:martingale-bound}, 
$$
\circled{1} \leq 
\tiO\paren*{H^{2} d^{3/2}\sqrt{K}}.
$$

\looseness=-1
Using \Cref{eq:optimism-decomposition}, $\circled{2}$ can be decomposed as:
\begin{align*}
\circled{2} &\leq \circled{3} + \circled{4} + \circled{5}\;.
\end{align*}
Each term can be bounded as:
\begin{itemize}
\item $\circled{3} \leq 0$ by \Cref{lemma:optimism-composite}
\item $\circled{4} \leq \tiO\paren*{H^{4} d^{5/2} \bslt^{-1}\sqrt{K}}$ by \Cref{eq:additional bonus bound},
\item $\circled{5} \leq \tiO(1)$ by \Cref{lemma:binary-search-bound}
\end{itemize}

\looseness=-1
Finally, by combining all the results, we have 
\begin{align*}
\regret(K) &\leq 
\tiO\paren*{d^3H^4 \bslt^{-2}}
+ 
\tiO\paren*{H^{2} d^{3/2}\sqrt{K}}
+ 
\tiO\paren*{H^{4} d^{5/2} \bslt^{-1}\sqrt{K}}\;.
\end{align*}
This concludes the proof of the sublinear regret guarantee.

\newpage
\section{Numerical Experiments}\label{sec: experiment}

\looseness=-1
This section presents empirical results supporting \cref{theorem:MDP-regret-main}, which guarantees $\sqrt{K}$ regret and episode-wise safety of \MDPalgo.
We also evaluate how often \MDPalgo deploys the safe policy $\pisafe$, a key technique for achieving sublinear regret (\cref{lemma:Ck-bound-MDP-main}).
All experiments were conducted within 30 minutes using eight Intel Core i7 CPUs and 32 GiB of RAM.

\looseness=-1
The source code for the experiment is available at \url{https://github.com/matsuolab/Episode-Wise-Safe-Linear-CMDP}.

\looseness=-1
We compare \MDPalgo against the previous state-of-the-art linear CMDP algorithm by \citet{ghosh2024towards} and the tabular CMDP algorithm called DOPE \citep{bura2022dope}. 
\citet{ghosh2024towards} achieves $\tiO(\sqrt{K})$ bounds for both regret and violation regret, and DOPE achieves $\tiO(\sqrt{K})$ regret with zero episode-wise violation.

\looseness=-1
For a sequence of policies $\brace{\pi^{(k)}}_{k \in [1, K]}$, the violation regret is defined as:
\begin{align}\label{eq:vio-regret}
{\textstyle
\operatorname{Vio}(K) 
\df 
\sum^K_{k=1} 
\max\brace*{
b - 
\vf{\pi^{(k)}, u}_{P, 1}(s_1)
,\; 0
}
}\;.
\end{align}
Clearly, if all the policies satisfy $\pi^{(k)} \in \Pisafe$, the violation regret is zero.

\looseness=-1
Additionally, we also report the performance of a uniform policy defined by $\pi_h\paren{\cdot \given s} = 1 / \aA$ for all $h, s$, to highlight the sublinear regret of our algorithm.

\paragraph{Implementations of \citet{ghosh2024towards} and DOPE.}
\looseness=-1
\citet{ghosh2024towards}'s algorithm can be implemented similarly to ours, with a few modifications: remove the $\pisafe$ deployment trigger, eliminate the pessimism compensation bonuses by setting $C_\dagger = B_\dagger = 0$, and apply an optimistic constraint bonus instead of our pessimistic one (i.e., use a negative sign for $C_u$).
We use $C_r$ and $C_u$ to denote the bonus scaling parameters for \citet{ghosh2024towards}.
See Algorithm 1 of \citet{ghosh2024towards} for further implementation details.

\looseness=-1
The DOPE algorithm can be implemented in tabular environments with a moderately small state space.
It computes the policy $\pi^{(k)}$ by solving the following optimistic–pessimistic problem:
\begin{align}\label{eq:DOPE-optpes}
\pi^{(k)}\in 
\max_{\pi \in \Pi} \max_{P' \in \mathcal{P}^{(k)}} \vf{\pi, r + C_r \beta^{(k)}}_{P', 1}(s_1) 
\; \text{ such that }\; \vf{\pi, u - C_u \beta^{(k)}}_{P', 1}(s_1) \geq b\;,
\end{align}
where $\beta^{(k)}_h(s, a)$ denotes the bonus at step $h$ for the state-action pair $(s, a)$ and $\cP^{(k)}$ denotes the confidence set for the transition kernel.
Specifically, using the visitation count\footnote{$\done[\mathrm{E}]$ equals $1$ if the event $E$ is true, and $0$ otherwise. For two scalars $a$ and $b$, we use shorthand $a \vee b \df \max\{a, b\}$.} $n^{(k)}_h(s, a, s') \df \sum_{k'=1}^{k} \done \brack{s_h^{(k)}=s, a_h^{(k)}=a, s_{h+1}^{(k)}=s'}$, the bonus and the confidence set are defined as
\begin{align*}
\beta^{(k)}_h(s, a) = \sum_{s' \in \S}\gamma^{(k)}_h(s, a, s')
\;\text{ where }\;
&\gamma^{(k)}_h(s, a, s') \propto \sqrt{\frac{\hP^{(k)}_h\paren{s' \given s, a}(1 - \hP^{(k)}_h\paren{s' \given s, a})}{n_h^{(k)}(s, a) \vee 1}}\;,\\
n^{(k)}_h(s, a) \df \sum_{s' \in \S} n^{(k)}_h(s, a, s')\;,
\;\text{ and }\; & \hP^{(k)}_h\paren{s' \given s, a} \df \frac{n^{(k)}_h(s, a, s')}{n^{(k)}_h(s, a) \vee 1}\;.
\end{align*}

\looseness=-1
For simplicity, we omit absolute constants and logarithmic factors, and use this simplified form in all experiments.
Further implementation details can be found in \citet{bura2022dope}.

\looseness=-1
For each environment, we select the hyperparameters of each algorithm using heuristic adjustments to balance exploration and exploitation.
To ensure numerical stability, we assign relatively small values to these parameters. The detailed values are provided below.

\looseness=-1
\paragraph{Synthetic tabular environments.}
To evaluate the exact regret values, we conduct experiments on tabular CMDPs with a small state space size.
Tabular CMDP is the special case of linear CMDP with $d=\abs{\S}$ and allows us to compute the optimal policy $\pi^\star$ by linear programming.

\looseness=-1
We instantiated CMDPs with $\abs{\S}=5$, $\abs{\A}=3$, $H=4$, employing a construction strategy akin to that of \citet{dann2017unifying}.
For all $s, a, h$, the transition probabilities $P_h\paren*{\cdot \given s, a}$ were independently sampled from $\operatorname{Dirichlet}(0.1, \dots, 0.1)$.
This transition probability kernel is concentrated yet encompasses non-deterministic transition probabilities.

\looseness=-1
The reward values for the objective $r_h(s, a)$ are set to $0$ with probability $0.1$ and to $1$ otherwise.
The utility values for the constraint $u_h(s, a)$ are assigned in the same way.
The initial state $s_1$ is randomly chosen from $\S$ and fixed during the training.
The constraint threshold is set as $b = 0.6\max_{\pi \in \Pi}\vf{\pi, u}_{P, 1}(s_1)$.

\looseness=-1
We choose the hyperparameters of the algorithms as follows:
\begin{itemize}
    \item OPSE-LCMDP: $C_r=C_u=C_d=1.0$, $C_\lambda=300$, $B_\dagger = 1.0$, and $\kappa = 0.1$. 
    \item \citet{ghosh2024towards}: $C_r=C_u=1.0$ and $\kappa = 0.1$.
    \item DOPE \citep{bura2022dope}: $C_r = C_u=1.0$.
\end{itemize}

\looseness=-1
\paragraph{Media Streaming CMDP Environments.}
As a realistic environment, we also evaluate algorithms on the media streaming environment from \citet{bura2022dope}.
In the environment, a wireless base station (agent) transmits media to a device using either a fast or slow service option, each incurring different costs.
The slow and fast services correspond to actions $a=1$ and $a=2$, respectively.

\looseness=-1
The fast service succeeds with probability $\mu_1$, and the slow one with $\mu_2 = 1 - \mu_1$, where both follow independent Bernoulli distributions.
At each environment construction, we randomly sample $\mu_1$ from $[0.5, 0.9]$.
Packets received at the device are stored in a media buffer and played out according to a Bernoulli process with parameter $\rho$. 
We sample $\rho$ uniformly from $[0.1, 0.4]$.

\looseness=-1
Let $A_h, B_h \in \brace{0, 1}$ denote the number of arriving and departing packets, respectively.
The media buffer length represents the state, and transitions as $s_{h+1} = \min\brace*{\max\brace*{0, s_h + A_h - B_h}, L}$ where $L$ denotes the maximum buffer length. We set $L = 5$, $\abs{\S}= L + 1$, and $H=4$.
The initial state is set to $s_1 = 0$.

\looseness=-1
The objective is to deliver enough packets to the buffer while limiting the use of the fast service.
Accordingly, the agent receives a reward $r_h(s, \cdot) = \done\brace{s \geq 0.3 L}$ and incurs a constraint utility $u_h(\cdot, a) = \done\brace{a = 1}$.
The constraint threshold is set as $b = 0.6 \max_{\pi \in \Pi}\vf{\pi, u}_{P, 1}(s_1)$.

\looseness=-1
We choose the hyperparameters of the algorithms as follows:
\begin{itemize}
    \item OPSE-LCMDP: $C_r=C_u=C_d=2.0$, $C_\lambda=300$, $B_\dagger = 1.0$, and $\kappa = 0.1$. 
    \item \citet{ghosh2024towards}: $C_r=C_u=2.0$ and $\kappa = 0.1$.
    \item DOPE \citep{bura2022dope}: $C_r = C_u=1.0$.
\end{itemize}

\begin{figure}[tb!]
    \hspace{-1.5cm}
    \includegraphics[width=1.2\linewidth]{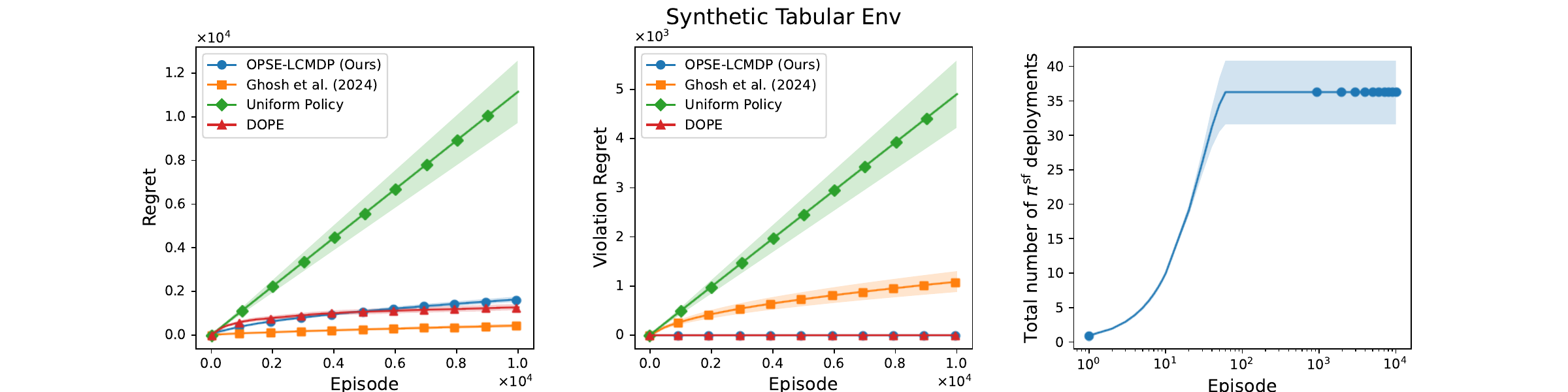}

    \vspace{0.3cm}
    \hspace{-1.5cm}
    \includegraphics[width=1.2\linewidth]{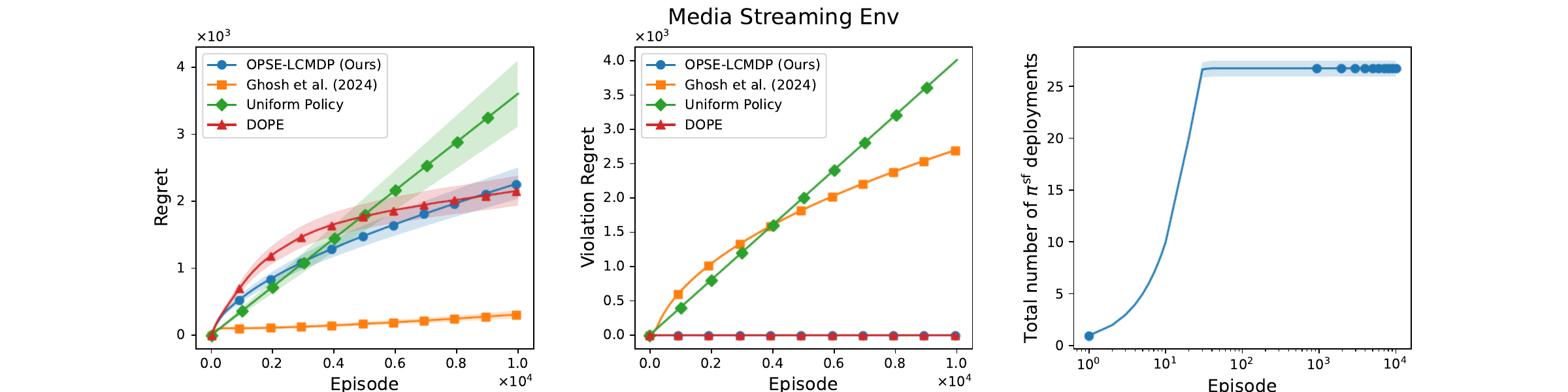}

    \vspace{0.3cm}
    \hspace{-1.5cm}
    \includegraphics[width=1.2\linewidth]{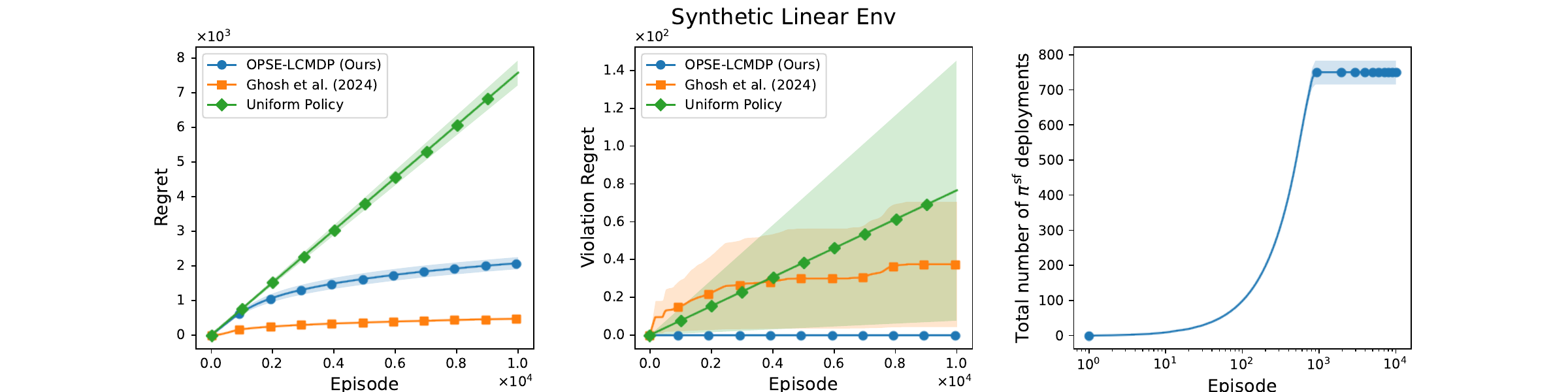}
    \caption{Numerical comparison of the algorithms in the synthetic tabular environment (\textbf{Top}), the media streaming environment (\textbf{Middle}), and the synthetic linear environment (\textbf{Bottom}). We do not run DOPE in the linear CMDP environment due to its computational intractability (see \cref{remark:DOPE not in linear}). \textbf{Left}: regret (\cref{eq:CMDP-goal}), \textbf{Middle}: violation regret (\cref{eq:vio-regret}), and \textbf{Right}: total number of $\pisafe$ deployments in \MDPalgo.}
    \label{fig:regret-experiment}
\end{figure}

\looseness=-1
\paragraph{Synthetic linear environments.}
Building on the experiment by \citet{amani2021safe}, we randomly construct linear CMDPs in which the number of states is larger than the feature map dimension. 
We test the algorithms on environments with $\aS=100$, $A=3$, $d=5$, and $H=4$.
This setup has a relatively large state space while still allowing us to analytically compute the optimal policy and exact regret. 

\looseness=-1
For each $(s, a) \in \SA$, the feature vector $\bphi(s, a) \in \R^{d}$ is sampled from $\operatorname{Dirichlet}(0.1, \dots, 0.1)$. 
Recall from \cref{assumption:linear mdp} the definition of $\bmu_h=\paren*{\bmu_h^1, \dots, \bmu_h^d} \in \R^{\aS\times d}$.
For each $(h, i) \in \bbrack{1, H}\times \bbrack{1, d}$, we sample $\bmu^i_h$ from $\operatorname{Dirichlet}(0.1, \dots, 0.1)$.
With these $\bmu$ and $\bphi$, we set $P_h\paren{s'\given s, a}=\bmu_h(s')^\top \bphi(s, a)$. This construction ensures that $P_h\paren{\cdot \given s, a}=1$ becomes a valid probability distribution for any $(h, s, a)$.

\looseness=-1
For the reward and utility functions, we sample both $\btheta_h^r$ and $\btheta_h^u$ from a uniform distribution over $[0, 1]^d$.
The reward and utility functions are then constructed such that $r_h(s, a)=(\btheta^r_h)^\top \bphi(s, a)$ and $u_h(s, a) = (\btheta^u_h)^\top \bphi(s, a)$.

\looseness=-1
The initial state $s_1$ is randomly chosen from $\S$ and fixed during the training.
The constraint threshold is set as $b = 0.68 \max_{\pi \in \Pi}\vf{\pi, u}_{P, 1}(s_1)$.

\begin{remark}\label{remark:DOPE not in linear}
We do not run the DOPE algorithm in this linear environment due to its heavy computational cost. 
Using the extended LP technique introduced by \citet{efroni2020exploration}, the optimization problem in \eqref{eq:DOPE-optpes} can be reformulated as the following standard LP problem:
$$
\begin{array}{ll}
\min _{\bx} & \bc^{\top} \bx \;\text { such that }\; A \bx=\bb \;\text{ and }\; G \bx \geq \bh\;,
\end{array}
$$
where parameters are defined appropriately.
This LP involves $H \aS^2 \aA$ decision variables and more than $H \aS^2 \aA$ number of constraints (see \citep{efroni2020exploration} for more details).
Therefore, in our synthetic linear CMDP experiment, the matrices $A$ and $G$ require at least $10^{10}$ entries, which is computationally intractable in practice.
\end{remark}

\looseness=-1
\paragraph{Results.}
\looseness=-1
\cref{fig:regret-experiment} shows the performance of the algorithms, averaged over 10 random seeds, with regret plotted on the left, violation regret in the middle, and the total number of $\pisafe$ deployments on the right.

\looseness=-1
Across all settings, both \MDPalgo and the algorithm by \citet{ghosh2024towards} exhibit sublinear regret.
However, while \MDPalgo maintains zero constraint violation throughout, \citet{ghosh2024towards} consistently violates the constraint, leading to increasing violation regret.
These results empirically validate \cref{theorem:MDP-regret-main}, confirming the $\tiO(\sqrt{K})$ regret and episode-wise safety guarantees of our algorithm.

\looseness=-1
While DOPE achieves sublinear regret with zero-violation, it is limited to the tabular settings where $\aS$ is small, as described in \cref{remark:DOPE not in linear}.
This highlights the computational tractability of our \MDPalgo in large $\aS$, which supports \cref{remark:computational cost}.

\looseness=-1
Finally, the right plot shows that \MDPalgo explores the environment using $\pisafe$ primarily during the early stages of training, and stops deploying it after approximately $10^4$ episodes. This behavior supports \cref{lemma:Ck-bound-MDP-main}.

\end{document}